\title[TS Achieves Optimal Regret in Linear Quadratic Control]{Thompson Sampling Achieves $\tilde O(\sqrt{T})$  \\ Regret in Linear Quadratic Control}
\newcommand{\alg}{\textsc{\small{TSAC}}\xspace}
\DeclareMathAlphabet{\mathcal}{OMS}{cmsy}{m}{n}
\newcommand{\QED}{\hfill $\blacksquare$}
\newcommand{\poly}{\operatorname{poly}}
\newtheorem{assumption}[theorem]{Assumption}
\renewcommand{\tt}{\Theta}
\DeclareMathOperator{\Tr}{tr}
\newcommand{\ttt}{\tilde{\Theta}}
\newcommand{\tth}{\hat{\Theta}}
\newcommand{\tts}{\Theta_*}
 \newcommand{\Tw}{T_w}
\newcommand{\opt}{\operatorname{opt}}
\newcommand{\N}{\mathbb{N}} 
\newcommand{\R}{\mathbb{R}} 
\renewcommand{\S}{\mathcal{S} } 
\newcommand{\M}{\clf{M}}
\newcommand{\ie}{\textit{i.e.}}
\newcommand{\nn}{\nonumber}
\newcommand{\pr}[1]{\left({#1}\right)} 
\newcommand{\br}[1]{\left[{#1}\right]} 
\newcommand{\cl}[1]{\left\{{#1}\right\}} 
\newcommand{\abs}[1]{\left\vert{#1}\right\vert} 
\newcommand{\norm}[2][\text{}]{\Vert{#2}\Vert_{#1}} 
\newcommand{\Norm}[2][\text{}]{\left\Vert{#2}\right\Vert_{#1}} 
\newcommand{\indic}{\mathbbm{1}} 
\newcommand{\defeq}{\coloneqq} 
\newcommand{\eqdef}{\eqqcolon} 
\newcommand{\clf}[1]{\mathcal{#1}} 
\newcommand{\half}{\frac{1}{2}} 
\newcommand{\tr}{\operatorname{tr}} 
\newcommand{\tp}{\intercal}
\newcommand{\psdleq}{\preccurlyeq} 
\newcommand{\psdgeq}{\succcurlyeq} 
\newcommand{\psdl}{\prec} 
\newcommand{\psdg}{\succ} 
\newcommand{\grad}{\nabla} 
\newcommand{\diff}{\mathrm{d}} 
\newcommand{\ddx}[1]{\frac{\diff}{\diff{#1}}} 
\newcommand{\E}{\operatorname{\mathbb{E}}} 
\renewcommand{\Pr}{\mathbb{P}} 
\newcommand{\normal}{\operatorname{\mathcal{N}}} 
\newcommand{\F}{\mathcal F} 
\newcommand{\sampled}[1][\text{}]{\stackrel{#1}{\sim}}
\newcommand{\goesto}{\rightarrow} 
\newcommand{\Cov}{\operatorname{Cov}}
\newcommand{\cond}{\,\big| \,}
\newcommand{\OO}{\Tilde{O}}
\begin{document}

\maketitle

\begin{abstract}%
Thompson Sampling (TS) is an efficient method for decision-making under uncertainty, where an action is sampled from a carefully prescribed distribution which is updated based on the observed data. In this work, we study the problem of adaptive control of stabilizable linear-quadratic regulators (LQRs) using TS, where the system dynamics are unknown. Previous works have established that $\tilde O(\sqrt{T})$ frequentist regret is optimal for the adaptive control of LQRs. However, the existing methods either work only in restrictive settings, require a priori known stabilizing controllers, or utilize computationally intractable approaches. We propose an efficient TS algorithm for the adaptive control of LQRs, \textbf{TS}-based \textbf{A}daptive \textbf{C}ontrol, \alg, that attains $\tilde O(\sqrt{T})$ regret, even for multidimensional systems, thereby solving the open problem posed in \citet{abeille2018improved}. \alg does not require a priori known stabilizing controller and achieves fast stabilization of the underlying system by effectively exploring the environment in the early stages. Our result hinges on developing a novel lower bound on the probability that the TS provides an optimistic sample. By carefully prescribing an early exploration strategy and a policy update rule, we show that TS achieves order-optimal regret in adaptive control of multidimensional stabilizable LQRs. We empirically demonstrate the performance and the efficiency of \alg in several adaptive control tasks.
\end{abstract}

\begin{keywords}%
  Thompson sampling, adaptive control, linear quadratic control, regret%
\end{keywords}

\section{Introduction}\label{sec:intro}

There has been a significant development in data-driven methods for controlling dynamical systems in recent years due to the development of novel reinforcement learning approaches and techniques \citep{hou2013model}. Adaptive control of unknown linear dynamical systems has been the main focus due to its simplicity and its ability to capture the crux of the problem and give insights on more challenging tasks \citep{recht2019tour}. Among linear dynamical systems, Linear Quadratic Regulators (LQRs) are the canonical settings with quadratic regulatory costs to design desirable controllers and have been studied in an array of prior works \citep{abbasi2011lqr,faradonbeh2017optimism,abeille2018improved,mania2019certainty,simchowitz2020naive,chen2020black,lale2022reinforcement}. These works provide finite-time performance guarantees of adaptive control algorithms in terms of \textit{regret}, which is the difference between the attained cumulative cost and the expected cost of the optimal controller. In particular, they show that $\tilde{O}(\sqrt{T})$ regret after $T$ time steps is optimal in adaptive control of LQRs. They utilize several different paradigms for algorithm design such as Certainty Equivalence, Optimism or Thompson Sampling, yet, they suffer either from the inherent algorithmic drawbacks or limited applicability in practice.

\textbf{Certainty equivalent control and its challenges:} Certainty equivalent control (CEC) is one of the most straightforward paradigms for control design in adaptive control of dynamical systems. In CEC, an agent obtains a nominal estimate of the system, and executes the optimal control law for this estimated system. Even though \citet{mania2019certainty,simchowitz2020naive} show that this simple approach attains optimal regret in LQRs, the proposed algorithms have several drawbacks. First and foremost, CEC is sensitive to model mismatch and requires significantly small model estimation error to a point that exploration of the system dynamics is not required. Since this level of refinement is challenging to obtain for an unknown system, these methods rely on access to an initial stabilizing controller to enable a long exploration. In practice, such a priori known controllers may not be available, which hinders the deployment of these algorithms.

\textbf{Optimism-based control and its challenges:} Optimism is one of the most prominent methods to effectively balance exploration and exploitation in adaptive control \citep{bittanti2006adaptive}. In optimism-based control, an agent executes the optimal policy for the model with the lowest cost within a set of plausible models. In \citet{abbasi2011lqr,faradonbeh2017optimism}, the authors use optimism-based control design to achieve $\tilde{O}(\sqrt{T})$ regret with exponential dimension dependency. Both algorithms solve a non-convex optimization problem to find the optimistic controllers, which is an NP-hard problem in general \citep{agrawal2019recent}. Unfortunately, this computational inefficiency severely limits their practicality. Recently, \citet{abeille2020efficient} proposed a relaxation to the optimistic controller computation, which makes the optimism-based controllers efficient. However, their algorithm also requires a significantly well-refined model estimate and a given initial stabilizing policy, similar to CEC. 

\textbf{Restricted LQR settings in the prior works:} In our work, we study the stabilizable multi-dimensional LQR setting. Stabilizability is necessary and sufficient condition to have a well-posed LQR control problem \citep{kailath2000linear}. On the contrary, prior works usually consider the controllable LQR setting, which is a subclass of stabilizable LQRs \citep{cohen2019learning,chen2020black}. While the controllability condition simplifies the learning and control problem, it is also often violated in many real-world control systems \citep{friedland2012control}. Recently, \citet{lale2022reinforcement} proposed an adaptive control algorithm that does not need an initial stabilizing controller and achieves optimal regret in stabilizable LQRs. However, their method relies on optimism, and unfortunately inherits the aforementioned computational complexity of optimistic methods.

\textbf{Thompson Sampling and its challenges:} Thompson Sampling (TS) is one of the oldest strategies to balance the exploration vs. exploitation trade-off \citep{thompson1933likelihood}. In TS, the agent samples a model from a distribution  computed based on prior control input and observation pairs, and then takes the optimal action for this sampled model and updates the distribution based on its novel observation. Since it relies solely on sampling, this approach provides polynomial-time algorithms for adaptive control. Therefore, it is a promising alternative to overcome the computational burden faced in optimismic control design. For this reason, \citet{abeille2017thompson,abeille2018improved} propose adaptive control algorithms using TS. In particular, \citet{abeille2018improved} provide the first TS-based adaptive control algorithm for LQRs that attains optimal regret of $\tilde{O}(\sqrt{T})$. However, their result \textit{only holds for scalar} stabilizable systems, since they were able to show that TS samples optimistic parameters with constant probability in only scalar systems. Further, they conjecture that this is true in multidimensional systems as well and TS-based adaptive control can provide optimal regret in multidimensional LQRs, and provide a simple numerical example to support their claims. 

\vspace{-0.6em}
\subsection*{Contributions} 

In this work, we give an affirmative answer to the conjecture posed in \citet{abeille2018improved}: 

\begin{enumerate}[wide, labelindent=0pt,label=\textbf{\arabic*}), itemsep=0.2pt, topsep=0.2pt]
    \item We propose an efficient adaptive control algorithm, \textbf{T}hompson \textbf{S}ampling-based  \textbf{A}daptive  \textbf{C}ontrol (\alg), that attains $\tilde{O}(\sqrt{T})$ regret in multidimensional stabilizable LQRs. This makes \alg the first efficient adaptive control algorithm to achieve order-optimal regret in all stabilizable LQRs without the prior knowledge of a stabilizing policy (Table \ref{table:1}).
    \item We empirically demonstrate the performance of \alg and compare to the optimism (heuristic) and TS-based methods that do not require initial stabilizing policy in flight control of Boeing 747 with linearized dynamics. We show that \alg effectively explores the system to find a stabilizing policy and achieves the competitive regret performance, while being computationally feasible.
\end{enumerate}

\begin{table}
\centering
\caption{Comparison with the prior works that attain $\Tilde{O}(\sqrt{T})$ regret on LQR, $\dagger=$ 1-dim LQRs}
\label{table:1}
 \begin{tabular}{l c c c}
\toprule
\textbf{Work} &  \textbf{Setting} &   \textbf{Stabilizing Controller} & \textbf{Computation}  \\
 \hline 
 
 \citep{abeille2018improved} & Stabilizable$^\dagger$ & Not Required & Feasible \\
 
 \citep{mania2019certainty}  & Controllable & Required & Feasible \\
  \citep{simchowitz2020naive}  & Stabilizable &Required & Feasible \\
  
 
\citep{chen2020black}  & Controllable   & Not required & Feasible\\

 \citep{lale2022reinforcement} & Stabilizable & Not required & Infeasible \\ 
 
 \textbf{This work} & Stabilizable & Not required & Feasible
\end{tabular}
 \vspace{-0.6em}
\end{table}

The design of \alg and our regret guarantee hinge on three important pieces missing in prior works: Fixed policy update rule, improved exploration in early stages of adaptive control, and a novel lower bound that shows TS samples optimistic parameters with non-zero probability in multidimensional LQRs. Unlike the frequent policy update rule of \citet{abeille2018improved} in scalar LQRs, \alg updates its policy with fixed time periods. This policy update rule prevents fast policy changes that would cause state blow-ups in stabilizable LQRs. In the beginning of agent-environment interaction, \alg focuses on quickly finding a stabilizing controller to avoid state blow-ups due to lack of a known initial stabilizing policy. By using isotropic exploration in the early stages along with the exploration of TS policy, we show that \alg achieves fast stabilization. 

After stabilizing the unknown system dynamics, \alg relies on the effective exploration of the TS to find desirable controllers. In particular, we show that the TS samples optimistic parameters with a constant probability in any LQR setting. This novel lower bound shows that the TS is an efficient alternative to optimism in all adaptive control problems in LQRs. Combining this lower bound with the fixed policy update rule, we derive the optimal regret guarantee for \alg. 

\vspace{-0.3em}

\section{Preliminaries}\label{sec:prelim}
\textbf{Notation:} We denote the Euclidean norm of a vector $x$ as $\|x\|_2$. For a matrix $A\! \in\! \mathbb{R}^{n\times d}$, we denote $\rho(A)$ as the spectral radius of $A$, $\| A\|_F$ as its Frobenius norm and $\| A \|$ as its spectral norm. $\tr(A)$ denotes its trace, $A^\tp$ is the transpose. For any positive definite matrix $V$, $\|A\|_V \!=\! \|V^{1/2} A \|_F $. For matrices $A,B\! \in\! \R^{n\times d}$, $A\bullet B \!=\!\tr(A B^\tp)$ denotes their Frobenius inner product. The j-th singular value of a rank-$n$ matrix $A$ is $\sigma_j(A)$, where $\sigma_{\max}(A)\!:=\!\sigma_1(A) \!\geq\! \ldots \! \geq \! \sigma_{\min}(A)\!:=\!\sigma_n(A)$. $I$ represents the identity matrix with the appropriate dimensions. $\M_n\!=\!\R^{n\times n}$ denotes the set of $n$-dimensional square matrices. $\mathcal{N}(\mu, \Sigma)$ denotes normal distribution with mean $\mu$ and covariance $\Sigma$. $Q(\cdot)$ denotes the Gaussian $Q$-function. $O(\cdot)$ and $o(\cdot)$ denote the standard asymptotic notation and  $f(T)=\omega(g(T))$ is equivalent to $g(T)=o(f(T))$. $\tilde{O}(\cdot)$ presents the order up to logarithmic terms.

\subsection{Setting}
Suppose we are given a discrete time linear time-invariant system with the following dynamics,
\begin{equation} \label{output}
    x_{t+1} = A_* x_t + B_* u_t + w_t, 
\end{equation}
where $x_t \in \mathbb{R}^{n}$ is the state of the system, $u_t \in \mathbb{R}^{d}$ is the control input, $w_{t} \in \mathbb{R}^{n}$ is i.i.d. process noise at time $t$. At each time step t, the system is at state $x_t$ where the agent observes the state. Then, the agent applies a control input $u_t$ and the system evolves to $x_{t+1}$ at time $t + 1$. The underlying system (\ref{output}) can be represented as $ x_{t+1} = \tts^\tp z_t + w_t$, where $\Theta_*^\tp = [A_* \enskip B_*]$ and $z_t = [x_t^\tp \enskip u_t^\tp]^\tp$. In this work, we consider stabilizable linear dynamical systems $\tts$, such that there exists a controller $K$ where $\rho(A_* + B_*K) < 1$. More precisely, the systems with the following property:
\begin{assumption}[Bounded and $(\kappa,\gamma)$-stabilizable System]\label{asm_stabil}
 The unknown system $\tts$ is a member of a set $\mathcal{S}$ such that $\mathcal{S} \subseteq \big\{ \Theta'=[A', B'] ~\big|~ \Theta' \text{ is } (\kappa, \gamma)\text{-stabilizable, } \|\Theta'\|_F \leq S\big\}$ for some $\kappa \geq 1$ and $0 < \gamma \leq 1$. In particular, for the underlying system $\tts$, we have $\|K(\tts)\| \leq \kappa$ and there exists $L$ and $H\succ 0$ such that $A_* + B_*K(\tts) = HLH^{-1}$, with $\|L\| \leq 1-\gamma$ and $\|H\| \|H^{-1} \| \leq \kappa $. 
\end{assumption}

Note that the stabilizability condition is necessary and sufficient condition to define the optimal control problem~\citep{kailath2000linear} and it is weaker than the controllability assumption considered in prior works~\citep{abbasi2011lqr,cohen2019learning,chen2020black}. In particular, the set of stabilizable systems subsumes the set of controllable systems. Moreover, $(\kappa,\gamma)$-stabilizability is merely a quantification of stabilizability for the finite-time analysis and it is adopted in recent works~\citep{cohen2018online,cohen2019learning,cassel2020logarithmic}. One can show that any stabilizable system is also $(\kappa,\gamma)$-stabilizable for some $\kappa$ and $\gamma$, conversely, $(\kappa,\gamma)$-stabilizability implies stabilizability (Lemma B.1 \citet{cohen2018online}). We have the following assumption on $w_t$.

\begin{assumption}[Gaussian Process Noise]
\label{asm_noise}
There exists a filtration $\mathcal{F}_{t}$ such that for all $t\geq 0$, $x_t, z_t$ are $\mathcal{F}_{t}$-measurable and $w_t|\mathcal{F}_{t} =  \mathcal{N}(0, \sigma_w^2 I)$ for some known $\sigma_w>0$.
\end{assumption}

Note that this assumption is standard in literature and adopted for simplicity of exposure. The following results can be extended to sub-Gaussian process noise setting using the techniques developed in~\citet{lale2022reinforcement}. At each time step, the regulating cost is $c_t = x_t^\tp Q x_t + u_t^\tp R u_t$, where $Q \in \mathbb{R}^{n\times n}$ and $R \in \mathbb{R}^{d \times d}$ are known positive definite matrices such that $\|Q\|,\|R\| < \Bar{\alpha}$ and $\sigma_{\min}(Q), \sigma_{\min}(R) > \underline{\alpha} > 0$. The goal is to minimize the average expected cost
\begin{equation} \label{opt control}
J(\Theta_*)=\lim _{T \rightarrow \infty} \min _{u=\left[u_{1}, \ldots, u_{T}\right]} \frac{1}{T} \mathbb{E}\Big[\sum\nolimits_{t=1}^{T} x_{t}^{\tp} Q x_{t}+u_{t}^{\tp} R u_{t}\Big],
\end{equation}
by designing control inputs based on past observations. This problem is the canonical infinite horizon linear quadratic regulator (LQR) problem. If the underlying system $\Theta_*$ is known, the solution of the optimal control problem is a linear feedback control $u_t = K(\Theta_*)x_t$ with $K(\Theta_*) = -(R+B_*^{\tp} P(\Theta_*) B_*)^{-1} B_*^{\tp} P(\Theta_*) A_*$, where $P(\Theta_*)$ is the unique positive definite solution to
\begin{equation} \label{DARE}
P(\Theta_*) = A_*^\tp P(\Theta_*) A_* + Q -  A_*^\tp P(\Theta_*) B_* ( R + B_*^\tp P(\Theta_*) B_* )^{-1} B_*^\tp P(\Theta_*) A_*,
\end{equation}
\textit{i.e.}, the discrete algebraic Riccati equation (DARE), and $J(\Theta_*) = \sigma_w^2\Tr(P(\Theta_*))$. Note that since the system is stabilizable, $J(\tts) < \infty$. In fact, using Assumption \ref{asm_stabil}, one can show that $\|P(\Theta') \| \leq D \coloneqq \Bar{\alpha} \gamma^{-1} \kappa^2(1+\kappa^2) $ for all $\Theta' \in \mathcal{S}$, including $\tts$ (Lemma 2.1 of \citet{lale2022reinforcement}).

\subsection{Finite-Time Adaptive Control Problem} \label{subsec:freq_regret}

In this work, we consider the adaptive control setting, where $\Theta_*$ is \textit{unknown}. The goal in the finite-time adaptive control problem is to minimize the cumulative cost \textit{i.e.}, $\sum_{t=0}^T c_t$. In order to design a controller that achieves this goal, the controlling agent needs to interact with the system to learn the $\Theta_*$ that governs the dynamics. However, due to a lack of knowledge of model dynamics, the agent takes sub-optimal actions. In this work, we use \textit{regret}, $R_T$, as the metric to evaluate the finite-time performance of the controlling agent. The regret quantifies the difference between the performance of the agent and the expected performance of the optimal controller, $R_T = \sum\nolimits_{t=0}^T (c_t - J(\Theta_*)).$

\vspace{-0.5em}
\subsection{Learning the System Dynamics} \label{subsec:learning}
For any given input and state pairs up to time $t$, $\Theta_*$ can be estimated using regularized least squares (RLS) for some $\mu> 0$: $ \min_\Theta \sum_{s=0}^{t-1} \tr \left((x_{s+1}\!-\!\Theta^{\tp} z_{s})(x_{s+1}\!-\!\Theta^{\tp} z_{s})^{\tp}\right) + \mu \|\Theta\|_F^2$. The solution is given as $\hat{\Theta}_t = V_t^{–1} \sum_{s=0}^{t–1} z_s x_{s+1}^\tp$ where $V_t = \mu I + \sum_{s=0}^{t–1} z_s z_s^\tp$. Using Theorem 1 of \citet{abbasi2011lqr}, for any $\delta \in (0,1)$, for all $0\leq t \leq T$, the underlying parameter $\Theta_*$ lives in $\mathcal{E}_t^{\text{RLS}}(\delta)$ with probability at least $1-\delta$ where $\mathcal{E}_t^{\text{RLS}}(\delta) = \{\Theta : \| \Theta-\hat{\Theta}_t\|_{V_t} \leq \beta_t(\delta) \}$
for $\beta_t(\delta) = \sigma_w \sqrt{2 n \log ( ( \det(V_{t})^{1 / 2}) / (\delta \det(\mu I)^{1 / 2} ))} + \sqrt{\mu}S$. 




\section{\alg Framework}\label{sec:new_algo}
In this section, we present \alg, a sample efficient TS-based adaptive control algorithm for the unknown stabilizable LQRs. The algorithm is summarized in Algorithm \ref{algo_exact}. It has two phases: 1) TS with improved exploration and 2) Stabilizing TS.  

\paragraph{TS with Improved Exploration}
Due to lack of a priori known stabilizing controller, \alg focuses on rapidly learning stabilizing controllers in the early stages of the algorithm. To achieve this, \alg explores the system dynamics effectively in this phase. At any time-step $t$, given the RLS estimate $\tth_t$ and the design matrix $V_t$ as described in Section \ref{subsec:learning}, \alg samples a perturbed model parameter $\ttt_t = \mathcal{R}_{\mathcal{S}}(\tth_t + \beta_t(\delta)V_t^{-1/2}\eta_t),  $
where $\mathcal{R}_{\mathcal{S}}$ denotes the rejection sampling operator associated with the set $\mathcal{S}$ given in Assumption \ref{asm_stabil} and $\eta_t \in \mathbb{R}^{(n+d) \times n}$ is a matrix with independent standard normal entries. Here $\mathcal{R}_{\mathcal{S}}$ guarantees that $\ttt_t \in \mathcal{S}$ and $\beta_t(\delta)V_t^{-1/2}\eta_t$ randomizes the sampled parameter coherently with the RLS estimate and the uncertainty associated with it.  Using this sampled model parameter, \alg constructs the optimal linear controller $\bar{u}_t = K(\ttt_t) x_t$ for $\ttt_t$. 

However, to obtain stabilizing controllers for an unknown linear dynamical system, one needs to explore the state-space in all directions (Lemma 4.2 of \citep{lale2022reinforcement}). Unfortunately, due to lack of reliable estimates in the early stages, deploying the policy achieved via TS, $\bar{u}_t$, may not achieve such effective exploration. Therefore, in the early stages of interactions with the underlying system, \alg deploys isotropic perturbations along with the sampled policy. In particular, for the first $T_w$ time-steps, \alg uses $u_t = \bar{u}_t + \nu_t$ as the control input where $\nu_t \sim \mathcal{N}(0, 2\kappa^2\sigma_w^2 I)$. This improved exploration policy effectively excites and explores all dimensions of the system to certify the design of stabilizing controllers. \alg sets $\Tw$ such that all the sampled controllers $K(\ttt_t)$ are guaranteed to stabilize the underlying system $\tts$ for all $t>\Tw$ (Appendix \ref{apx:stabilization}).

Unlike most of the popular RL strategies that follow lazy updates, \alg updates its sampled policy in every fixed $\tau_0$ steps, \ie, the same sampled policy $K(\ttt_t)$ is deployed for $\tau_0$ time-steps. This update rule is carefully chosen such that \alg samples enough optimistic policies to reduce the cumulative regret and avoids too frequent policy changes which would cause state blow-ups.

\paragraph{Stabilizing TS} After guaranteeing the design of stabilizing policies with improved exploration in the first phase, \alg starts the adaptive control with only TS. In particular, for the remaining time-steps, \alg deploys $u_t =  K(\ttt_t) x_t$ for $\ttt_t = \mathcal{R}_{\mathcal{S}}(\tth_t + \beta_t(\delta)V_t^{-1/2}\eta_t)$ and updates the sampled model parameter in every $\tau_0$ time-steps. Note that, even though all the policies during this phase are stabilizing, frequent policy changes can still cause undesirable state growth. \alg prevents this possibility by applying the same control policy for $\tau_0$ time-steps in this phase as well. During this phase, \alg decays the possible state blow-ups in the first phase and maintains stable dynamics. 

\vspace{-0.4em}

\begin{algorithm}[t] 
	\caption{\alg}
	\begin{algorithmic}[1]
		\STATE \textbf{Input:}  $\kappa$, $\gamma$, $Q$, $R$, $\sigma_w^2$ , $V_0 = \mu I$, $\hat{\Theta}_{0} = 0$
		
	\FOR{$i = 0, 1, \ldots$}
    \STATE Estimate $\hat{\Theta}_i$ \& Sample $\tilde{\Theta}_i = \mathcal{R}_{\mathcal{S}}(\hat{\Theta}_i + \beta_tV_t^{-1/2}\eta_t)$
    \FOR{$t = i\tau_0, \ldots, (i+1)\tau_0 -1$ }
        \IF {$t\leq \Tw$} 
			 \STATE Deploy  $u_t \!=\! K(\tilde{\Theta}_{i}) x_t \!+\! \nu_t $ \hfill \algorithmiccomment{ TS with Improved Exploration} 
		\ELSE
	         \STATE Deploy  $u_t \!=\! K(\tilde{\Theta}_{i}) x_t $ \hfill \algorithmiccomment{Stabilizing TS}
		\ENDIF
    \ENDFOR
\ENDFOR	
	\end{algorithmic}
	\label{algo_exact} 
\end{algorithm}

\section{Theoretical Analysis}\label{sec:analysis}
In this section, we study the theoretical guarantees of \alg. The following states the first order-optimal frequentist regret bound for TS in multidimensional stabilizable LQRs, our main result. 


\begin{theorem}[Regret of \alg]\label{reg:exp_s}
Suppose Assumptions~\ref{asm_stabil} and~\ref{asm_noise} hold and set $\tau_0 = 2\gamma^{-1} \log(2\kappa\sqrt{2})$ and $T_0 = \poly(\log(1/\delta), \sigma_w^{-1}, n, d, \bar{\alpha}, \gamma^{-1}, \kappa)$. Then, for long enough $T$, \alg achieves the regret $R_T \!=\! \OO\left((n+d)^{(n+d)}\sqrt{T\log(1/\delta)}\right)$  w.p. at least $1-10\delta$, if  $\Tw = \max\pr{T_0,\, c_1 (\sqrt{T}\log{T})^{1+o(1)}}$ for a constant $c_1>0$. Furthermore, if the closed loop matrix of the optimally controlled underlying system, $A_{c,*}\!\defeq \! A_* + B_* K_*$, is non-singular, w.p. at least $1-10\delta$, \alg achieves the regret $R_T \!=\! \OO\left(\poly(n,d)\sqrt{T\log(1/\delta)}\right)$ if $\Tw =\max\pr{T_0, \, c_2(\log{T})^{1+o(1)}}$ for a constant $c_2>0$.
\end{theorem}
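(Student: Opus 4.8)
The plan is to split the horizon at $\Tw$ and control the two phases separately, relying on the self-normalized confidence set $\mathcal{E}_t^{\text{RLS}}(\delta)$ of Section~\ref{subsec:learning} throughout. On the event $\{\tts \in \mathcal{E}_t^{\text{RLS}}(\delta)\ \forall t \leq T\}$ (which holds w.p.\ $1-\delta$), both $\tts$ and every sample $\ttt_i$ sit within a $\beta_t(\delta)$-ball of $\tth_t$ in the $V_t$-norm. In the stabilizing phase $t>\Tw$, where every $K(\ttt_i)$ is certified to stabilize $\tts$, I would use the average-cost Bellman identity for the \emph{sampled} model: writing $P_i \defeq P(\ttt_i)$ and $u_t = K(\ttt_i)x_t$, the Lyapunov identity for $P_i$ gives $x_t^\tp P_i x_t = c_t + (\ttt_i^\tp z_t)^\tp P_i (\ttt_i^\tp z_t)$. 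Subtracting $J(\tts)$ and inserting the true one-step prediction $\tts^\tp z_t$ yields
\[
c_t - J(\tts) = \underbrace{\big(J(\ttt_i)-J(\tts)\big)}_{\text{optimism}} + \underbrace{\big(x_t^\tp P_i x_t - \E_t[x_{t+1}^\tp P_i x_{t+1}]\big)}_{\text{telescoping}} + \underbrace{\big(z_t^\tp(\tts P_i\tts^\tp - \ttt_i P_i\ttt_i^\tp)z_t\big)}_{\text{transition mismatch}}.
\]

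I would bound the three terms as follows. Within each length-$\tau_0$ block the telescoping term sums to $x_{i\tau_0}^\tp P_i x_{i\tau_0} - \E_t[x_{(i+1)\tau_0}^\tp P_i x_{(i+1)\tau_0}]$ plus a bounded-difference martingale; across blocks the residual is $\sum_i x_{i\tau_0}^\tp(P_i-P_{i-1})x_{i\tau_0}$, which is controlled because consecutive Riccati solutions concentrate (Lipschitzness of the DARE map composed with the shrinking posterior width), while the martingale part is $\OO(\sqrt{T})$ by Azuma given uniformly bounded states. The mismatch term factors as $z_t^\tp[\tts P_i(\tts-\ttt_i)^\tp + (\tts-\ttt_i)P_i\ttt_i^\tp]z_t$, hence is at most $O(\|P_i\|)\,\|(\tts-\ttt_i)^\tp z_t\|\,(\|\tts^\tp z_t\|+\|\ttt_i^\tp z_t\|)$; using $\|(\tts-\ttt_i)^\tp z_t\| \leq 2\beta_t(\delta)\|z_t\|_{V_t^{-1}}$ and $\|P_i\|\leq D$, the elliptical-potential (log-determinant) lemma gives $\OO(\beta_T(\delta)\sqrt{T}) = \OO(\sqrt{T})$. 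Both bounds need polynomially bounded state norms, which the $(\kappa,\gamma)$-stability of the deployed controllers together with the $\tau_0$-block hold rule furnish.

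The optimism term is the crux and the source of the dimension dependence. The key lemma lower-bounds, conditionally on $\mathcal{F}_{i\tau_0}$, the probability that the perturbation $\beta_t(\delta)V_t^{-1/2}\eta$ lands $\ttt_i$ in the optimistic region $\{\Theta : J(\Theta) \leq J(\tts)\}$ by a constant $p>0$. Given this, the standard Thompson-sampling argument bounds the per-block gap $\E[J(\ttt_i)-J(\tts)]$ by $p^{-1}$ times the sampling-induced spread of $J$, itself $\OO(\beta_t(\delta)\|z_t\|_{V_t^{-1}})$ by Lipschitzness of $\Theta \mapsto J(\Theta)$ over $\mathcal{S}$; summing over blocks contributes $\OO(p^{-1}\sqrt{T})$. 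The two regimes of the theorem correspond exactly to two bounds on $p^{-1}$: in general the Gaussian mass of the curved optimistic cone is controlled only by $p^{-1}=O((n+d)^{(n+d)})$, whereas when $A_{c,*}$ is non-singular the optimistic directions are well-conditioned and a sharper anti-concentration estimate yields $p^{-1}=\poly(n,d)$.

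For the exploration phase $t\leq\Tw$, I would show that injecting $\nu_t \sim \mathcal{N}(0,2\kappa^2\sigma_w^2 I)$ forces $V_t$ to grow in every direction fast enough that, via the DARE perturbation bound and $\|P(\Theta')\|\leq D$, every $K(\ttt_i)$ becomes $(\kappa,\gamma)$-stabilizing for $\tts$ once $t>\Tw$; the regret accrued here is $O(\Tw)$ since the excitation has bounded variance and the block-hold rule keeps states bounded, so $\Tw=\OO(\sqrt{T})$ keeps it at the optimal order, while full-rank excitation under non-singular $A_{c,*}$ permits $\Tw=\OO(\log T)$. Combining the phases gives $R_T = \OO(p^{-1}\sqrt{T\log(1/\delta)})$ on an event of probability at least $1-10\delta$ (the losses arise from the RLS set, the martingale bounds, and the stabilization/anti-concentration events). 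The main obstacle is the optimism lower bound itself: because $\Theta\mapsto J(\Theta)$ is nonlinear through the DARE and the optimistic region is a curved cone rather than a halfspace, estimating the Gaussian mass that the anisotropic perturbation $\beta_t(\delta)V_t^{-1/2}\eta$ assigns to it in dimension $n+d$ is genuinely new, and this is what produces the $(n+d)^{(n+d)}$ factor and, under non-singularity, collapses it to $\poly(n,d)$. A secondary difficulty is keeping states polynomially bounded across policy switches in the merely-stabilizable regime, which is why $\tau_0=2\gamma^{-1}\log(2\kappa\sqrt{2})$ is tuned so that each controller's transient contracts by a fixed factor before the next switch.
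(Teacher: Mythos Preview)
Your decomposition via the Bellman identity matches the paper's $R_T^{\text{TS}}+R_T^{\text{gap}}+R_T^{\text{RLS}}+R_T^{\text{mart}}+R_{T_w}^{\text{exp}}$ split, and your treatment of the mismatch and martingale pieces is essentially correct. The gap is in where the $(n+d)^{n+d}$ factor comes from and, correspondingly, in the role of $\Tw$.

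You attribute the exponential dimension dependence to the optimism probability: $p^{-1}=O((n+d)^{n+d})$ in general, collapsing to $\poly(n,d)$ when $A_{c,*}$ is non-singular. This is not what happens. The paper's central technical result (Theorem~\ref{thm:optimistic_prob}) is that $p_t^{\opt}\ge Q(1)/(1+o(1))$---a \emph{dimension-free constant}---in \emph{both} regimes. The difference between the two cases is not the size of $p$, but how much improved exploration is needed to \emph{make} $p$ constant. The argument passes to the closed-loop perturbation $G_t=(\Xi+\hat\Upsilon)\sqrt{F_t}$ and Taylor-expands $L(A_c)=\sigma_w^2\sum_t\|A_c^t\|_{Q_*}^2$; the resulting lower bound on $p_t^{\opt}$ depends on the condition number $\lambda_{\max,t}/\lambda_{\min,t}$ of $F_t=\beta_t^2 H_*^\tp V_t^{-1}H_*$. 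When $A_{c,*}$ is non-singular, $\nabla L_*=2P_*A_{c,*}\Sigma_*$ is full-rank and one can use $\|\nabla L_*\sqrt{F_t}\|\ge\sigma_{\min,*}\sqrt{\lambda_{\max,t}}$, so $\Tw=\omega(\log T)$ suffices; otherwise one must fall back on $\|\nabla L_*\sqrt{F_t}\|\ge\sqrt{\lambda_{\min,t}}\|\nabla L_*\|$, which forces $\Tw=\omega(\sqrt{T}\log T)$ to kill the condition number.

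The $(n+d)^{n+d}$ factor instead enters through the \emph{state bound during exploration}. Because the system is only stabilizable (not contractible), the analysis of Lemma~\ref{lem:bounded_state} gives $\|x_t\|=O((n+d)^{n+d})$ for $t\le T_r$---this is inherited from the $\bar\eta^{n+d}$ blow-up in the \citet{abbasi2011lqr}-style argument for the at-most-$(n+d)$ ``bad'' steps. The exploration regret $R_{T_w}^{\text{exp}}$ is therefore $\tilde O((n+d)^{n+d}\Tw)$, and since in the singular case $\Tw\sim\sqrt{T}$, this term dominates and yields the $(n+d)^{n+d}\sqrt{T}$ rate. In the non-singular case $\Tw\sim\log T$, so the exploration cost is polylogarithmic and the $\poly(n,d)\sqrt{T}$ from the stabilizing phase dominates. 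Your claim that ``the block-hold rule keeps states polynomially bounded'' during exploration is thus also off; polynomially bounded states are only available for $t>T_r$. In short: the two regimes are distinguished by how long you must explore to secure constant optimism probability, and the exponential dimension factor is the price of that exploration through the pre-stabilization state bound, not a property of $p$ itself.
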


This makes \alg the \textit{first efficient} adaptive control algorithm that achieves optimal regret in adaptive control of all LQRs \textit{without an initial stabilizing policy}. To prove this result, we follow similar approach as the existing methods in literature, and define the high probability joint event $E_t = \hat{E}_t \cap \Tilde{E}_t \cap \Bar{E}_t$, where $\hat{E}_t$ states that the RLS estimate $\tth$ concentrates around $\tts$, $\Tilde{E}_t$ states that the sampled parameter $\ttt$ concentrates around $\tth$, and $\Bar{E}_t$ states that the state remains bounded respectively (Appendix \ref{apx:bounded_state}). Conditioned on this event, we decompose the frequentist regret as, $R_T\indic_{E_T} \leq R_{\Tw}^{\text{exp}} + R_{T}^{\text{RLS}} + R_{T}^{\text{mart}} + R_{T}^{\text{TS}} + R_{T}^{\text{gap}}$, 
    %
where $R_{\Tw}^{\text{exp}}$ accounts for the regret attained due to improved exploration, $R_{T}^{\text{RLS}}$ represents the difference between the value function of the true next state and the predicted next state, $ R_{T}^{\text{mart}}$ is a martingale with bounded difference, $R_{T}^{\text{TS}}$ measures the difference in optimal average expected cost between the true model $\tts$ and the sampled model $\ttt$, and $R_{T}^{\text{gap}}$ measures the regret due to policy changes. The decomposition and expressions are given in Appendix \ref{apx:regret_dec}. In the analysis, we bound each term separately (Appendix \ref{apx:regret_anlz}). Before discussing the details of the analysis, we first consider the prior works that use TS for adaptive control of LQRs and discuss their shortcomings. Further, we highlight the challenges in adaptive control of multidimensional stabilizable LQRs using TS and present our approaches to overcome these.

\vspace{-0.6em}
\subsection{Prior Work on TS-based Adaptive Control and Challenges} \label{subsec:prior}
For the frequentist regret minimization problem given in Section \ref{subsec:freq_regret}, the state-of-the-art adaptive control algorithm that uses TS is  \citet{abeille2018improved}. They consider the ``contractible'' LQR systems, i.e. $|A_* + B_* K(\tts) | < 1$, and provide $\Tilde{O}(\sqrt{T})$ regret upper bound for scalar LQRs, i.e. $n=d=1$. Notice that the set of contractible systems is a small subset of the set $\mathcal{S}$ defined in Assumption \ref{asm_stabil} and they are only equivalent for scalar systems since $\rho(A_* - B_* K(\Theta_*)) = |A_* - B_* K(\Theta_*)|$. This simplified setting allow them to reduce the regret analysis into the trade-off between $R_{T}^{\text{TS}} = \sum_{t=0}^{T} \{ J(\Tilde{\Theta}_t) -J(\Theta_*)\}$ and $  R_{T}^{\text{gap}} \!=\!  \sum_{t=0}^{T} \mathbb{E}[x_{t_1}^\tp (P(\Tilde{\Theta}_{t+1}) \!-\! P(\Tilde{\Theta}_{t}) x_{t+1} \,\big|\,\clf{F}_t ]$. 

These regret terms are central in the analysis of several adaptive control algorithms. In the certainty equivalent control approaches, $R_{T}^{\text{TS}}$ is bounded by the quadratic scaling of model estimation error after a significantly long exploration with a known stabilizing controller~\citep{mania2019certainty,simchowitz2020naive}. In the optimism-based algorithms, $R_{T}^{\text{TS}}$ is bounded by $0$ by design~\citep{abbasi2011lqr,faradonbeh2017optimism}. Similarly, in Bayesian regret setting, \citep{ouyang2017learning} assume that the underlying parameter $\tts$ comes from a known prior that the expected regret is computed with respect to. This true prior yields $\mathbb{E}[R_{T}^{\text{TS}}]\!=\!0$ in certain restrictive LQRs. The conventional approach in the analysis of $R_{T}^{\text{gap}}$ is to have lazy policy updates, \ie, $O(\log T)$ policy changes, via doubling the determinant of $V_t$~\citep{abeille2017thompson,lale2022reinforcement} or exponentially increasing epoch durations~\citep{faradonbeh2020adaptive,cassel2020logarithmic}.\looseness=-1  

On the other hand, \citet{abeille2018improved} bound $R_{T}^{\text{TS}}$ by showing that TS samples the optimistic parameters, $\ttt_t$ such that $J(\Tilde{\Theta}_t) \leq J(\Theta_*)$, with a constant probability, which reduces the regret of non-optimistic steps. Unlike the conventional policy update approaches, the key idea in \citet{abeille2018improved} is to update the control policy every time-steps via TS, which increases the amount of optimistic policies during the execution. They show that while this frequent update rule reduces $R_{T}^{\text{TS}}$, it only results with $R_{T}^{\text{gap}} = \OO(\sqrt{T})$. However, they were only able to show that this constant probability of optimistic sampling holds for scalar LQRs. 

The difficulty of the analysis for the probability of optimistic parameter sampling lies in the challenging characterization of the optimistic set. Since $J(\ttt) \!=\! \sigma_w^2\tr(P(\ttt))$, one needs to consider the spectrum of $P(\ttt)$ to define optimistic models, which makes the analysis difficult. In particular, decreasing the cost along one direction may be result in an increase in other directions. However, for the scalar LQR setting considered in \citet{abeille2018improved}, $J(\ttt) \!=\! P(\ttt)$ and using standard perturbation results on DARE suffices. As mentioned in \citet{abeille2018improved}, one can naively consider the surrogate set of being optimistic in all directions, \textit{i.e.} $P(\ttt) \psdleq P(\tts)$. Nevertheless, this would result in probability that decays linear in time and does not yield sub-linear regret. In this work, we propose new surrogate sets to derive a lower bound on the probability of having optimistic samples, and show that TS in fact samples optimistic model parameters with constant probability. 

In designing TS-based adaptive control algorithms for multidimensional stabilizable LQRs, one needs to maintain bounded state. In bounding the state, \citet{abeille2018improved} rely on the fact that the underlying system is contractive, $\| \Tilde{A} + \Tilde{B} K(\ttt)  \| < 1$. However, under Assumption \ref{asm_stabil}, even if the optimal policy of the underlying system is chosen by the learning agent, the closed-loop system may not be contractive since for any symmetric matrix $M$, $\rho(M) \leq \|M\|$. Thus, to avoid dire consequences of unstable dynamics, TS-based adaptive control algorithms should focus on finite-time stabilization of the system dynamics in the early stages. 

Moreover, the lack of contractive closed-loop mappings in stabilizable LQRs, prevent frequent policy changes used in \citet{abeille2018improved}. From the definition of $(\kappa, \gamma)$-stabilizability (Assumption \ref{asm_stabil}), for any stabilizing controller $K'$, we have that $A_* + B_*K' = H'LH'^{-1}$, with $\|L\| < 1$ for some similarity transformation $H'$. Thus, even if all the policies are stabilizing, changing the policies at every time step could cause couplings of these similarity transformations and result in linear growth of state over time. Thus, TS-based adaptive control algorithms need to find the balance in rate of policy updates, so that frequent policy switches are avoided, yet, enough optimistic policies are sampled. In light of these observations, our results hinge on the following:

\begin{enumerate}[wide, labelindent=0pt,label=\textbf{\arabic*}),itemsep=0.1pt, topsep=0.2pt]
     \item Improved exploration of \alg, which allows fast stabilization of the system dynamics,
    \item Fixed policy update rule of \alg, which prevents state blow-up and reduces $R_{T}^{\text{gap}}$ and $R_{T}^{\text{TS}}$,
    \item A novel result that shows TS samples optimistic model parameters with a constant probability for multidimensional LQRs and gives a novel bound on $R_{T}^{\text{TS}}$.
\end{enumerate}

\subsection{Details of the analysis}
The improved exploration along with TS in the early stages allows \alg to effectively explore the state-space in all directions. The following shows that for a long enough improved exploration phase, \alg achieves consistent model estimates and guarantees the design of stabilizing policies.

\begin{lemma}[Model Estimation Error and Stabilizing Policy Design]\label{lem:2norm_bound} Suppose Assumptions \ref{asm_stabil} and \ref{asm_noise} hold. For $t \geq 200 (n+d) \log \frac{12}{\delta}$ time-steps of TS with improved exploration, with probability at least $1-2\delta$, \alg obtains model estimates such that $\|\tth_{t}-\tts\|_2 \leq 7\beta_{t}(\delta) / (\sigma_w \sqrt{t})$. Moreover, after $\Tw \geq T_0 \coloneqq \poly(\log(1/\delta), \sigma_w^{-1}, n, d, \bar{\alpha}, \gamma^{-1}, \kappa)$ length TS with improved exploration phase, with probability at least $1-3\delta$, \alg samples controllers $K(\ttt_t)$ such that the closed-loop dynamics on $\tts$ is $(\kappa\sqrt{2}, \gamma/2)$ strongly stable for all $t > \Tw$, i.e. there exists $L$ and $H\succ 0$ such that $A_* + B_* K(\ttt_t) = HLH^{-1}$, with $\|L\| \leq 1-\gamma/2$ and $\|H\| \|H^{-1} \| \leq \kappa\sqrt{2} $. 
\end{lemma}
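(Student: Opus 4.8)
The plan is to establish the two claims in turn, with a minimum-eigenvalue lower bound on the design matrix $V_t$ serving as the common engine. For the estimation bound I would begin from the RLS confidence ellipsoid $\|\tth_t - \tts\|_{V_t} \leq \beta_t(\delta)$ of Section~\ref{subsec:learning}, which holds with probability at least $1-\delta$, and pass to the spectral norm via $\|\tth_t - \tts\|_2 \leq \beta_t(\delta)/\sqrt{\sigma_{\min}(V_t)}$ (since $\|M\|_{V_t}\geq \sqrt{\sigma_{\min}(V_t)}\,\|M\|_2$). It then suffices to show $\sigma_{\min}(V_t) \geq \sigma_w^2 t/49$ with probability at least $1-\delta$ once $t \geq 200(n+d)\log(12/\delta)$; a union bound yields the stated $1-2\delta$ and the factor $7=\sqrt{49}$.

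The crux is a persistence-of-excitation (block-martingale small-ball) bound for the covariates $z_s = [x_s^\tp\ u_s^\tp]^\tp$ under the improved-exploration policy $u_s = K(\ttt_i)x_s + \nu_s$. The obstacle is that, conditioned on $\mathcal{F}_s$, the state block of $z_s$ carries no fresh randomness, so I would instead condition one step earlier on $\mathcal{F}_{s-1}$: there $x_s$ inherits covariance $\psdgeq \sigma_w^2 I$ from the fresh process noise $w_{s-1}$, while the independent perturbation $\nu_s$ contributes $2\kappa^2\sigma_w^2 I$ in the control directions. Writing $\langle v, z_s\rangle = (v_1 + K(\ttt_i)^\tp v_2)^\tp x_s + v_2^\tp \nu_s$ for a unit $v=(v_1,v_2)$, conditional independence gives $\operatorname{Var}(\langle v, z_s\rangle \mid \mathcal{F}_{s-1}) \geq \sigma_w^2\|v_1 + K(\ttt_i)^\tp v_2\|^2 + 2\kappa^2\sigma_w^2\|v_2\|^2$, and a short case analysis on whether $\|v_1\| \geq 2\kappa\|v_2\|$, using $\|K(\ttt_i)\| \leq \kappa$ (Assumption~\ref{asm_stabil}), lower-bounds this by $\sigma_w^2/5$ uniformly over $\|v\|=1$. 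Since the conditional law is Gaussian, this variance floor is a $(1,\nu,p)$-BMSB condition with $\nu \asymp \sigma_w$ and a universal constant $p$; feeding it, together with the bounded-state upper bound of Appendix~\ref{apx:bounded_state}, into the matrix small-ball argument (as in \citet{simchowitz2020naive,lale2022reinforcement}) produces $\sigma_{\min}(V_t) \geq \sigma_w^2 t/49$ on the stated horizon, the explicit constants being exactly what that argument returns.

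For the stabilization claim I would first control the extra randomness injected by Thompson sampling. Before rejection, $\ttt_t - \tth_t = \beta_t(\delta)V_t^{-1/2}\eta_t$, so $\|\ttt_t - \tth_t\|_2 \leq \beta_t(\delta)\|\eta_t\|/\sqrt{\sigma_{\min}(V_t)}$, and a Gaussian operator-norm tail bounds $\|\eta_t\| \lesssim \sqrt{n+d} + \sqrt{\log(1/\delta)}$ with high probability; the operator $\mathcal{R}_{\mathcal{S}}$ does not spoil this because $\tts \in \mathcal{S}$ and, for $t$ large, the perturbation is small enough that the acceptance probability is bounded below, so conditioning on $\mathcal{S}$ only removes a far tail. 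Combining with the first part through the triangle inequality gives $\|\ttt_t - \tts\|_2 \lesssim \beta_t(\delta)\sqrt{n+d}/(\sigma_w\sqrt{t})$, and since $\beta_t(\delta)$ is polylogarithmic on the bounded-state event, this drops below the threshold $\gamma/(2\kappa(1+\kappa))$ as soon as $t \geq T_0 = \poly(\log(1/\delta),\sigma_w^{-1},n,d,\bar{\alpha},\gamma^{-1},\kappa)$.

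It remains to transfer stability from the sampled model to the true one. Because $\ttt_t \in \mathcal{S}$, Assumption~\ref{asm_stabil} furnishes $H \succ 0$ and $L$ with $\tilde A_t + \tilde B_t K(\ttt_t) = HLH^{-1}$, $\|L\| \leq 1-\gamma$, $\|H\|\|H^{-1}\| \leq \kappa$. Writing $A_* + B_* K(\ttt_t) = H(L + H^{-1}\Delta H)H^{-1}$ with $\Delta = (A_* - \tilde A_t) + (B_* - \tilde B_t)K(\ttt_t)$ and $\|\Delta\| \leq (1+\kappa)\|\tts - \ttt_t\|_2$, the same similarity transform gives $\|L + H^{-1}\Delta H\| \leq 1 - \gamma + \kappa(1+\kappa)\|\tts - \ttt_t\|_2 \leq 1 - \gamma/2$ by the threshold above, certifying $(\kappa\sqrt{2},\gamma/2)$-strong stability of $A_* + B_* K(\ttt_t)$ for every $t > \Tw \geq T_0$. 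A final union bound over the confidence, excitation, and Gaussian-perturbation events delivers the $1-3\delta$ guarantee. I expect the second paragraph to be the true obstacle: proving the uniform variance floor while \emph{simultaneously} keeping the state bounded during a phase in which the sampled controllers need not yet stabilize $\tts$ --- the coupling between information gain and state growth that validates the bounded-state event feeding the concentration --- is the delicate point, whereas the perturbation arguments are routine.
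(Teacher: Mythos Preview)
Your proposal is correct and tracks the paper's argument closely, with two differences worth noting.

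For the estimation-error part, the paper proceeds exactly as you do---RLS ellipsoid plus a minimum-eigenvalue bound on $V_t$---but invokes Theorem~20 of \citet{cohen2019learning} directly: since $z_{t+1}$ is conditionally Gaussian on $\mathcal{F}_t$, that lemma needs only the second-moment floor $\mathbb{E}[z_{t+1}z_{t+1}^\tp\mid\mathcal{F}_t]\succeq \tfrac{\sigma_w^2}{2}I$, and no bounded-state input, to deliver $V_t\succeq \tfrac{\sigma_w^2}{40}t\,I$ (whence $\sqrt{40}<7$). Your BMSB route with the covering upper bound is a valid alternative, but the coupling you flag as ``the delicate point'' is in fact avoided: the persistence bound does not consume the bounded-state event at all. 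Also, your case split yields only a $\sigma_w^2/5$ floor, which with the same lemma would give the constant $10$ rather than $7$; the paper's sharper $\sigma_w^2/2$ follows from the one-line inequality $\|v_1\|^2\leq 2\|v_1+K^\tp v_2\|^2+2\kappa^2\|v_2\|^2$.

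For the stabilization part, the paper bounds $\|\ttt_t-\tts\|_2$ via the event $\tilde E_t$ (giving $\|\ttt_t-\tts\|_2\leq (\beta_t+\upsilon_t)/\sqrt{\lambda_{\min}(V_t)}$) and then cites Lemma~4.2 of \citet{lale2022reinforcement} as a black box to pass from closeness to $(\kappa\sqrt 2,\gamma/2)$-stability, with threshold $\varepsilon\leq\min\{\sqrt{\sigma_w^2 n/(142D^7)},\,1/(54D^5)\}$. Your direct similarity-transform perturbation (use the $H,L$ for $\ttt_t$ from membership in $\mathcal{S}$, then absorb $\Delta$) is a more elementary, self-contained alternative that actually yields the stronger $(\kappa,\gamma/2)$ conclusion with the cleaner threshold $\gamma/(2\kappa(1+\kappa))$; the paper's cited lemma trades simplicity for generality. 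Either way the resulting $T_0$ is polynomial in the stated quantities, so both routes land at the same statement.
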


The proof and the precise expression of $\Tw$ can be collected in Appendix \ref{apx:stabilization}. In the proof, we show that the inputs $u_t = K(\ttt_i)x_t + \nu_t$ for $\nu_t \!\sim\! \mathcal{N}(0,2 \kappa^2 \sigma_w^2 I)$ guarantees persistence of excitation with high probability, \textit{i.e.}, the smallest eigenvalue of the design matrix $V_t$ scales linearly over time. Combining this result, with the confidence set construction given in Section \ref{subsec:learning}, we derive the first result. Using the first result and the fact that there exists a stabilizing neighborhood around the model parameter $\tts$, such that all the optimal linear controllers of the models within this region stabilize $\tts$, we derive the final result. Due to early improved exploration, \alg stabilizes the system dynamics after $\Tw$ samples and starts stabilizing adaptive control with only TS. Using the stabilizing controllers for fixed $\tau_0 \!=\! 2\gamma^{-1} \log(2\kappa\sqrt{2})$ time-steps, \alg decays the state magnitude and remedy possible state blow-ups in the first phase. To study the boundedness of state, define $T_r = \Tw + (n+d)\tau_0\log(n+d)$. The following shows that the state is bounded and well-controlled.

\begin{lemma}[Bounded states]\label{lem:bounded_state} 
Suppose Assumptions \ref{asm_stabil} \& \ref{asm_noise} hold. For given $\Tw$ and $T_r$, \alg controls the state such that $\|x_t\| = O((n+d)^{n+d})$ for $t\leq T_{r}$, with probability at least $1-3\delta$ and $\|x_t\| \! \leq\! (12\kappa^2\!+\! 2\kappa\sqrt{2})\gamma^{-1}\sigma_w \sqrt{2n\log(n(t\!-\!\Tw)/\delta)}$ for $T \!\geq\! t \!>\! T_{r}$, with probability at least $1-4\delta$. 
\end{lemma}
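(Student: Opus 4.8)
The plan is to leverage the epoch structure created by the fixed update period $\tau_0$ together with the strong stability established in Lemma~\ref{lem:2norm_bound}, reducing everything to a contraction at epoch boundaries. The first step is to make each epoch contractive: by Lemma~\ref{lem:2norm_bound}, for every $t>\Tw$ the closed-loop matrix $A_{c,i}\defeq A_*+B_*K(\ttt_i)$ decomposes as $H_iL_iH_i^{-1}$ with $\|L_i\|\le 1-\gamma/2$ and $\|H_i\|\|H_i^{-1}\|\le\kappa\sqrt2$, so $\|A_{c,i}^{k}\|\le\kappa\sqrt2\,(1-\gamma/2)^{k}$ for all $k\ge0$. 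The choice $\tau_0=2\gamma^{-1}\log(2\kappa\sqrt2)$ then yields $\|A_{c,i}^{\tau_0}\|\le\kappa\sqrt2\,e^{-\gamma\tau_0/2}=\tfrac12$, i.e. the one-epoch transition contracts by $\tfrac12$ irrespective of how the similarity transforms $H_i$ of consecutive epochs are aligned. This is exactly why the fixed update rule is indispensable: distinct stabilizing controllers only guarantee $\rho(A_{c,i})<1$ rather than $\|A_{c,i}\|<1$, so switching every step could couple the transforms $H_i$ and drive linear state growth.

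Next I would control the exogenous inputs. Since $w_\tau\sim\mathcal{N}(0,\sigma_w^2 I)$, a $\chi^2$ tail bound together with a union bound over $\tau\in(\Tw,t]$ gives $\|w_\tau\|\le\sigma_w\sqrt{2n\log(n(t-\Tw)/\delta)}$ for all such $\tau$ with probability $1-\delta$, and the exploration perturbations $\nu_\tau\sim\mathcal{N}(0,2\kappa^2\sigma_w^2 I)$ active for $\tau\le\Tw$ are bounded analogously. For $t\le T_r$ the sampled controllers need not stabilize $\tts$ before the estimate is refined, but every $\ttt_i\in\mathcal S$ satisfies $\|\ttt_i\|_F\le S$ and $\|K(\ttt_i)\|\le\kappa$, so $\|A_{c,i}\|\le S(1+\kappa)$ is bounded uniformly; since the pre-stabilization window spans only a dimension-dependent, $T$-independent number of epochs $O(T_0/\tau_0)$, propagating this uniform per-epoch growth together with the bounded noise yields the peak bound $\|x_t\|=O((n+d)^{n+d})$ on the stabilization event of Lemma~\ref{lem:2norm_bound} (probability $1-3\delta$). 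In particular this bounds $\|x_{\Tw}\|$.

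For $t>T_r$ I would expand $x_t=\Phi(t,\Tw)x_{\Tw}+\sum_{\tau=\Tw}^{t-1}\Phi(t,\tau+1)w_\tau$, where $\Phi$ is the product of the closed-loop matrices. Grouping into epochs, a full epoch contributes $\tfrac12$ while each of the at most two partial epochs contributes $\kappa\sqrt2$, so $\|\Phi(t,s)\|\le 2\kappa^2\,2^{-\lfloor(t-s)/\tau_0\rfloor}$. Because $t>T_r=\Tw+(n+d)\tau_0\log(n+d)$, the number of elapsed epochs exceeds $\log_2\|x_{\Tw}\|$, so the initial-condition term $\|\Phi(t,\Tw)\|\,\|x_{\Tw}\|$ drops below the noise floor; the noise term is a geometric series over epochs which, after substituting the per-step bound above, sums to the claimed $(12\kappa^2+2\kappa\sqrt2)\gamma^{-1}\sigma_w\sqrt{2n\log(n(t-\Tw)/\delta)}$. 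Intersecting the stabilization event ($1-3\delta$) with the fresh noise bound ($1-\delta$) gives the stated $1-4\delta$ guarantee.

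The hard part will be controlling the state in this non-contractive regime, where $\rho(A_{c,i})<1$ yet $\|A_{c,i}\|$ may exceed one: the crux is that the $\tau_0$ derived from the strong-stability constants of Lemma~\ref{lem:2norm_bound} forces the per-epoch contraction to be exactly $\le\tfrac12$ regardless of the alignment of consecutive transforms $H_i$, and that roughly $(n+d)\log(n+d)$ such epochs, i.e. $\log_2$ of the worst transient, suffice to wash the large initial state $x_{\Tw}$ below the Gaussian noise level while the residual noise aggregates into the advertised $\gamma^{-1}$ geometric factor. The remaining accounting, tracking the uniform growth over the finitely many pre-stabilization epochs to certify the $O((n+d)^{n+d})$ peak, is routine once the contraction and concentration estimates are in place.
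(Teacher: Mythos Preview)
Your argument for $t>T_r$ is essentially the paper's: the choice of $\tau_0$ forces $\|A_{c,i}^{\tau_0}\|\le\tfrac12$ under the $(\kappa\sqrt2,\gamma/2)$-stability of Lemma~\ref{lem:2norm_bound}, so epoch-level contraction plus a geometric noise sum delivers the stated bound, and the probability accounting matches.

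The gap is in the $t\le T_r$ part. Your plan is to bound $\|A_{c,i}\|\le S(1+\kappa)$ uniformly and propagate this over ``a dimension-dependent, $T$-independent number of epochs $O(T_0/\tau_0)$''. This cannot give $O((n+d)^{n+d})$: $T_0$ is itself polynomial in $n,d$ (and contains $\log T$ factors through $\beta_T,\upsilon_T$), so naive propagation yields a bound of order $[S(1+\kappa)]^{T_0}$, i.e.\ exponential in a polynomial of $n+d$, which is far worse than $(n+d)^{n+d}$. Moreover, the bound must hold on all of $[0,T_r]\supset[0,T_w]$, and in the singular case $T_w$ scales like $\sqrt{T}$, so there is no $T$-independent ``pre-stabilization window'' to appeal to.

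The paper handles $t\le T_w$ by a different decomposition adapted from \citet{abbasi2011lqr}. Instead of evolving the state under the \emph{true} closed loop $A_*+B_*K(\ttt_t)$ (possibly unstable), it writes
\[
x_{t+1}=(\tilde A_t+\tilde B_t K(\ttt_t))x_t+(\tts-\ttt_t)^\tp z_t+B_*\nu_t+w_t,
\]
so the nominal dynamics use the \emph{sampled} closed loop, which is $(\kappa,\gamma)$-stable for every $t$ because rejection sampling forces $\ttt_t\in\mathcal S$. The residual $(\tts-\ttt_t)^\tp z_t$ is then controlled by Lemma~18 of \citet{abbasi2011lqr}: except on a set $\mathcal T_t$ of at most $n+d$ time steps, it is bounded by $G\,Z_t^{(n+d)/(n+d+1)}(\beta_t+\upsilon_t)^{1/(2(n+d+1))}$, sublinear in $Z_t=\max_{s\le t}\|z_s\|$. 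On the $n+d$ ``bad'' steps one switches to the true closed loop with norm $\bar\eta=\sup_{\Theta\in\mathcal S}\|A_*+B_*K(\Theta)\|$, contributing the factor $\bar\eta^{\,n+d}$ that is the actual source of $(n+d)^{n+d}$. The resulting self-referential inequality $\|x_t\|\lesssim \bar\eta^{\,n+d}(G Z_t^{(n+d)/(n+d+1)}+\text{noise})$ is closed via Lemma~5 of \citet{abbasi2011lqr}. This machinery, not uniform per-step growth over $T_0$ steps, is what makes the first-phase bound both $T$-independent and only singly exponential in $n+d$.
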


The proof is given in Appendix \ref{apx:bounded_state}, but here we provide a proof sketch. To bound the state for $t\leq T_r$, we show that deploying the same policy for $\tau_0$ time-steps in the first phase maintains a well-controlled state except $n+d$ time-steps, under the high probability event of $\hat{E}_t \cap \Tilde{E}_t$. Moreover, we show that this slow policy change prevents further state blow-ups due to non-contractive system dynamics in stabilizable systems. To bound the state for $t > T_r$, we show that, with the given choice of $\tau_0$, all the controllers during the stabilizing TS phase halves the magnitude of the state at the end of their control period. Thus, we prove that after $(n\!+\!d)\log(n\!+\!d)$ policy updates the state is well-controlled and brought to an equilibrium as shown in Lemma \ref{lem:bounded_state}. This result shows that the joint event $E_t = \hat{E}_t \cap \Tilde{E}_t \cap \Bar{E}_t$ holds with probability at least $1-4\delta$ for all $t\leq T$. 

Conditioned on this event, we individually analyze the regret terms individually~(Appendix \ref{apx:regret_anlz}). We show that with probability at least $1-\delta$, $R_{\Tw}^{\text{exp}}$ yields $\OO((n+d)^{n+d}\Tw)$ regret due to isotropic perturbations. $R_{T}^{\text{RLS}}$ and $R_{T}^{\text{mart}}$ are $\OO((n+d)^{n+d} \sqrt{T_r} + \poly(n,d) \sqrt{T-T_r})$ with probability at least $1-\delta$ due to standard arguments based on the event $E_T$. More importantly, conditioned on the event $E_T$, we prove that $R_{T}^{\text{gap}} = \OO((n\!+\!d)^{n+d} \sqrt{T_r} \!+\! \text{poly}(n,d) \sqrt{T\!-\!T_r})$ with probability at least $1-2\delta$, and $R_{T}^{\text{TS}} = \OO(n \Tw \!+\! \text{poly}(n,d) \sqrt{T\!-\!\Tw}) $ with probability at least $1-2\delta$, whose analyses require several novel fundamental results.

To bound on $R_{T}^{\text{gap}}$, we extend the results in \citet{abeille2018improved} to multidimensional stabilizable LQRs and incorporate the slow update rule and the early improved exploration. We show that while \alg enjoys well-controlled state with polynomial dimension dependency on regret due to slow policy updates, it also maintains the desirable $\OO(\sqrt{T})$ regret of frequent updates with only a constant $\tau_0$ scaling. As discussed in Section \ref{subsec:prior}, bounding $R_{T}^{\text{TS}}$ requires selecting optimistic models with constant probability, which has been an open problem in the literature for multidimensional systems. In this work, we provide a solution to this problem and show that TS indeed selects optimistic model parameters with a constant probability for multidimensional LQRs. The precise statement of this result and its proof outline are given in Section \ref{sec:overview}. Leveraging this result, we derive the upper bound on $R_{T}^{\text{TS}}$. Combining all these terms yields the regret upper bound of \alg given in Theorem \ref{reg:exp_s}.

\section{Proof Outline of Sampling Optimistic Models with Constant Probability}\label{sec:overview}
In this section, we provide the precise statement that the probability of sampling an optimistic parameter is lower bounded by a fixed constant with high probability. Then we give the proof outline with the main steps.  The complete proof with the intermediate results are given in Appendix~\ref{apx:proof}. 

\begin{theorem}[Optimistic probability]\label{thm:optimistic_prob}
Let $\clf{F}_t^{\text{cnt}} \defeq \sigma(F_{t-1}, x_t)$ be the information available to the controller up to time $t$. Denote the optimistic set by $\S^{\opt} \!\defeq\! \cl{ \tt \!\in\! \R^{(n+d)\times n} \;\big| \; J(\tt) \!\leq\! J(\tts)}$.  If $T_w = cn^2(\sqrt{T}\log{T})^{1+o(1)}$ for a constant $c>0$, then under the event $E_T$ for large enough $T$, we have that $ p_t^{\opt} \defeq \Pr \cl{ \ttt_t \in \S^{\text{opt}} \cond \clf{F}_t^{\text{cnt}}, \hat{E}_t} \geq \frac{Q(1)}{1+o(1)}$ for any $T_r < t \leq T$. Furthermore, if the closed-loop matrix, $A_{c,*} = A_* + B_* K_*$, is non-singular, then the bound above still holds when $T_w = c(\log{T})^{1+o(1)}$ for a constant $c>0$.
\end{theorem}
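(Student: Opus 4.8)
My plan is to exploit that $J(\tt)=\sigma_w^2\Tr(P(\tt))$, so the optimistic constraint $J(\tt)\le J(\tts)$ is the \emph{single} scalar inequality $\Tr(P(\tt))\le\Tr(P(\tts))$ — this is precisely what makes a constant lower bound attainable, and it is where the proposed surrogate departs from the naive one. First I would expand the smooth map $\tt\mapsto\Tr(P(\tt))$ around $\tts$, writing $\Tr(P(\tt))-\Tr(P(\tts))=\inner{g}{\tt-\tts}+R(\tt)$ with $g\defeq\grad_{\tt}\Tr(P(\tts))$ and a second-order remainder $R$. Using the uniform bound $\|P(\Theta)\|\le D$ on $\S$ together with standard DARE perturbation theory, $\Tr(P(\cdot))$ is $C^2$ with Hessian bounded by a constant $C_H=\poly(\kappa,\gamma^{-1},\bar\alpha,\dots)$, so $\abs{R(\tt)}\le C_H\|\tt-\tts\|_F^2$. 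I then define the surrogate optimistic set $\S^{\mathrm{surr}}\defeq\cl{\tt:\inner{g}{\tt-\tts}\le-\epsilon}\cap\cl{\tt:\|\tt-\tts\|_F\le r}$ with margin $\epsilon\ge C_H r^2$, so that $\S^{\mathrm{surr}}\subseteq\S^{\opt}$ and $p_t^{\opt}\ge\Pr\cl{\ttt_t\in\S^{\mathrm{surr}}\cond\clf{F}_t^{\mathrm{cnt}},\hat{E}_t}$. The key structural point is that the optimistic condition is a \emph{single} half-space, whereas the naive surrogate $P(\tt)\psdleq P(\tts)$ is an intersection of order $n^2$ half-spaces all passing through $\tts$, whose Gaussian measure decays and cannot yield a constant bound.

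\textbf{The core probability bound and the origin of $Q(1)$.} Ignoring rejection momentarily, $\ttt_t-\tts=(\tth_t-\tts)+\beta_t V_t^{-1/2}\eta_t$, so $\inner{g}{\ttt_t-\tts}=\mu_0+\beta_t\inner{g}{V_t^{-1/2}\eta_t}$, where $\mu_0\defeq\inner{g}{\tth_t-\tts}$ is $\clf{F}_t^{\mathrm{cnt}}$-measurable and, since $\eta_t$ has i.i.d.\ standard-normal entries, $\beta_t\inner{g}{V_t^{-1/2}\eta_t}\sim\normal(0,\sigma_0^2)$ with $\sigma_0\defeq\beta_t\|V_t^{-1/2}g\|_F$. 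The decisive step is to bound the bias by one standard deviation: writing $\inner{g}{\tth_t-\tts}=\inner{V_t^{-1/2}g}{V_t^{1/2}(\tth_t-\tts)}$ and applying Cauchy--Schwarz in the Frobenius inner product together with the RLS event $\hat{E}_t$ (which gives $\norm[V_t]{\tth_t-\tts}\le\beta_t$) yields $\mu_0\le\|V_t^{-1/2}g\|_F\,\norm[V_t]{\tth_t-\tts}\le\beta_t\|V_t^{-1/2}g\|_F=\sigma_0$. Hence, with $Z\sim\normal(0,1)$, the half-space probability is $\Pr\cl{\mu_0+\sigma_0 Z\le-\epsilon}=Q\!\pr{\tfrac{\epsilon+\mu_0}{\sigma_0}}\ge Q\!\pr{1+\tfrac{\epsilon}{\sigma_0}}$. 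This is exactly where $Q(1)$ enters: the offset of the optimistic hyperplane from the sampling mean never exceeds a single standard deviation.

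\textbf{Balancing margin and radius; the two regimes of $\Tw$.} Subtracting the ball-exit probability gives $p_t^{\opt}\ge Q(1+\epsilon/\sigma_0)-\Pr\cl{\|\ttt_t-\tts\|_F>r}$, and to reach $Q(1)/(1+o(1))$ I need three conditions at once: $\epsilon\ge C_H r^2$ (so $\S^{\mathrm{surr}}\subseteq\S^{\opt}$), $\epsilon/\sigma_0=o(1)$ (so $Q(1+\epsilon/\sigma_0)=Q(1)/(1+o(1))$), and $r$ a few anisotropic-Gaussian widths $\sigma_{\max}\defeq\beta_t\|V_t^{-1/2}\|$ times $\sqrt{n(n+d)}$ (so the tail is $o(1)$). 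Eliminating $\epsilon,r$ reduces everything to the requirement $\sigma_{\max}^2\,n(n+d)=o(\sigma_0)$, i.e.\ the sampling scale along the optimistic direction, measured by $\|V_t^{-1/2}g\|_F$, must dominate the squared spread $\|V_t^{-1/2}\|^2$. Here the spectrum of $V_t$ from Lemmas~\ref{lem:2norm_bound} and~\ref{lem:bounded_state} is decisive: after exploration the regressor $z_t=[x_t^\tp\ (K_*x_t)^\tp]^\tp$ lies in the $n$-dimensional range of $[\,I\ K_*^\tp\,]^\tp$, so $V_t$ keeps growing like $t$ only along those noise-excited directions while the complementary $d$ directions stay pinned at their exploration level $\Theta(\Tw)$; thus sampling can perturb $\tt$ at scale $\beta_t/\sqrt{\Tw}$ along the pinned directions but only $\beta_t/\sqrt{t}$ along the excited ones. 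Computing $g$ via the envelope theorem as $g\propto\big[\,PA_{c,*}\Sigma\,;\,K_*\Sigma A_{c,*}^{\tp}P\,\big]$ with $\Sigma=A_{c,*}\Sigma A_{c,*}^{\tp}+I$, I would show that a nonsingular $A_{c,*}$ keeps $g$ non-degenerate with a bounded-below projection onto the pinned subspace, so the favorable $1/\sqrt{\Tw}$ scale is available and $\Tw=c(\log T)^{1+o(1)}$ suffices; in the general case $PA_{c,*}\Sigma$ can be rank-deficient and the optimistic descent must be realized through the heavily-explored directions at scale $1/\sqrt{t}$, and matching $\sigma_0\approx\beta_t\|g\|/\sqrt{t}$ against the remainder $C_H\beta_t^2/\Tw$ forces $\Tw=cn^2(\sqrt{T}\log T)^{1+o(1)}$.

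\textbf{Main obstacle.} The delicate part is neither the Gaussian computation nor the Cauchy--Schwarz bias bound, both of which are short, but establishing a quantitative, uniformly controlled descent direction for $\Tr(P(\cdot))$ and showing it is adequately excited by the sampling covariance $\beta_t^2 V_t^{-1}$. Concretely I must (i) derive the DARE gradient identity and a uniform Hessian bound $C_H$ valid over all of $\S$, handling that $P(\cdot)$ is only implicitly defined through the Riccati equation, and (ii) lower bound $\|V_t^{-1/2}g\|_F$ in terms of $\lambda_{\min}(V_t)$ and the excited/pinned splitting, which is exactly the place where the rank of $A_{c,*}$ dictates the exploration length and separates the two regimes. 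A secondary technical point is the rejection operator $\mathcal{R}_{\S}$: I would argue that under $E_T$ the unconstrained Gaussian, and in particular $\S^{\mathrm{surr}}$, lies inside $\S$ with probability $1-o(1)$, so conditioning on acceptance perturbs the bound only by a $1+o(1)$ factor that is absorbed into the statement.
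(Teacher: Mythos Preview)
Your high-level strategy --- Taylor-expand the optimal cost around $\tts$, pass to a half-space-plus-ball surrogate, and use Cauchy--Schwarz on the RLS event to show the bias along the gradient direction is at most one sampling standard deviation (whence $Q(1)$) --- is exactly the mechanism the paper uses, and for the general (singular $A_{c,*}$) regime your outline goes through and recovers $T_w=\omega(\sqrt{T}\log T)$.

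The gap is in the nonsingular refinement. Your claim that ``nonsingular $A_{c,*}$ keeps $g$ with a bounded-below projection onto the pinned subspace'' is incorrect. By the envelope theorem,
\[
g=\grad_\Theta\Tr P(\tts)=\grad_\Theta J(\tts,K_*)=H_*\,\nabla L(A_{c,*})^\tp,
\]
so every column of $g$ lies in $\operatorname{range}(H_*)$ --- precisely the \emph{excited} subspace --- and its projection onto the pinned subspace is identically zero. Consequently the signal $\sigma_0=\beta_t\|V_t^{-1/2}g\|_F=\|\nabla L_*\sqrt{F_t}\|_F$ is governed only by the restriction of $V_t^{-1}$ to $\operatorname{range}(H_*)$ (i.e.\ by $F_t=\beta_t^2H_*^\tp V_t^{-1}H_*$), while your remainder $C_H\|\ttt_t-\tts\|_F^2$ picks up the full $\Tr(V_t^{-1})$, dominated by the $d$ pinned eigenvalues of order $1/T_w$. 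Balancing these still forces $T_w=\omega(\sqrt{T}\log T)$ even when $A_{c,*}$ is nonsingular, so your argument as written cannot reach the sharper $T_w=\omega(\log T)$.

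The paper sidesteps this by inserting one inequality \emph{before} Taylor expanding: the suboptimality surrogate $\Tr P(\Theta)=\min_K J(\Theta,K)\le J(\Theta,K_*)=L(\Theta^\tp H_*)$ (Lemma~\ref{thm:subset}). Since $L$ depends on $\Theta$ only through the closed-loop matrix $\Theta^\tp H_*$, both the gradient \emph{and the remainder} live in closed-loop space; the second-order term becomes $\tfrac{r_*}{2}\|(\ttt_t-\tts)^\tp H_*\|_F^2$, whose spread is controlled by $\lambda_{\max}(F_t)$ rather than $\beta_t^2/\lambda_{\min}(V_t)$. Together with the closed-loop confidence set (Lemma~\ref{thm:cl_conf}), this decouples the remainder from the pinned directions and lets the nonsingular case close with only logarithmic exploration. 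In your framework the identical fix reads: replace the uniform bound $R(\Theta)\le C_H\|\Theta-\tts\|_F^2$ by the one-sided inequality $\Tr P(\Theta)-\Tr P(\tts)\le L(\Theta^\tp H_*)-L(A_{c,*})$ and Taylor-expand the right-hand side instead.
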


\subsection{Surrogate Set Definition}
First, we define a surrogate subset $\S^{\text{surr}}$ to the optimistic set $\S^{\opt}$. The construction of $\S^{\text{surr}}$ is important as the geometry of $\S^{\opt}$ is complicated to study due to \eqref{DARE} that controls the spectrum of $P(\tt)$. \looseness=-1
\begin{lemma}[Surrogate set] \label{thm:subset}
Let $J(\tt, K) \!\!\defeq\!\! \tr\pr{ (Q\!+\!K^\tp R K)\Sigma(\tt, K) }$ be the expected average cost of controlling a system $\tt \!\in\! \mathcal{S} $ by a fixed stabilizing control policy $K \!\in\! \R^{d \times n}$ where $\Sigma(\tt, K) \!\defeq\! \lim_{t \goesto \infty} \E\br{x_t x_t^\tp}$ is the covariance of the state. The following surrogate set is a subset of $\S^{\opt}$:
\begin{align}\label{def:surrogate}
    \S^{\text{surr}} \!\defeq\! \cl{ \tt \!=\! (A,\,B)^\tp \!\in\! \R^{(n+d)\times n} \!\;\big| \!\; J(\tt, K(\tts)) \!\leq\! J(\tts, K(\tts)) \!=\! J(\tts)} \subset \S^{\opt}.
\end{align}
\end{lemma}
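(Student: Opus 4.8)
The plan is to exploit the defining optimality property of the LQR controller: for any stabilizable parameter $\tt$, its optimal average cost $J(\tt)$ is the \emph{minimum} of $J(\tt,K)$ over all controllers $K$ that stabilize $\tt$. Since the surrogate set $\S^{\text{surr}}$ only constrains the cost of $\tt$ under the single fixed controller $K(\tts)$, the inclusion $\S^{\text{surr}}\subset\S^{\opt}$ should follow immediately because the system-specific optimal controller $K(\tt)$ can only do better on $\tt$ than the off-the-shelf controller $K(\tts)$.

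Concretely, I would fix any $\tt=(A,B)^\tp\in\S^{\text{surr}}$, so that $J(\tt,K(\tts))\le J(\tts)$. First note $J(\tts)=\sigma_w^2\Tr(P(\tts))<\infty$ by stabilizability of $\tts$, hence $J(\tt,K(\tts))$ is finite; since $J(\tt,K(\tts))=\tr\pr{(Q+K(\tts)^\tp R K(\tts))\,\Sigma(\tt,K(\tts))}$ with $\Sigma(\tt,K(\tts))=\lim_{t\goesto\infty}\E[x_tx_t^\tp]$ the stationary state covariance, this finiteness forces the closed-loop map $A+BK(\tts)$ to be Schur stable. In particular $K(\tts)$ stabilizes $\tt$, so $\tt$ is itself stabilizable and $J(\tt)$ is well defined and finite. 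Invoking the variational characterization of the LQR optimum then gives $J(\tt)=\min_{K\,\text{stabilizes}\,\tt}J(\tt,K)\le J(\tt,K(\tts))$, and chaining with the surrogate constraint yields $J(\tt)\le J(\tt,K(\tts))\le J(\tts)$, i.e. $\tt\in\S^{\opt}$. The complementary case where $K(\tts)$ fails to stabilize $\tt$ needs no treatment, since there $J(\tt,K(\tts))=+\infty>J(\tts)$ and such $\tt$ is excluded from $\S^{\text{surr}}$ by definition.

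The only non-trivial ingredient is the variational identity $J(\tt)=\min_K J(\tt,K)$, which I would supply from standard discrete-time average-cost LQR theory: for a Schur-stable closed loop $A_c=A+BK$ the stationary covariance solves the discrete Lyapunov equation $\Sigma=A_c\Sigma A_c^\tp+\sigma_w^2 I$, the average cost is $\tr((Q+K^\tp RK)\Sigma)$, and minimizing over stabilizing $K$ is the classical problem whose minimizer is $K(\tt)$ with value $\sigma_w^2\Tr(P(\tt))$, $P(\tt)$ solving the DARE \eqref{DARE}. The equality $J(\tts,K(\tts))=J(\tts)$ used in the definition is just this optimality applied to the true system.

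I do not expect a genuine obstacle in the inclusion itself — it is a one-line consequence of optimality. The point worth emphasizing is \emph{why} the surrogate is useful downstream: with $K(\tts)$ held fixed, the map $\tt\mapsto J(\tt,K(\tts))$ depends on $\tt$ only through the linear closed-loop matrix $A+BK(\tts)$ via the Lyapunov equation, which is far more tractable than the DARE-governed spectrum of $P(\tt)$ that defines $\S^{\opt}$ directly. This linearization is what will later permit an explicit, computable lower bound on $\Pr\cl{\ttt_t\in\S^{\text{surr}}}$; since $\S^{\text{surr}}\subset\S^{\opt}$ gives $\Pr\cl{\ttt_t\in\S^{\text{surr}}}\le p_t^{\opt}$, the hard analytic work is deferred to that probability estimate rather than to establishing the present inclusion.
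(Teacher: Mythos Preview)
Your proposal is correct and follows essentially the same route as the paper: both arguments reduce to the variational identity $J(\tt)=\min_{K}J(\tt,K)\le J(\tt,K(\tts))$ and then chain with the surrogate constraint $J(\tt,K(\tts))\le J(\tts,K(\tts))=J(\tts)$. Your additional remark that finiteness of $J(\tt,K(\tts))$ forces Schur stability of $A+BK(\tts)$ (so that $J(\tt)$ is well defined) is a nice extra bit of care that the paper leaves implicit.
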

Note that $\Sigma(\tt, K)$ satisfies the Lyapunov equation $\Sigma(\tt, K) \!-\! \tt^\tp H_K \Sigma(\tt, K) H_K^\tp \tt \!=\! \sigma_w^2 I$, where $H_K^\tp \!\defeq\! [I,\, K^\tp]$, and $ \tt^\tp H_K \!=\! A+BK$, given that $K$ stabilizes the system $\tt$. We can analytically express $\Sigma(\tt, K)$ as a converging infinite sum $\Sigma(\tt, K) \!=\! \sigma_w^2 \sum_{t=0}^{\infty} (A\!+\!BK)^t (A^\tp\!+\!K^\tp B^\tp)^t$~\citep{kailath2000linear}. Using the properties of the trace operator, one can write $J(\tt, K(\tts)) \!=\! L(\tt^\tp H_*)$, where $L(A_c) \!\defeq\! \sigma_w^2 \sum_{t=0}^{\infty} \Norm[Q_*]{A_c^t}^2$ for any stable matrix $A_c$, $Q_* \! \defeq \! Q\!+\!K(\tts)^\tp R K(\tts)$, and $H_*^\tp \defeq [I, K(\tt_*)^\tp]$. Therefore, we can lower bound the probability of being optimistic as
\begin{align}
    p_t^{\opt} &\geq \Pr \cl{ \ttt_t \in \S^{\text{surr}} \cond \F_t^{\text{cnt}}, \hat{E}_t}  = \Pr \cl{ L(\ttt_t^\tp H_*) \leq  L(\tts^\tp H_*) \cond \clf{F}_t^{\text{cnt}}, \hat{E}_t} \nonumber \\
    &\geq \min_{\hat{\Theta} \in \mathcal{E}_t^{\text{RLS}}} \Pr_{t}\{   L(\hat{\Theta}^\tp H_* + \eta^\tp \beta_t V_t^{-\frac{1}{2}}H_*) \leq  L( {\Theta}_*^\tp H_*)\}  \label{worst_thetahat}\\
    & = \min_{\hat{\Theta} \in \mathcal{E}_t^{\text{RLS}}} \Pr_{t}\{   L(\hat{\Theta}^\tp H_* + \Xi \sqrt{F_t}) \leq  L( {\Theta}_*^\tp H_*)\} \label{whitening}
\end{align}
where $\Pr_{t}\{\cdot\} \!\defeq\! \Pr\{\cdot \,|\, \clf{F}_t^{\text{cnt}}\}, $ $F_t \!\defeq\! \beta_t^2 H_*^\tp V_t^{-1} H_*$ and $\Xi$ is a matrix of size $n\!\times\!n$ with iid $\mathcal{N}(0,1)$ entries. Here \eqref{worst_thetahat} considers the worst possible estimate within $\mathcal{E}_t^{\text{RLS}}$ and \eqref{whitening} is the whitening transformation.\looseness=-1

\subsection{Reformulation in Terms of Closed-Loop Matrix}
In the second step, we reformulate the probability of sampling optimistic parameters in terms of closed-loop system matrix $\Tilde{A}_c \!\defeq\! \ttt^\tp H_* \!=\! \Tilde{A} + \Tilde{B} K(\tts)$ of the sampled system $\ttt \!=\! (\Tilde{A},\Tilde{B})^\tp$ driven by the policy $K(\tts)$. Transitioning to the closed-loop formulation allows tighter bounds on the optimistic probability. To complete this reformulation, we need to construct an estimation confidence set for the closed-loop system matrix $\hat{A}_c\defeq \tth^\tp H_* = \hat{A} + \hat{B}K(\tts)$ of the RLS-estimated system $\tth = (\hat{A},\hat{B})^\tp$ and show that the constructed confidence set is a super set to $\mathcal{E}_t^{\text{RLS}}$.
\begin{lemma}[Closed-loop confidence]\label{thm:cl_conf}
Let $F_t(\delta) \!\defeq\! \beta_t^2(\delta) H_*^\tp V_t^{-1} H_*$. For any $t\geq 0$, define by
\begin{align}
    \mathcal{E}_t^{\text{cl}}(\delta) &\defeq \cl{ \hat{\Theta} \in \R^{(n+d)\times n} \;\big| \; \tr\br{(\tth^\tp H_* - \Theta_*^\tp H_*) F_{t}^{-1}(\delta) (\hat{\Theta}^\tp H_* - \Theta_*^\tp H_*)^\tp} \leq 1 }. \label{eqn:cl_conf}
\end{align}
the closed-loop confidence set. Then, for all times $t\!\geq\! 0$ and $\delta \!\in\! (0,1)$, we have that $\mathcal{E}_t^{\text{RLS}}(\delta)\! \subseteq\! \mathcal{E}_t^{\text{cl}}(\delta)$.
\end{lemma}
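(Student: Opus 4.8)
The plan is to reduce the set inclusion to a single positive-semidefinite matrix inequality and then establish that inequality by a whitening/projection argument. First I would expand both confidence sets in terms of the matrix difference $\Delta \defeq \tth - \tts$. Since the weighted Frobenius norm $\norm[V_t]{\cdot}$ is symmetric in its two arguments, membership $\tth \in \mathcal{E}_t^{\text{RLS}}(\delta)$ (with $\tts$ playing the role of the center, exactly as it is used in \eqref{worst_thetahat}) is precisely $\tr(\Delta^\tp V_t \Delta) \leq \beta_t^2(\delta)$. On the closed-loop side, substituting $F_t^{-1}(\delta) = \beta_t^{-2}(\delta)\,(H_*^\tp V_t^{-1} H_*)^{-1}$ and using $\tth^\tp H_* - \tts^\tp H_* = \Delta^\tp H_*$, membership $\tth \in \mathcal{E}_t^{\text{cl}}(\delta)$ becomes $\tr\!\big(\Delta^\tp H_*(H_*^\tp V_t^{-1} H_*)^{-1} H_*^\tp \Delta\big) \leq \beta_t^2(\delta)$. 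Note that $H_*^\tp = [I,\,K(\tts)^\tp]$ has full row rank $n$ because of its identity block, so $H_*^\tp V_t^{-1} H_* \succ 0$ is indeed invertible.

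Comparing the two quadratic forms in $\Delta$, I would observe that it suffices to prove the matrix inequality
\[
    H_*\,(H_*^\tp V_t^{-1} H_*)^{-1}\, H_*^\tp \preceq V_t .
\]
Indeed, conjugating both sides by $\Delta^\tp(\cdot)\Delta$ preserves the semidefinite order, and the trace is monotone with respect to it; hence the RLS bound $\tr(\Delta^\tp V_t \Delta) \leq \beta_t^2$ immediately forces the closed-loop bound, which is exactly the assertion $\mathcal{E}_t^{\text{RLS}}(\delta) \subseteq \mathcal{E}_t^{\text{cl}}(\delta)$.

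To prove the matrix inequality I would whiten with $V_t^{-1/2}$. Setting $M \defeq V_t^{-1/2} H_*$, we have $H_*^\tp V_t^{-1} H_* = M^\tp M$ and
\[
    H_*\,(H_*^\tp V_t^{-1} H_*)^{-1}\, H_*^\tp = V_t^{1/2}\, M (M^\tp M)^{-1} M^\tp\, V_t^{1/2} = V_t^{1/2}\,\Pi_M\,V_t^{1/2},
\]
where $\Pi_M \defeq M(M^\tp M)^{-1}M^\tp$ is the orthogonal projector onto the column space of $M$. Since every orthogonal projector satisfies $\Pi_M \preceq I$, conjugating by $V_t^{1/2}$ gives $V_t^{1/2}\Pi_M V_t^{1/2} \preceq V_t^{1/2} I V_t^{1/2} = V_t$, which is the desired inequality.

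I do not expect a substantive obstacle here; the statement is essentially a generalized Cauchy--Schwarz / Schur-complement fact, and the only real care needed is bookkeeping: tracking the dimensions of $H_*$ and $\Delta$, exploiting the symmetry of $\norm[V_t]{\cdot}$ to align the centers of both sets at $\tts$, and verifying the full-rank condition that renders $H_*^\tp V_t^{-1} H_*$ invertible. The conceptual content is entirely in recognizing that the closed-loop quadratic form is the restriction of the $V_t$ form to the range of $H_*$, so that passing to the closed-loop description can only shrink the confidence region, which is precisely why $\mathcal{E}_t^{\text{cl}}(\delta)$ is a valid superset that yields tighter control in \eqref{whitening}.
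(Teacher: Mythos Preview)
Your proposal is correct and follows essentially the same approach as the paper: reduce the inclusion to the matrix inequality $H_*(H_*^\tp V_t^{-1} H_*)^{-1}H_*^\tp \preceq V_t$, whiten by $V_t^{-1/2}$, and verify the resulting matrix is $\preceq I$. Your identification of $M(M^\tp M)^{-1}M^\tp$ as an orthogonal projector is a slightly cleaner presentation of what the paper does via an explicit spectral-norm computation, but the argument is the same.
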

Note that the definition of $\mathcal{E}_t^{\text{cl}}(\delta)$ \textit{only} involves closed-loop matrices $\hat{A}_{c}\!\defeq\!\tth^\tp H_*$ and $A_{c,*}\!\defeq\!\tts^\tp H_*$. We can use the result of Lemma~\ref{thm:cl_conf} to reformulate the probability of sampling optimistic parameters, $\ttt = (\Tilde{A},\Tilde{B})$, as sampling optimistic closed-loop system matrices, $\Tilde{A}_c$. We bound $p_t^{\opt}$ from below as \looseness=-1
\begin{align}
    p_t^{\opt}&\geq \min_{\hat{\Theta} \in \mathcal{E}_t^{\text{cl}}} \Pr_{t}\{ L(\hat{\Theta}^\tp H_* + \Xi \sqrt{F_t}) \leq  L( A_{c,*}) \} \label{eqn:using_prop_8} \\
    &= \min_{\hat{A}_{c}\,:\, \norm[F_{t}^{-1}]{\hat{A}_{c}^\tp - {A}_{c,*}^\tp} \leq 1 } \Pr_{t}\{ L(\hat{A}_{c} + \Xi \sqrt{F_t}) \leq  L( A_{c,*})  \} \label{eqn:full_column} \\
    &= \min_{\hat{\Upsilon}\,:\, \norm[F]{\hat{\Upsilon} }\leq 1 } \Pr_{t}\{   L({A}_{c,*} + \hat{\Upsilon}  \sqrt{F_t}+ \Xi \sqrt{F_t}) \leq  L( A_{c,*})  \} \label{eqn:cl_form},
\end{align}
where \eqref{eqn:using_prop_8} is due to Lemma~\ref{thm:cl_conf} and \eqref{eqn:full_column} follows from the fact that $H_*$ has full column rank. Observe that, in equation \eqref{eqn:cl_form}, $\hat{\Upsilon}$ is a unit Frobenius norm matrix of size $n\times n$ and the term ${A}_{c,*} + \hat{\Upsilon}  \sqrt{F_t}$ accounts for the confidence ellipsoid for the estimated closed-loop matrix, $\hat{A}_{c}$. The event in \eqref{eqn:cl_form} corresponds to finding the closed-loop matrix, ${A}_{c,*} + (\Xi + \hat{\Upsilon}) \sqrt{F_t}$ of the TS sampled system in the sublevel manifold $\clf{M}_* \defeq \cl{A_c \in \M_n \;|\; L(A_c) \leq L(A_{c,*})}$ as illustrated in Figure~\ref{fig:manifolds}.

\subsection{Local Geometry of Optimistic Set under Perturbations}
Next, we further simplify the form of the probability in \eqref{eqn:cl_form} by exploiting the local geometric structure of the function $L: A_c \mapsto \sigma_w^2\sum_{t=0}^{\infty} \Norm[Q_*]{A_c^t}^2$ defined over the set of (Schur-)stable matrices, $\clf{M}_{\text{Schur}} \!\defeq\! \cl{A_c \!\in\! \M_n \; | \; \rho(A_c) \!<\! 1}$. The following lemma characterizes perturbative properties of $L$. \looseness=-1
\begin{lemma}[Perturbations]\label{thm:taylor}
The function $L: \clf{M}_{\text{Schur}} \to \R_{+}$ defined as $L(A_c) = \sigma_w^2 \sum_{t=0}^{\infty} \Norm[Q_*]{A_c^t}^2$ is smooth in its domain. For any $A_c \in \clf{M}_{\text{Schur}}$, there exists $\epsilon > 0$ such that for any perturbation $\norm[F]{G} \leq \epsilon $, the function $L$ admits a quadratic Taylor expansion as
\begin{align}
    L(A_c + G) = L(A_c) + \nabla L (A_c) \bullet G + \frac{1}{2} G \bullet \clf{H}_{A_c+sG}(G) \label{eqn:taylor}
\end{align}
for an $s\in[0,1]$ where $\clf{H}_{A_c}: \M_n \to \M_n$ is the Hessian operator evaluated at a point $A_c\in \clf{M}_{\text{Schur}}$. In particular, we have that $\nabla L (A_{c_*}) = 2P(\Theta_*) A_{c,*} \Sigma_*$. Furthermore, there exists a constant $r >0$ such that $\abs{G \bullet \clf{H}_{A_{c}+sG}(G)} \leq r \norm[F]{G}^2$ for any $s\in[0,1]$ and $\norm[F]{G}\leq \epsilon$.
\end{lemma}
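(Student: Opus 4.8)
The plan is to express $L$ through the solution of a discrete Lyapunov equation, where all the required structure becomes transparent. Since $\Norm[Q_*]{A_c^t}^2 = \tr\pr{(A_c^t)^\tp Q_* A_c^t}$, we have $L(A_c) = \sigma_w^2 \tr\pr{P(A_c)}$ with $P(A_c) \defeq \sum_{t=0}^{\infty}(A_c^\tp)^t Q_* A_c^t$, the unique solution of $P(A_c) = Q_* + A_c^\tp P(A_c) A_c$; equivalently $P(A_c) = \mathcal{L}_{A_c}^{-1}(Q_*)$ for the linear operator $\mathcal{L}_{A_c}(X) \defeq X - A_c^\tp X A_c$. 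For $A_c \in \clf{M}_{\text{Schur}}$ the eigenvalues of $\mathcal{L}_{A_c}$ have the form $1 - \lambda_i(A_c)\lambda_j(A_c)$, all nonzero because $\rho(A_c)<1$, so $\mathcal{L}_{A_c}$ is invertible; as $\mathcal{L}_{A_c}$ depends polynomially on $A_c$ and matrix inversion is analytic, $A_c \mapsto P(A_c)$ and hence $L$ are analytic, in particular $C^\infty$, on the open set $\clf{M}_{\text{Schur}}$.

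\textbf{Taylor expansion with Lagrange remainder.} Because $\clf{M}_{\text{Schur}}$ is open, for each $A_c$ in it I would fix $\epsilon>0$ so small that the closed ball $\cl{A : \norm[F]{A-A_c}\leq \epsilon}$ lies inside $\clf{M}_{\text{Schur}}$. This ball is convex, so for $\norm[F]{G}\leq\epsilon$ the segment $[A_c, A_c+G]$ stays in the domain, and applying the one-dimensional second-order Taylor theorem with Lagrange remainder to $s \mapsto L(A_c+sG)$ produces \eqref{eqn:taylor} with some $s\in[0,1]$, where $\clf{H}_A$ is the second Fréchet derivative (Hessian) of $L$ at $A$.

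\textbf{Gradient at $A_{c,*}$.} To evaluate $\nabla L(A_{c,*})$ I would differentiate the Lyapunov relation along a direction $G$. Writing $\dot P$ for the derivative of $P(\cdot)$, linearizing $P=Q_*+A_c^\tp P A_c$ gives $\dot P - A_c^\tp \dot P A_c = G^\tp P A_c + A_c^\tp P G$, so that $\nabla L(A_c)\bullet G = \sigma_w^2\tr(\dot P) = \sigma_w^2\tr\!\pr{W\,(G^\tp P A_c + A_c^\tp P G)}$ with $W \defeq \sum_{t\geq 0} A_c^t (A_c^\tp)^t$. Using cyclicity of the trace and the symmetry of $W$ and $P$, both summands collapse to $(P A_c W)\bullet G$, whence $\nabla L(A_c) = 2\sigma_w^2\, P(A_c) A_c W$. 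At $A_c=A_{c,*}$ the geometric series satisfies $\sigma_w^2 W = \Sigma_*$, and the closed-loop form of the DARE, $P(\tts) = Q_* + A_{c,*}^\tp P(\tts) A_{c,*}$, identifies $P(A_{c,*}) = P(\tts)$; substituting yields $\nabla L(A_{c,*}) = 2P(\tts) A_{c,*}\Sigma_*$, as claimed.

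\textbf{Uniform Hessian bound and the main obstacle.} Finally, since $L$ is $C^2$, the map $A \mapsto \clf{H}_A$ is continuous, so $\norm[\text{op}]{\clf{H}_A}$ attains a finite maximum $r \defeq \max_{\norm[F]{A-A_c}\leq\epsilon}\norm[\text{op}]{\clf{H}_A}$ on the compact ball; by convexity $A_c+sG$ lies in this ball for $s\in[0,1]$, giving $\abs{G\bullet\clf{H}_{A_c+sG}(G)} \leq \norm[\text{op}]{\clf{H}_{A_c+sG}}\norm[F]{G}^2 \leq r\norm[F]{G}^2$. The soft parts — smoothness and the remainder bound — follow from analyticity and compactness; I expect the only genuinely delicate bookkeeping to be the gradient step, namely recognizing that the closed-loop Lyapunov form of the DARE forces $P(A_{c,*})=P(\tts)$ and that $W=\sigma_w^{-2}\Sigma_*$, which is exactly what converts the abstract derivative into the stated product $2P(\tts)A_{c,*}\Sigma_*$.
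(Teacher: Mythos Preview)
Your proof is correct and follows essentially the same strategy as the paper: represent $L$ through a discrete Lyapunov solution, use openness of $\clf{M}_{\text{Schur}}$ to get a convex stable ball, apply Taylor's theorem with Lagrange remainder there, differentiate the Lyapunov equation for the gradient, and bound the Hessian by continuity on a compact set.

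The one notable difference is a duality switch. The paper writes $L(A_c)=\tr(Q_*\Sigma)$ with $\Sigma-A_c\Sigma A_c^\tp=\sigma_w^2 I$ and differentiates $\Sigma$, obtaining $\nabla L(A_c)=2\bigl(\sum_{k\geq 0}(A_c^\tp)^kQ_*A_c^k\bigr)A_c\Sigma_0$, then recognizes the bracketed series as $P(\Theta_*)$ at $A_{c,*}$. You instead write $L(A_c)=\sigma_w^2\tr(P)$ with $P-A_c^\tp P A_c=Q_*$ and differentiate $P$, arriving at $\nabla L(A_c)=2\sigma_w^2 P A_c W$ and recognizing $\sigma_w^2 W=\Sigma_*$ and $P(A_{c,*})=P(\Theta_*)$ via the closed-loop DARE. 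The two routes are mirror images and land on the same formula. Your smoothness argument (analytic dependence of $\mathcal{L}_{A_c}^{-1}$ via the eigenvalue formula $1-\lambda_i\lambda_j$) and your openness argument (preimage of $[0,1)$ under the continuous spectral radius) are more direct than the paper's, which proves openness via a separate Gelfand-formula lemma and argues smoothness by iterating the Lyapunov differentiation; both are valid, yours is a bit leaner.
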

Lemma \ref{thm:taylor} guarantees that if a perturbation is sufficiently small, the perturbed function can be locally expressed as a quadratic function of the perturbation. Since the set of stable matrices, $\clf{M}_{\text{Schur}}$, is globally non-convex and Taylor's theorem only holds in convex domains, we restrict the perturbations in a ball of radius $\epsilon >0$. The fact that there is a neighborhood of stable matrices around a matrix $A_c$ enables us to apply Taylor's theorem in this neighborhood. 

Given the optimal closed-loop system matrix $A_{c,*}$, let $\epsilon_* >0$ be chosen such that the expansion in \eqref{eqn:taylor} holds for perturbations $\norm[F]{G} \leq \epsilon_*$ around $A_{c,*}$. Denote the perturbation due to Thompson sampling and estimation error as $G_t = (\Xi +\hat{\Upsilon})\sqrt{F_t}$ and let $\norm[F]{G_t} \leq \epsilon_*$. Then, we can write
\begin{align}
    L(A_{c,*} + G_t) &= L(A_{c,*}) +\nabla L (A_{c,*})\bullet G_t + \frac{1}{2} G_t \bullet \clf{H}_{A_{c,*}+sG_t}(G_t) \nn\\
    &\leq L(A_{c,*}) + \nabla L (A_{c,*})\bullet G_t + \frac{r_*}{2}\norm[F]{G_t}^2 \label{eqn:quad_aprx}
\end{align}
where $r_* \!>\! 0$ is a constant due to Lemma~\ref{thm:taylor}. Using \eqref{eqn:quad_aprx}, we have the following lower bound on \eqref{eqn:cl_form},
\begin{align}
    p_t^{\opt} \!\geq\! \min_{\hat{\Upsilon}\,:\, \norm[F]{\hat{\Upsilon} }\leq 1 } \!\!\Pr_{t} \Big\{
		\frac{r_*}{2}\norm[F]{(\Xi \!+\!\hat{\Upsilon})F_t^{\frac{1}{2}}}^2 \!+\! \nabla L_*\!\bullet (\Xi \!+\!\hat{\Upsilon})F_t^{\frac{1}{2}}\! \leq 0, \text{ and }  \norm[F]{(\Xi \!+\!\hat{\Upsilon})F_t^{\frac{1}{2}}}\leq \epsilon_*
	\Big\}, \label{eqn:quad_prob}
\end{align}
where $\nabla L_* \!\defeq\!  \nabla L (A_{c,*})$. The event in \eqref{eqn:quad_prob} corresponds to finding ${A}_{c,*} + (\Xi + \hat{\Upsilon}) \sqrt{F_t}$ at the intersection of the stable ball $\clf{B}_* \!\defeq \!\cl{A_c \in \M_n \;|\; \norm[F]{A_c - A_{c,*}} \leq \epsilon_*}$ and the sublevel manifold $\clf{M}_{*}^{\text{qd}} \defeq \cl{A_c \in \M_n \;|\; \norm[F]{A_c - A_{c,*} + r_*^{-1}\nabla L_*} \leq \norm[F]{r_*^{-1} \nabla L_*} }$ as illustrated in Figure~\ref{fig:manifolds}.
\\ \\
The intersection $\clf{M}_{*}^{\text{qd}} \cap \clf{B}_* \subset \clf{M}_{*}$ serves as another surrogate to sublevel manifold $\clf{M}_{*}$. Switching to the new surrogate $\clf{M}_{*}^{\text{qd}}$ helps us overcome the issue of working with intractable and complicated geometry of $\clf{M}_{*}$ due to infinite sum in $L(A_{\text{c}})$. We can utilize techniques relating to Gaussian probabilities as the geometry of $\clf{M}_{*}^{\text{qd}}$ is described by a quadratic form.
\vspace{-0.5cm}
\begin{SCfigure}[1][htb]
\centering
\vspace{-0.4cm}
\hspace{-0.4cm}
\resizebox{0.51\linewidth}{!}{

\tikzset{every picture/.style={line width=0.75pt}} 

\begin{tikzpicture}[x=0.75pt,y=0.75pt,yscale=-1,xscale=1]

\draw  [fill={rgb, 255:red, 155; green, 155; blue, 155 }  ,fill opacity=1 ][line width=1.5]  (166.46,91.8) .. controls (200.02,211.39) and (200.33,201.68) .. (89.31,166.16) .. controls (-21.72,130.64) and (-30.64,438.37) .. (82.75,400.12) .. controls (196.14,361.86) and (196.2,359.92) .. (154.12,477.19) .. controls (112.05,594.45) and (415.69,598.92) .. (380.16,482.23) .. controls (344.64,365.53) and (346.51,365.54) .. (454.71,402.02) .. controls (562.9,438.5) and (573.66,131.75) .. (461.2,170.01) .. controls (348.74,208.27) and (349.71,207.3) .. (388.92,91.96) .. controls (428.13,-23.37) and (132.91,-27.8) .. (166.46,91.8) -- cycle ;
\draw  [fill={rgb, 255:red, 255; green, 0; blue, 0 }  ,fill opacity=0.4 ][line width=1.5]  (256.4,323.09) .. controls (231.17,249.19) and (273.04,168.18) .. (349.92,142.14) .. controls (426.79,116.11) and (509.57,154.91) .. (534.8,228.81) .. controls (560.03,302.71) and (518.16,383.73) .. (441.28,409.76) .. controls (364.41,435.8) and (281.63,397) .. (256.4,323.09) -- cycle ;
\draw  [fill={rgb, 255:red, 0; green, 0; blue, 255 }  ,fill opacity=0.4 ][dash pattern={on 1.69pt off 2.76pt}][line width=1.5]  (532.36,305.33) .. controls (491.66,305.12) and (459.76,270.68) .. (461.11,228.42) .. controls (462.45,186.16) and (496.53,152.07) .. (537.22,152.28) .. controls (577.92,152.49) and (609.82,186.93) .. (608.48,229.19) .. controls (607.13,271.46) and (573.05,305.55) .. (532.36,305.33) -- cycle ;
\draw [line width=1.5]    (534.79,228.81) -- (565.74,218.87) ;
\draw [shift={(569.55,217.65)}, rotate = 162.21] [fill={rgb, 255:red, 0; green, 0; blue, 0 }  ][line width=0.08]  [draw opacity=0] (11.61,-5.58) -- (0,0) -- (11.61,5.58) -- cycle    ;
\draw [line width=3]    (486.26,71.28) -- (583.32,386.33) ;
\draw [line width=1.5]  [dash pattern={on 3.75pt off 3pt}]  (534.79,228.81) .. controls (579.21,292.58) and (612.97,210.2) .. (611.11,294.68) ;
\draw [shift={(611,298.66)}, rotate = 271.89] [fill={rgb, 255:red, 0; green, 0; blue, 0 }  ][line width=0.08]  [draw opacity=0] (13.4,-6.43) -- (0,0) -- (13.4,6.44) -- (8.9,0) -- cycle    ;
\draw    (268.12,285.46) ;
\draw [shift={(268.12,285.46)}, rotate = 0] [color={rgb, 255:red, 0; green, 0; blue, 0 }  ][fill={rgb, 255:red, 0; green, 0; blue, 0 }  ][line width=0.75]      (0, 0) circle [x radius= 3.35, y radius= 3.35]   ;
\draw [shift={(268.12,285.46)}, rotate = 0] [color={rgb, 255:red, 0; green, 0; blue, 0 }  ][fill={rgb, 255:red, 0; green, 0; blue, 0 }  ][line width=0.75]      (0, 0) circle [x radius= 3.35, y radius= 3.35]   ;
\draw  [fill={rgb, 255:red, 255; green, 0; blue, 255 }  ,fill opacity=0.4 ][line width=1.5]  (534.8,228.81) .. controls (543.48,254.25) and (544.21,280.53) .. (538.31,305.12) .. controls (536.34,305.27) and (534.36,305.34) .. (532.36,305.33) .. controls (491.66,305.12) and (459.76,270.68) .. (461.11,228.42) .. controls (461.9,203.44) and (474.13,181.31) .. (492.41,167.44) .. controls (511.41,183.2) and (526.34,204.03) .. (534.8,228.81) -- cycle ;
\draw [line width=1.5]    (534.79,228.81) -- (399.39,274.67) ;
\draw [shift={(395.6,275.95)}, rotate = 341.29] [fill={rgb, 255:red, 0; green, 0; blue, 0 }  ][line width=0.08]  [draw opacity=0] (11.61,-5.58) -- (0,0) -- (11.61,5.58) -- cycle    ;
\draw [shift={(534.79,228.81)}, rotate = 161.29] [color={rgb, 255:red, 0; green, 0; blue, 0 }  ][fill={rgb, 255:red, 0; green, 0; blue, 0 }  ][line width=1.5]      (0, 0) circle [x radius= 4.36, y radius= 4.36]   ;
\draw [line width=1.5]  [dash pattern={on 3.75pt off 3pt}]  (511.33,259.79) .. controls (418.93,370.54) and (525.82,319.7) .. (522.15,422.49) ;
\draw [shift={(522,425.66)}, rotate = 273.21] [fill={rgb, 255:red, 0; green, 0; blue, 0 }  ][line width=0.08]  [draw opacity=0] (13.4,-6.43) -- (0,0) -- (13.4,6.44) -- (8.9,0) -- cycle    ;

\draw (184.87,60.29) node [anchor=west] [inner sep=0.75pt]  [font=\Huge] [align=left] {$\displaystyle \clf{M}_{*}$};
\draw (280.57,201.88) node [anchor=west] [inner sep=0.75pt]  [font=\Huge] [align=left] {$\displaystyle \clf{M}_{*}^{\text{qd}}$};
\draw (536.13,171.74) node [anchor=west] [inner sep=0.75pt]  [font=\Huge] [align=left] {$\displaystyle \mathcal{B}_{*}$};
\draw (588.52,314.86) node [anchor=west] [inner sep=0.75pt]  [font=\LARGE] [align=left] {$\displaystyle A_{c,*}$};
\draw (571.46,217.06) node [anchor=west] [inner sep=0.75pt]  [font=\large,rotate=-342.78] [align=left] {$\displaystyle \nabla L_{*}$};
\draw (396.74,293.13) node [anchor=west] [inner sep=0.75pt]  [font=\large,rotate=-342.78] [align=left] {$\displaystyle -r_*^{-1} \nabla L_{*}$};
\draw (448.55,46.62) node [anchor=west] [inner sep=0.75pt]  [font=\Huge] [align=left] {$\displaystyle T_{A_{c,*}}\clf{M}_{*}$};
\draw (448.49,443.49) node [anchor=west] [inner sep=0.75pt]  [font=\Huge] [align=left] {$\displaystyle \clf{M}_{*}^{\text{qd}} \cap \mathcal{B}_{*}$};
\draw (259.31,302.82) node [anchor=west] [inner sep=0.75pt]  [font=\LARGE] [align=left] {$\displaystyle O$};

\end{tikzpicture}
}
\hspace{-0.1cm}

\caption{ A visual representation of sublevel manifold $\clf{M}_*$. $O$ is the origin and $A_{c,*}$ is the optimal closed-loop system matrix. $T_{A_{c,*}}\clf{M}_*$ is the tangent space to the manifold $\clf{M}_*$ at the point $A_{c,*}$ and $\nabla L_*$  is the Jacobian of the function $L$ at $A_{c,*}$. $\clf{M}_{*}^{\text{qd}}$ is the sublevel manifold of the quadratic approximation to $L$ and $\clf{B}_*$ is a small ball of stable matrices around $A_{c,*}$. The intersection $\clf{M}_{*}^{\text{qd}} \cap \clf{B}_*$ is a subset of $\clf{M}_*$.}\label{fig:manifolds}

\end{SCfigure}
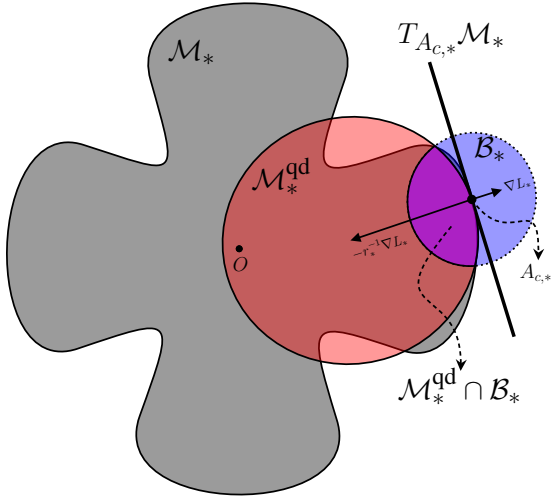

\subsection{Final Bound}
Equipped with the preceding results, we can bound the optimism probability tractably from below by the probability of a TS sampled closed-loop system matrix lying inside the intersection of two balls $\clf{M}_{*}^{\text{qd}} \cap \clf{B}_*$ as given in \eqref{eqn:quad_prob}. By bounding the weighted Frobenius norms in \eqref{eqn:quad_prob} from above by  $\lambda_{\max,t}$, the maximum eigenvalue of $F_t$, and normalizing the matrix $\nabla L_*\sqrt{F_t}$, we can write
\begin{align}
    p_t^{\opt} \! &\!\geq\! \min_{\norm[F]{\hat{\Upsilon} }\leq 1 } \!\Pr_{t}\! \cl{\frac{r_*}{2}\lambda_{\max,t}\norm[F]{\Xi \!+\!\hat{\Upsilon}}^2 \!+\! (\nabla L_*\sqrt{F_t})\bullet (\Xi \!+\!\hat{\Upsilon}) \leq  0, \text{ and } \lambda_{\max,t}\norm[F]{\Xi \!+\!\hat{\Upsilon}}^2\leq \epsilon_*^2 } \nonumber \\
	&\!=\! \min_{\norm[F]{\hat{\Upsilon} }\leq 1 } \! \Pr_{t}\! \cl{\frac{(\nabla L_*F_t^{\nicefrac{1}{2}})\!\bullet\! (\Xi \!+\!\hat{\Upsilon})}{\norm[F]{\nabla L_*F_t^{\nicefrac{1}{2}}}}\! \leq \! \frac{-\lambda_{\max,t}r_*\norm[F]{\Xi \!+\!\hat{\Upsilon}}^2}{2\norm[F]{\nabla L_*F_t^{\nicefrac{1}{2}}}}, \text{ and } \norm[F]{\Xi \!+\!\hat{\Upsilon}}^2\!\leq\! \frac{\epsilon_*^2}{\lambda_{\max,t}}}. \label{eqn:quad_prob2}
\end{align}
Observe that the inner product $(\nabla L_*F_t^{\nicefrac{1}{2}})\bullet \hat{\Upsilon} $ is maximized by $ \Upsilon_{\# }\!\defeq\! \frac{(\nabla L_*F_t^{\nicefrac{1}{2}})}{\norm[F]{\nabla L_*F_t^{\nicefrac{1}{2}}}}$ subject to $\norm[F]{\hat{\Upsilon} }\!\leq\! 1$. Since the probability distribution of $\norm[F]{\Xi \!+\!\hat{\Upsilon}}^2$ is invariant under orthogonal transformation of $\Xi$ and $\hat{\Upsilon}$, \eqref{eqn:quad_prob2} also attains its minimum at $\Upsilon_{\# }$. Thus, we can rewrite \eqref{eqn:quad_prob2} as \looseness=-1
\begin{align}
    p_t^{\opt} &\geq \Pr_{t}\cl{\frac{(\nabla L_*F_t^{\nicefrac{1}{2}})\!\bullet\! \Xi }{\norm[F]{\nabla L_*F_t^{\nicefrac{1}{2}}}} \!+\!1 \! \leq \! \frac{-\lambda_{\max,t}r_*}{2\norm[F]{\nabla L_*F_t^{\nicefrac{1}{2}}}} \Norm[F]{\Xi + \Upsilon_{\# }}^2, \text{ and } \Norm[F]{\Xi +\Upsilon_{\# }}^2\leq \frac{\epsilon_*^2}{\lambda_{\max,t}} } \nonumber \\
	&=\Pr_{t}\cl{\xi + 1 \leq  - \frac{\lambda_{\max,t}r_*}{2\norm[F]{\nabla L_*F_t^{\nicefrac{1}{2}}}}\pr{(\xi+1)^2 + X }, \text{ and } (\xi+1)^2 + X \leq \frac{\epsilon_*^2}{\lambda_{\max,t}} }, \label{eqn:quad_simple}
\end{align}
where $\xi \!\sampled\! \normal(0,1)$ and $X \!\sampled\! \chi^2_{n^2-1}$ are independent standard normal and chi-squared distributions, and \eqref{eqn:quad_simple} is derived by rotating $\Xi$ so that its first element is along the direction of $\nabla L_*F_t^{\nicefrac{1}{2}}$. We use the following lemma to characterize the eigenvalues of $F_t$ and control the lower bound~\eqref{eqn:quad_simple} on $p_t^{\opt}$.\looseness=-1 
\begin{lemma}[Bounded eigenvalues]\label{thm:bounded_F}
Suppose $\Tw \!=\! O((\sqrt{T})^{1+o(1)})$. Denote the minimum and maximum eigenvalues of $F_t$ by $\lambda_{\min,t}$ and $\lambda_{\max,t}$, respectively. Under the event $E_T$, for large enough $T$, we have that $\lambda_{\max,t} \leq C \frac{\log{T}}{\Tw}$ and $\frac{\lambda_{\max,t}}{\lambda_{\min,t}} \leq C \frac{T\log{T}}{\Tw}$ for any $T_r < t \leq T$ for a constant $C = \poly(n,d,\log(1/\delta))$.
\end{lemma}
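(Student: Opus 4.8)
The plan is to reduce both claims to separate control of the spectrum of $V_t$ and of the scalar $\beta_t^2(\delta)$, exploiting the full-column-rank structure of $H_*$. Since $F_t = \beta_t^2(\delta)\, H_*^\tp V_t^{-1} H_*$ with $H_*^\tp = [I,\, K(\tts)^\tp]$, for any unit vector $v$ we have $\|v\|^2 \leq \|H_* v\|^2 = \|v\|^2 + \|K(\tts)v\|^2 \leq (1+\kappa^2)\|v\|^2$ by Assumption~\ref{asm_stabil}, so $1 \leq \sigma_{\min}(H_*) \leq \sigma_{\max}(H_*) \leq \sqrt{1+\kappa^2}$. Using $\lambda_{\max}(V_t^{-1}) = 1/\lambda_{\min}(V_t)$ and $\lambda_{\min}(V_t^{-1}) = 1/\lambda_{\max}(V_t)$ together with the sandwich $\sigma_{\min}(H_*)^2\lambda_{\min}(M) \leq \lambda_{\min}(H_*^\tp M H_*)$ and $\lambda_{\max}(H_*^\tp M H_*) \leq \sigma_{\max}(H_*)^2\lambda_{\max}(M)$ for $M \psdg 0$, I obtain
\[
\frac{\beta_t^2(\delta)}{\lambda_{\max}(V_t)} \;\leq\; \lambda_{\min,t} \;\leq\; \lambda_{\max,t} \;\leq\; \frac{(1+\kappa^2)\,\beta_t^2(\delta)}{\lambda_{\min}(V_t)}.
\]
In particular $\lambda_{\max,t}/\lambda_{\min,t} \leq (1+\kappa^2)\,\lambda_{\max}(V_t)/\lambda_{\min}(V_t)$, where $\beta_t^2(\delta)$ cancels, which is convenient for the condition-number bound. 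It thus remains to (i) lower bound $\lambda_{\min}(V_t)$, (ii) upper bound $\lambda_{\max}(V_t)$, and (iii) upper bound $\beta_t^2(\delta)$.

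For (i) the key is persistence of excitation from the improved-exploration phase. I would read off the estimate $\|\tth_t-\tts\|_2 \leq 7\beta_t(\delta)/(\sigma_w\sqrt{t})$ of Lemma~\ref{lem:2norm_bound}: combined with the RLS confidence $\|\tth_t-\tts\|_{V_t}\leq\beta_t(\delta)$ (event $\hat E_t$) and $\|\tth_t-\tts\|_{V_t}\geq\sqrt{\lambda_{\min}(V_t)}\,\|\tth_t-\tts\|_2$, this is exactly the statement that the injected isotropic perturbations $\nu_t\sim\mathcal N(0,2\kappa^2\sigma_w^2 I)$ force $\lambda_{\min}(V_t)\geq \sigma_w^2 t/49$ for $t\geq 200(n+d)\log(12/\delta)$. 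Since $\Tw\geq T_0\geq 200(n+d)\log(12/\delta)$, evaluating at $t=\Tw$ gives $\lambda_{\min}(V_{\Tw})\geq \sigma_w^2\Tw/49$. Because $V_t = V_{\Tw} + \sum_{s=\Tw}^{t-1} z_s z_s^\tp \psdgeq V_{\Tw}$ for every $t>\Tw$, eigenvalue monotonicity yields $\lambda_{\min}(V_t)\geq \sigma_w^2\Tw/49$ for all $t>T_r>\Tw$. This persistence across the stabilizing-TS phase, in which no fresh perturbation is injected, is what makes $\Tw$ appear in the denominator of both bounds.

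For (ii) and (iii) I would bound $\lambda_{\max}(V_t)\leq \mu + \sum_{s=0}^{t-1}\|z_s\|^2$ and split the sum at $T_r$. For $s>T_r$, Lemma~\ref{lem:bounded_state} gives $\|x_s\|^2 = O(\poly(\kappa,\gamma^{-1})\sigma_w^2 n\log(nT/\delta))$ and $\|u_s\|\leq\kappa\|x_s\|$, so each term is $\poly(n,d,\log(1/\delta))\log T$ and the at most $T$ such terms contribute $\poly(n,d,\log(1/\delta))\,T\log T$. For $s\leq T_r$ the state may be as large as $O((n+d)^{n+d})$, but there are only $T_r = \Tw + (n+d)\tau_0\log(n+d)$ terms; since $\Tw = O((\sqrt T)^{1+o(1)})=o(T)$, this early contribution $(n+d)^{2(n+d)}\,\OO(\Tw)$ is dominated by the $s>T_r$ part for $T$ large, giving $\lambda_{\max}(V_t)\leq C_1\,T\log T$ with $C_1=\poly(n,d,\log(1/\delta))$. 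Feeding $\det(V_t)\leq \lambda_{\max}(V_t)^{n+d}$ into the log-det term of $\beta_t(\delta)$ then yields $\beta_t^2(\delta) = O\big(\sigma_w^2 n(n+d)\log(T/\delta)\big) + \mu S^2 \leq C_2\log T$ for $T$ large. Substituting the three estimates into the display of the first paragraph gives $\lambda_{\max,t}\leq 49(1+\kappa^2)C_2\log T/(\sigma_w^2\Tw) = C\log T/\Tw$ and $\lambda_{\max,t}/\lambda_{\min,t}\leq 49(1+\kappa^2)C_1\,T\log T/(\sigma_w^2\Tw) = C\,T\log T/\Tw$, both with $C=\poly(n,d,\log(1/\delta))$, as claimed.

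The hard part will be step (i): the lower bound on $\lambda_{\min}(V_t)$ is precisely the persistence-of-excitation estimate, which requires the injected isotropic perturbations to excite all $n+d$ coordinates of $z_s$ uniformly even though $z_s$ is adapted to the past. I would lean entirely on the martingale/anti-concentration argument already established for Lemma~\ref{lem:2norm_bound} rather than redoing it, and simply transport its conclusion forward in time via the PSD monotonicity $V_t\psdgeq V_{\Tw}$. The only additional care is bookkeeping in (ii): one must verify that the exponential-in-dimension early blow-up (with $\|x_s\|$ up to $(n+d)^{n+d}$ for $s\leq T_r$) enters $\lambda_{\max}(V_t)$ only through the $o(T)$ term, so that ``for large enough $T$'' the constant $C$ is genuinely free of exponential dimension factors and matches the $\poly(n,d,\log(1/\delta))$ form in the statement.
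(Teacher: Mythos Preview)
Your proposal is correct and follows essentially the same route as the paper: sandwich $F_t$ via $\beta_t^2/\lambda_{\max}(V_t)\le\lambda_{\min,t}\le\lambda_{\max,t}\le(1+\kappa^2)\beta_t^2/\lambda_{\min}(V_t)$, then control $\lambda_{\min}(V_t)$ by persistence of excitation during the improved-exploration phase (carried forward by $V_t\psdgeq V_{\Tw}$), control $\lambda_{\max}(V_t)$ by summing $\|z_s\|^2$ with the split at $T_r$, and bound $\beta_t^2$ through the log-det term. One minor point: you cannot logically \emph{deduce} $\lambda_{\min}(V_t)\ge \sigma_w^2 t/49$ from the \emph{conclusion} of Lemma~\ref{lem:2norm_bound} combined with $\|\tth_t-\tts\|_{V_t}\le\beta_t$; rather, that eigenvalue lower bound is the input to Lemma~\ref{lem:2norm_bound}, established directly via a matrix anti-concentration result (the paper invokes Lemma~\ref{lem:persistence}), so cite that result rather than reverse-engineering it.
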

 Lemma~\ref{thm:bounded_F} states that maximum eigenvalue and the condition number of $F_t$ are controlled inversely by the length of initial exploration phase $\Tw$ and proportionally by $\log{T}$ and $T\log{T}$ given that exploration time is bounded by a certain amount. The length of initial exploration $\Tw$ relative to the horizon $T$ is critical in guaranteeing asymptotically constant optimistic probability $p_t^{\opt}$. Although more lengthy initial exploration will lead to better convergence to constant optimistic probability, it also incurs higher asymptotic regret due to linear scaling of exploration regret with $T_w$.

Using the relation $\norm[F]{\nabla L_*F_t^{\frac{1}{2}}} \!\geq\!  \max(\sigma_{\min,*}\norm[F]{F_t^{\frac{1}{2}}},\, {\lambda_{\min,t}^{\frac{1}{2}}}\norm[F]{\nabla L_*} )$ where $\sigma_{\min,*}$ is the minimum singular value of $\nabla L_*$, we can further bound \eqref{eqn:quad_simple} from below. From Lemma \ref{thm:taylor}, we can write $\nabla L_* \!=\! 2 P(\Theta_*) A_{c,*} \Sigma_*$ where $P(\Theta_*)\!\psdg\! 0$ is the solution to the DARE in \eqref{DARE} and $\Sigma_*= \Sigma(\Theta_*, K_*) \psdg 0$ is the stationary state covariance matrix. Notice that the minimum singular value of $\nabla L_*$ is positive (\ie $\nabla L_*$ is full-rank) if and only if the closed-loop system matrix, $A_{c,*}$, is non-singular.\looseness=-1 

In general, $A_{c,*}$ can be singular. Assuming that $\Tw \!=\! O((\sqrt{T})^{1+o(1)})$, under the event $E_T$, we can use $\norm[F]{\nabla L_*F_t^{\frac{1}{2}}} \!\geq\!  \sqrt{\lambda_{\min,t}}\norm[F]{\nabla L_*}$ to obtain the following lower bound on $p_t^{\opt}$ for $T_r\!<\!t\!\leq\! T$:
\begin{align}
     p_t^{\opt} &\geq\Pr_{t}\bigg\{ \xi + 1 \leq  - \frac{\sqrt{\lambda_{\max,t}}}{2 \rho_*}\sqrt{\frac{\lambda_{\max,t}}{\lambda_{\min,t}}}\pr{(\xi+1)^2 + X }, \text{ and } (\xi+1)^2 + X \leq \frac{\epsilon_*^2}{\lambda_{\max,t}} \bigg\},
     \nonumber\\
     &\geq \Pr \bigg\{\xi + 1 \leq  - \frac{C}{2 \rho_*}\frac{\sqrt{T}\log{T}}{\Tw}\pr{(\xi+1)^2 + X }, \text{ and } (\xi+1)^2 + X \leq \frac{\epsilon_*^2 T_w}{C\log T} \bigg\}, \nn
\end{align}
where {$\rho_* \defeq \norm[F]{r_*^{-1}\nabla L_*}$}. Choosing the exploration time as $T_w = \omega(\sqrt{T}\log{T})$ makes the coefficients $\frac{\sqrt{T}\log{T}}{\Tw} = o(1)$ to be very small and $\frac{T_w}{\log T}$ to be very large, leading to constant lower bound on limiting optimistic probability $\liminf_{T \rightarrow \infty } p_T^{\opt} \geq \Pr\{\xi+1\leq 0\}\eqdef Q(1)$.

On the other hand, if  $A_{c,*}$ is non-singular, then we can use the alternative bound $\norm[F]{\nabla L_*\sqrt{F_t}} \!\geq\!  \sigma_{\min,*}\norm[F]{\sqrt{F_t}} \!\geq\! \sigma_{\min,*}\sqrt{\lambda_{\max,t}}$ to obtain the following lower bound for $T_r\!<\!t\!\leq\! T$:
\begin{align}
    p_t^{\opt} &\geq \Pr_{t}\cl{\xi + 1 \leq  - \frac{\sqrt{\lambda_{\max,t}}}{2\sigma_{\min,*} }\pr{(\xi+1)^2 + X }, \text{ and } (\xi+1)^2 + X \leq \frac{\epsilon_*^2}{\lambda_{\max,t}} }, \nonumber \\
    &\geq \Pr \cl{\xi + 1 \leq  - \frac{\sqrt{C}}{2 \sigma_{\min,*}}\sqrt{\frac{\log T}{ T_w}}\pr{(\xi+1)^2 + X }, \text{ and } (\xi+1)^2 + X \leq \frac{\epsilon_*^2 T_w}{C\log T} }. \nn
\end{align}
Similarly, choosing the exploration time as $T_w = \omega(\log{T})$ makes the coefficients $\sqrt{\frac{\log T}{ T_w}} = o(1)$ to be very small and $\frac{T_w}{\log T}=\omega(1)$ to be very large, leading to constant lower bound on limiting optimistic probability $\liminf_{T \rightarrow \infty } p_T^{\opt} \geq Q(1)$.

In both cases, the optimistic probability achieves a constant lower bound for large enough $T$ as $p_T^{\opt} \geq Q(1)(1 + o(1))^{-1}$. This result can be interpreted in a geometric way as follows. As the time passes, the estimates of the system become more accurate in the sense that the confidence region of the estimate shrinks very quickly as controlled by the eigenvalues of $F_t$. Similarly, the high-probability region of TS samples also shrink very fast controlled by the covariance matrix $F_t$. Therefore, for large enough $T$, the confidence region of the model estimate and the high-probability region of TS samples get significantly smaller compared to the surrogate optimistic set $\clf{M}_{*}^{\text{qd}}\cap \clf{B}_*$. This size difference effectively reduces the probability of finding a sampled system in $\clf{M}_{*}^{\text{qd}}\cap \clf{B}_*$ to the probability of finding a sampled system in the half-space separated by the tangent space $T_{A_{c,*}} \clf{M}_{*}$.\looseness=-1

\section{Numerical Experiments}\label{sec:numerical}

\begin{table}[h!]
\centering
\caption{Regret and Maximum State Norm After 200 Time Steps in Boeing 747 Flight Control}
\label{table:regret_boeing}
\resizebox{!}{0.09\textwidth}{
 \begin{tabular}{c c c c| c c c }
 \textbf{Algorithm} &  \textbf{ \shortstack{Average \\ Regret}}  &  \textbf{Top 95\% } &   \textbf{Top 90\%} &  \textbf{ \shortstack{Average \\ $\max\|x\|_2$}} & \textbf{Top 95\% } &   \textbf{Top 90\%} \\
  \hline
 \alg & $\bf{4.58 \times 10^7}$   & $\bf{1.43 \times 10^5}$  & $\bf{9.49 \times 10^4}$ & $\bf{1.23\times10^3}$ & $\bf{1.07\times10^2}$ & $\bf{9.77\times10^1}$  \\
  
 StabL & $1.34 \times 10^4$   & $1.05 \times 10^3$  & $9.60 \times 10^3$ & $3.38 \times 10^1$   & $3.14 \times 10^1$  & $2.98 \times 10^1$  \\

 OFULQ  & $1.47 \times 10^{8}$ & $4.19 \times 10^6$  &  $9.89 \times 10^5$ & $1.62 \times 10^{3}$ & $5.21 \times 10^2$ & $2.78 \times 10^2$ \\

TS-LQR  & $5.63\times10^{11}$ &  $3.07\times10^7$ & $5.33 \times 10^6$ & $6.26\times10^4$  & $ 1.08\times 10^3$  & $6.39 \times 10^2$
\end{tabular}
}
\end{table}

Finally, we evaluate the performance of \alg in longitudinal flight control of Boeing 747 with linearized dynamics~\citep{747model}. We compare \alg with three adaptive control algorithms in literature that do not require an initial stabilizing policy: 
\begin{itemize}
    \item[(i)] OFULQ of \citet{abbasi2011lqr};
    \item[(ii)]TS-LQR of \citet{abeille2018improved};
    \item[(iii)]StabL of \citet{lale2022reinforcement}.
\end{itemize}
We perform 200 independent runs for 200 time-steps for each algorithm and report their average, top $95\%$ and top $90\%$ regret and maximum state norm performances. Note that, since optimistic control design is computationally intractable, we use projected gradient descent to \textit{heuristically} find optimistic models in OFULQ and StabL. For fair comparison, we also adopt slow policy updates in OFULQ and TS-LQR and report the best results of each algorithm. Further details are in Appendix \ref{apx:simulation}. The results are presented in Table \ref{table:regret_boeing}. Notice that \alg achieves the second best performance after StabL. As expected, StabL outperforms \alg since it performs much heavier computations to find the optimistic controller in the confidence set, whereas \alg samples optimistic parameters only with some fixed probability. However, TSAC compares favorably against both OFULQ and TS-LQR, making it the best performing computationally efficient algorithm.


\section{Conclusion and Future Directions}\label{sec:discussion}
We present the first efficient adaptive control algorithm, \alg, that attains optimal regret of $\Tilde{O}(\sqrt{T})$ in stabilizable LQRs without an initial stabilizing policy. We design \alg to quickly stabilize the system and avoid state blow-ups via careful policy updates. Building on these design choices, the main technical contribution of this work is to show that TS samples optimistic parameters with constant probability in all LQRs, thereby resolving the conjecture in \citet{abeille2018improved}.\looseness=-1

This result highlights that a simple sampling strategy provides effective exploration to recover low-cost achieving controllers in adaptive control of LQRs which yields order optimal regret. An important future direction is to investigate whether TS achieves optimal regret in partially observable LTI systems, e.g.~\citep{lale2020logarithmic,lale2021finite}. Moreover, to obtain constant probability of sampling optimistic parameters for general LQRs, \alg requires {$T_w = \omega(\sqrt{T}\log{T})$} time-steps of improved exploration (Theorem \ref{thm:optimistic_prob}), which causes the regret to be dominated by this phase. This long exploration is avoided in LQRs with non-singular optimal closed-loop matrix, which results in regret that scales polynominally in system dimensions (Theorem \ref{reg:exp_s}). It remains an open problem whether this polynomial dimension dependency in regret can be achieved via TS in general LQRs.


\newpage

\bibliography{references}

\begin{thebibliography}{28}
\providecommand{\natexlab}[1]{#1}
\providecommand{\url}[1]{\texttt{#1}}
\expandafter\ifx\csname urlstyle\endcsname\relax
  \providecommand{\doi}[1]{doi: #1}\else
  \providecommand{\doi}{doi: \begingroup \urlstyle{rm}\Url}\fi

\bibitem[Abbasi-Yadkori and Szepesv{\'a}ri(2011)]{abbasi2011lqr}
Yasin Abbasi-Yadkori and Csaba Szepesv{\'a}ri.
\newblock Regret bounds for the adaptive control of linear quadratic systems.
\newblock In \emph{Proceedings of the 24th Annual Conference on Learning
  Theory}, pages 1--26, 2011.

\bibitem[Abbasi-Yadkori et~al.(2011)Abbasi-Yadkori, P{\'a}l, and
  Szepesv{\'a}ri]{abbasi2011improved}
Yasin Abbasi-Yadkori, D{\'a}vid P{\'a}l, and Csaba Szepesv{\'a}ri.
\newblock Improved algorithms for linear stochastic bandits.
\newblock In \emph{Advances in Neural Information Processing Systems}, pages
  2312--2320, 2011.

\bibitem[Abeille and Lazaric(2017)]{abeille2017thompson}
Marc Abeille and Alessandro Lazaric.
\newblock Thompson sampling for linear-quadratic control problems.
\newblock \emph{arXiv preprint arXiv:1703.08972}, 2017.

\bibitem[Abeille and Lazaric(2018)]{abeille2018improved}
Marc Abeille and Alessandro Lazaric.
\newblock Improved regret bounds for thompson sampling in linear quadratic
  control problems.
\newblock In \emph{International Conference on Machine Learning}, pages 1--9,
  2018.

\bibitem[Abeille and Lazaric(2020)]{abeille2020efficient}
Marc Abeille and Alessandro Lazaric.
\newblock Efficient optimistic exploration in linear-quadratic regulators via
  lagrangian relaxation.
\newblock \emph{arXiv preprint arXiv:2007.06482}, 2020.

\bibitem[Agrawal(2019)]{agrawal2019recent}
Shipra Agrawal.
\newblock Recent advances in multiarmed bandits for sequential decision making.
\newblock \emph{Operations Research \& Management Science in the Age of
  Analytics}, pages 167--188, 2019.

\bibitem[Bertsekas(1995)]{bertsekas1995dynamic}
Dimitri~P Bertsekas.
\newblock \emph{Dynamic programming and optimal control}, volume~2.
\newblock Athena scientific Belmont, MA, 1995.

\bibitem[Bittanti et~al.(2006)Bittanti, Campi, et~al.]{bittanti2006adaptive}
Sergio Bittanti, Marco~C Campi, et~al.
\newblock Adaptive control of linear time invariant systems: the “bet on the
  best” principle.
\newblock \emph{Communications in Information \& Systems}, 6\penalty0
  (4):\penalty0 299--320, 2006.

\bibitem[Cassel et~al.(2020)Cassel, Cohen, and Koren]{cassel2020logarithmic}
Asaf Cassel, Alon Cohen, and Tomer Koren.
\newblock Logarithmic regret for learning linear quadratic regulators
  efficiently.
\newblock \emph{arXiv preprint arXiv:2002.08095}, 2020.

\bibitem[Chen and Hazan(2020)]{chen2020black}
Xinyi Chen and Elad Hazan.
\newblock Black-box control for linear dynamical systems.
\newblock 2020.

\bibitem[Cohen et~al.(2018)Cohen, Hassidim, Koren, Lazic, Mansour, and
  Talwar]{cohen2018online}
Alon Cohen, Avinatan Hassidim, Tomer Koren, Nevena Lazic, Yishay Mansour, and
  Kunal Talwar.
\newblock Online linear quadratic control.
\newblock \emph{arXiv preprint arXiv:1806.07104}, 2018.

\bibitem[Cohen et~al.(2019)Cohen, Koren, and Mansour]{cohen2019learning}
Alon Cohen, Tomer Koren, and Yishay Mansour.
\newblock Learning linear-quadratic regulators efficiently with only $
  \sqrt{T}$ regret.
\newblock \emph{arXiv preprint arXiv:1902.06223}, 2019.

\bibitem[Dean et~al.(2018)Dean, Mania, Matni, Recht, and Tu]{dean2018regret}
Sarah Dean, Horia Mania, Nikolai Matni, Benjamin Recht, and Stephen Tu.
\newblock Regret bounds for robust adaptive control of the linear quadratic
  regulator.
\newblock In \emph{Advances in Neural Information Processing Systems}, pages
  4188--4197, 2018.

\bibitem[Faradonbeh et~al.(2017)Faradonbeh, Tewari, and
  Michailidis]{faradonbeh2017optimism}
Mohamad Kazem~Shirani Faradonbeh, Ambuj Tewari, and George Michailidis.
\newblock Optimism-based adaptive regulation of linear-quadratic systems.
\newblock \emph{arXiv preprint arXiv:1711.07230}, 2017.

\bibitem[Faradonbeh et~al.(2020)Faradonbeh, Tewari, and
  Michailidis]{faradonbeh2020adaptive}
Mohamad Kazem~Shirani Faradonbeh, Ambuj Tewari, and George Michailidis.
\newblock On adaptive linear--quadratic regulators.
\newblock \emph{Automatica}, 117:\penalty0 108982, 2020.

\bibitem[Friedland(2012)]{friedland2012control}
Bernard Friedland.
\newblock \emph{Control system design: an introduction to state-space methods}.
\newblock Courier Corporation, 2012.

\bibitem[Hou and Wang(2013)]{hou2013model}
Zhong-Sheng Hou and Zhuo Wang.
\newblock From model-based control to data-driven control: Survey,
  classification and perspective.
\newblock \emph{Information Sciences}, 235:\penalty0 3--35, 2013.

\bibitem[Ishihara et~al.(1992)Ishihara, Guo, and Takeda]{747model}
Tadashi Ishihara, Hai-Jiao Guo, and Hiroshi Takeda.
\newblock A design of discrete-time integral controllers with computation
  delays via loop transfer recovery.
\newblock \emph{Automatica}, 28\penalty0 (3):\penalty0 599--603, 1992.

\bibitem[Kailath et~al.(2000)Kailath, Sayed, and Hassibi]{kailath2000linear}
Thomas Kailath, Ali~H Sayed, and Babak Hassibi.
\newblock Linear estimation, 2000.

\bibitem[Lale et~al.(2020)Lale, Azizzadenesheli, Hassibi, and
  Anandkumar]{lale2020logarithmic}
Sahin Lale, Kamyar Azizzadenesheli, Babak Hassibi, and Anima Anandkumar.
\newblock Logarithmic regret bound in partially observable linear dynamical
  systems.
\newblock \emph{Advances in Neural Information Processing Systems},
  33:\penalty0 20876--20888, 2020.

\bibitem[Lale et~al.(2021)Lale, Azizzadenesheli, Hassibi, and
  Anandkumar]{lale2021finite}
Sahin Lale, Kamyar Azizzadenesheli, Babak Hassibi, and Anima Anandkumar.
\newblock Finite-time system identification and adaptive control in
  autoregressive exogenous systems.
\newblock In \emph{Learning for Dynamics and Control}, pages 967--979. PMLR,
  2021.

\bibitem[Lale et~al.(2022)Lale, Azizzadenesheli, Hassibi, and
  Anandkumar]{lale2022reinforcement}
Sahin Lale, Kamyar Azizzadenesheli, Babak Hassibi, and Animashree Anandkumar.
\newblock Reinforcement learning with fast stabilization in linear dynamical
  systems.
\newblock In \emph{International Conference on Artificial Intelligence and
  Statistics}, pages 5354--5390. PMLR, 2022.

\bibitem[Mania et~al.(2019)Mania, Tu, and Recht]{mania2019certainty}
Horia Mania, Stephen Tu, and Benjamin Recht.
\newblock Certainty equivalent control of lqr is efficient.
\newblock \emph{arXiv preprint arXiv:1902.07826}, 2019.

\bibitem[Ouyang et~al.(2017)Ouyang, Gagrani, and Jain]{ouyang2017learning}
Yi~Ouyang, Mukul Gagrani, and Rahul Jain.
\newblock Learning-based control of unknown linear systems with thompson
  sampling.
\newblock \emph{arXiv preprint arXiv:1709.04047}, 2017.

\bibitem[Recht(2019)]{recht2019tour}
Benjamin Recht.
\newblock A tour of reinforcement learning: The view from continuous control.
\newblock \emph{Annual Review of Control, Robotics, and Autonomous Systems},
  2:\penalty0 253--279, 2019.

\bibitem[Sabag et~al.(2021)Sabag, Goel, Lale, and Hassibi]{sabag2021regret}
Oron Sabag, Gautam Goel, Sahin Lale, and Babak Hassibi.
\newblock Regret-optimal full-information control.
\newblock \emph{arXiv preprint arXiv:2105.01244}, 2021.

\bibitem[Simchowitz and Foster(2020)]{simchowitz2020naive}
Max Simchowitz and Dylan~J Foster.
\newblock Naive exploration is optimal for online lqr.
\newblock \emph{arXiv preprint arXiv:2001.09576}, 2020.

\bibitem[Thompson(1933)]{thompson1933likelihood}
William~R Thompson.
\newblock On the likelihood that one unknown probability exceeds another in
  view of the evidence of two samples.
\newblock \emph{Biometrika}, 25\penalty0 (3/4):\penalty0 285--294, 1933.

\end{thebibliography}

\newpage
\appendix

\begin{center}
{\huge Appendix}
\end{center}
In Appendix \ref{apx:notation}, we provide the notation tables for the paper. In Appendix \ref{apx:stabilization}, we provide the system identification and stabilization guarantees of \alg. In particular, we give the proof of Lemma \ref{lem:2norm_bound} and give the precise duration of the TS with improved exploration phase $\Tw$. In Appendix \ref{apx:bounded_state}, we show that under the joint event of $\hat{E}_t \cap \Tilde{E}_t$ the state stays bounded as described in Lemma \ref{lem:bounded_state} with high probability. In Appendix \ref{apx:regret_dec}, we provide the precise regret decomposition and discuss the individual terms in the regret upper bound. In Appendix \ref{apx:proof}, we provide the complete proof of Theorem \ref{thm:optimistic_prob}, as well as the intermediate results discussed in the main text. Appendix \ref{apx:regret_anlz} comprises the analysis of individual terms in the regret decomposition. In particular, Appendix \ref{subsec:reg_explore} studies $R_{T_w}^{\text{exp}}$, Appendix \ref{subsec:reg_rls} studies $R_{T}^{\text{RLS}}$, Appendix \ref{subsec:reg_mart} studies $R_{T}^{\text{mart}}$, Appendix \ref{subsec:reg_ts} considers $R_{T}^{\text{TS}} $,  Appendix \ref{subsec:reg_gap} bounds $R_{T}^{\text{gap}}$, and finally we combine these results to prove the regret upper bound of \alg in Appendix \ref{subsec:reg_all}. In Appendix \ref{apx:technical}, we give the technical theorems and lemmas used in the proofs. Finally, in Appendix \ref{apx:simulation}, we give the implementation details of all algorithms. Before proceeding the next section, we define the following high probability events which are standard in TS-based algorithms. First recall the RLS confidence ellipsoid given in Section \ref{subsec:learning}:
\begin{equation*}
    \mathcal{E}_t^{\text{RLS}}(\delta) = \{\Theta : \| \Theta-\hat{\Theta}_t\|_{V_t} \leq \beta_t(\delta) \},
\end{equation*}
for $\beta_t(\delta) = \sigma_w \sqrt{2 n \log ( (T \det(V_{t})^{1 / 2}) / (\delta \det(\mu I)^{1 / 2} ))} + \sqrt{\mu}S$. Further define \begin{equation*}
     \mathcal{E}_t^{\text{TS}}(\delta) = \{\Theta : \| \Theta-\hat{\Theta}_t\|_{V_t} \leq \upsilon_t(\delta) \},
\end{equation*}
for $\upsilon_t(\delta) = \beta_t(\delta)n\sqrt{(n+d)\log(n(n+d)/\delta)}$. Define the events
\begin{align}
    \hat{E}_t &= \{ \forall s \leq t, \tts \in \mathcal{E}_t^{\text{RLS}}(\delta) \} \\
    \Tilde{E}_t &= \{ \forall s \leq t, \ttt_s \in \mathcal{E}_t^{\text{TS}}(\delta) \}.
\end{align}
As described in Section \ref{sec:analysis}, $\hat{E}_t$ defines the event that RLS estimates $\tth_t$ concentrate around $\tts$ and $\tilde{E}_t$ defines the event that the sampled model parameter concentrates around $\tth_t$. From standard Gaussian tail bound and the self-normalized estimation error, we have that $\hat{E} \cap \Tilde{E}$ for all $t\leq T$, with probability at least $1-2\delta$. Here the time dependency dropped since $\hat{E} \coloneqq \hat{E}_T \subset \ldots \subset \hat{E}_1 $ and  $\Tilde{E} \coloneqq \tilde{E}_T \subset \ldots \subset \tilde{E}_1 $. These events will be key in providing all the technical results starting from stabilization guarantees to final regret upper bound.

\section{Notation} \label{apx:notation}
This section contains two tables which list the notations used throughout the paper for improving readability. In particular, Table \ref{table:notation1} provides the system dependent notations and the useful notations for presenting the design of \alg. In Table \ref{table:notation2}, we present the notation used in deriving theoretical results, namely, the regret analysis and the lower bound on the probability of selecting optimistic parameters. Further details are also referenced to the related parts of the paper. 

\begin{table}
\centering
\caption{Useful Notations for the Design of \alg}

 \begin{tabular}{l | l } 
 \toprule
 \textbf{System Not.} & \textbf{Definition} \\ 
 \midrule
 $\Theta_*$ & Unknown discrete-time LTI system with dynamics of (\ref{output}); $[A_* \enskip B_*]^\tp$  \\ 
  $x_t$ & State of the system $\in  \mathbb{R}^{n}$ \\
 $u_t$ & Input to the system $\in  \mathbb{R}^{d}$  \\
 $w_t$ & Process noise as defined in Assumption \ref{asm_noise}; $\mathcal{N}(0, \sigma_w^2 I)$ \\
 $z_t$ & Stack of current state and input; $[x_t^\tp \enskip u_t^\tp]^\tp$\\
  $Q$, $R$ & Known cost matrices; $\|Q\|,\|R\| \!<\! \Bar{\alpha}$ and $\sigma_{\min}(Q), \sigma_{\min}(R) \!>\! \underline{\alpha} \!>\! 0$ \\
 $c_t$ & Quadratic cost at time $t$; $x_t^\tp Q x_t + u_t^\tp R u_t$ \\

 $\mathcal{S}$ & Set of $(\kappa, \gamma)\text{-stabilizable}$ and bounded systems that $\tts$ belongs (Assumption \ref{asm_stabil}) \\
 $P(\Theta)$ & Unique p.d. solution to DARE \eqref{DARE} for a stabilizable system $\Theta = [A \enskip B]^\tp$ \\
 
$K(\Theta)$ & Optimal controller for $\Theta$; $-(R+B^{\tp} P(\Theta) B)^{-1} B^{\tp} P(\Theta) A$ \\

$J(\tt)$ & Average expected cost of system $\tt$; $\sigma_w^2\Tr(P(\tt))$ \\
 $\kappa$ & Bound over all possible optimal controllers in $\mathcal{S}$; $\sup_{\tt\in \clf{S}}K(\tt)$ \\
 $D$ & Bound over all possible solutions to \eqref{DARE} in $\mathcal{S}$; $\Bar{\alpha} \gamma^{-1} \kappa^2(1+\kappa^2)$ \\
 
 \toprule
 \textbf{\alg Not.} &  \\ 
 \midrule
 $\tth$ &  Least squares estimate of $\tts$ using the history of inputs and states; $[\hat{A} \enskip \hat{B}]^\tp$  \\
 $\mu$ & Regularizer for least squares; set to $(1+\kappa^2)X_s^2$ \\
 $V_t$ & Regularized design matrix; $\mu I + \sum_{s=0}^{t–1} z_s z_s^\tp$ \\
 $\eta_t$ & Random matrix with iid standard normal entries used for sampling systems \\
 $\mathcal{R}_{\mathcal{S}}(\cdot)$ & Rejection sampling to make sure that sampled system belongs to $\mathcal{S}$ \\ 
 $\ttt$ & System obtained via TS;  $\mathcal{R}_{\mathcal{S}}(\tth_t + \beta_t(\delta)V_t^{-1/2}\eta_t)$\\
 $\nu_t$ & Improved exploration; $u_t = K(\ttt_t) x_t + \nu_t  $ for $\nu_t \sim \mathcal{N}(0, 2\kappa^2\sigma_w^2 I)$ \\
 \toprule
 \textbf{Quantities } &  \\ 
 \midrule
 $\delta$ & Fixed probability to define high probability events; $(0,1)$ \\
 $T$ & Time horizon \\
 $T_w$ & Duration of TS with improved exploration; defined in Theorem \ref{reg:exp_s} \\
 $X_s$ & Upper bound on state after stabilization; $\|x_t\| \leq X_s$ for $t>T_r$ w.h.p.   \\
 $S$ & Upper bound on the Frobenius norm of $\tts$ \\ 
 $\beta_t(\delta)$ & Size of the RLS confidence ellipsoid at time $t$; $\sigma_w \sqrt{2 n \log \left(\frac{\det(V_{t})^{1 / 2}}{\delta \det(\mu I)^{1 / 2} } \right) } \!+\! \sqrt{\mu}S$ \\
 $\upsilon_t(\delta)$ & Size of the sampling ellipsoid at time $t$; $\beta_t(\delta)n\sqrt{(n+d)\log(n(n+d)/\delta)}$ \\
 $\tau_0$ & Fixed duration for each sampled policy; $2\gamma^{-1} \log(2\kappa\sqrt{2})$ \\
 $T_0$ & Number of samples required to identify a stabilizing controller;~\eqref{T0def} \\
 $T_r$ & Time required to control the state w.h.p.; $ \Tw \!+\! (n\!+\!d)\tau_0\log(n+d)$ \\
 
 \bottomrule
\end{tabular}
\label{table:notation1}
\end{table}

\begin{table}
\centering
\caption{Useful Notations for the Analysis}
\resizebox{.53\paperheight}{!}{
 \begin{tabular}{l | l } 
 \toprule
 \textbf{Regret Analy.} & \\
 \midrule
 $R_T$ & Regret of \alg at until time $T$; $R_T = \sum\nolimits_{t=0}^T (c_t - J(\Theta_*))$ \\
 $\mathcal{F}_{t}$ & Filtration such that for all $t\geq 0$, $x_t, z_t$ are $\mathcal{F}_{t}$-measurable\\
 $\clf{F}_t^{\text{cnt}}$ & Information available to the controller up to time $t$; $\sigma(F_{t-1}, x_t)$ \\
 $R_{\Tw}^{\text{exp}}$ & Regret attained due to improved exploration (Appendix \ref{apx:regret_dec}) \\
 $R_{T}^{\text{RLS}}$ & Cost-to-go difference of the true and predicted next states (Appendix \ref{apx:regret_dec}) \\  
 $R_{T}^{\text{mart}}$ & Martingale with bounded difference (Appendix \ref{apx:regret_dec}) \\
 $R_{T}^{\text{TS}}$ & Difference in $J(\tts)$ and $J(\ttt)$ (Appendix \ref{apx:regret_dec}) \\
 $R_{T}^{\text{gap}}$ & Regret due to policy changes (Appendix \ref{apx:regret_dec}) \\
 $\mathcal{E}_t^{\text{RLS}}(\delta)$ & Regularized least squares confidence ellipsoid; $\{\Theta : \| \Theta-\hat{\Theta}_t\|_{V_t} \leq \beta_t(\delta) \}$\\
 $ \mathcal{E}_t^{\text{TS}}(\delta)$ & Confidence ellipsoid for sampled system; $\{\Theta : \| \Theta-\hat{\Theta}_t\|_{V_t} \leq \upsilon_t(\delta) \}$ \\
 $\hat{E}_t$ & Event of $\{ \forall s \leq t, \tts \in \mathcal{E}_t^{\text{RLS}}(\delta) \} $ \\
 $\Tilde{E}_t$ & Event of $\{ \forall s \leq t, \ttt_s \in \mathcal{E}_t^{\text{TS}}(\delta) \}$ \\
 $\Bar{E}_t$ & Event of $\{ \forall t \leq T_r, \|x_t\| \!\leq\! c'(n+d)^{n+d} \text{ and } \forall t>T_r, \|x_t\| \! \leq\! X_s \}$ \\
 $E_t$ & $E_t = \hat{E}_t \cap \Tilde{E}_t \cap \Bar{E}_t$ \\

 \toprule
 \textbf{Optimism Analy.} &  \\ 
 \midrule
 $\S^{\opt}$ & Optimistic set; $\cl{ \tt \!=\! (A,\,B)^\tp \!\in\! \R^{(n+d)\times n} \;\big| \; J(\tt) \!\leq\! J(\tts)}$\\
 $p_t^{\opt}$ & Probability of selecting optimistic system; $\Pr \cl{ \ttt_t \in \S^{\text{opt}} \cond \clf{F}_t^{\text{cnt}}, \hat{E}_t}$ \\
 $J(\tt, K) $ & Average expected cost of controlling $\tt$ with a stabilizing controller $K$\\
 $\Sigma(\tt, K)$ & Covariance matrix of the state in system $\tt$ under controller $K$ \\
 $H_*^\tp$ & Concatenation of identity and optimal controller $K(\tts)$; $ [I, K(\tt_*)^\tp]$ \\
 $Q_* $ & $Q\!+\!K(\tts)^\tp R K(\tts)$ \\
 $L(A_c)$ & Function that maps any stable matrix $A_c$ to $\sigma_w^2 \sum_{t=0}^{\infty} \Norm[Q_*]{A_c^t}^2$ \\
 $F_t$ & Confidence interval for estimated closed-loop system; $\beta_t^2 H_*^\tp V_t^{-1} H_*$ \\
 $\lambda_{\max,t}$ & Maximum eigenvalue of $F_t$ \\
 $\lambda_{\min,t}$ & Minimum eigenvalue of $F_t$\\
 $\Xi$ & Random matrix of size $n\times n$ with iid $\mathcal{N}(0,1)$ entries \\
 $A_{c,*}$ & Closed-loop system matrix of the $\tts$ driven by $K(\tts)$; $\tts^\tp H_*$ \\
 $\hat{A}_c$ & Closed-loop system matrix of the $\tth$ driven by $K(\tts)$; $\tth^\tp H_*$ \\
 $\Tilde{A}_c$ & Closed-loop system matrix of the $\ttt$ driven by $K(\tts)$; $\ttt^\tp H_*$ \\
 $\mathcal{E}_t^{\text{cl}}(\delta)$ & Closed-loop confidence set that is super set to $\mathcal{E}_t^{\text{RLS}}(\delta)$; \eqref{eqn:cl_conf} \\
 $\hat{\Upsilon} $ & Unit Fro. norm matrix s.t. $\hat{\Upsilon}  \sqrt{F_t}$ is the h.p. confidence ellipsoid on $A_{c,*}$\\
 $\clf{M}_n$ & Manifold of square matrices of dimension $n$; $\R^{n\times n}$ \\ 
  $\clf{M}_{\text{Schur}}$ & Manifold of (Schur-)stable matrices in $\clf{M}_n$; $\cl{A_c \in \M_n \; | \; \rho(A_c) \!<\! 1}$\\
 $\clf{M}_*$ & Sublevel manifold in $\clf{M}_{\text{Schur}}$ s.t. $\cl{A_c \in \clf{M}_{\text{Schur}} \;|\; L(A_c) \leq L(A_{c,*})}$\\
 $G_t$ & Perturbation around $A_{c,*}$; $(\Xi +\hat{\Upsilon})\sqrt{F_t}$ \\
 $\nabla L_*$ & Jacobian operator of $L(\cdot)$ evaluated at $A_c\in \clf{M}_{\text{Schur}}$\\
 $\sigma_{\min,*}$ & Minimum singular value of $\nabla L_*$ \\
 $\clf{H}_{A_c}$ &  Hessian operator of $L(\cdot)$ evaluated at $A_c\in \clf{M}_{\text{Schur}}$\\
 $\clf{B}_*$ & Stable ball for some constant $\epsilon_*$; $ \cl{A_c \in \M_n \;|\; \norm[F]{A_c - A_{c,*}} \leq \epsilon_*}$\\
  $\clf{M}_{*}^{\text{qd}}$ & Sublevel manifold; $\cl{A_c \!\in\! \M_n \;|\; \norm[F]{A_c \!-\! A_{c,*} \!+\! r_*^{-1}\nabla L_*} \!\leq\! \norm[F]{r_*^{-1} \nabla L_*} }$\\
 \bottomrule
\end{tabular}}
\label{table:notation2}
\end{table}

\newpage

\section{System Identification and Stabilization Guarantees} \label{apx:stabilization}
In this section, we show that improved exploration of \alg provides persistently exciting inputs, which will be used to enable reaching a stabilizing neighborhood around $\tts$. From Assumption \ref{asm_noise}, we have that $\mathbb{E}[x_{t+1}x_{t+1}^\tp ~|~ \F_{t}] \psdgeq \sigma_w^2 I$. Thus, with the input $u_t = K(\ttt_t)x_t + \nu_t$ for $\nu_t \sim \mathcal{N}(0,  2\kappa^2\sigma_w^2 I)$, we have that $\mathbb{E}[z_{t+1}z_{t+1}^\tp ~|~ \F_{t}] \psdgeq \frac{\sigma_w^2}{2} I$. Using Lemma \ref{lem:persistence}, we have that $V_t \psdgeq t \frac{\sigma_w^2}{40} I$ for $t \geq 200(n+d)\log\frac{12}{\delta}$ with probability at least $1-\delta$. Using the RLS estimate error bound given in Section \ref{subsec:learning}, \ie, under the event of $\hat{E}_t$ we have
\begin{equation}
    \|\tth_t-\tts\|_2 \leq {\frac{\beta_t}{\sqrt{\lambda_{\text{min}}(V_t)}}}
\end{equation}
with probability at least $1-\delta$. Plugging in the $\lambda_{\min}(V_t)$ in its place yields the first result. 

For the second result, we use Lemma 4.2 of \citet{lale2022reinforcement}. Recall that $D \!=\! \overline{\alpha} \gamma^{-1}\kappa^2(1+\kappa^2)$. Lemma 4.2 of \citet{lale2022reinforcement} states that for any $(\kappa,\gamma)$-stabilizable system $\tts$ and for any $\varepsilon \leq \min\{ \sqrt{(\sigma_w^2n)/(142D^7)},1/(54D^5\}$, such that $\|\Theta' - \tts \| \leq \varepsilon$, $K(\Theta')$ produces $(\kappa\sqrt{2}, \gamma/2)$-stable closed-loop dynamics on $\tts$ such that there exists $L$ and $H\succ 0$ such that $A_* + B_* K(\Theta') = H'LH'^{-1}$, with $\|L\| \leq 1-\gamma/2$ and $\|H'\| \|H'^{-1} \| \leq \kappa\sqrt{2} $. Under the event of $\hat{E} \cap \Tilde{E}$, we have $ \|\ttt_t-\tts\|_2 \leq \frac{\beta_t(\delta) + \upsilon_t(\delta)}{\sqrt{\lambda_{\text{min}}(V_T)}}$. Under the event of $\hat{E} \cap \Tilde{E}$, this yields $\|\ttt_t-\tts\|_2 \leq  \frac{7(\beta_{t}(\delta) + \upsilon_t(\delta))}{\sigma_w \sqrt{t}}$ with probability $1-\delta$. Combining this result with the required $\varepsilon$ for finding the stabilizing neighborhood, for TS with exploration duration of
\begin{equation} \label{T0def}
    \Tw \geq T_0 \coloneqq \frac{49(\beta_{T}(\delta) + \upsilon_T(\delta))^2}{\sigma_w \min\{ (\sigma_w^2n)/(142D^7),1/(54^2D^{10}\}},
\end{equation}
\alg achieves $(\kappa\sqrt{2}, \gamma/2)$-stable closed-loop dynamics on $\tts$, with probability at least $1-3\delta$.

\section{Boundedness of State, Proof of Lemma \ref{lem:bounded_state} } \label{apx:bounded_state}
In this section, we show that under the joint event of $\hat{E} \cap \Tilde{E}$ and the stabilization guarantee of the previous section, the state is bounded at all times during \alg and it is well-controlled during the stabilizing TS phase, \ie provide the proof of Lemma \ref{lem:bounded_state}.  We first consider the evolution of state for $t\leq \Tw$. To bound the state for the first phase, we adapt the state bounding strategy given in Section 4.1 of \citet{abbasi2011lqr} for contractible systems to the stabilizable systems via the slow policy changes of \alg. To this end, define the following 
\begin{equation*}
\bar{\alpha}_{t} =\frac{18\kappa^3}{\gamma(8\kappa-1)} \bar{\eta}^{n+d}\left[G Z_{t}^{\frac{n+d}{n+d+1}} \beta_{t}(\delta)^{\frac{1}{2(n+d+1)}}  + (\|B_* \| \sigma_\nu + \sigma_w) \sqrt{2n \log \frac{nt}{\delta}} \right],
\end{equation*}
for
\[\begin{aligned} 
\bar{\eta} &\geq \sup _{\Theta \in \mathcal{S}}\left\|A_{*}+B_{*} K(\Theta)\right\|, \qquad Z_{T} =\max _{1 \leq t \leq T_r}\left\|z_{t}\right\|, \quad U=\frac{U_{0}}{H}\\ 
G &=2\left(\frac{2 S(n+d)^{n+d+1 / 2}}{\sqrt{U}}\right)^{1 /(n+d+1)}, \quad
U_{0} =\frac{1}{16^{n+d-2}\max\left(1, \enskip S^{2(n+d-2)}\right)} \end{aligned}\]
and where $H$ is any number satisfying 
\begin{equation*}
H>\max\left(16, \enskip \frac{4 S^{2} M^{2}}{(n+d) U_{0}}\right), \enskip \text{where} \quad
M=\sup _{Y \geq 1} \frac{\left(\sigma_w \sqrt{n(n+d) \log \left(\frac{1+T Y / \lambda}{\delta}\right)}+\lambda^{1 / 2} S\right)}{Y}.
\end{equation*}
Under the joint event of $\hat{E}_t \cap \Tilde{E}_t$, \citet{abbasi2011lqr} show that the norm of the state is well-controlled except $n+d$ times at most in any horizon $T_r$. Denoting the set of time-steps that the state is not well-controlled by $\mathcal{T}_{t}$, the following lemma formalizes this argument:

\begin{lemma}[Lemma 18 of \citet{abbasi2011lqr}] \label{wellcontrol}
We have that for any $0 \leq t \leq T$,
\begin{equation*}
\max _{s \leq t, s \notin T_{t}}\left\|(\tts - \ttt_{s})^{\tp} z_{s}\right\| \leq G Z_{t}^{\frac{n+d}{n+d+1}} (\beta_{t}(\delta) + \upsilon(\delta))^{\frac{1}{2(n+d+1)}}.
\end{equation*}
\end{lemma}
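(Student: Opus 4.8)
The plan is to reduce the statement to a bound on the self-normalized quantity $z_s^\tp V_s^{-1} z_s$, and then to control this quantity on all but at most $n+d$ time-steps by a volumetric argument. Writing $\Delta_s \defeq \tts - \ttt_s$, the first step passes from the parameter error to a self-normalized error. Under the event $\hat E_t \cap \Tilde E_t$, both $\tts$ and each sampled $\ttt_s$ lie in $V_s$-ellipsoids centered at the RLS estimate $\hat\Theta_s$ of radii $\beta_s(\delta)$ and $\upsilon_s(\delta)$; since these radii are nondecreasing in $t$, the triangle inequality in the $\norm[V_s]{\cdot}$ norm gives $\norm[V_s]{\Delta_s} \le \beta_t(\delta) + \upsilon_t(\delta)$. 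A Cauchy--Schwarz step in the $V_s$ inner product then yields $\| \Delta_s^\tp z_s \| \le (\beta_t(\delta)+\upsilon_t(\delta))\sqrt{z_s^\tp V_s^{-1} z_s}$, because the operator norm of $(V_s^{1/2}\Delta_s)^\tp$ is at most its Frobenius norm $\norm[V_s]{\Delta_s}$. This isolates the whole difficulty into bounding $\sqrt{z_s^\tp V_s^{-1} z_s}$ on the good time-steps $s \notin \mathcal{T}_t$.

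The heart of the proof is the construction of $\mathcal{T}_t$ as a set of at most $n+d$ informative time-steps such that, off this set, $z_s$ is well explained by the directions already excited. The device I would use is a maximum-volume (Auerbach/John-type) selection: choose indices $s_1,\dots,s_k$ with $k \le n+d$ that approximately maximize the Gram determinant $\det[z_{s_1},\dots,z_{s_j}]$, and set $\mathcal{T}_t = \{s_1,\dots,s_k\}$. A maximum-volume basis has the property that every remaining $z_s$ is a combination of the basis vectors with coefficients bounded by $1$, and the spanned volume is at most $Z_t^{\,n+d}$ up to dimensional factors, crucially with no dependence on the horizon $t$. Treating the test direction $z_s$ as an $(n+d+1)$-th vector and invoking this horizon-free volume budget therefore yields a bound on $z_s^\tp V_s^{-1} z_s$ that involves the $(n+d+1)$-th root of $Z_t^{2(n+d)}$ against the confidence scale. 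Equivalently, one separates good from bad times by a threshold $\lambda$, bounds $|\mathcal{T}_t|$ by a volume ratio, and optimizes $\lambda$; this produces the exponents $\tfrac{n+d}{n+d+1}$ on $Z_t$ and $\tfrac{1}{2(n+d+1)}$ on $\beta_t(\delta)+\upsilon(\delta)$, while the dimensional and problem constants $(n+d)^{n+d+1/2}$, $S$, and the normalizers $U_0, H$ collect into the single constant $G$.

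Substituting the resulting good-time bound on $\sqrt{z_s^\tp V_s^{-1} z_s}$ back into the Cauchy--Schwarz inequality of the first step gives $\max_{s\le t,\, s\notin \mathcal{T}_t}\| \Delta_s^\tp z_s \| \le G\, Z_t^{(n+d)/(n+d+1)}\,(\beta_t(\delta)+\upsilon(\delta))^{1/(2(n+d+1))}$, which is the claim. I would close by checking that the at-most-$(n+d)$ excluded indices are exactly the time-steps flagged as not well-controlled in the definition of $\mathcal{T}_t$, so that the maximum taken over $s \notin \mathcal{T}_t$ is the relevant one.

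The main obstacle is the second step: obtaining a count of at most $n+d$ bad times together with the exact exponents and the explicit constant $G$. A naive elliptic-potential bound, using $\det V_{s+1} = \det V_s\,(1 + z_s^\tp V_s^{-1} z_s)$ and $\det V_t \le (\mu + t Z_t^2)^{n+d}$, only caps the number of large-$z_s^\tp V_s^{-1}z_s$ steps logarithmically in the threshold and drags in a spurious $t/\mu$ factor, so it yields a bound that is linear in $Z_t$ and horizon-dependent, far too weak. Eliminating the $t$-dependence is what forces the volumetric argument and its horizon-free budget $Z_t^{2(n+d)}$. The additional subtlety, which requires the most care, is the online mismatch that $V_s$ only aggregates data up to time $s$, whereas the volume-maximizing basis is selected over the whole horizon; reconciling this, either by a causal selection or by comparing $V_s$ to the basis Gram matrix, is precisely what degrades the would-be constant-in-$Z_t$ bound into the $Z_t^{(n+d)/(n+d+1)}$ rate and fixes the constants $U_0, H, G$.
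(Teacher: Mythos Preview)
The paper does not prove this lemma. It is quoted verbatim as Lemma~18 of Abbasi and Szepesv\'ari (2011), with the single remark that the OFU confidence radius $\beta_t(\delta)$ is replaced by the TS radius $\beta_t(\delta)+\upsilon(\delta)$; no argument is reproduced. So there is nothing in this paper to compare your attempt against---the substance of the proof lives entirely in the cited reference.

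On the merits of your sketch: the high-level shape (confidence bound $+$ self-normalization $+$ a combinatorial bound of at most $n+d$ exceptional times) is right, and your identification of the ``online mismatch'' as the crux is accurate. But the sketch does not close, and in one place the exponent bookkeeping fails. Once you write $\|\Delta_s^\tp z_s\|\le(\beta_t+\upsilon_t)\,\|z_s\|_{V_s^{-1}}$, the confidence radius enters with power~$1$, and nothing in the max-volume/Gram machinery you describe afterwards can introduce a compensating \emph{negative} power of $\beta_t+\upsilon_t$: that machinery involves only $Z_t$, the dimension, and $\mu$. Thresholding $\|z_s\|_{V_s^{-1}}$ and optimizing over a horizon-free volume budget $Z_t^{2(n+d)}$ does produce the $Z_t^{(n+d)/(n+d+1)}$ rate, but it leaves $(\beta_t+\upsilon_t)^1$, not $(\beta_t+\upsilon_t)^{1/(2(n+d+1))}$. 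Your sentence ``optimizes $\lambda$; this produces the exponents $\tfrac{n+d}{n+d+1}$ on $Z_t$ and $\tfrac{1}{2(n+d+1)}$ on $\beta_t+\upsilon$'' is therefore an assertion, not a derivation.

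The original Abbasi--Szepesv\'ari argument does not proceed solely through the Cauchy--Schwarz reduction to $\|z_s\|_{V_s^{-1}}$. It uses a chain of geometric lemmas on the nested confidence ellipsoids themselves---this is where the constants $U_0$, $H$, $M$, and hence $G$, originate---to control directly how often a parameter lying in a shrinking sequence of ellipsoids can have large action on the current covariate. If you intend a self-contained proof, you will need either that machinery or a genuine replacement; the Auerbach/max-volume basis idea as you present it does not by itself resolve the causal mismatch you flag, nor does it deliver the small $(\beta_t+\upsilon_t)$ exponent.
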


Notice that this lemma is updated for TS. Moreover, it does not depend neither on the contractibility of the underlying system on the standard basis nor on the stabilizability. Equipped with this result, we write the closed loop system as 
\begin{equation*}
x_{t+1}=\Gamma_{t} x_{t}+r_{t}
\end{equation*}
where
\begin{equation} \label{state_update}
\Gamma_{t}=\left\{\begin{array}{ll}{\tilde{A}_{t-1}+\tilde{B}_{t-1} K(\tilde{\Theta}_{t-1})} & {t \notin \mathcal{T}_{\Tw}} \\ {A_{*}+B_{*} K(\tilde{\Theta}_{t-1})} & {t \in \mathcal{T}_{\Tw}}\end{array}\right. \,\text{and} \enskip r_{t}=\left\{\begin{array}{ll}{(\tts - \ttt_{t-1})^{\tp} z_{t}+B_*\nu_t+w_{t}} & {t \notin \mathcal{T}_{\Tw}} \\ {B_*\nu_t + w_{t}} & {t \in \mathcal{T}_{\Tw}}\end{array}\right.
\end{equation} 

Starting from $x_0 = 0$, we obtain the following roll out for the state,

\begin{align} 
x_{t} &=\Gamma_{t-1} x_{t-1}+r_{t-1}=\Gamma_{t-1}\left(\Gamma_{t-2} x_{t-2}+r_{t-2}\right)+r_{t} \nonumber \\ &=\Gamma_{t-1} \Gamma_{t-2} \Gamma_{t-3} x_{t-3}+\Gamma_{t-1} \Gamma_{t-2} r_{t-2}+\Gamma_{t-1} r_{t-1}+r_{t} \nonumber \\ &= \Gamma_{t-1} \Gamma_{t-2} \ldots \Gamma_{t-(t-1)} r_{1} + \cdots + \Gamma_{t-1} \Gamma_{t-2} r_{t-2} + \Gamma_{t-1} r_{t-1} + r_{t} \nonumber \\ &=\sum_{k=1}^{t}\left(\prod_{s=k}^{t-1} \Gamma_{s}\right) r_{k} \label{rollout} 
\end{align}

Recall that the sampled model is an element of $\mathcal{S}$ due to rejection sampling, thus, it is $(\kappa, \gamma)$-stabilizable by its optimal controller (Assumption \ref{asm_stabil}):
\begin{equation}
  1-\gamma \geq \max _{t \leq T} \rho\left(\tilde{A}_{t}+\tilde{B}_{t} K(\tilde{\Theta}_{t}) \right)  . \label{etabar}
\end{equation}

Notice that multiplication of the closed-loop system matrices are not guaranteed to be contractive without a similarity transformation. Therefore, unlike \citet{abbasi2011lqr} that bounds the rollout terms via contractive mappings due to their assumption of contractive systems, we need to make sure that the policy changes does not cause unexpected growth in the magnitude of the state. The slow policy update schedule, \ie, using all the sampled controllers for fixed $\tau_0$ time-steps, allows us to prevent such undesirable outcomes, In particular, by setting $\tau_0 = 2\gamma^{-1} \log(2\kappa\sqrt{2})$, we have that \begin{equation} \label{explore_state_bound}
    \|x_t\| \leq \frac{18\kappa^3 \bar{\eta}^{n+d} }{\gamma(8\kappa-1)} \left( \max _{1 \leq k \leq t}\left\|r_{k}\right\| \right)
\end{equation}

Moreover, we have that $\left\|r_{k}\right\| \leq\left\|(\tts - \ttt_{k-1})^{\tp} z_{k}\right\|+\left\|B_* \nu_k + w_{k}\right\|$ when $k \notin \mathcal{T}_{T},$ and $\left\|r_{k}\right\|=\left\|B_*\nu_k + w_{k}\right\|,$ otherwise. Hence, 

\[
\max _{k\leq t}\left\|r_{k}\right\| \leq \max _{k\leq t, k \notin \mathcal{T}_{t}}\left\|(\tts - \ttt_{k-1})^{\tp} z_{k}\right\|+\max _{k\leq t}\left\|B_*\nu_k + w_{k}\right\|
\]

The first term is bounded by the Lemma \ref{wellcontrol}. The second term involves summation of independent $\|B_* \| \sigma_\nu $ and $\sigma_w$ Gaussian vectors. Using standard Gaussian tail inequalities, for all $k\leq t$, we have $\left\|B_*\nu_k + w_{k}\right\| \leq (\|B_*\| \sigma_\nu + \sigma_w) \sqrt{2n \log \frac{nt}{\delta}}$ with probability at least $1-\delta$. Therefore, on the joint event of $\hat{E} \cap \Tilde{E}$, 

\begin{equation}
      \|x_t\| \leq \frac{18\kappa^3 \bar{\eta}^{n+d} }{\gamma(8\kappa-1)} \left[G Z_{t}^{\frac{n+d}{n+d+1}} (\beta_{t}(\delta) + \upsilon(\delta))^{\frac{1}{2(n+d+1)}}  + (\|B_* \| \sigma_\nu + \sigma_w) \sqrt{2n \log \frac{nt}{\delta}} \right]
\end{equation}
for $t \leq \Tw$ with probability $1-\delta$. Notice that this bound depends on $Z_t$ and $\beta_t(\delta)$ which in turn depends on $x_t$. Using Lemma 5 of \cite{abbasi2011lqr}, one can obtain the following bound 
\begin{equation}\label{stateboundexplore}
\|x_t \| \leq c' (n+d)^{n+d} .
\end{equation}
for some constant $c'$ for all $t \leq \Tw$ with probability $1-3\delta$, which gives the first advertised result.

To bound the state for $t > \Tw$, we show that, with the given choice of $\tau_0$, all the controllers during the stabilizing TS phase halves the magnitude of the state at the end of their control period. In particular, during the stabilizing TS phase, the closed-loop system dynamics can be written as $ x_{t+1} = (A_{*}+B_{*} K(\tilde{\Theta}_{t}) ) x_t + w_t = \tt_*^\tp H_{K(\ttt_t)} + w_t$. From the choice of $\Tw$ for the stabilizable systems, we have that $\tt_*^\tp H_{K(\ttt_t)} $ is $(\kappa \sqrt{2}, \gamma/2)$-strongly stable. Thus, we have $\rho(\tt_*^\tp H_{K(\ttt_t)}) \leq 1- \gamma/2$ for all $t > \Tw$ and $\|H_t\| \|H_t^{-1}\|\leq \kappa \sqrt{2}$ for $H_t \succ 0$, such that $\|L_t\| \leq 1- \gamma/2$ for $\tt_*^\tp H_{K(\ttt_t)} = H_t L_t H_t^{-1}$. Then for $T>t>\Tw$, if the same policy, $\tt_*^\tp H_{K(\ttt)}$ is applied starting from the state $x_{\Tw}$, we have the following state roll-out on the event of  $\hat{E}_t \cap \Tilde{E}_t$
\begin{align}
\| x_t \| &= \bigg \|\prod_{i=\Tw+1}^{t} \!\!\!\tt_*^\tp H_{K(\ttt)} x_{\Tw} +  \sum_{i=\Tw+1}^t \left(\prod_{s=i}^{t-1} \tt_*^\tp H_{K(\ttt)}\right) w_i  \bigg\| \\
&\leq \kappa\sqrt{2} (1-\gamma/2)^{t-\Tw} \|x_{\Tw}\| + \max_{\Tw < i \leq T}\left\| w_i  \right \| \left( \sum_{i=\Tw+1}^t \kappa\sqrt{2} (1-\gamma/2)^{t-i+1}  \right) \\
&\leq  \kappa\sqrt{2} (1-\gamma/2)^{t-\Tw} \| x_{\Tw}\| + \frac{2\kappa\sigma_w\sqrt{2}}{\gamma}  \sqrt{2n\log(n(t-\Tw)/\delta)} \label{policy_change_stabil}
\end{align}
with probability at least $1-\delta$. Since $\tau_0 = 2\gamma^{-1} \log(2\kappa\sqrt{2})$, we have $\kappa\sqrt{2}(1-\gamma/2)^{\tau_0} \leq 1/2$. Therefore, at the end of each controller period the effect of previous state is halved. Using this fact, at the $i$th policy change after $\Tw$, we get 
\begin{align*}
    \|x_{t_i}\| &\leq 2^{-i} \|x_{\Tw}\| + \sum_{j=0}^{i-1} 2^{-j} \frac{2\kappa\sigma_w\sqrt{2}}{\gamma}  \sqrt{2n\log(n(t-\Tw)/\delta)} \\
    &\leq 2^{-i} \|x_{\Tw}\| + \frac{4\kappa\sigma_w\sqrt{2}}{\gamma}  \sqrt{2n\log(n(t-\Tw)/\delta)}
\end{align*}
For all $i> (n+d)\log(n+d) - \log(\frac{2\kappa\sigma_w\sqrt{2}}{\gamma}  \sqrt{2n\log(n(t-\Tw)/\delta)} )$, at policy change $i$, we get
\begin{align*}
    \|x_{t_i}\| \leq \frac{6\kappa\sigma_w\sqrt{2}}{\gamma}  \sqrt{2n\log(n(t-\Tw)/\delta)}.
\end{align*}
Finally, from (\ref{policy_change_stabil}), we have that 
\begin{equation}
    \|x_t \| \leq \frac{(12\kappa^2+ 2\kappa\sqrt{2})\sigma_w}{\gamma}  \sqrt{2n\log(n(t-\Tw)/\delta)},
\end{equation}
with probability $1-4\delta$ for all $t>T_{r} \coloneqq \Tw + \left((n+d)\log(n+d)\right)\tau_0$. Based on this result, let $X_s = \frac{(12\kappa^2+ 2\kappa\sqrt{2})\sigma_w}{\gamma}  \sqrt{2n\log(n(T-\Tw)/\delta)}$. We define our final good event,
\begin{equation}
  \Bar{E}_t = \{ \forall t \leq T_r, \|x_t\| \leq c'(n+d)^{n+d} \text{ and } \forall t>T_r, \|x_t\| \! \leq\! X_s \}.  
\end{equation}
Notice that the joint event $E_t = \hat{E}_t \cap \Tilde{E}_t \cap \Bar{E}_t$ holds with probability at least $1-4\delta$. This event will be the key conditioning in the regret decomposition and the analysis.

\section{Constant Probability of Sampling Optimistic Models, Proof of Theorem \ref{thm:optimistic_prob}}\label{apx:proof}
In this section, we give the proof of the main technical contribution of this work, showing that TS samples optimistic model parameters with constant probability (Theorem \ref{thm:optimistic_prob}). The proof follows the outline provided in Section \ref{sec:overview}. We first provide the proofs of each lemma in Section \ref{sec:overview}. In particular Lemma \ref{thm:subset} is proven in Appendix \ref{apx:thm:subset}, Lemma \ref{thm:cl_conf} is studied in Appendix \ref{apx:thm:cl_conf}, Lemma \ref{thm:taylor} in Appendix \ref{apx:thm:taylor}, and Lemma \ref{thm:bounded_F} in \ref{apx:thm:bounded_F}. Finally, we combine these results to prove Theorem \ref{thm:optimistic_prob} in Appendix \ref{apx:thm:optimistic_prob}. 

\subsection{Proof of Lemma~\ref{thm:subset}} \label{apx:thm:subset}
Given a stabilizable system $\tt = (A,B)^\tp$, and a stabilizing linear feedback controller $K$, we can find the LQR cost as follows
\begin{align}
    J(\tt, K) &= \lim _{T \rightarrow \infty} \frac{1}{T} \E\br{\sum\nolimits_{t=1}^{T} x_{t}^{\tp} Q x_{t}+u_{t}^{\tp} R u_{t}},\\
    &= \lim _{T \rightarrow \infty} \frac{1}{T} \E\br{ \sum\nolimits_{t=1}^{T} \tr\pr{{( Q + K^\tp R K) x_{t}}x_{t}^{\tp}}}, \label{eqn:eqn1}\\
    &= \lim _{T \rightarrow \infty}  \tr\pr{( Q + K^\tp R K) \frac{1}{T}  \sum\nolimits_{t=1}^{T}\E\br { x_{t}x_{t}^{\tp}}}, \\
    &= \tr\pr{ (Q+K^\tp R K)\Sigma(\tt, K)}
\end{align}
where $\Sigma(\tt, K) \defeq \lim_{T \goesto \infty} \frac{1}{T} \sum\nolimits_{t=1}^{T}\E\br { x_{t}x_{t}^{\tp}}$ is the stationary state covariance of the closed-loop system. In \eqref{eqn:eqn1}, we used the feedback control policy relation $u_t = K x_t$ and trace trick for inner products of vectors. Note that the closed-loop system evolves as
\begin{align}
    x_{t+1} = (A+BK)x_{t} + w_{t}. 
\end{align}
The covariance of the state at time $t$ can be written as a recursive relation
\begin{align}
    \E\br{x_{t+1} x_{t+1}^\tp} &= \E\br{((A+BK)x_{t} + w_{t}) ((A+BK)x_{t} + w_{t})^\tp} \\
    &= (A+BK)\E\br{x_{t}x_{t}^\tp}(A+BK)^\tp + \E\br{w_t w_t^\tp} \label{eqn:eqn2}\\
    &= (A+BK)\E\br{x_{t}x_{t}^\tp}(A+BK)^\tp + \sigma_w^2 I
\end{align}
where \eqref{eqn:eqn2} is because $\E\br{w_t} = 0$ and $w_t$ and $x_t$ are independent. Since $\rho(A+BK)< 1$, the above iteration converges to a finite fixed-point. Furthermore, we have the following relation
\begin{align}
    \frac{1}{T}\sum\nolimits_{t=1}^{T} \E\br{x_{t+1} x_{t+1}^\tp}
    = (A+BK) \frac{1}{T}\sum\nolimits_{t=1}^{T} \E\br{x_{t}x_{t}^\tp}(A+BK)^\tp + \sigma_w^2 I
\end{align}
Denoting by $\Sigma_{T}(\tt, K)\defeq \frac{1}{T} \sum\nolimits_{t=1}^{T}\E\br { x_{t}x_{t}^{\tp}}$ the finite averaged state covariance, we have the following
\begin{align}
   \Sigma_{T}(\tt, K) + \frac{\E\br{x_{T+1}x_{T+1}^\tp} - \E\br{x_1 x_1^\tp}}{T}
    = (A+BK) \Sigma_{T}(\tt, K)(A+BK)^\tp + \sigma_w^2 I
\end{align}
Taking the limit of both sides as $T\rightarrow \infty$ and noting that $\E\br{x_{T+1}x_{T+1}^\tp}$ has a finite value at the limit, we obtain the following Lyapunov equation
\begin{align}
    \Sigma(\tt, K)
    = (A+BK) \Sigma(\tt, K)(A+BK)^\tp + \sigma_w^2 I
\end{align}
whose solution is given by the following convergent infinite sum 
\begin{align}
     \Sigma(\tt, K) = \sum_{t=0}^{\infty} (A+BK)^{t} \sigma_w^2 I \pr{(A+BK)^\tp}^t
\end{align}
It is well known that the optimal control policy of infinite-horizon LQR systems can be achieved by stationary linear feedback controllers \cite{bertsekas1995dynamic}. Therefore, we can find the optimal LQR cost of a stabilizable system by minimizing its closed-loop cost among all stabilizing stationary linear feedback controllers. 

Suppose $\tt \in \clf{S}^{\text{surr}}$, \ie, $J(\tt,K(\tts)) \leq J(\tts, K(\tts))$. Then, the optimal LQR cost of $\tt$ is given as
\begin{align}
    J(\tt) = J(\tt, K(\tt))  &= \min_{K \in \R^{d\times n}} J(\tt, K) \\
    &\leq J(\tt, K(\tts)) \stackrel{(a)}{\leq} J(\tts, K(\tts)) = J(\tts)
\end{align}
where $(a)$ is due to $\tt \in \clf{S}^{\text{surr}}$.
Thus, $\tt \in \clf{S}^{\opt}$. \QED

\subsection{Proof of Lemma~\ref{thm:cl_conf}}\label{apx:thm:cl_conf}
The following lemma will be used as the backbone for Lemma~\ref{thm:cl_conf}. 
\begin{lemma} \label{thm:compare_ellp}
Let $V_1,V_2 \in \R^{n\times n}$ be symmetric positive semi-definite matrices. Define two ellipsoids as
\begin{align}
    \clf{E}_1 &\defeq \cl{\tt \in \R^{n\times m} \;\big|\; \tr\pr{\tt ^\tp V_1 \tt } \leq 1  } \quad \text{and} \quad \clf{E}_2 \defeq \cl{\tt \in \R^{n\times m} \;\big|\; \tr\pr{\tt ^\tp V_2 \tt } \leq 1  }
\end{align}
Then, $\clf{E}_1 \subseteq \clf{E}_2$ if and only if $ V_1  \psdgeq V_2 $.
\end{lemma}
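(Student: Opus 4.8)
The plan is to reduce the matrix statement to its scalar analogue and then exploit the homogeneity of the quadratic forms under scaling. The central observation is that if $\theta_1,\dots,\theta_m \in \R^n$ denote the columns of $\tt$, then $\tr(\tt^\tp V \tt) = \sum_{j=1}^m \theta_j^\tp V \theta_j$, so the defining functional of each ellipsoid splits as a sum of scalar quadratic forms over the columns. This lets me pass freely between the matrix ellipsoids $\clf{E}_1, \clf{E}_2$ and the scalar forms $v \mapsto v^\tp V_i v$, and it is the mechanism by which a purely vector-level comparison yields the matrix-level comparison.

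For the direction $V_1 \psdgeq V_2 \Rightarrow \clf{E}_1 \subseteq \clf{E}_2$, I would argue columnwise: from $V_1 \psdgeq V_2$ we have $\theta_j^\tp V_1 \theta_j \geq \theta_j^\tp V_2 \theta_j$ for every column, and summing over $j$ gives $\tr(\tt^\tp V_2 \tt) \leq \tr(\tt^\tp V_1 \tt)$ for all $\tt \in \R^{n\times m}$. Hence any $\tt$ with $\tr(\tt^\tp V_1 \tt) \leq 1$ automatically satisfies $\tr(\tt^\tp V_2 \tt) \leq 1$, which is exactly the containment. This direction is immediate.

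For the converse I would argue by contraposition: assuming $V_1 \not\psdgeq V_2$, there is a vector $v_0 \in \R^n$ with $v_0^\tp V_1 v_0 < v_0^\tp V_2 v_0$, and I would exhibit a point of $\clf{E}_1 \setminus \clf{E}_2$. The reduction to this single vector is achieved by the test matrix $\tt$ whose first column equals a scalar multiple $c v_0$ and whose remaining columns vanish; then $\tr(\tt^\tp V_i \tt) = c^2\, v_0^\tp V_i v_0$, so membership in either ellipsoid is governed entirely by the scalar forms. In the generic case $v_0^\tp V_1 v_0 > 0$ I would normalize by taking $c = (v_0^\tp V_1 v_0)^{-1/2}$, which puts $\tt$ on the boundary of $\clf{E}_1$ while $\tr(\tt^\tp V_2 \tt) = v_0^\tp V_2 v_0 / v_0^\tp V_1 v_0 > 1$, so $\tt \notin \clf{E}_2$, contradicting containment.

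The hard part will be the degenerate case $v_0^\tp V_1 v_0 = 0$, where (since $V_2 \psdgeq 0$ forces $v_0^\tp V_2 v_0 > 0$) normalization is impossible and $\clf{E}_1$ is unbounded in the $v_0$ direction. Here I would instead let $c \to \infty$: the associated $\tt$ stays in $\clf{E}_1$ for every $c$ because $\tr(\tt^\tp V_1 \tt) = 0 \leq 1$, whereas $\tr(\tt^\tp V_2 \tt) = c^2\, v_0^\tp V_2 v_0 \to \infty$ eventually exceeds $1$, again producing a point of $\clf{E}_1 \setminus \clf{E}_2$ and contradicting the assumed containment. Combining the normalized and degenerate cases completes the contrapositive, and the lemma follows.
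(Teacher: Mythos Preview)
Your proof is correct and follows essentially the same approach as the paper: both directions are handled identically in spirit, with the converse proved by contraposition, producing a witness matrix from a vector on which the quadratic forms are ordered the wrong way, and splitting into the nondegenerate case (normalize onto $\partial\clf{E}_1$) versus the degenerate case $v_0^\tp V_1 v_0 = 0$ (scale up). The only cosmetic difference is that the paper takes $\tt = [\theta,\theta,\dots,\theta]$ with $\theta$ an eigenvector of $V_1-V_2$, while you place $cv_0$ in a single column and zero elsewhere; your choice is arguably cleaner, but the argument is the same.
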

\begin{proof}
For the forward direction, assume $V_1 - V_2$ has a negative eigenvalue, \ie, there exist $\lambda < 0$ and a unit vector $\theta \in \R^n / \{0\}$ such that $(V_1 - V_2)\theta = \lambda \theta$. 
Construct $\tt = [\theta, \theta,\dots, \theta] \in \R^{n\times m}$. Observe that $\tr\pr{\tt^\tp V_1 \tt} = m \theta^\tp V_1 \theta$ and $\tr\pr{\tt^\tp V_2 \tt} = m \theta^\tp V_2 \theta$. Therefore, we have the relationship $\tr\pr{\tt^\tp V_2 \tt} = \tr\pr{\tt^\tp V_1 \tt} - m\lambda$. 

If $V_1\theta=0$, then $\tr(\tt^\tp V_1 \tt) = 0 \leq 1$ and therefore for any scalar $\alpha>0$, $\alpha \tt \in \clf{E}_1$. On the other hand, $\tr\pr{\tt^\tp V_2 \tt} = - m\lambda> 0$ and therefore, one can find a scalar $\alpha >0$ such that $\tr\pr{(\alpha\tt)^\tp V_2 (\alpha\tt)} =- m\lambda \alpha^2 > 1$, \ie $\alpha \tt \notin \clf{E}_2$. If $V_1\theta \neq 0$, then define $\tt^\prime =  \frac{1}{\sqrt{m \theta^\tp V_1 \theta}} \tt$ and observe that $\tr\pr{{\tt^\prime}^{\tp} V_1 \tt^\prime} = 1$, \ie,  $\tt^\prime \in \clf{E}_1$. On the other hand, $\tr\pr{{\tt^\prime}^{\tp} V_2 \tt^\prime}= 1 - \frac{\lambda }{\theta^\tp V_1 \theta} > 1$, \ie, $\tt^\prime \notin \clf{E}_2$. Therefore, we have that if $\clf{E}_1 \subseteq \clf{E}_2$ then $V_1 \psdgeq V_2$.

For the reverse direction, assume that $V_1 \psdgeq V_2$ and $\tt \in \clf{E}_1$. Then, $\tr\pr{\tt^\tp (V_1 - V_2) \tt} \geq 0$ and $\tr\pr{\tt^\tp V_2 \tt} \leq  \tr\pr{\tt^\tp V_1 \tt} \leq 1$. Therefore, $\tt \in \clf{E}_2$. 
\end{proof}

\paragraph{\textit{Proof of Lemma~\ref{thm:cl_conf}.}}
Let us rewrite the the ellipsoids. For the time being, we will drop $\delta$ dependence for simplicity.
\begin{align}
    \clf{E}_t^{\text{RLS}} &= \cl{ \tth \in \R^{(n+d)\times n} \;\big|\; \tr\pr{(\tth  - \tts)^\tp  \beta_t^{-1}V_t (\tth- \tts) } \leq 1 }, \\
    \clf{E}_t^{\text{cl}} &= \cl{ \tth \in \R^{(n+d)\times n} \;\big|\; \tr\pr{(\tth  - \tts)^\tp  H_* F_{t}^{-1}H_*^\tp (\tth- \tts) } \leq 1 }. 
\end{align}
In order to prove the lemma, it is necessary and sufficient to show $ \beta_t^{-1}V_t \!\psdgeq\! H_* F_{t}^{-1}H_*^\tp$ by Lemma~\ref{thm:compare_ellp}. Eliminating $b_t$ terms from both sides and multiplying by $V_t^{-\half}$ from left and right, we obtain the equivalent condition,
\begin{align}
    I \psdgeq V_t^{-\half}H_* (H_*^\tp V_t^{-1} H_*)^{-1} H_*^\tp V_t^{-\half} =  V_t^{-\half}H_* (H_*^\tp V_t^{-1} H_*)^{-\half} (H_*^\tp V_t^{-1} H_*)^{-\half}H_*^\tp V_t^{-\half} ,   
\end{align} 
In other words, we have that $\clf{E}_t^{\text{RLS}} \subseteq \clf{E}_t^{\text{cl}}$ if and only if $\norm[2]{(H_*^\tp V_t^{-1} H_*)^{-\half}H_*^\tp V_t^{-\half}} \leq 1$. Notice that 
\begin{align}
    \norm[2]{(H_*^\tp V_t^{-1} H_*)^{-\half}H_*^\tp V_t^{-\half}}^2 &= \sigma_1\pr{(H_*^\tp V_t^{-1} H_*)^{-\half}H_*^\tp V_t^{-\half}}^2, \\ &=\lambda_{\max}\pr{V_t^{-\half}H_* (H_*^\tp V_t^{-1} H_*)^{-1} H_*^\tp V_t^{-\half}}, \\
    &= \lambda_{\max}\pr{(H_*^\tp V_t^{-1} H_*)^{-\half}H_*^\tp V_t^{-1} H_*  (H_*^\tp V_t^{-1} H_*)^{-\half} }, \\
    &= \lambda_{\max}\pr{I} = 1,
\end{align}
where we used the fact that $\sigma_1(A) = \sqrt{\lambda_{\max}(A^\tp A)} =\sqrt{\lambda_{\max}(A A^\tp)}$. This is true for any time $t$ and $\delta$ and therefore completes the proof.\QED

\subsection{Proof of Lemma~\ref{thm:taylor}}\label{apx:thm:taylor}
The following lemma guarantees existence of a stable neighborhood around any stable matrix. 
\begin{lemma}\label{thm:stable_ball}
Let $A_c \in \clf{M}_{\text{Schur}}$, \ie, $\rho(A_c) < 1$. Then, there exists $\epsilon > 0$ such that for any $\Delta \in \M_n$ with $\norm[F]{\Delta} \leq 1$, we have that $A_c + \epsilon \Delta \in \clf{M}_{\text{Schur}}$, \ie, $\rho(A_c + \epsilon \Delta) < 1$.
\end{lemma}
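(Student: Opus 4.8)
The plan is to exploit the elementary fact that the spectral radius is dominated by \emph{any} operator norm, and to build an operator norm---adapted to $A_c$ through the discrete Lyapunov equation---in which $A_c$ is a strict contraction; a small enough perturbation then cannot destroy this margin. Concretely, since $\rho(A_c) < 1$, the series $P \defeq \sum_{k=0}^{\infty} (A_c^\tp)^k A_c^k$ converges and yields the unique solution of the discrete Lyapunov equation $P - A_c^\tp P A_c = I$, satisfying $P \psdgeq I \psdg 0$. I would use $P$ to define the weighted norm $\norm[P]{x} \defeq \norm[2]{P^{1/2}x}$ on $\R^n$ (extended to $\mathbb{C}^n$ via the conjugate transpose) together with the induced operator norm $\norm[P]{M} \defeq \norm[2]{P^{1/2} M P^{-1/2}}$.

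First I would show that $A_c$ is a strict contraction in this norm. From the Lyapunov identity, for every $x$ one has $\norm[P]{A_c x}^2 = x^\tp A_c^\tp P A_c x = x^\tp (P - I) x = \norm[P]{x}^2 - \norm[2]{x}^2$. Since $\norm[2]{x}^2 \geq \norm[P]{x}^2 / \lambda_{\max}(P)$, this yields $\norm[P]{A_c}^2 \leq 1 - 1/\lambda_{\max}(P) < 1$; write $\norm[P]{A_c} \leq 1 - \gamma'$ with $\gamma' > 0$. Because every operator norm dominates the spectral radius (if $Mv = \lambda v$ then $|\lambda|\,\norm[P]{v} = \norm[P]{Mv} \leq \norm[P]{M}\,\norm[P]{v}$), any matrix of $\norm[P]{\cdot} < 1$ is automatically Schur stable.

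Next I would propagate this margin to the perturbation. Let $c \defeq \norm[2]{P^{1/2}}\,\norm[2]{P^{-1/2}} = \sqrt{\lambda_{\max}(P)/\lambda_{\min}(P)}$ be the conditioning constant relating $\norm[P]{\cdot}$ to the spectral (hence Frobenius) norm, so that $\norm[P]{\Delta} = \norm[2]{P^{1/2}\Delta P^{-1/2}} \leq c\,\norm[2]{\Delta} \leq c\,\norm[F]{\Delta}$. Setting $\epsilon \defeq \gamma'/(2c)$, submultiplicativity and the triangle inequality give, for every $\Delta$ with $\norm[F]{\Delta} \leq 1$,
\begin{align*}
    \norm[P]{A_c + \epsilon \Delta} \leq \norm[P]{A_c} + \epsilon\,\norm[P]{\Delta} \leq (1 - \gamma') + \tfrac{\gamma'}{2c}\cdot c = 1 - \tfrac{\gamma'}{2} < 1 .
\end{align*}
Hence $\rho(A_c + \epsilon \Delta) \leq \norm[P]{A_c + \epsilon\Delta} < 1$, i.e. $A_c + \epsilon\Delta \in \clf{M}_{\text{Schur}}$ for all $\norm[F]{\Delta} \leq 1$, which is the claim.

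This lemma is not deep, so there is no real ``hard step''; the only care needed is the complex-eigenvalue bookkeeping. I would note that since $A_c$ and $\Delta$ are real, $\norm[F]{\Delta}$, $\rho(\cdot)$, and the spectral-norm bounds are unchanged when the matrices act on $\mathbb{C}^n$, so the operator-norm domination of $\rho$ applies to the complex eigenpairs verbatim. As an alternative route (and a cross-check) one could instead invoke continuity of the spectral radius with compactness of the unit Frobenius ball: $\rho$ is continuous, $\rho(A_c)<1$, and $\cl{A_c + \epsilon\Delta : \norm[F]{\Delta} \leq 1}$ is compact, so for small $\epsilon$ the maximum of $\rho$ over this set stays below one. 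I prefer the Lyapunov construction only because it produces an \emph{explicit} admissible $\epsilon$ in terms of $\lambda_{\max}(P)$ and $\lambda_{\min}(P)$, which also ties directly to the $(\kappa,\gamma)$-strong-stability quantities of Assumption~\ref{asm_stabil}.
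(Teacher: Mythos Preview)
Your proof is correct and takes a genuinely different route from the paper. The paper argues via Gelfand's formula: it fixes $\delta>0$ with $\rho(A_c)+\delta<1$, picks $k=N_\delta$ large enough that $\norm[F]{A_c^k}^{1/k}<\rho(A_c)+\delta$, then applies Taylor's theorem to the smooth map $t\mapsto\norm[F]{(A_c+t\Delta)^k}^{1/k}$ to bound $\rho(A_c+\epsilon\Delta)\le\norm[F]{(A_c+\epsilon\Delta)^{N_\delta}}^{1/N_\delta}<\rho(A_c)+\delta+\epsilon M_{t,N_\delta}<1$ for $\epsilon$ small enough. This yields the result but the constants $N_\delta$ and $M_{t,N_\delta}$ are implicit.

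Your Lyapunov construction is shorter and more transparent: solving $P-A_c^\tp P A_c=I$ directly manufactures an operator norm in which $A_c$ is a strict contraction, so stability of the perturbation follows from a single triangle inequality. The gain is an explicit admissible radius $\epsilon=\gamma'/(2c)$ in terms of $\lambda_{\max}(P)$ and $\lambda_{\min}(P)$, which indeed meshes with the $(\kappa,\gamma)$-stability framework of Assumption~\ref{asm_stabil}. The continuity-plus-compactness argument you mention as a cross-check is the most elementary of all three, though non-constructive. Any of these is adequate for the lemma; your choice is the cleanest quantitatively.
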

\begin{proof}
Per Gelfand's formula, we have that for any $\delta >0$, there exists $N_{\delta} \in \N$ such that 
\begin{align}
    \rho(A_c) \leq \norm[F]{A_c^k}^{1/k} < \rho(A_c) + \delta \label{eqn:gelfand}
\end{align}
for any $k\geq N_{\delta}$. Since the mapping $A_c \mapsto \norm[F]{A_c^k}^{1/k}$ is smooth for any $k\in \N$, we can write the following expansion by Taylor's theorem for any $t\in \R$
\begin{align}
    \norm[F]{(A_c + t \Delta)^k}^{1/k} = \norm[F]{A_c^k}^{1/k} + t \ddx{t}  \norm[F]{(A_c + t \Delta)^k}^{1/k} \Big|_{\lambda t }  
\end{align}
where $\lambda \in [0,1]$. For a given $t\in \R$, there exists a constant $M_{k,t} > 0 $ such that for any $\norm[F]{\Delta}\leq 1$, we have that $\abs{\ddx{t}  \norm[F]{(A_c + t \Delta)^k}^{1/k} } \leq M_{k,t}$ by Taylor's theorem. Then, we can write the following upper bound
\begin{align}
    \norm[F]{(A_c + t \Delta)^k}^{1/k} \leq \norm[F]{A_c^k}^{1/k} + \abs{t} M_{t,k}  \label{eqn:perturbed_bound}
\end{align}
Using the relation~\eqref{eqn:gelfand} and the upper bound~\eqref{eqn:perturbed_bound}, we have that for any $\delta >0$, $t>0$, and $\norm[F]{\Delta}\leq 1$, there exists $N_{\delta} \in \N$ and $M_{t,N_{\delta}} >0$ such that
\begin{align}
    \rho(A_c + t \Delta) \leq \norm[F]{(A_c + t \Delta)^{N_{\delta}}}^{1/N_{\delta}} &\leq \norm[F]{A_c^{N_{\delta}}}^{1/N_{\delta}} + t  M_{t,N_{\delta}}  \\
    &< \rho(A_c) + \delta + t  M_{t,N_{\delta}} \label{eqn:eqn4}
\end{align}
Fix a $\delta >0$ such that $\rho(A_c) + \delta < 1$ and fix a $t>0$. Then, we can find $0<\epsilon\leq t $ such that $\rho(A_c) + \delta + \epsilon  M_{t,N_{\delta}} < 1$ and thus
\begin{align}
    \rho(A_c + \epsilon \Delta)  < \rho(A_c) + \delta + \epsilon  M_{t,N_{\delta}} < 1
\end{align}
for any $\norm[F]{\Delta}\leq 1$ by \eqref{eqn:eqn4}.
\end{proof}

\paragraph{\textit{Proof of Lemma~\ref{thm:taylor}.}} For any $A_c \in \M_{\text{Schur}}$, there exists a constant $\epsilon >0$, such that for any $\norm[F]{\Delta}\leq 1$, we have that $A_c + \epsilon \Delta \in \M_{\text{Schur}}$ by Lemma~\ref{thm:stable_ball}. To see smoothness of $L$, we write $A_t \defeq A_c + t\Delta$ and $L(A_t) = \tr(Q_* \Sigma_t)$ for any $\abs{t}\leq \epsilon$ and $\norm[F]{\Delta}\leq 1$ where $\Sigma_t$ solves the following Lyapunov equation
\begin{align}
    \Sigma_t - A_t\Sigma_t A_t^\tp = \sigma_w^2 I \text{ and } \Sigma_0 - A_c\Sigma_0 A_c^\tp = \sigma_w^2 I \label{eqn:lyap_t} 
\end{align}
Note that, $\rho(A_t) <1$ for any $\abs{t} \leq \epsilon$ and therefore both equations in \eqref{eqn:lyap_t} have unique solutions for any $\abs{t} \leq \epsilon$. The Jacobian $\nabla L (A_c) \in \M_n$ satisfies $\nabla L(A_c) \bullet  \Delta = \ddx{t} L(A_t) \big|_{t=0} = \tr(Q_* \dot{\Sigma}_0)$ for any $\norm[F]{\Delta}\leq 1$ where $\dot{\Sigma}_t$ is the derivative of $\Sigma_t$ and satisfies the following Lyapunov equation
\begin{align}
    \dot{\Sigma}_t - A_t\dot{\Sigma}_t A_t^\tp = \Delta \Sigma_t A_t^\tp + A_t \Sigma_t \Delta^\tp  \text{ and } \dot{\Sigma}_0 - A_c\dot{\Sigma}_0 A_c^\tp = \Delta \Sigma_0 A_c^\tp + A_c \Sigma_0 \Delta^\tp\label{eqn:dlyap_t}
\end{align}
Similarly, both equations in \eqref{eqn:dlyap_t} have unique solutions for any $\abs{t} \leq \epsilon$ and therefore $\nabla L(A_c)$ exists for any $A_c$. To find the Jacobian, we have that $\dot{\Sigma}_0 = \sum_{k=0}^{\infty} A_c^k \pr{\Delta \Sigma_0 A_c^\tp + A_c \Sigma_0 \Delta^\tp}  (A_c^\tp)^k$ and 
\begin{align}
    \tr(Q_* \dot{\Sigma}_0) &=\tr\pr{Q_* \sum_{k=0}^{\infty} A_c^k \pr{\Delta \Sigma_0 A_c^\tp + A_c \Sigma_0 \Delta^\tp}  (A_c^\tp)^k } \\
    &= 2 \tr\pr{  \sum_{k=0}^{\infty} (A_c^\tp)^k Q_* A_c^k   A_c \Sigma_0 \Delta ^\tp }  = 2\sum_{k=0}^{\infty} (A_c^\tp)^k Q_* A_c^k   A_c \Sigma_0 \bullet \Delta
\end{align}
Therefore, $\nabla L(A_c) = 2\sum_{k=0}^{\infty} (A_c^\tp)^k Q_* A_c^k   A_c \Sigma_0$. In particular, in the case of $A_{c,*}$, we have that $ \sum_{k=0}^{\infty} (A_{c,*}^\tp)^k Q_* A_{c,*}^k = P_* $, the solution to the Riccati equation, and thus $\nabla L(A_{c,*}) = 2P_* A_{c,*} \Sigma_*$. Repeating the same process, one can see that $L(A_t)$ is infinitely differentiable and thus we conclude $L$ is a smooth function.

Denote by $\clf{B}_{\epsilon} \defeq \cl{ A \in \M_n \,|\, \norm[F]{A - A_c} \leq \epsilon} \subset \M_{\text{Schur}}$ the ball of radius $\epsilon>0$ around $A_c \in \M_{\text{Schur}}$. Consider the function $L$ restricted to the domain $\clf{B}_{\epsilon} $. Since $\clf{B}_{\epsilon} $ is a convex set, we can apply Taylor's theorem to $L$ around $A_c$ in this domain to obtain
\begin{align}
    L(A_c + \epsilon \Delta) = L(A_c) + \nabla L (A_c) \bullet \epsilon\Delta + \frac{1}{2} \epsilon \Delta \bullet \clf{H}_{A_c+s\Delta}(\epsilon\Delta)
\end{align}
for $\norm[F]{\Delta}\leq 1$ and for some $s\in[0,\epsilon]$. Here, $\clf{H}_{A_c}: \M_n \to \M_n$ is the Hessian operator evaluated at a point $A_c\in \clf{M}_{\text{Schur}}$ and satisfies the following relationship
\begin{align}
    \Delta \bullet \clf{H}_{A_c}(\Delta) = \frac{\diff^2}{\diff t^2} L(A_c + t\Delta) \Big|_{t=0}
\end{align}
for any $\norm[F]{\Delta}\leq 1$. Finally, there exists a constant $r > 0$, such that for any $G\in\M_n$, we have that $ \abs{G \bullet \clf{H}_{A_c+s\Delta}(G)} \leq r\norm[F]{G}^2$ for any $s \in [0,\epsilon]$  and $\norm[F]{\Delta}\leq 1$ by Taylor's theorem .\QED

\subsection{Proof of Lemma~\ref{thm:bounded_F}}\label{apx:thm:bounded_F}
In this section, we will assume that Assumptions \ref{asm_stabil} \& \ref{asm_noise} hold. First, we need to show the boundedness of the stacked state and control input vector, $z_t$.
\begin{lemma}\label{thm:bounded_z}
Define the terms 
\begin{align}
    Z^{\prime}_{\Tw} &\defeq (1+\kappa)c^\prime (n+d)^{n+d} + \kappa\sigma_w \sqrt{4d\log(d \Tw /\delta)} \\
    Z^{\prime\prime}_{T} &\defeq (1+\kappa) (12\kappa^2\!+\! 2\kappa\sqrt{2})\gamma^{-1}\sigma_w \sqrt{2n\log(n(T\!-\!\Tw)/\delta)}
\end{align}
Then, the following holds w.p. at least $1-4\delta$,
\begin{align}
    \norm{z_t} \leq  
    \begin{cases}
        Z^{\prime}_{\Tw}, & \text{for } t\leq T_r  \\
        Z^{\prime\prime}_{T}, & \text{for } T_r< t\leq T
    \end{cases} \label{Vstep1}
\end{align}
\end{lemma}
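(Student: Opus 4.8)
The plan is to bound $\norm{z_t}$ by splitting it into its state and input blocks and then invoking the state bounds already established in Lemma~\ref{lem:bounded_state}, together with the uniform controller-norm bound from Assumption~\ref{asm_stabil}. Since $z_t = [x_t^\tp \enskip u_t^\tp]^\tp$, the triangle inequality gives $\norm{z_t} \le \norm{x_t} + \norm{u_t}$, so it suffices to control $\norm{u_t}$ in each phase. In both phases the deployed input uses the optimal controller of a sampled model $\ttt_i \in \mathcal{S}$, and because rejection sampling enforces $\ttt_i \in \mathcal{S}$, Assumption~\ref{asm_stabil} yields $\norm{K(\ttt_i)} \le \kappa$ and hence $\norm{K(\ttt_i)x_t} \le \kappa\norm{x_t}$.

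First I would treat $t \le T_r$. For $t \le \Tw$ the input is $u_t = K(\ttt_i)x_t + \nu_t$, so $\norm{u_t} \le \kappa\norm{x_t} + \norm{\nu_t}$, whereas for $\Tw < t \le T_r$ the exploration noise is absent and $\norm{u_t} \le \kappa\norm{x_t}$. To control the exploration term, write $\nu_t = \sqrt{2}\kappa\sigma_w g_t$ with $g_t \sim \mathcal{N}(0,I_d)$; a standard Gaussian-norm tail bound combined with a union bound over the at most $\Tw$ steps of the first phase gives $\norm{\nu_t} \le \sqrt{2}\kappa\sigma_w\sqrt{2d\log(d\Tw/\delta)} = \kappa\sigma_w\sqrt{4d\log(d\Tw/\delta)}$ simultaneously for all $t \le \Tw$, with probability at least $1-\delta$. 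Plugging in the state bound $\norm{x_t} \le c'(n+d)^{n+d}$ from Lemma~\ref{lem:bounded_state} then yields $\norm{z_t} \le (1+\kappa)\norm{x_t} + \norm{\nu_t} \le (1+\kappa)c'(n+d)^{n+d} + \kappa\sigma_w\sqrt{4d\log(d\Tw/\delta)} = Z'_{\Tw}$. For the remaining range $\Tw < t \le T_r$ the bound holds a fortiori, since the noise term is dropped while $Z'_{\Tw}$ already contains it.

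For $T_r < t \le T$ the algorithm is purely in the stabilizing TS phase, so $u_t = K(\ttt_i)x_t$ and $\norm{u_t} \le \kappa\norm{x_t}$, giving $\norm{z_t} \le (1+\kappa)\norm{x_t}$. Substituting the post-stabilization bound $\norm{x_t} \le (12\kappa^2 + 2\kappa\sqrt{2})\gamma^{-1}\sigma_w\sqrt{2n\log(n(t-\Tw)/\delta)}$ from Lemma~\ref{lem:bounded_state} and using $t \le T$ to replace $t-\Tw$ by $T-\Tw$ inside the logarithm produces exactly $Z''_T$.

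Finally I would assemble the probability budget. The state bounds of Lemma~\ref{lem:bounded_state} hold on the relevant joint events, and the only ingredient beyond that lemma is the Gaussian tail control of $\norm{\nu_t}$ over the exploration phase, which is needed solely on $t \le \Tw$; a union bound over these high-probability events keeps the total failure probability at the claimed $4\delta$. The entire argument is essentially a direct consequence of the controller-norm bound and Lemma~\ref{lem:bounded_state}, so no step is a genuine obstacle; the only point requiring mild care is the uniform (over $t \le \Tw$) tail bound on $\norm{\nu_t}$, which is routine chi-type concentration plus a union bound with the allocation $\delta/\Tw$ per step that produces the $\log(d\Tw/\delta)$ factor.
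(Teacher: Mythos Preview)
Your proposal is correct and follows essentially the same approach as the paper: split $\norm{z_t} \le \norm{x_t} + \norm{u_t}$, bound $\norm{u_t} \le \kappa\norm{x_t} + \norm{\nu_t}$ using the rejection-sampling guarantee $\norm{K(\ttt)}\le\kappa$, control $\norm{\nu_t}$ uniformly over $t\le\Tw$ via a Gaussian tail plus union bound, and then plug in the state bounds from Lemma~\ref{lem:bounded_state}. The paper's proof is exactly this two-line argument, so there is nothing to add.
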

\begin{proof}
From Lemma~\ref{lem:bounded_state}, we know that $\norm{x_t} \leq c^\prime (n+d)^{n+d}$ with $c^\prime>0$ a constant for $t\leq T_r$ and $\norm{x_t} \leq (12\kappa^2\!+\! 2\kappa\sqrt{2})\gamma^{-1}\sigma_w \sqrt{2n\log(n(t\!-\!\Tw)/\delta)}$ for all $T_r < s\leq T$ w.p. at least $1-4\delta$. Furthermore, under the event of $E_t$, we have that $\norm{u_t} \leq \kappa \norm{x_t} + \norm{v_t} \leq \kappa \norm{x_t} + \kappa\sigma_w \sqrt{4d\log(d \Tw /\delta)}$ for all $0\leq t \leq \Tw$.
Observing that $\norm{z_t} = \sqrt{\norm{x_t}^2 + \norm{u_t}^2} \leq \norm{x_t} + \norm{u_t}$, one can reach the desired result by substituting the appropriate bounds on $\norm{x_t}$ and $\norm{u_t}$ and considering the maximal case achieved when $t=T$.
\end{proof}
The following lemma will be used to bound $V_t$.
\begin{lemma}\label{thm:bounded_V}
Let $V_t = \mu I + \sum\nolimits_{s=0}^{t-1} z_s z_s^\tp$. On the event of $E_T = \hat{E} \cap \Tilde{E} \cap \Bar{E}$, we have
\begin{align}
    \lambda_{\max}(V_t) &\leq 
    \begin{cases}
        \mu +  t {Z^{\prime}}_{\Tw}^2, & \text{for } t\leq T_r  \\
        \mu +  T_r {Z^{\prime}}_{\Tw}^2 + (t-T_r) {Z^{\prime\prime}_{T}}^2, & \text{for } T_r< t\leq T
    \end{cases} \\
    \text{and} \quad \lambda_{\min}(V_t) &\geq 
    \begin{cases}
        \mu +  t \frac{\sigma_w^2}{40}, & \text{for } 200(n+d)\log{\frac{12}{\delta}} \leq t\leq \Tw  \\
        \mu +   \Tw \frac{\sigma_w^2}{40}, & \text{for } \Tw < t\leq T
    \end{cases}
\end{align}
\end{lemma}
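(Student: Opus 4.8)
The plan is to treat the two eigenvalue bounds separately: the upper bound on $\lambda_{\max}(V_t)$ will follow purely from the norm bounds on $z_t$ just established in Lemma~\ref{thm:bounded_z}, while the lower bound on $\lambda_{\min}(V_t)$ will follow from the persistence-of-excitation guarantee derived in Appendix~\ref{apx:stabilization}. Since $V_t = \mu I + \sum_{s=0}^{t-1} z_s z_s^\tp$ is a regularized Gram matrix, both bounds reduce to elementary spectral facts once the summands are controlled.

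For the maximum eigenvalue, I would first note that each summand is a rank-one positive semidefinite matrix with single nonzero eigenvalue $\norm{z_s}^2$, so $z_s z_s^\tp \psdleq \norm{z_s}^2 I$. Summing gives $V_t \psdleq \big(\mu + \sum_{s=0}^{t-1}\norm{z_s}^2\big) I$, whence $\lambda_{\max}(V_t) \leq \mu + \sum_{s=0}^{t-1}\norm{z_s}^2$. It then remains to insert the state-input bounds of \eqref{Vstep1}. For $t\leq T_r$ every term satisfies $\norm{z_s}^2 \leq {Z^{\prime}}_{\Tw}^2$, yielding $\lambda_{\max}(V_t)\leq \mu + t{Z^{\prime}}_{\Tw}^2$. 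For $T_r< t \leq T$ I would split the sum at $T_r$, bounding the first $T_r$ terms by ${Z^{\prime}}_{\Tw}^2$ and the remaining $t-T_r$ terms by ${Z^{\prime\prime}_{T}}^2$, which produces the stated two-regime bound.

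For the minimum eigenvalue, I would invoke the argument of Appendix~\ref{apx:stabilization}: during improved exploration the isotropic perturbation $\nu_s\sim\normal(0,2\kappa^2\sigma_w^2 I)$ guarantees $\E[z_{s+1}z_{s+1}^\tp\mid \F_s]\psdgeq \tfrac{\sigma_w^2}{2}I$, and Lemma~\ref{lem:persistence} then gives $\sum_{s=0}^{t-1} z_s z_s^\tp \psdgeq t\tfrac{\sigma_w^2}{40}I$ for all $t\geq 200(n+d)\log\tfrac{12}{\delta}$. Adding the regularizer yields $\lambda_{\min}(V_t)\geq \mu + t\tfrac{\sigma_w^2}{40}$ on the exploration phase $200(n+d)\log\tfrac{12}{\delta}\leq t\leq \Tw$. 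For $t>\Tw$ the isotropic perturbation is switched off, so instead I would rely on monotonicity: since $V_t = V_{t-1} + z_{t-1}z_{t-1}^\tp \psdgeq V_{t-1}$, the sequence $\{V_t\}$ is nondecreasing in the Loewner order, and because $\lambda_{\min}(\cdot)$ is monotone with respect to $\psdleq$ we obtain $\lambda_{\min}(V_t)\geq \lambda_{\min}(V_{\Tw})\geq \mu+\Tw\tfrac{\sigma_w^2}{40}$.

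The maximum-eigenvalue bound is entirely routine given Lemma~\ref{thm:bounded_z}. The only delicate point is the lower bound for $t>\Tw$: once the exploratory excitation is removed one can no longer guarantee continued growth of $\lambda_{\min}$, so the monotonicity step is essential to retain the floor $\Tw\tfrac{\sigma_w^2}{40}$ accumulated during exploration rather than losing the lower bound altogether. All deterministic implications hold on $E_T=\hat{E}\cap\Tilde{E}\cap\Bar{E}$ intersected with the persistence-of-excitation event, which has the advertised combined probability by the preceding results.
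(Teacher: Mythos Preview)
Your proposal is correct and follows essentially the same approach as the paper: the upper bound via $\lambda_{\max}(V_t)\leq \mu+\sum_{s}\norm{z_s}^2$ combined with Lemma~\ref{thm:bounded_z}, and the lower bound via Lemma~\ref{lem:persistence} during improved exploration followed by Loewner monotonicity of $V_t$ for $t>\Tw$. The paper phrases the upper bound as a triangle-inequality step on $\norm[2]{\cdot}$ rather than your rank-one Loewner argument, but these are equivalent.
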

\begin{proof}
Recall that on the event $E_T$, the RLS estimates, TS sampled systems are concentrated and the state is bounded, \textit{i.e.}, Lemma \ref{lem:bounded_state}. Conditioned on this event, we will start with bounding $\lambda_{\max}(V_t)$. For any time $0\leq t \leq T$, triangle inequality gives $\lambda_{\max}(V_t) = \norm[2]{\mu I + \sum\nolimits_{s=0}^{t-1} z_s z_s^\tp} \leq \mu + \sum\nolimits_{s=0}^{t-1} \norm{z_s}^2$. Using the bounds on $\norm{z_t}$ given in Lemma~\ref{thm:bounded_z}, we can write $\lambda_{\max}(V_t) \leq \mu + t  {Z^{\prime}}_{\Tw}^2$ for $t\leq T_r$ and $\lambda_{\max}(V_t) \leq \mu +  T_r {Z^{\prime}}_{\Tw}^2 + (t-T_r) {Z^{\prime\prime}_{T}}^2$ for $T_r< t\leq T$. For the lower bound, note that we have that $\mathbb{E}[z_{t+1}z_{t+1}^\tp ~|~ \F_{t}] \psdgeq \frac{\sigma_w^2}{2} I$. Using Lemma \ref{lem:persistence}, on the event $E_T$, we have that $V_t \psdgeq \mu I + t \frac{\sigma_w^2}{40} I$ for $200(n+d)\log\frac{12}{\delta} \leq t \leq \Tw$. Since $V_{t+1} = V_t + z_t z_t^\tp$, we have that $V_{t}\psdgeq V_{\Tw} \psdgeq \mu I + \Tw \frac{\sigma_w^2}{40} I$ for $\Tw < t\leq T$.
\end{proof}
Finally, we will use the following lemma to bound $\beta_t(\delta) = \sigma_w \sqrt{2 n \log \pr{\frac{\det(V_{t})^{1 / 2}}{\delta \det(\mu I)^{1 / 2}} }} + \sqrt{\mu}S$
\begin{lemma}\label{thm:beta_bound}
On the event of $E_T$, we have the following upper bound on $\beta_T(\delta)$:
\begin{align}
    \beta_T(\delta) \leq 4\sigma_w^2 n \log\pr{\frac{1}{\delta}} + 2\sigma_w^2 n(n+d) \log\pr{1 + \frac{T_r {Z^{\prime}}_{\Tw}^2 + (T-T_r) {Z^{\prime\prime}_{T}}^2}{(n+d)\mu} } + 2\mu S^2
\end{align}
\end{lemma}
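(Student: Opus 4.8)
The right-hand side scales like $\beta_T^2(\delta)$ rather than $\beta_T(\delta)$ (it is quadratic in $\sigma_w$ and in $S$), so the plan is to bound $\beta_T^2(\delta)$, from which the stated inequality follows. Recall from the definition that $\beta_T(\delta) = \sigma_w\sqrt{2n\log\pr{\det(V_T)^{1/2}/(\delta\det(\mu I)^{1/2})}} + \sqrt{\mu}S$, so by $(a+b)^2 \le 2a^2 + 2b^2$ it suffices to control the log-determinant term $\log\pr{\det(V_T)^{1/2}/(\delta\det(\mu I)^{1/2})}$ and then add the contribution $2\mu S^2$ coming from the $\sqrt{\mu}S$ summand.

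First I would control $\det(V_T)$ using the determinant--trace (AM--GM) inequality for the $(n+d)\times(n+d)$ positive definite matrix $V_T$, namely $\det(V_T) \le \pr{\tr(V_T)/(n+d)}^{n+d}$. This is exactly the tool that produces the multiplicative $(n+d)$ factor while keeping only a logarithmic dependence on $T$, whereas a crude $\lambda_{\max}$ bound would be lossy here. The trace is linear and splits cleanly as $\tr(V_T) = (n+d)\mu + \sum_{s=0}^{T-1}\norm{z_s}^2$.

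Next, on the event $E_T$ I would invoke Lemma~\ref{thm:bounded_z} to bound each $\norm{z_s}$ by $Z^{\prime}_{\Tw}$ for $s \le T_r$ and by $Z^{\prime\prime}_{T}$ for $T_r < s \le T$, giving $\sum_{s=0}^{T-1}\norm{z_s}^2 \le T_r {Z^{\prime}}_{\Tw}^2 + (T-T_r){Z^{\prime\prime}_{T}}^2$ and hence $\tr(V_T) \le (n+d)\mu + T_r {Z^{\prime}}_{\Tw}^2 + (T-T_r){Z^{\prime\prime}_{T}}^2$. Dividing by $\det(\mu I) = \mu^{n+d}$ and taking a square root yields $\det(V_T)^{1/2}/\det(\mu I)^{1/2} \le \pr{1 + \frac{T_r {Z^{\prime}}_{\Tw}^2 + (T-T_r){Z^{\prime\prime}_{T}}^2}{(n+d)\mu}}^{(n+d)/2}$.

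Finally, taking logarithms separates $\log(1/\delta)$ from the $\tfrac{n+d}{2}\log(1+\cdots)$ term; multiplying the resulting log-determinant bound by the factor $4n\sigma_w^2$ (the $2$ from $(a+b)^2\le 2a^2+2b^2$ times $2n\sigma_w^2$ from the definition) produces exactly $4\sigma_w^2 n\log(1/\delta) + 2\sigma_w^2 n(n+d)\log(1+\cdots)$, and the $\sqrt{\mu}S$ term contributes $2\mu S^2$, matching the claim. The argument is essentially bookkeeping; the only step of genuine substance is recognizing that the determinant--trace inequality is what keeps the dimension dependence at the $(n+d)\log(\cdot)$ level and gives the correct logarithmic growth in $T$. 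The main care needed is tracking the $E_T$ conditioning (so that the $z$-bounds of Lemma~\ref{thm:bounded_z} apply) and correctly counting the $T_r$ versus $T-T_r$ terms in the trace sum.
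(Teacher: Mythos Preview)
Your proposal is correct and follows essentially the same route as the paper: the paper also bounds $\beta_T(\delta)^2$ (confirming your observation that the lemma as stated should have $\beta_T^2$ on the left), uses the determinant--trace inequality (via Lemma~10 of \citet{abbasi2011lqr}) together with the $z$-norm bounds to control $\log\bigl(\det(V_T)/\det(\mu I)\bigr)$, and then applies $(a+b)^2\le 2a^2+2b^2$. The only cosmetic difference is that the paper cites Lemma~\ref{thm:bounded_V} for the trace control whereas you invoke Lemma~\ref{thm:bounded_z} directly; these are equivalent since the former is built from the latter.
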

\begin{proof}
Following a similar approach pursued in Lemma 10 of \cite{abbasi2011lqr}, we can bound the log-determinant of $V_t$ as
\begin{align}
    \log{\frac{\det(V_T)}{\det(\mu I)} } \leq (n+d) \log\pr{1 + \frac{T_r {Z^{\prime}}_{\Tw}^2 + (T-T_r) {Z^{\prime\prime}_{T}}^2}{(n+d)\mu} } \nn
\end{align}
by Lemma~\ref{thm:bounded_V}. This leads to the following upper bound on $\beta_t(\delta)$
\begin{align}
    \beta_T(\delta)^2 &\leq \pr{\sigma_w \sqrt{2 n \log\pr{\frac{1}{\delta}} + n(n+d)\log\pr{1 + \frac{T_r {Z^{\prime}}_{\Tw}^2 + (T-T_r) {Z^{\prime\prime}_{T}}^2}{(n+d)\mu} }} + \sqrt{\mu}S}^2 \nn \\
    &\leq 4\sigma_w^2 n \log\pr{\frac{1}{\delta}} + 2\sigma_w^2 n(n+d) \log\pr{1 + \frac{T_r {Z^{\prime}}_{\Tw}^2 + (T-T_r) {Z^{\prime\prime}_{T}}^2}{(n+d)\mu} } + 2\mu S^2. \nn
\end{align}
\end{proof}
\paragraph{\textit{Proof of Lemma~\ref{thm:bounded_F}.}} We will first show the desired bounds on $\lambda_{\min}(F_t)$ and $\lambda_{\max}(F_t)$. Recall that the event $E_T$ holds with probability at least $1-4\delta$. Noting that $H_*^\tp H_* = I + K_*^\tp K_* $, it is clear that $F_t \psdgeq \beta_t^2 \lambda_{\min}(V_t^{-1}) H_*^\tp H_* \psdgeq \frac{\beta_t^2}{\lambda_{\max}(V_t)} I$. Thus, from Lemma \ref{thm:bounded_V}, for $T_r < t\leq T$, we have that $ \lambda_{\min,t} \geq  \frac{\beta_t^2}{\lambda_{\max}(V_t)}  \geq \frac{\beta_t^2}{\mu +  T_r {Z^{\prime}}_{\Tw}^2 + (t-T_r) {Z^{\prime\prime}_{T}}^2}$. 
\\ \\
On the other hand, $F_t \psdleq \beta_t^2 \lambda_{\max}(V_t^{-1}) H_*^\tp H_* \psdleq \frac{\beta_t^2(1+\kappa^2)}{\lambda_{\min}(V_t)} I$. Again using Lemma \ref{thm:bounded_V}, for $T_r < t\leq T$, we have that $\lambda_{\max,t}\leq  \frac{(1+\kappa^2)\beta_t^2}{\lambda_{\min}(V_t)} \leq \frac{(1+\kappa^2)\beta_t^2}{\mu + \Tw \frac{\sigma_w^2}{40}}$. Since $t\mapsto \beta_t$ is increasing, $t\mapsto\lambda_{\max,t}$ is increasing as well. The condition number $\kappa_t\defeq  \frac{\lambda_{\max,t}}{\lambda_{\min,t}}  \leq \frac{\mu +  T_r {Z^{\prime}}_{\Tw}^2 + (t-T_r) {Z^{\prime\prime}_{T}}^2}{(1+\kappa^2)^{-1}(\mu + \Tw \frac{\sigma_w^2}{40})}$ is increasing for $T_r < t\leq T$.
\\ \\
If $\Tw = O(\sqrt{T}^{1+o(1)})$, then we have that $\lambda_{\max}(V_T)\leq O(\poly(n,d,\log(1/\delta))T\log{T})$ and $\beta_T(\delta) \leq O(\poly(n,d,\log(1/\delta))\log{T})$. Thus, there are positive constants $C = \poly(n,d,\log(1/\delta))$ and $c = \poly(n,d,\log(1/\delta))$ such that $\lambda_{\max,T} \leq C \frac{\log{T}}{\Tw}$ and $\kappa_{t} = \frac{\lambda_{\max,T}}{\lambda_{\min,t}} \leq c \frac{T\log{T}}{\Tw}$ for $T_r < t\leq T$ for large enough $T$. Choosing the larger between $C$ and $c$ yields the desired result. 
\QED

\subsection{Proof of Theorem~\ref{thm:optimistic_prob}}\label{apx:thm:optimistic_prob}


Defining by $p_t^{\opt} \defeq \Pr \cl{ \ttt_t \in \S^{\text{opt}} \cond \clf{F}_t^{\text{cnt}}, \hat{E}_t}$ the optimistic probability, and by $\Pr_{t}\{\cdot\} \defeq \Pr\{\cdot \,|\, \clf{F}_t^{\text{cnt}}\}$ conditional probability measure,  we can write
\begin{align}
    p_t^{\opt} &\geq \Pr \cl{ \ttt_t \in \S^{\text{surr}} \cond \F_t^{\text{cnt}}, \hat{E}_t}  \label{step1} \\
    &= \Pr \cl{ L(\ttt_t^\tp H_*) \leq  L(\tts^\tp H_*) \cond \clf{F}_t^{\text{cnt}}, \hat{E}_t} \\
    &\geq \min_{\hat{\Theta} \in \mathcal{E}_t^{\text{RLS}}} \Pr_{t}\{   L(\hat{\Theta}^\tp H_* + \eta^\tp \beta_t V_t^{-\frac{1}{2}}H_*) \leq  L( {\Theta}_*^\tp H_*)\}  \label{step2}  \\
    &= \min_{\hat{\Theta} \in \mathcal{E}_t^{\text{RLS}}} \Pr_{t}\{   L(\hat{\Theta}^\tp H_* + \Xi \sqrt{F_t}) \leq  L( {\Theta}_*^\tp H_*)\}   \label{step3}
\end{align}
where \eqref{step1} is by Lemma~\ref{thm:subset}, \eqref{step2} is a worst-case estimation bound within high-probability confidence region, and \eqref{step3} is because $\eta^\tp \beta_t V_t^{-\frac{1}{2}}H_*$ and  $ \Xi \sqrt{F_t}$ have the same distributions with $\eta\in\R^{(n+d)\times n}$ and $\Xi\in \R^{n\times n}$ being i.i.d. standard normal random matrices. 
\\ \\
The bound in \eqref{step3} can be further lower bounded by minimizing over a larger confidence set as
\begin{align}
    p_t^{\opt}&\geq \min_{\hat{\Theta} \in \mathcal{E}_t^{\text{cl}}} \Pr_{t}\{ L(\hat{\Theta}^\tp H_* + \Xi \sqrt{F_t}) \leq  L( A_{c,*}) \} \label{step4} \\
    &= \min_{\hat{\Upsilon}\,:\, \norm[F]{\hat{\Upsilon} }\leq 1 } \Pr_{t}\{   L({A}_{c,*} + (\Xi+\hat{\Upsilon} ) \sqrt{F_t}) \leq  L( A_{c,*})  \} \label{step7},
\end{align}
where \eqref{step4} is by Lemma~\eqref{thm:cl_conf} and \eqref{step7} is because $H_*$ is full column rank and therefore we can minimize over closed-loop matrices instead of open-loop system parameters. 
\\ \\
Denoting by $G_t = (\Xi+\hat{\Upsilon} ) \sqrt{F_t}$ the perturbation due to estimation and sampling, Lemma~\ref{thm:taylor} suggests that there exists constants $\epsilon_* > 0$ and $r_* >0$ such that 
\begin{align}
    L(A_{c,*} + G_t) &= L(A_{c,*}) +\nabla L_*\bullet G_t + \frac{1}{2} G_t \bullet \clf{H}_{A_{c,*}+sG_t}(G_t) \\
    &\leq L(A_{c,*}) + \nabla L_*\bullet G_t + \frac{r_*}{2}\norm[F]{G_t}^2 \label{step8}
\end{align}
whenever $\norm[F]{G_t}\leq \epsilon_*$. Substituting \eqref{step8} into \eqref{step7} leads to the following lower bound
\begin{align}
    p_t^{\opt} &\geq \min_{\hat{\Upsilon}\,:\, \norm[F]{\hat{\Upsilon} }\leq 1 } \Pr_{t}\{   L({A}_{c,*} + G_t) \leq  L( A_{c,*})  \} \\
    &\geq \min_{\hat{\Upsilon}\,:\, \norm[F]{\hat{\Upsilon} }\leq 1 } \Pr_{t}\cl{
    \begin{array}{c}
		L(A_{c,*}) + \nabla L_*\bullet G_t + \frac{r_*}{2}\norm[F]{G_t}^2  \leq  L(A_{c,*}), \\
	    \text{ and }\quad \norm[F]{G_t}\leq \epsilon_*
	\end{array}} \\
    &= \min_{\hat{\Upsilon}\,:\, \norm[F]{\hat{\Upsilon} }\leq 1 } \Pr_{t}\cl{
    \begin{array}{c}
		\frac{r_*}{2}\norm[F]{(\Xi +\hat{\Upsilon})\sqrt{F_t}}^2 + \nabla L_*\bullet (\Xi +\hat{\Upsilon})\sqrt{F_t} \leq  0, \\
	    \text{ and }\quad \norm[F]{(\Xi +\hat{\Upsilon})\sqrt{F_t}}\leq \epsilon_*
	\end{array}} \label{step9}
\end{align}
Noting that $\norm[F]{(\Xi +\hat{\Upsilon})\sqrt{F_t}} \leq \sqrt{\lambda_{\max,t}}\norm[F]{\Xi +\hat{\Upsilon}}$ where $\lambda_{\max,t} \defeq \lambda_{\max}(F_t)$, we can further relax the lower bound \eqref{step9} as
\begin{align}
    p_t^{\opt} &\geq \min_{\hat{\Upsilon}\,:\, \norm[F]{\hat{\Upsilon} }\leq 1 } \Pr_{t}\cl{
    \begin{array}{c}
		\frac{\lambda_{\max,t}r_*}{2}\norm[F]{\Xi +\hat{\Upsilon}}^2 + (\nabla L_* \sqrt{F_t})\bullet (\Xi +\hat{\Upsilon}) \leq  0, \\
	    \text{ and }\quad \sqrt{\lambda_{\max,t}} \norm[F]{\Xi +\hat{\Upsilon}}\leq \epsilon_*
	\end{array}} \label{step10}\\
	&= \min_{\hat{\Upsilon}\,:\, \norm[F]{\hat{\Upsilon} }\leq 1 } \Pr_{t}\cl{
    \begin{array}{c}
		\Norm[F]{\Xi +\hat{\Upsilon} + \frac{\nabla L_* \sqrt{F_t}}{\lambda_{\max,t}r_*}}^2   \leq  \Norm[F]{\frac{\nabla L_* \sqrt{F_t}}{\lambda_{\max,t}r_*}}^2, \\
	    \text{ and }\quad  \norm[F]{\Xi +\hat{\Upsilon}}^2\leq \frac{\epsilon_*^2}{\lambda_{\max,t}}
	\end{array}} \label{step11}
\end{align}
where \eqref{step11} is obtained by completion of squares. Let $\clf{U}:\M_n\to \M_n$ be an orthogonal transformation such that $\clf{U}\pr{\hat{\Upsilon}+ \frac{\nabla L_* \sqrt{F_t}}{\lambda_{\max,t}r_*}} = \Norm[F]{\hat{\Upsilon} + \frac{\nabla L_* \sqrt{F_t}}{\lambda_{\max,t}r_*}} E_{11}$ where $E_{11}\in \M_n$ has $1$ in its $(1,1)$ entry and zeros elsewhere. Since Frobenius norm and the probability density of $\Xi$ are invariant under orthogonal transformations, \eqref{step11} can be rewritten as
\begin{align}
    p_t^{\opt} &\geq \min_{\hat{\Upsilon}\,:\, \norm[F]{\hat{\Upsilon} }\leq 1 } \Pr_{t}\cl{
    \begin{array}{c}
		\Norm[F]{\clf{U}\pr{\Xi +\hat{\Upsilon} + \frac{\nabla L_* \sqrt{F_t}}{\lambda_{\max,t}r_*}}}^2   \leq  \Norm[F]{\frac{\nabla L_* \sqrt{F_t}}{\lambda_{\max,t}r_*}}^2, \\
	    \text{ and }\quad  \norm[F]{\clf{U}(\Xi +\hat{\Upsilon})}^2\leq \frac{\epsilon_*^2}{\lambda_{\max,t}}
	\end{array}} \\
	&= \min_{\hat{\Upsilon}\,:\, \norm[F]{\hat{\Upsilon} }\leq 1 } \Pr_{t}\cl{
    \begin{array}{c}
		\Norm[F]{\Xi +\Norm[F]{\hat{\Upsilon} + \frac{\nabla L_* \sqrt{F_t}}{\lambda_{\max,t}r_*}} E_{11}}^2   \leq  \Norm[F]{\frac{\nabla L_* \sqrt{F_t}}{\lambda_{\max,t}r_*}}^2, \\
	    \text{ and }\quad  \norm[F]{\Xi +\clf{U}(\hat{\Upsilon})}^2\leq \frac{\epsilon_*^2}{\lambda_{\max,t}}
	\end{array}} \\
	&= \min_{\hat{\Upsilon}\,:\, \norm[F]{\hat{\Upsilon} }\leq 1 } \Pr_{t}\cl{
    \begin{array}{c}
		\pr{\Xi_{11} +\Norm[F]{\hat{\Upsilon} + \frac{\nabla L_* \sqrt{F_t}}{\lambda_{\max,t}r_*}}}^2 + \sum_{i,j\neq 1,1} \Xi_{ij}^2   \leq  \Norm[F]{\frac{\nabla L_* \sqrt{F_t}}{\lambda_{\max,t}r_*}}^2, \\
	    \text{ and }\quad  \norm[F]{\Xi +\clf{U}(\hat{\Upsilon})}^2\leq \frac{\epsilon_*^2}{\lambda_{\max,t}}
	\end{array}} \label{step12}
\end{align}
Notice that the probability in \eqref{step12} is described by the intersection of two balls whose centers are far apart by $\frac{\norm[F]{\nabla L_* \sqrt{F_t}}}{\lambda_{\max,t}r_*} $ and hence the intersection has a fixed shape. Choosing $\hat{\Upsilon}$ along the direction of $\norm[F]{\nabla L_* \sqrt{F_t}}$ moves the center of the first ball furthest possible from the origin which leads to the intersection of the balls to move furthest away from the origin as well. Therefore, the probability in \eqref{step12} attains its minimum at $\hat{\Upsilon}_{\#} \defeq \frac{\nabla L_* \sqrt{F_t}}{\norm[F]{\nabla L_* \sqrt{F_t}}}$ and \eqref{step12} can be equivalently expressed by
\begin{align}
    p_t^{\opt} &\geq \Pr_{t}\cl{
    \begin{array}{c}
		\pr{\Xi_{11} +1+\frac{\norm[F]{\nabla L_* \sqrt{F_t}}}{\lambda_{\max,t}r_*}}^2 + \sum_{i,j\neq 1,1} \Xi_{ij}^2   \leq  \frac{\norm[F]{\nabla L_* \sqrt{F_t}}^2}{\lambda_{\max,t}^2r_*^2}, \\
	    \text{ and }\quad  \norm[F]{\Xi +E_{11}}^2\leq \frac{\epsilon_*^2}{\lambda_{\max,t}}
	\end{array}} \label{step13}
	\\
	&= \Pr_{t}\cl{
    \begin{array}{c}
		\pr{\xi +1+\frac{\norm[F]{\nabla L_* \sqrt{F_t}}}{\lambda_{\max,t}r_*}}^2 + X   \leq  \frac{\norm[F]{\nabla L_* \sqrt{F_t}}^2}{\lambda_{\max,t}^2 r_*^2} , \\
	    \text{ and }\quad  (\xi+1)^2 + X \leq \frac{\epsilon_*^2}{\lambda_{\max,t}}
	\end{array}} \label{step14}
\end{align}
where $\xi \sampled \normal(0,1)$ and $X \sampled \chi^2_{n^2-1}$ are independent normal and chi-squared random variables, respectively. Denoting by $a_t \defeq \frac{\norm[F]{\nabla L_* \sqrt{F_t}}}{\lambda_{\max,t}r_*}$ and $b_t = \frac{\epsilon_*}{\sqrt{\lambda_{\max,t}}}$ the radii of the balls, we can rewrite \eqref{step14} as 
\begin{align}
    p_t^{\opt} &\geq \Pr_{t}\cl{\pr{\xi +1+a_t}^2 + X   \leq  a_t^2 ,\text{ and } (\xi+1)^2 + X \leq b_t^2} \label{step15} \\
    &=\Pr_{t}\cl{
    \begin{array}{c}
		\abs{\xi +1+a_t}    \leq  \sqrt{a_t^2-X} ,\text{ and }  \abs{\xi+1} \leq \sqrt{b_t^2-X},  \\
	    \text{ and } X\leq \min(a_t^2,b_t^2)
	\end{array}} \label{step16} \\
	&= \int_{0}^{\min(a_t^2,b_t^2)} \Pr_{t}\cl{\abs{\xi +1+a_t}    \leq  \sqrt{a_t^2-x} ,\text{ and } \abs{\xi+1} \leq \sqrt{b_t^2-x}} f_{n^2-1}(x)\diff x \label{step17} \\
	&= \int_{0}^{\min(a_t^2,b_t^2)} \Pr_{t}\cl{\begin{array}{c}
		1+a_t - \sqrt{a_t^2-x}\leq \xi \leq  1+a_t +\sqrt{a_t^2-x},  \\
		\text{ and }
	   1 - \sqrt{b_t^2-x}\leq \xi \leq  1 +\sqrt{b_t^2-x},
	\end{array}} f_{n^2-1}(x)\diff x\label{step18}
\end{align}
where $f_{k}(x) \defeq \pr{2^{\frac{k}{2}} \Gamma(\frac{k}{2})}^{-1} x^{\frac{k}{2}-1} e^{-\frac{x}{2}}$ is the probability density function of the chi-squared distribution with $k\in \N$ degrees of freedom. \eqref{step17} is derived from law of total probability. Notice that the probability inside the integral in \eqref{step18} is determined by the intersection of two intervals. This probability will have a non-zero value only for a fixed interval of $x$ depending on the relation between $a_t$ and $b_t$. We will investigate three cases:
\paragraph{$\bm{i. \; 0 \leq b_t\leq \sqrt{2}a_t:}$}There is a non-empty intersection if and only if $0\leq x \leq b_t^2 \pr{ 1-\frac{b_t^2}{4a_t^2}}$ and the integral \eqref{step18} becomes
\begin{align}
    p_t^{\opt} &\geq \int_{0}^{b_t^2 \pr{ 1-\frac{b_t^2}{4a_t^2}}} \Pr_{t}\cl{1+a_t - \sqrt{a_t^2-x}\leq \xi \leq 1 +\sqrt{b_t^2-x}} f_{n^2-1}(x)\diff x \\
    &= \int_{0}^{b_t^2 \pr{ 1-\frac{b_t^2}{4a_t^2}}} \br{Q\pr{1+a_t - \sqrt{a_t^2-x}} - Q\pr{1 +\sqrt{b_t^2-x}}} f_{n^2-1}(x)\diff x \label{step19}
\end{align}
where $Q$ is the Gaussian $Q$-function. Notice that for fixed values of $b_t$, \eqref{step19} is monotonically increasing with respect to $a_t$ and vice versa.
\paragraph{$\bm{ii. \; \sqrt{2}a_t \leq b_t\leq 2a_t:}$} There is a non-empty intersection if and only if $0\leq x \leq a_t^2 $ and the integral \eqref{step18} becomes
\begin{align}
    p_t^{\opt} &\geq \int_{0}^{a_t^2} \Pr_{t}\cl{1+a_t - \sqrt{a_t^2-x}\leq \xi \leq 1 +\sqrt{b_t^2-x}} f_{n^2-1}(x)\diff x \\
    &= \int_{0}^{a_t^2} \br{Q\pr{1+a_t - \sqrt{a_t^2-x}} - Q\pr{1 +\sqrt{b_t^2-x}}} f_{n^2-1}(x)\diff x\label{step20}
\end{align}
Notice that for fixed values of $b_t$, \eqref{step20} is monotonically increasing with respect to $a_t$ and vice versa.
\paragraph{$\bm{iii. \; 2a_t  \leq b_t:}$} There is a non-empty intersection if and only if $0\leq x \leq a_t^2 $ and the integral \eqref{step18} becomes
\begin{align}
    p_t^{\opt} &\geq \int_{0}^{a_t^2} \Pr_{t}\cl{1+a_t - \sqrt{a_t^2-x}\leq \xi \leq 1+a_t + \sqrt{a_t^2-x}} f_{n^2-1}(x)\diff x \\
    &= \int_{0}^{a_t^2} \br{Q\pr{1+a_t - \sqrt{a_t^2-x}} - Q\pr{1+a_t + \sqrt{a_t^2-x}}} f_{n^2-1}(x)\diff x \label{step21}
\end{align}
Notice that for fixed values of $b_t$, \eqref{step21} is monotonically increasing with respect to $a_t$ and vice versa.
\\ \\
As seen from all three case, the integral in \eqref{step18} is monotonically increasing with respect to both $a_t$, and $b_t$ regardless of their relative relation. Therefore, we will consider tight lower bounds of $a_t = \frac{\norm[F]{\nabla L_* \sqrt{F_t}}}{\lambda_{\max,t}r_*}$ so that the relation $b_t \geq 2a_t$ holds for large enough $t\geq 0$. Noting that $\nabla L_* = 2P_* A_{c,*} \Sigma_*$ by Lemma~\ref{thm:taylor} and $P_*\psdg 0$, $\Sigma_*\psdg 0$, we will consider two cases.
\paragraph{$\bm{1. \text{ Singular } A_{c,*}: }$} In this case, the Jacobian matrix $\nabla L_*$ becomes singular as well. Then, we can bound $a_t$ from below as $a_t = \frac{\norm[F]{\nabla L_* \sqrt{F_t}}}{\lambda_{\max,t}r_*} \geq \sqrt{\lambda_{\min,t}} \frac{\norm[F]{\nabla L_*}}{\lambda_{\max,t}r_*} = \sqrt{\frac{\lambda_{\min,t}}{\lambda_{\max,t}}} \frac{\norm[F]{r_*^{-1}\nabla L_* }}{\sqrt{\lambda_{\max,t}}}$. Furthermore, choosing $\Tw = O((\sqrt{T})^{1+o(1)})$, we can use upper bounds for $\frac{\lambda_{\max,t}}{\lambda_{\min,t}}$ and $\lambda_{\max,t}$ from Lemma~\ref{thm:bounded_F} to write down, $a_t \geq \frac{\Tw}{\sqrt{T}\log{T}}\frac{\norm[F]{\nabla L_* }}{Cr_*} \eqdef a_{1,T}$ and $b_t \geq \sqrt{\frac{\Tw}{\log{T}}}\frac{\epsilon_*}{\sqrt{C}}\eqdef b_{1,T}$ for all $T_r < t \leq T$ under the event $E_T$ for large enough $T$. Therefore, replacing $a_t$ and $b_t$ with $a_{1,T}$ and $b_{1,T}$ in \eqref{step18} gives a lower bound to \eqref{step18}. Noting that the ratio $\frac{b_{1,T}}{a_{1,T}} =\sqrt{\frac{T\log{T}}{\Tw}} \frac{\epsilon_* r_*\sqrt{C}}{\norm[F]{\nabla L_*}}$ can be made to be greater than or equal to $2$ by an appropriate choice of $\Tw$ leading to the case $(iii)$ bound
\begin{align}
    p_t^{\opt} &\geq  \int_{0}^{a_{1,T}^2} \br{Q\pr{1+a_{1,T} - \sqrt{a_{1,T}^2-x}} - Q\pr{1+a_{1,T} + \sqrt{a_{1,T}^2-x}}} f_{n^2-1}(x)\diff x  \label{step22}
\end{align}
for all $T_r < t\leq T$ for large enough $T$. 

\paragraph{$\bm{2. \text{ Nonsingular } A_{c,*}: }$} In this case, the Jacobian matrix $\nabla L_*$ becomes nonsingular as well. Then, we can bound $a_t$ from below as $a_t = \frac{\norm[F]{\nabla L_* \sqrt{F_t}}}{\lambda_{\max,t}r_*} \geq \sigma_{\min,*} \frac{\norm[F]{\sqrt{F_t}}}{\lambda_{\max,t}r_*} \geq \frac{\sigma_{\min,*}}{r_*\sqrt{\lambda_{\max,t}}}$. Choosing $\Tw = O((\sqrt{T})^{1+o(1)})$, we can use the upper bound for $\lambda_{\max,t}$ from Lemma~\ref{thm:bounded_F} to write the lower bound, $a_t \geq \sqrt{\frac{\Tw}{\log{T}}}\frac{\min(\sigma_{\min,*},\, \epsilon_* r_* /2)}{\sqrt{C}r_*} \eqdef a_{2,T}$ and $b_t \geq \sqrt{\frac{\Tw}{\log{T}}}\frac{\epsilon_*}{\sqrt{C}}\eqdef b_{2,T}$ for all $T_r<t\leq T$ under the event $E_T$ for large enough $T$. Therefore, replacing $a_t$ and $b_t$ with $a_{2,T}$ and $b_{2,T}$ in \eqref{step18} gives a lower bound to \eqref{step18} for $T_r < t\leq T$. Noting that the ratio $\frac{\beta_{2,T}}{a_{2,T}} =\frac{\epsilon_* r_*}{\min(\sigma_{\min,*},\, \epsilon_* r_* /2)} = \max\pr{\frac{\epsilon_* r_*}{\sigma_{\min,*}},\, 2} \geq 2$, we can use the case $(iii)$ bound
\begin{align}
    p_t^{\opt} &\geq  \int_{0}^{a_{2,T}^2} \br{Q\pr{1+a_{2,T} - \sqrt{a_{2,T}^2-x}} - Q\pr{1+a_{2,T} + \sqrt{a_{2,T}^2-x}}} f_{n^2-1}(x)\diff x \label{step23}
\end{align}
for all $T_r <  t\leq T$ for large enough $T$. 
\\ \\
In both cases, our focus will be on the following probability with a parameters $a >0$, and $k \in \N$
\begin{align}
    p_{k}(a) &\defeq  \int_{0}^{a^2} \br{Q(1+a - \sqrt{a^2-x}) - Q(1+a + \sqrt{a^2-x})} f_{k}(x)\diff x \label{eqn:general_integral}
\end{align}
The following lemma summarizes some of the important properties of the function $a \mapsto p_{k}(a)$.
\begin{lemma}\label{thm:asymptotics}
The non-negative real valued function $a \mapsto p_{k}(a)$ is monotonically increasing with respect to $a \geq 0$. Furthermore, we have that $\frac{1}{p_{k}(a)}\leq \frac{1}{Q(1)} \pr{1 + \frac{Ck}{ a^{1/2}}}$ for  $a\geq ck$ for problem independent constants $c,C>0$.
\end{lemma}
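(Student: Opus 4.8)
The plan is to first rewrite $p_k(a)$ as an honest probability and then read off both claims geometrically. Fixing the chi-squared value $x=X\le a^2$ in \eqref{eqn:general_integral}, the bracketed difference of $Q$-values equals $\Pr\cl{1+a-\sqrt{a^2-x}<\xi<1+a+\sqrt{a^2-x}}$ for $\xi\sampled\normal(0,1)$; rewriting this interval as the quadratic constraint $(\xi-1-a)^2<a^2-x$ and using $\xi\overset{d}{=}-\xi$ turns it into $\Pr\cl{(\xi+1+a)^2+x\le a^2}$. Integrating against the $\chi^2_k$ density $f_k$ then yields the representation
\begin{equation*}
p_k(a)=\Pr\cl{(\xi+1+a)^2+X\le a^2},\qquad \xi\sampled\normal(0,1),\ X\sampled\chi^2_k \text{ independent}.
\end{equation*}
Introducing $s\defeq-(1+\xi)$, the event becomes $\cl{s>0,\ X\le s(2a-s)}$, and since $\Pr\cl{\xi<-1}=Q(1)$ I will track the deficit $Q(1)-p_k(a)=\Pr\cl{\xi<-1,\ X>s(2a-s)}$, where the density of $\xi$ restricted to this tail is $\phi(1+s)$ with $\phi$ the standard normal density.

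For monotonicity I would view the event as $v\in\mathcal{K}_a$, where $v\sampled\normal(e_1,I_{k+1})$ and $\mathcal{K}_a\defeq\cl{v\in\R^{k+1}\;|\;\norm[2]{v}^2+2a\,v_1\le 0}$. A point $v$ with $v_1<0$ lies in $\mathcal{K}_a$ if and only if $a\ge \norm[2]{v}^2/(2\abs{v_1})$, so the sets $\mathcal{K}_a$ are nested and increasing in $a$, giving that $a\mapsto p_k(a)$ is monotonically increasing with $p_k(0)=0$. Moreover $\bigcup_{a>0}\mathcal{K}_a=\cl{v_1<0}$, so $p_k(a)\uparrow \Pr\cl{v_1<0}=Q(1)$; in particular $p_k(a)\le Q(1)$ for every $a$. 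This nested-ball picture simultaneously delivers monotonicity and the limiting constant $Q(1)$ used in the main proof.

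The harder part is the rate, which is stated as a bound on the \emph{ratio} $(Q(1)-p_k(a))/p_k(a)$; a mere absolute bound on the deficit is insufficient because $p_k(a)$ need not be near $Q(1)$ when $a\asymp k$. I would therefore first secure a constant lower bound on the denominator: restricting to the fixed-probability slice $s\in[1,2]$ forces $s(2a-s)\ge 2a-2$, so $p_k(a)\ge \Pr\cl{1\le s\le 2}\,\Pr\cl{X\le 2a-2}\ge c_0>0$ once $a\ge c k$, using that the median of $\chi^2_k$ is of order $k$. For the numerator I would write $Q(1)-p_k(a)=\int_0^{\infty}\Pr\cl{X>s(2a-s)}\,\phi(1+s)\,\diff s$, discard the super-exponentially small tail $s>2a$, use $2a-s\ge a$ on $(0,a]$ together with Markov's inequality $\Pr\cl{X>t}\le k/t$, and split at $s_0$ of order $k/a$: the near region contributes at most a constant times $\phi(1)s_0$ and the far region at most $(k/a)\int_{s_0}^{a}s^{-1}\phi(1+s)\,\diff s$, giving $Q(1)-p_k(a)=O\!\left(k\log(a^2/k)/a\right)$, which is $O(k/\sqrt a)$. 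Dividing the two bounds gives $\tfrac{1}{p_k(a)}-\tfrac{1}{Q(1)}=\tfrac{Q(1)-p_k(a)}{Q(1)\,p_k(a)}\le \tfrac{Ck}{Q(1)\sqrt a}$ for $a\ge ck$, which is the claim. The main obstacle I anticipate is exactly this pairing: I must keep the chi-squared tail estimate valid across the transition region $s\asymp k/a$ with the stated $k$-dependence while simultaneously establishing the uniform constant lower bound $p_k(a)\ge c_0$ for all $a\ge ck$ with a problem-independent $c$, so that the ratio form closes.
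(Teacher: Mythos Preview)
Your proposal is correct and follows a genuinely different route from the paper. The paper argues monotonicity by direct inspection of the endpoints $1+a\pm\sqrt{a^2-x}$ inside the $Q$-function, then lower bounds $p_k(a)$ by truncating the chi-squared integral at $\epsilon a^2$, replacing the integrand by its minimum on that range to obtain the product $\bigl[Q(1+a(1-\sqrt{1-\epsilon}))-Q(1+a(1+\sqrt{1-\epsilon}))\bigr]F_k(\epsilon a^2)$, and finally optimizing with the choice $\epsilon\asymp k/a^{3/2}$ together with Taylor expansions of $Q$ and the incomplete Gamma function. Your approach instead recognizes $p_k(a)=\Pr\{(\xi+1+a)^2+X\le a^2\}$ as the Gaussian measure of the nested balls $\mathcal{K}_a$, which gives monotonicity and the limit $Q(1)$ in one stroke, and then bounds the deficit $Q(1)-p_k(a)=\int_0^\infty \Pr\{X>s(2a-s)\}\phi(1+s)\,ds$ directly via Markov's inequality on the chi-squared tail; the resulting rate $O(k\log a/a)$ is actually sharper than the paper's $O(k/\sqrt{a})$ and implies the stated bound. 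What your method buys is a cleaner, more geometric argument that avoids juggling asymptotic expansions of special functions; what the paper's method buys is a self-contained product lower bound without needing the separate constant lower bound $p_k(a)\ge c_0$. One small point to tidy: you discard only $s>2a$ but then apply $2a-s\ge a$ on $(0,a]$, leaving the slab $s\in(a,2a]$ unaccounted for; its contribution is at most $\int_a^{2a}\phi(1+s)\,ds\le Q(1+a)$, which is negligible, so just say so explicitly.
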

\begin{proof}
Notice that for a fixed value of $0\leq x \leq a^2$, the functions $a \mapsto 1+ a - \sqrt{a^2 -x}$ and $a \mapsto 1+ a + \sqrt{a^2 -x}$ are monotonically decreasing and monotonically increasing, respectively. As $Q$-function is monotonically decreasing, the function $a \mapsto Q(1+a - \sqrt{a^2-x}) - Q(1+a + \sqrt{a^2-x})$ is monotonically increasing for fixed $0\leq x \leq a^2$. Therefore, the function $a \mapsto p_{k}(a)$ is also monotonically increasing. 
\\ \\
In order to obtain the desired asymptotic bound, let $\epsilon \in (0,1)$ and we can write
\begin{align}
    p_{k}(a) &=  \int_{0}^{a^2} \br{Q(1+a - \sqrt{a^2-x}) - Q(1+a + \sqrt{a^2-x})} f_{k}(x)\diff x \nn \\
    &\geq \int_{0}^{\epsilon a^2} \br{Q(1+a - \sqrt{a^2-x}) - Q(1+a + \sqrt{a^2-x})} f_{k}(x)\diff x \nn \\
    &\geq \int_{0}^{\epsilon a^2} \min_{0\leq x^\prime \leq \epsilon a^2}\br{Q(1+a - \sqrt{a^2-x^\prime}) - Q(1+a + \sqrt{a^2-x^\prime})} f_{k}(x)\diff x \nn \\
    &= \br{Q(1+a(1-\sqrt{1-\epsilon})) - Q(1+a(1+\sqrt{1-\epsilon}))} F_{k}(\epsilon a^2) \nn 
\end{align}
where $F_{k}(x) \defeq 1-\frac{\Gamma(\nicefrac{k}{2},\,\nicefrac{x}{2})}{\Gamma(\nicefrac{k}{2})}$ is the cumulative distribution function of chi-square distribution and $(s,x)\mapsto\Gamma(s,x)\defeq \int_{x}^{\infty}t^{s-1}e^{-t}\diff t$ and $s\mapsto\Gamma(s)\defeq \int_{o}^{\infty}t^{s-1}e^{-t}\diff t$ are upper incomplete Gamma and ordinary Gamma functions respectively. Notice that the functions $ (s,x)\mapsto\Gamma(s,x)$  and $x\mapsto Q(x)$ are monotonically decreasing with increasing $x>0$. Therefore, for large enough $\epsilon a^2 \gg 1$ and large enough $a\gg 1$,  we can claim that $\Gamma(\nicefrac{k}{2},\,\nicefrac{\epsilon a^2}{2}) \ll 1$ and $Q(1+a) \ll 1$ are small enough. Furthermore, for small enough $\epsilon \ll 1$, we can use Taylor expansion to see that $1 - \sqrt{1 -  \epsilon} = \frac{\epsilon}{2}\sum_{k=0}^{\infty} \frac{\epsilon^k}{2^k}  (2k-1)!! \leq c_1 \epsilon$ for a problem independent constant $c_1 > 0$.  Then, for small enough $\epsilon \ll 1$, we have that 
\begin{align}
    p_{k}(a) &\geq \br{Q(1+a(1-\sqrt{1-\epsilon})) - Q(1+a(1+\sqrt{1-\epsilon}))}   \pr{1-\frac{\Gamma(k/2,\,\epsilon a^2/2)}{\Gamma(k/2)}} \nn \\
    &\geq \br{Q(1+c_1 \epsilon a) - Q(1+a)}   \pr{1-\frac{\Gamma(k/2,\,\epsilon a^2/2)}{\Gamma(k/2)}} \nn
\end{align}
Furthermore, for small enough $\epsilon a \ll 1$, we have that $Q(1+c_1\epsilon a) \geq Q(1) - c_2 \epsilon a$ by Taylor's theorem where $c_2$ is a problem independent constant. Using these bounds, we can bound the inverse of $p_{k}(a)$ from above for small enough $\epsilon \ll 1$, small enough $\epsilon a \ll 1$, large enough $a \gg 1$ and large enough $\epsilon a^2 \gg 1$ as
\begin{align}
    \frac{1}{p_{k}(a)} &\leq \frac{1}{Q(1) - c_2 \epsilon a  -Q(1+a) } \frac{1}{1-\frac{\Gamma(k/2,\,\epsilon a^2/2)}{\Gamma(k/2)}} \nn \\
    &=  \frac{1}{Q(1)} \frac{1}{\pr{1 - c_2 \epsilon a  -Q(1+a)}\pr{1-\frac{\Gamma(k/2,\,\epsilon a^2/2)}{\Gamma(k/2)}} } \nn \\
    &\leq \frac{1}{Q(1)} \br{1 + 2C\pr{ c_2 \epsilon a + Q(1+a) + \frac{\Gamma(k/2,\,\epsilon a^2/2)}{\Gamma(k/2)}}} \label{step24}
\end{align}
where we used the Taylor expansion $\frac{1}{1-x} = \sum_{k=0}^{\infty} x^k \leq 1+Cx$ for small enough $x\ll 1$ with $C>0$ being a problem independent constant.
\\ \\ 
The assumption $\epsilon a^2 \gg 1$ can be used to write the asymptotic expansion of incomplete Gamma function $\Gamma(k/2,\,\epsilon a^2/2) = (\epsilon a^2/2)^{k/2-1} e^{-\epsilon a^2/2}\br{1 + O\pr{(\epsilon a^2/2)^{-1} }}$. Noting that the Q function is always bounded as $Q(1+a) \leq \frac{e^{-\frac{(1+a)^2}{2}}}{\sqrt{2\pi} (1+a)}$, we claim that choosing $\epsilon = \frac{k}{2ea^{1+1/2}}$, for $\alpha \geq c^{\prime \prime } k$ with a constant $c^{\prime \prime }>0$ guarantees that $\epsilon a = \frac{k}{2e} a^{-1/2} \ll 1$ and $\epsilon a^2 = \frac{k}{2e} a^{1-1/2} \gg 1$. Therefore, the upper bound \eqref{step24} is valid for $\alpha \geq c^{\prime \prime } k$. Furthermore, the term $\epsilon a$ decays slower than both $Q(1+a)$ and $\frac{\Gamma(k/2,\,\epsilon a^2/2)}{\Gamma(k/2)}$ and thus $\epsilon a$ dominates as
\begin{align}
    \frac{1}{p_{k}(a)} &\leq \frac{1}{Q(1)} \pr{1 + \frac{Ck}{2e }a^{-1/2}} \nn
\end{align}
for a problem independent constant $C>0$.
\end{proof}
Based on Lemma~\ref{thm:asymptotics}, the integrals in \eqref{step22} and \eqref{step23} are asymptotically constant if both $a_{1,T}$ and $a_{2,T}$ are asymptotically large enough. This can be achieved if $a_{1,T} =\frac{\Tw}{\sqrt{T}\log{T}}\frac{\norm[F]{\nabla L_* }}{Cr_*}  = \omega(1)$ for singular $A_{c,*}$ and $a_{2,T} =\sqrt{\frac{\Tw}{\log{T}}}\frac{\min(\sigma_{\min,*},\, \epsilon_* r_* /2)}{\sqrt{C}r_*}= \omega(1)$ for non-singular $A_{c,*}$. In other words, choosing $\Tw = n^2\omega(\sqrt{T}\log{T})$ for singular $A_{c,*}$ and $ \Tw = n^2 \omega(\log{T})$ for non-singular $A_{c,*}$ yields the desired bound
\begin{align}
    p_t^{\opt} \geq \frac{Q(1)}{1+ o(1)}. \nn
\end{align}
for $T_r<t\leq T$ for large enough $T$. Combined with the upper $\Tw = O((\sqrt{T})^{1+o(1)})$, the proposed choices of $\Tw$ satisfy the asymptotic conditions.

\section{Regret Decomposition}\label{apx:regret_dec}
Denote the optimal expected average cost of an LQR system $\Theta$ with process noise covariance $W$ by $J_*(\Theta, W) = \tr(P(\Theta)W)$. Note that during the initial exploration period, we have that $u_t = \Bar{u}_t + \nu_t$ for $t\leq T_w$ and after the initial exploration, we have that $u_t = \Bar{u}_t$ for $t > T_w$ where we denote by $\Bar{u}_t \defeq K(\Tilde{\Theta}_t) x_t$ the optimal control action assuming the system $\Tilde{\Theta}_t$. Since initial exploration period injects independent random perturbations through the optimal control input, $\Bar{u}_t$, for sampled system, $\Tilde{\Theta}_t$, the state dynamics can be reformulated in order to take the external perturbations into account by adding it to the process noise:
\begin{align}
    x_{t+1} = A_* x_{t} + B_* \Bar{u}_t + \zeta_t,
\end{align}
where $\Bar{u}_t = K(\Tilde{\Theta}_t) x_t$, $\zeta_t = B_* \nu_t +w_t$ for $t \leq T_w$, and $\zeta_t = w_t$ for $t > T_w$. We can write the regret explicitly as
\begin{align}
    R_T &= \sum\nolimits_{t=0}^{T}\cl{x_t^\tp Q x_t + u_t^\tp R u_t - J_*(\Theta_*, \sigma_w^2 I)} = R_{T_w}^{\text{exp}} + R_T^{\text{noexp}},
\end{align}
where 
\begin{align}
    R_{T_w}^{\text{exp}}&\defeq \sum\nolimits_{t=0}^{T_w} \pr{2\Bar{u}_t ^\tp R \nu_t + \nu_t^\tp R \nu_t},\quad \& \quad   R_T^{\text{noexp}}\defeq \sum\nolimits_{t=0}^{T}\cl{x_t^\tp Q x_t + \Bar{u}_t^\tp R \Bar{u}_t - J_*(\Theta_*, \sigma_w^2 I)} \nn
\end{align}
Since $E_s \subset E_t$ for any $0\leq s \leq t$, we have that
\begin{align}
   R_T^{\text{noexp}}\indic_{E_T} &= \sum_{t=0}^{T}\cl{x_t^\tp Q x_t \!+\! \Bar{u}_t^\tp R \Bar{u}_t -J_*(\Theta_*, \sigma_w^2 I)}\indic_{E_T} \nonumber\\
   &\leq \sum_{t=0}^{T}\cl{x_t^\tp Q x_t + \Bar{u}_t^\tp R \Bar{u}_t - J_*(\Theta_*, \sigma_w^2 I)}\indic_{E_t}, \label{reg_noexp}\\
   R_{T_w}^{\text{exp}}\indic_{E_T} &= \sum\nolimits_{t=0}^{T_w} \pr{2\Bar{u}_t^\tp R \nu_t + \nu_t^\tp R \nu_t}\indic_{E_T} \leq \sum\nolimits_{t=0}^{T_w} \pr{2\Bar{u}_t^\tp R \nu_t + \nu_t^\tp R \nu_t}\indic_{E_t}.
\end{align}
From Bellman optimality equations \citep{bertsekas1995dynamic}, we obtain 
\begin{align*}
    J_*&(\Tilde{\Theta}_t, \Cov [\zeta_t]) + x_t^\tp P(\Tilde{\Theta}_t)x_t \\
    &= \min_{u} \cl{x_t^\tp Q x_t + u^\tp R u + \E \br{(\Tilde{A}_{t}x_t + \Tilde{B}_{t}u  + \zeta_t)^\tp P(\Tilde{\Theta}_t) (\Tilde{A}_{t}x_t + \Tilde{B}_{t}u  + \zeta_t) \, \big| \, \clf{F}_t } }, \\
    &= x_t^\tp Q x_t + \bar{u}_t^\tp R \bar{u}_t + \E \br{(\Tilde{A}_{t}x_t + \Tilde{B}_{t}\bar{u}_t  + \zeta_t)^\tp P(\Tilde{\Theta}_t) (\Tilde{A}_{t}x_t + \Tilde{B}_{t}\bar{u}_t  + \zeta_t) \, \big| \, \clf{F}_t }, \\
    &= x_t^\tp Q x_t + \bar{u}_t^\tp R \bar{u}_t + \E \br{(\Tilde{A}_{t}x_t + \Tilde{B}_{t}\bar{u}_t )^\tp P(\Tilde{\Theta}_t) (\Tilde{A}_{t}x_t + \Tilde{B}_{t}\bar{u}_t ) \, | \, \clf{F}_t } + \E \br{\zeta_t^\tp P(\Tilde{\Theta}_t) \zeta_t  \, \big| \, \clf{F}_t }, \\
    &= x_t^\tp Q x_t + \bar{u}_t^\tp R \bar{u}_t + \E \br{(\Tilde{A}_{t}x_t + \Tilde{B}_{t}\bar{u}_t )^\tp P(\Tilde{\Theta}_t) (\Tilde{A}_{t}x_t + \Tilde{B}_{t}\bar{u}_t ) \, \big| \, \clf{F}_t } \\
     &\quad  + \E \br{x_{t+1}^\tp P(\Tilde{\Theta}_t) x_{t+1} \, | \, \clf{F}_t } - \E \br{(A_{*}x_t + B_{*}\bar{u}_t )^\tp P(\Tilde{\Theta}_t) (A_{*}x_t + B_{*}\bar{u}_t ) \, \big| \, \clf{F}_t }, \\
    &= x_t^\tp Q x_t + \bar{u}_t^\tp R \bar{u}_t + \E \br{x_{t+1}^\tp P(\Tilde{\Theta}_t) x_{t+1} \, \big| \, \clf{F}_t } \\
     &\quad  +(\Tilde{A}_{t}x_t + \Tilde{B}_{t}\bar{u}_t )^\tp P(\Tilde{\Theta}_t) (\Tilde{A}_{t}x_t + \Tilde{B}_{t}\bar{u}_t ) - (A_{*}x_t + B_{*}\bar{u}_t )^\tp P(\Tilde{\Theta}_t) (A_{*}x_t + B_{*}\bar{u}_t ), \\
     &= x_t^\tp Q x_t + \bar{u}_t^\tp R \bar{u}_t + \E \br{x_{t+1}^\tp P(\Tilde{\Theta}_t) x_{t+1} \, \big| \, \clf{F}_t } + \bar{z}_t^\tp  \Tilde{\Theta}_{t} P(\Tilde{\Theta}_t) \Tilde{\Theta}_{t}^\tp \bar{z}_t - \bar{z}_t^\tp  \Theta_{*} P(\Tilde{\Theta}_t) \Theta_{*}^\tp \bar{z}_t,
\end{align*}
where $\Bar{z}_t^\tp = [x_t^\tp, \Bar{u}_t^\tp]$. Rearranging the terms and subtracting the optimal expected average cost of the true system, we obtain the following for each term in \eqref{reg_noexp},
\begin{align*}
    &\cl{x_t^\tp Q x_t + \Bar{u}_t^\tp R \Bar{u}_t - J_*(\Theta_*, \sigma_w^2 I)}\indic_{E_t} \\
    &= \cl{ J_*(\Tilde{\Theta}_t, \Cov [\zeta_t]) -J_*(\Theta_*, \sigma_w^2 I)}\indic_{E_t} + \cl{ \bar{z}_t^\tp \Theta_* P(\Tilde{\Theta}_t) \Theta_*^\tp \bar{z}_t - \bar{z}_t^\tp \Tilde{\Theta}_t P(\Tilde{\Theta}_t) \Tilde{\Theta}_t^\tp \bar{z}_t}\indic_{E_t}, \\
    &\quad + x_t^\tp P(\Tilde{\Theta}_t)x_t\indic_{E_{t}}  - \E\br{x_{t+1}^\tp P(\Tilde{\Theta}_{t})x_{t+1} \indic_{E_{t}} \,\big|\,\clf{F}_t }.
\end{align*}
Note that, $\indic_{E_t}\indic_{E_{t+1}} = \indic_{E_{t+1}}$ since $E_{t+1} \subset E_{t}$. Since $P(\Tilde{\Theta}_t) \succ 0$, we obtain 
\begin{align*}
    &\E\br{x_{t+1}^\tp P(\Tilde{\Theta}_{t})x_{t+1} \indic_{E_{t}} \,\big|\,\clf{F}_t } \\
    &= \E\br{x_{t+1}^\tp P(\Tilde{\Theta}_{t})x_{t+1} \indic_{E_{t}} \pr{\indic_{E_{t+1}} + \indic_{E^c_{t+1}}} \,\big|\,\clf{F}_t },\\
    &= \E\br{x_{t+1}^\tp P(\Tilde{\Theta}_{t})x_{t+1} \indic_{E_{t+1}} \,\big|\,\clf{F}_t } \!+\! \E\br{x_{t+1}^\tp P(\Tilde{\Theta}_{t})x_{t+1} \indic_{E_{t}} \indic_{E^c_{t+1}} \,\big|\,\clf{F}_t },\\
    &\geq \E\br{x_{t+1}^\tp P(\Tilde{\Theta}_{t})x_{t+1} \indic_{E_{t+1}} \,\big|\,\clf{F}_t }, \\
    &=  \E\br{x_{t+1}^\tp \pr{P(\Tilde{\Theta}_{t}) - P(\Tilde{\Theta}_{t+1})}x_{t+1} \indic_{E_{t+1}} \,\big|\,\clf{F}_t } + \E\br{x_{t+1}^\tp P(\Tilde{\Theta}_{t+1})x_{t+1} \indic_{E_{t+1}} \,\big|\,\clf{F}_t }.
\end{align*}
Therefore, 
\begin{align}
    \cl{x_t^\tp Q x_t + \Bar{u}_t^\tp R \Bar{u}_t - J_*(\Theta_*, \sigma_w^2 I)}&\indic_{E_t} \leq \cl{ J_*(\Tilde{\Theta}_t, \Cov [\zeta_t]) -J_*(\Theta_*, \sigma_w^2 I)}\indic_{E_t}, \nn \\
    &+ \cl{ \bar{z}_t^\tp \Theta_* P(\Tilde{\Theta}_t) \Theta_*^\tp \bar{z}_t - \bar{z}_t^\tp \Tilde{\Theta}_t P(\Tilde{\Theta}_t) \Tilde{\Theta}_t^\tp \bar{z}_t}\indic_{E_t}, \nonumber \\
    &+ \cl{x_t^\tp P(\Tilde{\Theta}_t)x_t \indic_{E_t}  - \E\br{x_{t+1}^\tp P(\Tilde{\Theta}_{t+1})x_{t+1} \indic_{E_{t+1}} \, \big|\,\clf{F}_t } }, \nonumber \\
    &+ \E\br{x_{t+1}^\tp \pr{ P(\Tilde{\Theta}_{t+1}) - P(\Tilde{\Theta}_{t})}x_{t+1} \indic_{E_{t+1}} \,\big|\,\clf{F}_t } \label{eq:regret_diff}
\end{align}
Notice that $\Cov[\zeta_t] = \sigma_{\nu}^2 B_* B_* ^\tp + \sigma_w^2 I$ for $t\leq T_w$ and  $\Cov[\zeta_t] = \sigma_w^2 I$ for $t> T_w$ and therefore 
\begin{align}
    J_*(\Tilde{\Theta}_t, \Cov [\zeta_t])  = \Tr(P(\Tilde{\Theta}_t) \Cov [\zeta_t]) = 
    \begin{cases} 
      \sigma_{\nu}^2\Tr(P(\Tilde{\Theta}_t) B_* B_* ^\tp) + \sigma_{w}^2\Tr(P(\Tilde{\Theta}_t)) & t\leq T_w \\
      \sigma_{w}^2\Tr(P(\Tilde{\Theta}_t)) & t> T_w
   \end{cases}
\end{align}
Summing the terms in \eqref{eq:regret_diff} upto time T and adding the $R_{T_w}^{\text{exp}}$ term, we obtain 
\begin{align}
     R_T\indic_{E_T} = R_{T_w}^{\text{exp}}\indic_{E_T} +  R_T^{\text{noexp}}\indic_{E_T} &\leq R_{T_w}^{\text{exp,1}} + R_{T_w}^{\text{exp,2}} + R_{T}^{\text{TS}} + R_{T}^{\text{RLS}} + R_{T}^{\text{mart}} + R_{T}^{\text{gap}}
\end{align}
where
\begin{align}
    R_{T_w}^{\text{exp,1}} &= \sum\nolimits_{t=0}^{T_w} \pr{2\Bar{u}_t^\tp R \nu_t + \nu_t^\tp R \nu_t}\indic_{E_t}, \\
    R_{T_w}^{\text{exp,2}} &= \sum\nolimits_{t=0}^{T_w} \sigma_{\nu}^2\Tr(P(\Tilde{\Theta}_t) B_* B_* ^\tp)\indic_{E_t}, \\
    R_{T}^{\text{TS}} &= \sum\nolimits_{t=0}^{T}\cl{ J_*(\Tilde{\Theta}_t, \sigma_w^2 I) -J_*(\Theta_*, \sigma_w^2 I)}\indic_{E_t},  \label{def:R_TS}\\
    R_{T}^{\text{RLS}} &= \sum\nolimits_{t=0}^{T}\cl{ \bar{z}_t^\tp \Theta_* P(\Tilde{\Theta}_t) \Theta_*^\tp \bar{z}_t - \bar{z}_t^\tp \Tilde{\Theta}_t P(\Tilde{\Theta}_t) \Tilde{\Theta}_t^\tp \bar{z}_t}\indic_{E_t}, \label{def:R_RLS} \\
    R_{T}^{\text{mart}} &= \sum\nolimits_{t=0}^{T}\cl{x_t^\tp P(\Tilde{\Theta}_t)x_t \indic_{E_t}  - \E\br{x_{t+1}^\tp P(\Tilde{\Theta}_{t+1})x_{t+1} \indic_{E_{t+1}} \, \big|\,\clf{F}_t } }, \label{def:R_mart} \\
    R_{T}^{\text{gap}} &=  \sum\nolimits_{t=0}^{T} \E\br{x_{t+1}^\tp \pr{ P(\Tilde{\Theta}_{t+1}) - P(\Tilde{\Theta}_{t})}x_{t+1} \indic_{E_{t+1}} \,\big|\,\clf{F}_t }. \label{def:R_gap}
\end{align}

In the next section, we will give upper bounds to each term.

\section{Regret Analysis}\label{apx:regret_anlz}
In this section, we bound each term in regret decomposition individually. In particular, $R_{\Tw}^{\text{exp}}$ is studied in Appendix \ref{subsec:reg_explore}, $R_{T}^{\text{RLS}}$ is studied in Appendix~\ref{subsec:reg_rls}, $R_{T}^{\text{mart}}$ in Appendix~\ref{subsec:reg_mart}, $R_{T}^{\text{TS}}$ in Appendix~\ref{subsec:reg_ts}, and $R_{T}^{\text{gap}}$ in Appendix~\ref{subsec:reg_gap}. Finally, in Appendix~\ref{subsec:reg_all}, we combine these results to obtain the regret upper bound of \alg as stated in Theorem \ref{reg:exp_s}.

\subsection{Bounding \texorpdfstring{$R_{T_w}^{\text{exp,1}}$}{Rexp1} and \texorpdfstring{$R_{T_w}^{\text{exp,2}}$}{Rexp2}} \label{subsec:reg_explore}
The following gives an upper bound on the regret attained due to isotropic perturbations in the TS with improved exploration phase of \alg. 

\begin{lemma}[Direct Effect of Improved Exploration on Regret] \label{lem:exp_effect}
The following holds with probability at least $1-\delta$, 
\begin{equation}
    R_{T_w}^{\text{exp,1}} = \sum_{t=0}^{T_w} \cl{2 \Bar{u}_t^\tp R \nu_t  + \nu_t^\tp R \nu_t} \indic_{E_t} \leq d \sigma_\nu \sqrt{B_\delta} + d\norm{R}\sigma_\nu^2 \pr{T_w + \sqrt{T_w} \log \frac{4d T_w}{\delta}\sqrt{\log \frac{4}{\delta}} } \nn
\end{equation}
where 
\begin{equation*}
B_\delta =  8 \left(1 + T_w \kappa^2 \|R\|^2 (n+d)^{2(n+d)} \right) \log \left( \frac{4d}{\delta} \left(1 + T_w \kappa^2 \|R\|^2 (n+d)^{2(n+d)} \right)^{1/2} \right). 
\end{equation*}
Furthermore, we have $ R_{T_w}^{\text{exp,2}} \leq \sigma_{\nu}^2 D \,\norm[F]{B_*}^2 T_w$.
\end{lemma}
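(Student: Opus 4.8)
The plan is to control the two contributions to $R_{T_w}^{\text{exp,1}}$ separately and then dispatch $R_{T_w}^{\text{exp,2}}$ by a deterministic argument. Write $R_{T_w}^{\text{exp,1}} = C_{T_w} + Q_{T_w}$, where $C_{T_w} \defeq \sum_{t=0}^{T_w} 2\bar{u}_t^\tp R\nu_t \indic_{E_t}$ is the cross term and $Q_{T_w}\defeq \sum_{t=0}^{T_w}\nu_t^\tp R \nu_t\indic_{E_t}$ is the quadratic term. The key structural observation is that, letting $\clf{G}_t\defeq\sigma(\clf F_t,\ttt_t)$ denote the information available just before the perturbation $\nu_t$ is drawn, both $\bar{u}_t = K(\ttt_t)x_t$ and $\indic_{E_t}$ are $\clf{G}_t$-measurable (the event $E_t$ depends only on the state trajectory and the sampled parameters up to time $t$, hence not on $\nu_t$), while $\nu_t\mid \clf{G}_t \sim \normal(0,\sigma_\nu^2 I)$. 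Thus the summands of $C_{T_w}$ form a martingale difference sequence with conditionally Gaussian increments.

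For $C_{T_w}$, since the increments are unbounded I would avoid bounded-difference inequalities and instead invoke the self-normalized martingale bound (Theorem~1 of \citet{abbasi2011lqr}) coordinate by coordinate. Write $C_{T_w} = \sum_{j=1}^{d} 2\sum_{t=0}^{T_w}(R\bar{u}_t)_j\,\nu_{t,j}\indic_{E_t}$. For fixed $j$ the inner sum is a scalar self-normalized martingale with predictable multiplier $(R\bar{u}_t)_j\indic_{E_t}$ and $\sigma_\nu$-subgaussian noise $\nu_{t,j}$. On the event $E_t$, and since $t\le T_w \le T_r$, Lemma~\ref{lem:bounded_state} gives $\norm{\bar{u}_t} \le \kappa\norm{x_t} \le c'\kappa (n+d)^{n+d}$, so the self-normalizing denominator $V_{t,j} = 1 + \sum_{s\le t}(R\bar{u}_s)_j^2\indic_{E_s}$ is bounded (up to the absolute constant $c'$) by $1 + T_w \kappa^2\norm{R}^2 (n+d)^{2(n+d)}$. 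The scalar bound then yields, with probability at least $1-\delta/(4d)$, that $\lvert 2\sum_t (R\bar{u}_t)_j\nu_{t,j}\indic_{E_t}\rvert \le \sigma_\nu\sqrt{B_\delta}$, the logarithmic factor and the $4d/\delta$ inside $B_\delta$ arising precisely from the $\log(V_{T_w,j}^{1/2}/\delta')$ term with $\delta' = \delta/(4d)$. A union bound over the $d$ coordinates and the triangle inequality give $\lvert C_{T_w}\rvert \le d\sigma_\nu\sqrt{B_\delta}$.

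For $Q_{T_w}$, I would drop the indicator using $R\psdg 0$ and $\indic_{E_t}\le 1$ to get $Q_{T_w}\le \norm{R}\sum_{t=0}^{T_w}\norm{\nu_t}^2 = \norm{R}\sum_{t=0}^{T_w}\sum_{j=1}^{d}\nu_{t,j}^2$. For each coordinate $j$, $\sigma_\nu^{-2}\sum_t\nu_{t,j}^2$ is a chi-squared variable with $T_w+1$ degrees of freedom, so a standard chi-squared (Laurent--Massart type) tail together with a union bound over the $d$ coordinates bounds $\sum_{t,j}\nu_{t,j}^2$ by its mean $d\sigma_\nu^2(T_w+1)$ plus a fluctuation of order $\sigma_\nu^2 d\sqrt{T_w}$ times the stated logarithmic factors; multiplying by $\norm{R}$ produces the advertised $d\norm{R}\sigma_\nu^2\big(T_w + \sqrt{T_w}\log\tfrac{4dT_w}{\delta}\sqrt{\log\tfrac4\delta}\big)$. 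Finally, $R_{T_w}^{\text{exp,2}}$ is entirely deterministic: since $\ttt_t\in\clf S$ by rejection sampling, $P(\ttt_t)\psdleq D I$ (the uniform DARE bound from Lemma~2.1 of \citet{lale2022reinforcement}), whence $\tr(P(\ttt_t)B_*B_*^\tp) = \tr(B_*^\tp P(\ttt_t)B_*) \le \norm{P(\ttt_t)}\norm[F]{B_*}^2 \le D\norm[F]{B_*}^2$; summing over the $T_w$ steps and using $\indic_{E_t}\le1$ gives $R_{T_w}^{\text{exp,2}}\le \sigma_\nu^2 D\norm[F]{B_*}^2 T_w$.

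The main obstacle is the cross term $C_{T_w}$: because $\nu_t$ is Gaussian and therefore unbounded, the martingale increments are not almost surely bounded, so a Hoeffding/Azuma argument does not apply and one must use the self-normalized (conditionally subgaussian) concentration instead, while simultaneously verifying that inserting $\indic_{E_t}$ — needed to invoke the state bound of Lemma~\ref{lem:bounded_state} — preserves both the martingale structure and the predictability of the multiplier $R\bar{u}_t\indic_{E_t}$. The remaining pieces are routine: the quadratic term reduces to a direct chi-squared concentration, and $R_{T_w}^{\text{exp,2}}$ is a one-line consequence of $P(\ttt_t)\psdleq DI$.
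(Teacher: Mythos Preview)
Your treatment of the cross term $C_{T_w}$ and of $R_{T_w}^{\text{exp,2}}$ is exactly what the paper does: a coordinate-wise application of the self-normalized bound (Theorem~\ref{selfnormalized}) with the predictable multiplier $q_t = R\bar u_t\indic_{E_t}$ bounded via $\|R\bar u_t\|\le \kappa\|R\|(n+d)^{n+d}$ on $E_t$, followed by a union bound over the $d$ coordinates; and for $R_{T_w}^{\text{exp,2}}$ the one-line use of $\ttt_t\in\clf S \Rightarrow \|P(\ttt_t)\|\le D$.

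The one genuine difference is the quadratic term $Q_{T_w}$. The paper does \emph{not} use a chi-squared/Laurent--Massart bound; instead it centers $\Psi_t=\nu_t^\tp R\nu_t-\E[\nu_t^\tp R\nu_t\mid\clf F_{t-1}]$, truncates at the level $2\|R\|W^2$ with $W=\sigma_\nu\sqrt{2d\log(4dT_w/\delta)}$, bounds $\Pr(\max_t\Psi_t>2\|R\|W^2)$ by a union over time via Lemma~\ref{subgauss lemma}, and applies Azuma to the truncated sum. This is why the stated bound carries the factor $\log\frac{4dT_w}{\delta}$ (the $T_w$ enters through the union over time steps in the truncation) multiplied by $\sqrt{\log(4/\delta)}$ (from Azuma). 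Your direct chi-squared route is more elementary and in fact gives a sharper constant (no $T_w$ inside the logarithm), but it does not reproduce the \emph{exact} form quoted in the lemma; your remark that it ``produces the advertised'' expression is a slight overstatement. Both approaches are correct at the level of $\tilde O(d\|R\|\sigma_\nu^2(T_w+\sqrt{T_w}))$, so the discrepancy is cosmetic rather than substantive.
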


\begin{proof}
 First we will study $R_{T_w}^{\text{exp,1}}$. Let $q_{t}^\tp =  \bar{u}_{t}^\tp R \indic_{E_t}$. The first term can be written as 
\[
2\sum_{t=0}^{T_w} \sum_{i=1}^{d} q_{t, i} \nu_{t, i} = 2\sum_{i=1}^{d}  \sum_{t=0}^{T_w} q_{t, i} \nu_{t, i}
\]
Let $M_{t, i}=\sum_{k=0}^{t} q_{k, i} \nu_{k, i} .$ By Theorem \ref{selfnormalized} on some event $G_{\delta, i}$ that holds with probability at least $1-\delta /(2 d),$ for any $t \geq 0$,  

\begin{align*}
	M_{t, i}^2 &\leq 2 \sigma_\nu^2  \left(1 + \sum_{k=0}^{t} q_{k,i}^2\right) \log \left(\frac{2d}{\delta} \left(1 + \sum_{k=0}^{t} q_{k,i}^2\right)^{1/2} \right)   \\
\end{align*}

 Note that $\| q_k \| = \|R  \bar{u}_{t} \| \indic_{E_t}  \leq \kappa \|R\| (n+d)^{n+d} $, thus $q_{k,i} \leq \kappa \|R\| (n+d)^{n+d} $. Using union bound we get, for probability at least $1-\frac{\delta}{2}$, 

\begin{align}
	&\sum_{t=0}^{T_w} 2 \nu_t^\tp R\Bar{u}_t  \indic_{R_t} \leq  \nonumber \\
	&d \sqrt{ 8 \sigma_\nu^2  \left(1 + T_w \kappa^2 \|R\|^2 (n+d)^{2(n+d)} \right) \log \left( \frac{4d}{\delta} \left(1 + T_w \kappa^2 \|R\|^2 (n+d)^{2(n+d)} \right)^{1/2} \right) }  \label{explore_reg_1}
\end{align}

Let $W = \sigma_\nu  \sqrt{2d\log \frac{4dT_w}{\delta}}$. Define $\Psi_{t}=\nu_{t}^{\tp} R\nu_{t}-\mathbb{E}\left[\nu_{t}^{\tp} R\nu_{t} | \mathcal{F}_{t-1}\right]$ and its truncated version $\tilde{\Psi}_{t}=\Psi_{t} \mathbb{I}_{\left\{\Psi_{t} \leq 2 D W^{2}\right\}}$. 

\begin{align*}
	\Pr \bigg( \sum_{t=1}^{T_w} \Psi_t &>  2\|R\| W^2 \sqrt{2T_w \log \frac{4}{\delta}}\bigg) \leq \\
	 &\Pr \left( \max_{1 \leq t \leq T_w} \Psi_t >  2\|R\| W^2 \right) + \Pr \left( \sum_{t=1}^{T_w} \tilde{\Psi}_t >  2\|R\| W^2 \sqrt{2T_w \log \frac{4}{\delta}}\right) 
\end{align*}

Using Lemma \ref{subgauss lemma} with union bound and Theorem \ref{Azuma}, summation of terms on the right hand side is bounded by $\delta/2$. Thus, with probability at least $1-\delta/2$,

\begin{equation} 
	\sum_{t=0}^{T_w} \nu_t^\tp R \nu_t \leq d T_w \sigma_\nu^2 \|R\| + 2\|R\| W^2 \sqrt{2T_w \log \frac{4}{\delta}} .\label{explore_reg_2}
\end{equation}

Combining \eqref{explore_reg_1} and \eqref{explore_reg_2} gives the statement of lemma for the regret of external exploration noise. Next, we consider $R_{T_w}^{\text{exp,2}}$. Due to rejection sampling $\mathcal{R}_{\mathcal{S}}(\cdot)$, a new model sample is redrawn until it lies on the set $\clf{S}$ at every TS step, \textit{i.e.}, $\Tilde{\Theta}_t \in \clf{S}$ for every time step $t\geq 0$. By Assumption \ref{asm_stabil}, we have $\norm[F]{P(\Tilde{\Theta}_t)} \leq D =\Bar{\alpha} \gamma^{-1} \kappa^2(1+\kappa^2)  $. Thus, we have
\begin{align}
    R_{T_w}^{\text{exp,2}} &= \sum_{t=0}^{T_w} \sigma_{\nu}^2\Tr(P(\Tilde{\Theta}_t) B_* B_* ^\tp)\indic_{E_t}, \nn  \\
    &\leq \sum_{t=0}^{T_w} \sigma_{\nu}^2 \norm[F]{P(\Tilde{\Theta}_t)} \norm[F]{B_*}^2 \indic_{E_t}, \nn \\
    &\leq \sigma_{\nu}^2 D \,\norm[F]{B_*}^2 \sum_{t=0}^{T_w}  \indic_{E_t} \leq \sigma_{\nu}^2 D \,\norm[F]{B_*}^2 T_w. \label{exp_reg_2_Tw}
\end{align}
\end{proof}

\subsection{Bounding \texorpdfstring{$R_{T}^{\text{RLS}}$}{RRLS}} \label{subsec:reg_rls}

Bounding this term is achieved by manipulating the similar bounds in \citet{abeille2017thompson,abbasi2011lqr} to our setting and TS algorithm. We first have the following result from regularized least squares estimate.

\begin{lemma} \label{lemma_RLS_H0}
On the event of $E_T$, for $X_s = \frac{(12\kappa^2+ 2\kappa\sqrt{2})\sigma_w}{\gamma}  \sqrt{2n\log(n(T-\Tw)/\delta)}$, we have,
\begin{align*}
    \sum_{t=0}^T \|(\tts - \ttt_t)^\tp \!z_t \|^2 &\leq \!
    2(\beta_T(\delta) + \upsilon_T(\delta))^2  \Bigg(  \left(1+ \frac{(1+\kappa^2)(n+d)^{2(n+d)}}{\mu}\right)^{\tau_0+1} \!\!\!\!\!\!\!\!\! \log \frac{\det(V_{T_{r}})}{\det(\mu I)} \\ &\qquad\qquad\qquad\qquad\qquad+ \left(1+ \frac{(1+\kappa^2)X_s^2}{\mu}\right)^{\tau_0+1} \!\!\!\!\!\!\!\log \frac{\det(V_{T})}{\det(V_{T_{r}})} \Bigg). 
\end{align*}
\end{lemma}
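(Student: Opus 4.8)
The plan is to reduce the sum to a weighted elliptical potential and then pay for the ``staleness'' introduced by the period-$\tau_0$ update rule. Write $t_i=i\tau_0$ for the start of the epoch containing $t$, so that $\ttt_t=\ttt_i$ was sampled from $\tth_{t_i}$ using $V_{t_i}$. First I would condition on $E_T$: on $\hat E\cap\Tilde E$ the triangle inequality at the sampling time gives $\norm[V_{t_i}]{\tts-\ttt_t}\le \norm[V_{t_i}]{\tts-\tth_{t_i}}+\norm[V_{t_i}]{\ttt_i-\tth_{t_i}}\le \beta_{t_i}(\delta)+\upsilon_{t_i}(\delta)$, which is at most $\beta_T(\delta)+\upsilon_T(\delta)$ since both radii are nondecreasing. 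A Cauchy--Schwarz step in the $V_{t_i}$ geometry (operator-norm $\le$ Frobenius norm, together with $\norm[V]{A}=\Norm[F]{V^{1/2}A}$) then yields
\[
\norm{(\tts-\ttt_t)^\tp z_t}^2 \;\le\; \norm[V_{t_i}]{\tts-\ttt_t}^2\,\norm[V_{t_i}^{-1}]{z_t}^2 \;\le\; (\beta_T(\delta)+\upsilon_T(\delta))^2\,\norm[V_{t_i}^{-1}]{z_t}^2 .
\]

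The crux is that the potential $\norm[V_{t_i}^{-1}]{z_t}^2$ is measured in the \emph{stale} design matrix $V_{t_i}$, not the running $V_t$, so the telescoping log-determinant identity does not apply directly. To fix this I would invoke the comparison inequality: for positive definite $V_{t_i}\psdleq V_t$ one has $z^\tp V_{t_i}^{-1}z\le \tfrac{\det V_t}{\det V_{t_i}}\,z^\tp V_t^{-1}z$ (diagonalize $V_{t_i}^{-1/2}V_tV_{t_i}^{-1/2}\psdgeq I$ and compare eigenvalue-wise). Within one epoch only the $\le\tau_0$ rank-one updates $z_sz_s^\tp$, $t_i\le s<t$, separate $V_{t_i}$ from $V_t$, and because $V_s\psdgeq\mu I$ each factor in $\tfrac{\det V_t}{\det V_{t_i}}=\prod_{s=t_i}^{t-1}(1+z_s^\tp V_s^{-1}z_s)$ is at most $1+\norm{z_s}^2/\mu$; hence $\tfrac{\det V_t}{\det V_{t_i}}\le (1+\norm{z}_{\max}^2/\mu)^{\tau_0}$ with $\norm{z}_{\max}$ the relevant per-epoch state/input bound.

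Next I would split the sum at $T_r$ and insert the state bounds from Lemma~\ref{lem:bounded_state} and Lemma~\ref{thm:bounded_z}. For $t\le T_r$ one has $\norm{z_t}^2\le(1+\kappa^2)\norm{x_t}^2\le c\,(1+\kappa^2)(n+d)^{2(n+d)}$, and for $t>T_r$, using $\norm{u_t}\le\kappa\norm{x_t}$ in the stabilizing phase, $\norm{z_t}^2\le(1+\kappa^2)X_s^2$; these produce exactly the two bracketed prefactors $(1+\tfrac{(1+\kappa^2)(n+d)^{2(n+d)}}{\mu})^{\tau_0}$ and $(1+\tfrac{(1+\kappa^2)X_s^2}{\mu})^{\tau_0}$. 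It then remains to bound $\sum\norm[V_t^{-1}]{z_t}^2$ on each segment by a log-determinant. Here the choice $\mu=(1+\kappa^2)X_s^2$ is convenient for $t>T_r$, since it forces $\norm[V_t^{-1}]{z_t}^2\le\norm{z_t}^2/\mu\le1$, so the elementary inequality $a\le 2\log(1+a)$ applies and $\sum_{t>T_r}\norm[V_t^{-1}]{z_t}^2\le 2\log\tfrac{\det V_T}{\det V_{T_r}}$ by telescoping. For $t\le T_r$ the potential can exceed one, so I would instead use $a\le \tfrac{a_{\max}}{\log(1+a_{\max})}\log(1+a)$ with $a_{\max}=\norm{z}^2_{\max}/\mu$ and $\tfrac{a_{\max}}{\log(1+a_{\max})}\le 1+a_{\max}$; this extra $(1+\norm{z}^2_{\max}/\mu)$ factor raises each exponent from $\tau_0$ to $\tau_0+1$, while the telescoping gives $\log\tfrac{\det V_{T_r}}{\det(\mu I)}$. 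Collecting the pieces and the global factor $2$ from the elliptical-potential inequality yields the stated bound.

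The hard part will be Step two: handling the mismatch between the sampling-time matrix $V_{t_i}$ and the running $V_t$ caused by the lazy, period-$\tau_0$ updates. In doubling/lazy-switching schemes a determinant-doubling test keeps $V_{t_i}^{-1}\psdleq 2V_t^{-1}$, collapsing this to a constant; with a \emph{fixed} period no such test is available, so the per-epoch determinant ratio is unavoidable and is precisely what converts the would-be constant into the $(1+\cdot)^{\tau_0+1}$ prefactor. Carrying the correct phase-dependent $\norm{z}^2_{\max}$ on each side of $T_r$ through both the conversion and the potential step is the delicate bookkeeping; the remaining ingredients (confidence concatenation, Cauchy--Schwarz, and the elliptical-potential lemma) are standard and parallel the treatments in \citet{abbasi2011lqr,abeille2017thompson}.
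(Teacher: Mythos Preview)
Your proposal is correct and follows essentially the same argument as the paper. The only cosmetic difference is where you place the staleness correction: you bound $\norm[V_{t_i}^{-1}]{z_t}^2\le \tfrac{\det V_t}{\det V_{t_i}}\norm[V_t^{-1}]{z_t}^2$, while the paper first writes Cauchy--Schwarz in the running metric $V_t$ and then bounds $\norm[V_t]{\tts-\ttt_t}^2\le \tfrac{\det V_t}{\det V_\tau}\norm[V_\tau]{\tts-\ttt_t}^2$; both rely on the same fact that for $V_\tau\psdleq V_t$ the eigenvalues of $V_\tau^{-1/2}V_tV_\tau^{-1/2}$ are all at least one, so the maximum eigenvalue is dominated by the determinant. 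The remaining steps---the $(\beta_T+\upsilon_T)$ confidence bound, the per-epoch product $\prod(1+\|z_s\|_{V_s^{-1}}^2)\le(1+\|z\|_{\max}^2/\mu)^{\tau_0}$, the split at $T_r$, and the elliptical-potential telescoping---are identical, and your explanation of how the exponent climbs from $\tau_0$ to $\tau_0+1$ (the extra $(1+\|z\|_{\max}^2/\mu)$ needed when the per-step potential can exceed one) is exactly how the paper's displayed bound \eqref{rls_boundlast} matches the lemma statement.
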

\begin{proof}
 Let $ \tau \leq t$ be the last time step before $t$, when the policy was updated. Using Cauchy-Schwarz inequality, we have:
 \begin{equation} \label{eq:rls_cauchy}
     \sum_{t=0}^T \|(\tts - \ttt_t)^\tp \!z_t \|^2 \leq  \sum_{t=0}^{T}\|V_{t}^{\frac{1}{2}}(\ttt_{t}-\tts) \|^2 \|z_{t} \|^2_{V_t^{-1}} \leq  \sum_{t=0}^{T} \frac{\det(V_t)}{\det(V_\tau)} \|V_{\tau}^{\frac{1}{2}}(\ttt_{\tau}-\tts) \|^2  \|z_{t} \|^2_{V_t^{-1}}.
 \end{equation}
 Note that $t - \tau \leq \tau_0$ due to policy update rule. Moreover, we have 
 \begin{equation*}
     \det(V_t) = \det(V_\tau) \prod_{i=0}^{t-\tau} (1+ \|z_t\|^2_{V_{t-i}^{-1}}) \leq \det(V_\tau) \left(1+ \frac{\|z_t\|^2}{\mu} \right)^{\tau_0}.
 \end{equation*}
 Combining this with \eqref{eq:rls_cauchy}, on the event of $E_T$, for $t\leq T_r$, we have: 
 \begin{align}
     \sum_{t=0}^{T_r} \|(\tts - \ttt_t)^\tp \!z_t \|^2 &\leq  \sum_{t=0}^{T_r} \left(1+ \frac{(1+\kappa^2)(n+d)^{2(n+d)}}{\mu}\right)^{\tau_0} \|V_\tau^{1/2} (\ttt_{\tau}-\tts) \|^2 \|z_t\|^2_{V_t^{-1}} \\ 
    &\leq  \sum_{t=0}^{T_r} \left(1+ \frac{(1+\kappa^2)(n+d)^{2(n+d)}}{\mu}\right)^{\tau_0} (\beta_T(\delta) + \upsilon_T(\delta))^2   \|z_t\|^2_{V_t^{-1}}, \label{rls_squared_upsilon} \\
    &\!\!\! \hspace{-7em} \leq \frac{2(1+\kappa^2)(n+d)^{2(n+d)}}{\mu} \left(1+ \frac{(1+\kappa^2)(n+d)^{2(n+d)}}{\mu}\right)^{\tau_0}\!\!\!\!\! (\beta_T(\delta) + \upsilon_T(\delta))^2  \log \left(\frac{\det(V_{T_r})}{\det(\mu I)} \right) \label{rls_boundlast}
\end{align}
 where in \eqref{rls_squared_upsilon} we used the fact that on the event of $E_T$, using triangle inequality, we have $\|\ttt_{\tau}-\tts\|_{V_\tau} \leq \| \ttt_{\tau}-\tth_{\tau} \|_{V_\tau} + \|\tth_{\tau} - \tts\|_{V_{\tau}} \leq \upsilon_\tau(\delta) + \beta_{\tau}(\delta) \leq \upsilon_T(\delta) + \beta_{T}(\delta)$ and in \eqref{rls_boundlast} we used used the upper bound of $\| z_t\|_{V_t^{-1}}$ to utilize Lemma 10 of \citet{abbasi2011lqr}. Similarly, on the even of $E_t$, for $t>T_r$, we get:
 \begin{align*}
     \sum_{t=T_r+1}^{T} \|(\tts - \ttt_t)^\tp \!z_t \|^2  &\leq \frac{2(1+\kappa^2)X_s^2}{\mu} \left(1+ \frac{(1+\kappa^2)X_s^2}{\mu}\right)^{\tau_0} \!\!\!\!\! (\beta_T(\delta) + \upsilon_T(\delta))^2  \log \left(\frac{\det(V_{T})}{\det(V_{T_r})} \right) 
\end{align*}
\end{proof}

\begin{lemma}[Bounding $R_{T}^{\text{RLS}}$ for \alg]\label{R_RLS}
Let $R_{T}^{\text{RLS}}$ be as defined by \eqref{def:R_RLS}. Under the event of $E_T$, setting $\mu = (1+\kappa^2)X_s^2$, we have
	\begin{align*}
	 \left|R_{T}^{\text{RLS}}\right| = \tilde{O} \left((n+d)^{(\tau_0+2)(n+d) + 1.5} \sqrt{n} \sqrt{T_{r}} + (n+d) n \sqrt{T-T_{r}} \right).
	\end{align*}
\end{lemma}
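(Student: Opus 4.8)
The plan is to start from the definition \eqref{def:R_RLS} and use the algebraic identity that collapses a difference of two quadratic forms sharing the same symmetric matrix into a single bilinear form. Writing $a_t \defeq \tts^\tp \bar z_t$ and $b_t \defeq \ttt_t^\tp \bar z_t$ and using that $P(\ttt_t)$ is symmetric, each summand equals $a_t^\tp P(\ttt_t) a_t - b_t^\tp P(\ttt_t) b_t = (a_t-b_t)^\tp P(\ttt_t)(a_t+b_t)$, where $a_t-b_t = (\tts-\ttt_t)^\tp \bar z_t$ and $a_t+b_t = (\tts+\ttt_t)^\tp \bar z_t$. Since $\ttt_t\in\mathcal S$ by rejection sampling we have $\norm{P(\ttt_t)}\le D$, so applying Cauchy--Schwarz inside each summand and then across the sum gives
\begin{equation*}
\abs{R_{T}^{\text{RLS}}} \le D\sqrt{\sum_{t=0}^T \norm{(\tts - \ttt_t)^\tp \bar z_t}^2 \indic_{E_t}}\,\sqrt{\sum_{t=0}^T \norm{(\tts + \ttt_t)^\tp \bar z_t}^2 \indic_{E_t}}.
\end{equation*}
Call the two sums $S_1$ (a prediction-error term) and $S_2$ (a crude energy term). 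The second factor is bounded using $\norm[F]{\tts},\norm[F]{\ttt_t}\le S$ and $\bar z_t=[x_t^\tp,(K(\ttt_t)x_t)^\tp]^\tp$, so that $\norm{(\tts+\ttt_t)^\tp\bar z_t}\le 2S\sqrt{1+\kappa^2}\,\norm{x_t}$ and hence $S_2 \le 4S^2(1+\kappa^2)\sum_t \norm{x_t}^2\indic_{E_t}$.

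Next I would split the whole estimate at $t=T_r$ and apply Cauchy--Schwarz separately on $\{t\le T_r\}$ and $\{t>T_r\}$, invoking the two state bounds of Lemma~\ref{lem:bounded_state}: $\norm{x_t}=O((n+d)^{n+d})$ for $t\le T_r$ and $\norm{x_t}\le X_s=\OO(\sqrt n)$ for $t>T_r$. These give $\sum_{t\le T_r}\norm{x_t}^2=\OO((n+d)^{2(n+d)}T_r)$ and $\sum_{T_r<t\le T}\norm{x_t}^2=\OO(n(T-T_r))$, whose square roots produce exactly the $\sqrt{T_r}$ and $\sqrt{T-T_r}$ scalings of the claim. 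For $S_1$, I would invoke Lemma~\ref{lemma_RLS_H0}, but a direct appeal is not quite immediate, because that lemma bounds $\sum\norm{(\tts-\ttt_t)^\tp z_t}^2$ in terms of the \emph{design} vector $z_t$, whereas here the cost features $\bar z_t$. Since $u_t=\bar u_t+\nu_t$ only for $t\le T_w$ (with $z_t=\bar z_t$ afterwards), I would write $(\tts-\ttt_t)^\tp(z_t-\bar z_t)=(B_*-\tilde B_t)\nu_t$ and use $\norm{(\tts-\ttt_t)^\tp\bar z_t}^2\le 2\norm{(\tts-\ttt_t)^\tp z_t}^2+2\norm{(B_*-\tilde B_t)\nu_t}^2$. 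The first summand is handled verbatim by Lemma~\ref{lemma_RLS_H0}; for the second I would use the operator-norm estimate of Lemma~\ref{lem:2norm_bound} together with the Gaussian tail $\norm{\nu_t}^2=\OO(d\sigma_\nu^2)$, giving $\sum_{t\le T_w}\norm{(B_*-\tilde B_t)\nu_t}^2=\OO((\beta_T(\delta)+\upsilon_T(\delta))^2 d\sigma_\nu^2\log T_w)$, which is lower order and constant in $T$.

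It remains to substitute the choice $\mu=(1+\kappa^2)X_s^2$ into the two regimes of Lemma~\ref{lemma_RLS_H0}. For $t>T_r$ this choice makes the factor $(1+(1+\kappa^2)X_s^2/\mu)^{\tau_0+1}=2^{\tau_0+1}$ a constant, so that regime contributes $S_1=\OO((\beta_T(\delta)+\upsilon_T(\delta))^2(n+d))$ and, combined with the $S_2$ energy bound, yields the $\OO((n+d)n\sqrt{T-T_r})$ term after inserting the logarithmic sizes of $\beta_T,\upsilon_T,X_s$. For $t\le T_r$ the factor becomes $(1+(n+d)^{2(n+d)}/X_s^2)^{\tau_0+1}$, which is the source of the $(n+d)^{(\tau_0+2)(n+d)}$ growth once it is multiplied by the $(n+d)^{n+d}$ energy bound and the $\sqrt{n+d}$ log-determinant factor.

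The hard part will be the dimensional bookkeeping in the first regime, where the exponential state bound $(n+d)^{n+d}$ compounds with the exponentiated determinant-ratio factor $(1+\cdots)^{\tau_0+1}$ that the fixed-duration policy update forces into Lemma~\ref{lemma_RLS_H0}; tracking how these combine with $\beta_T(\delta),\upsilon_T(\delta)$ to produce precisely $(n+d)^{(\tau_0+2)(n+d)+1.5}\sqrt n$ demands care. A secondary but essential subtlety is that $S_1$ must stay $\OO(\mathrm{poly})$, i.e.\ constant in $T$: one cannot replace the elliptical-potential argument of Lemma~\ref{lemma_RLS_H0} by the crude $\norm{\cdot}^2/\mu$ estimate, since summing the latter over $T_r$ steps would leak an extra factor of $T$ into $S_1$ and destroy the $\sqrt T$ rate. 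Collecting the two regimes then gives the advertised bound.
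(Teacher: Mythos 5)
Your proposal is correct and follows essentially the same route as the paper's own proof: factor the difference of quadratic forms, apply Cauchy--Schwarz to split $|R_T^{\text{RLS}}|$ into a prediction-error factor (bounded via Lemma \ref{lemma_RLS_H0}) and an energy factor (bounded via the state bounds of Lemma \ref{lem:bounded_state}, split at $T_r$), then substitute $\mu = (1+\kappa^2)X_s^2$ and carry out the dimensional bookkeeping. Your explicit handling of the $\bar{z}_t$ versus $z_t$ mismatch during the exploration phase (the $(B_*-\tilde{B}_t)\nu_t$ correction term) is a refinement that the paper's proof silently elides by working with $z_t$ throughout, and since that correction is indeed lower order and constant in $T$, it does not change the result.
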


\begin{proof}
 
 \begin{align} 
 \left|R_{T}^{\text{RLS}}\right| &\leq \sum_{t=0}^{T}\left|\left\|P(\ttt_t)^{\frac{1}{2}} \ttt_{t}^{\tp} z_{t}\right\|^{2}-\left\|P(\ttt_t)^{\frac{1}{2}} \tts^{\tp} z_{t}\right\|^{2}\right| \label{R_RLS_first_tri} \\
 =& \sum_{t=0}^{T_{r}}\left|\left\|P(\ttt_t)^{\frac{1}{2}} \ttt_{t}^{\tp} z_{t}\right\|^{2}-\left\|P(\ttt_t)^{\frac{1}{2}} \tts^{\tp} z_{t}\right\|^{2}\right| + \sum_{t=T_{r}}^{T}\left|\left\|P(\ttt_t)^{\frac{1}{2}} \ttt_{t}^{\tp} z_{t}\right\|^{2}-\left\|P(\ttt_t)^{\frac{1}{2}} \tts^{\tp} z_{t}\right\|^{2}\right| \nonumber \\ 
\leq& \bigg(\!\sum_{t=0}^{T_{r}}\left(\left\|P(\ttt_t)^{\frac{1}{2}} \ttt_{t}^{\tp} z_{t}\right\|\!-\!\left\|P(\ttt_t)^{\frac{1}{2}} \tts^{\tp} z_{t}\right\|\right)^{2\!\!}\bigg)^{\frac{1}{2}} \!\!\! \bigg(\!\sum_{t=0}^{T_{r}}\left(\left\|P(\ttt_t)^{\frac{1}{2}} \ttt_{t}^{\tp} z_{t}\right\|\!+\!\left\|P(\ttt_t)^{\frac{1}{2}} \tts^{\tp} z_{t}\right\|\right)^{2}\!\!\bigg)^{\frac{1}{2}} \nonumber \\
+&  \bigg(\!\sum_{t=T_{r}}^{T}\!\left(\left\|P(\ttt_t)^{\frac{1}{2}} \ttt_{t}^{\tp} z_{t}\right\|\!-\!\left\|P(\ttt_t)^{\frac{1}{2}} \tts^{\tp} z_{t}\right\|\right)^{2\!\!}\bigg)^{\frac{1}{2}} \!\!\! \bigg(\!\sum_{t=T_{r}}^{T}\!\left(\left\|P(\ttt_t)^{\frac{1}{2}} \ttt_{t}^{\tp} z_{t}\right\|\!+\!\left\|P(\ttt_t)^{\frac{1}{2}} \tts^{\tp} z_{t}\right\|\right)^{2}\!\!\bigg)^{\frac{1}{2}} \label{R_RLS_cauchy} \\
\leq& \left(\sum_{t=0}^{T_{r}}\left\|P(\ttt_t)^{\frac{1}{2}}\left(\ttt_{t}-\tts\right)^{\tp} z_{t}\right\|^{2}\right)^{\frac{1}{2}} \!\!\! \left(\sum_{t=0}^{T_{r}}\left(\left\|P(\ttt_t)^{\frac{1}{2}} \ttt_{t}^{\tp} z_{t}\right\|+\left\|P(\ttt_t)^{\frac{1}{2}} \tts^{\tp} z_{t}\right\|\right)^{2}\right)^{\frac{1}{2}} \nonumber \\
&+ \left(\sum_{t=T_{r}}^{T}\left\|P(\ttt_t)^{\frac{1}{2}}\left(\ttt_{t}-\tts\right)^{\tp} z_{t}\right\|^{2}\right)^{\frac{1}{2}} \!\!\! \left(\sum_{t=T_{r}}^{T}\left(\left\|P(\ttt_t)^{\frac{1}{2}} \ttt_{t}^{\tp} z_{t}\right\|+\left\|P(\ttt_t)^{\frac{1}{2}} \tts^{\tp} z_{t}\right\|\right)^{2}\right)^{\frac{1}{2}} \label{R_RLS_second_tri} 
\end{align}
where \eqref{R_RLS_first_tri} and \eqref{R_RLS_second_tri} follow from triangle inequality, and \eqref{R_RLS_cauchy} follows from Cauchy Schwarz inequality. Note that for $t\leq T_r$, we have $\|z_t\|^2 \leq (1+\kappa^2)(n+d)^{2(n+d)}$ and for $t > T_r$ we have  $\|z_t\|^2 \leq (1+\kappa^2)X_s^2$. Moreover, since $\ttt$ belongs to $\mathcal{S}$ by construction of the rejection sampling, we get
\begin{align}
    &\left|R_{T}^{\text{RLS}}\right| \leq \left(D\sum_{t=0}^{T_{r}}\left\|\left(\ttt_{t}-\tts\right)^{\tp} z_{t}\right\|^{2}\right)^{\frac{1}{2}} \sqrt{4T_{r} D (1+\kappa^2) S^2 (n+d)^{2(n+d)}} \nonumber \\ 
    &\quad\quad \quad \quad \quad  + \left(D\sum_{t=T_{r}}^{T}\left\|\left(\ttt_{t}-\tts\right)^{\tp} z_{t}\right\|^{2}\right)^{\frac{1}{2}} \sqrt{4(T-T_{r})D(1+\kappa^2)S^2 X_s^2} \nonumber \\
    &\!\leq\! \frac{\sqrt{8T_r}DS(1\!+\!\kappa^2)(n\!+\!d)^{2(n+d)} (\beta_T(\delta) \!+\! \upsilon_T(\delta))}{\sqrt{\mu}} \bigg(1\!+\! \frac{(1\!+\!\kappa^2)(n\!+\!d)^{2(n+d)}}{\mu}\bigg)^{\frac{\tau_0}{2}}\!\!\!  \sqrt{\log \big(\frac{\det(V_{T_r})}{\det(\mu I)} \big) }  \nonumber   \\ 
    &\quad \!+\!\frac{\sqrt{8(T\!-\!T_r)}DS(\!1\!+\kappa^2)X_s^2\left(\beta_T(\delta) + \upsilon_T(\delta)\right)}{\sqrt{\mu}} \left(1\!+\! \frac{(1\!+\!\kappa^2)X_s^2}{\mu}\right)^{\!\!\frac{\tau_0}{2}} \!\!\!\!\!  \sqrt{\log \left(\frac{\det(V_{T})}{\det(V_{T_r})} \right) } \label{regret_rls_final}
\end{align}
From Lemma 10 of \citet{abbasi2011lqr}, we have that $\log (\frac{\det(V_{T_r})}{\det(\mu I)}) \!\leq\! (n\!+\!d)\log(1+\frac{T_r(1+\kappa^2)(n+d)^{2(n+d)}}{\mu(n+d)})$ and $\log (\frac{\det(V_{T})}{\det(V_{T_r})} ) \leq (n+d)\log ( 1 + \frac{T_{r}(1+\kappa^2)(n+d)^{2(n+d)} + (T-T_{r})X_s^2}{\mu(n+d)} ) $. 
\\ \\
After inserting these quantities into  \eqref{regret_rls_final}, we have the dimension dependency of $(n+d)^{2(n+d)} \times \sqrt{n(n+d)} \times (n+d)^{(n+d)\tau_0} \times  (n+d)$ on the first term where $\sqrt{n(n+d)}$ is due to $\beta_T(\delta) + \upsilon_T(\delta)$. For the second term, for large enough $T$, we have the dimension dependency of $n \times \sqrt{n(n+d)} \times n^{(\tau_0/2)} \times \sqrt{n+d}$, where $n$ comes from $X_s^2$. Thus, we achieve the following bound for $\left|R_{T}^{\text{RLS}}\right|$:
\begin{equation*}
    \left|R_{T}^{\text{RLS}}\right| = \tilde{O} \left((n+d)^{(\tau_0+2)(n+d) + 1.5} \sqrt{n} \sqrt{T_{r}} + (n+d) n^{1.5+\tau_0/2} \sqrt{T-T_{r}} \right).
\end{equation*}
With the choice of $\mu = (1+\kappa^2)X_s^2$, the dependency of $n^{(\tau_0/2)}$ on the second term can be converted to a scalar multiplier of $\sqrt{2}^{\tau_0}$ and reduces the dependency of $X_s^2$ to $X_s$, which gives the advertised bound. 
\end{proof}

\subsection{Bounding \texorpdfstring{$R_{T}^{\text{mart}}$}{Rmart}} \label{subsec:reg_mart}

Notice that this term is very similar to corresponding term in \citet{abbasi2011lqr,abeille2018improved}, besides the difference of early improved exploration. Following the same analysis, while including the effect of improved exploration gives the upper bound on $R_{T}^{\text{mart}}$. A similar analysis is also conducted in \citet{lale2022reinforcement}, yet we provide it for completeness. 

\begin{lemma}[Bounding $R_{T}^{\text{mart}}$ ] \label{R1_zeta_stabil}
Let $R_{T}^{\text{mart}}$ be as defined by \eqref{def:R_mart}. Under the event of $E_T$, with probability at least $1-\delta$, for $t> T_{r}$, we have 
\begin{align*}
    R_{T}^{\text{mart}} &\leq k_{s,1} (n+d)^{n+d} (\sigma_w + \| B_*\|\sigma_\nu)n \sqrt{T_{r}} \log((n+d) T_{r}/\delta) \\
    &\quad + \frac{k_{s,2}(12\kappa^2+ 2\kappa\sqrt{2})}{\gamma}  \sigma_w^2 n\sqrt{n} \sqrt{T-\Tw} \log(n(T-\Tw)/\delta) \\
    &\quad +  k_{s,3} n\sigma_w^2 \sqrt{T-\Tw}\log(nT/\delta) + k_{s,4} n(\sigma_w+\|B_*\|\sigma_\nu)^2\sqrt{\Tw}\log(nT/\delta),
\end{align*}
for some problem dependent coefficients $k_{s,1}, k_{s,2}, k_{s,3}, k_{s,4}$. 
\end{lemma}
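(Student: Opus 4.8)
The plan is to recognize $R_{T}^{\text{mart}}$ as a telescoping sum plus a genuine martingale, and then to bound the martingale by concentration. Writing $Z_t \defeq x_t^\tp P(\ttt_t) x_t \indic_{E_t}$, the summand in \eqref{def:R_mart} is $Z_t - \E\br{Z_{t+1}\cond \clf{F}_t}$. Adding and subtracting $Z_{t+1}$ gives
\[
R_{T}^{\text{mart}} = \sum_{t=0}^T (Z_t - Z_{t+1}) + \sum_{t=0}^T \pr{Z_{t+1} - \E\br{Z_{t+1}\cond \clf{F}_t}}.
\]
The first sum telescopes to $Z_0 - Z_{T+1}$; since $x_0 = 0$ we have $Z_0 = 0$, and since $P(\ttt_{T+1}) \psdg 0$ we have $Z_{T+1} \geq 0$, so the telescoping part is non-positive and may be discarded for an upper bound. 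It therefore suffices to bound the martingale $M_T \defeq \sum_{t=0}^T D_{t+1}$ with increments $D_{t+1} \defeq Z_{t+1} - \E\br{Z_{t+1}\cond\clf{F}_t}$, which form a martingale difference sequence by construction.

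Next I would expand each increment. Writing the closed-loop step as $x_{t+1} = (A_* + B_* K(\ttt_t)) x_t + \zeta_t$, with $\zeta_t = B_*\nu_t + w_t$ for $t\leq \Tw$ and $\zeta_t = w_t$ otherwise, the quantity $Z_{t+1}$ expands into a term quadratic in $x_t$ (which is $\clf{F}_t$-measurable and cancels against its conditional mean), a cross term linear in the noise $\zeta_t$, and a term quadratic in $\zeta_t$. The fluctuation $D_{t+1}$ is thus the centered cross term $2\zeta_t^\tp P(\ttt_{t+1})(A_* + B_* K(\ttt_t)) x_t$ plus the centered quadratic term $\zeta_t^\tp P(\ttt_{t+1})\zeta_t - \E\br{\zeta_t^\tp P(\ttt_{t+1})\zeta_t\cond\clf{F}_t}$. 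Rejection sampling guarantees $\norm{P(\ttt_{t+1})}\leq D$, and Lemma~\ref{lem:bounded_state} controls the state: $\norm{x_t}\leq c'(n+d)^{n+d}$ for $t\leq T_r$ and $\norm{x_t}\leq X_s$ for $t>T_r$.

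I would then bound the two families of increments by the appropriate concentration tools, separately over the exploration phase $t\leq \Tw$ (effective noise scale $\sigma_w + \norm{B_*}\sigma_\nu$), the intermediate phase $\Tw < t\leq T_r$, and the stabilized phase $t>T_r$. Conditionally on $\clf{F}_t$, the cross terms are sub-Gaussian with variance proxy governed by $\norm{P(\ttt_{t+1})(A_*+B_*K(\ttt_t))x_t}^2$ times the noise variance, so a self-normalized martingale bound (Theorem~\ref{selfnormalized}) yields contributions scaling like $(n+d)^{n+d}(\sigma_w + \norm{B_*}\sigma_\nu)\sqrt{T_r}$ for the early phase and like $\gamma^{-1}(12\kappa^2+2\kappa\sqrt{2})\sigma_w^2\sqrt{T-\Tw}$ for the stabilized phase, the factor $\gamma^{-1}(12\kappa^2+2\kappa\sqrt{2})$ entering through $X_s$. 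The quadratic-in-noise terms are sub-exponential (chi-squared type), and I would handle them exactly as in the proof of Lemma~\ref{lem:exp_effect}: truncate via Lemma~\ref{subgauss lemma} and apply the Azuma-type bound of Theorem~\ref{Azuma}, producing the $\sigma_w^2\sqrt{T-\Tw}$ and $(\sigma_w+\norm{B_*}\sigma_\nu)^2\sqrt{\Tw}$ contributions. Collecting the four phase/term combinations and absorbing problem constants into $k_{s,1},\dots,k_{s,4}$ yields the stated bound.

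The main obstacle is the phase-dependent bookkeeping: one must track which state bound ($(n+d)^{n+d}$ versus $X_s$) and which noise scale ($\sigma_w+\norm{B_*}\sigma_\nu$ versus $\sigma_w$) governs each increment, and verify that the $\ttt_{t+1}$-dependence of $P(\ttt_{t+1})$ — which prevents $D_{t+1}$ from being a clean linear-plus-quadratic functional of a single Gaussian — is still compatible with the sub-Gaussian and sub-exponential concentration, using the uniform bound $\norm{P(\ttt_{t+1})}\leq D$ to control the variance proxies independently of the sample. Matching the precise logarithmic factors $\log((n+d)T_r/\delta)$, $\log(n(T-\Tw)/\delta)$, and $\log(nT/\delta)$ across the union bounds over the three phases is the most delicate accounting step.
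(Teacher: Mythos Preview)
Your proposal is correct and follows essentially the same approach as the paper: telescope to expose a martingale, split each increment into a linear-in-noise cross term and a centered quadratic-in-noise term, bound the cross term via the self-normalized inequality (Theorem~\ref{selfnormalized}) and the quadratic term via truncation plus Azuma (Lemma~\ref{subgauss lemma} and Theorem~\ref{Azuma}), and separate the phases $t\leq T_r$ versus $t>T_r$ for the state bound and $t\leq \Tw$ versus $t>\Tw$ for the noise scale. The paper makes exactly this decomposition (writing $f_{t-1}=A_* x_{t-1}+B_*\bar u_{t-1}$ and splitting into $R_{1,1}=\sum f_{t-1}^\tp P(\ttt_t)\zeta_{t-1}$ and $R_{1,2}=\sum(\zeta_{t-1}^\tp P(\ttt_t)\zeta_{t-1}-\E[\cdot])$), and even remarks that, for simplicity, it treats the interval $(\Tw,T_r]$ as part of the exploration phase at the cost of some tightness --- so your three-phase bookkeeping is, if anything, slightly more careful. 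The measurability issue you flag (that $P(\ttt_{t+1})$ depends on the sample drawn at $t+1$ and hence, at update times, on $\zeta_t$) is also present in the paper's argument and is handled in the same way you suggest, via the uniform bound $\norm{P(\ttt)}\leq D$ on $\mathcal{S}$.
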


\begin{proof}
 Let $f_{t} = A_*x_{t} + B_*u_{t}$. One can decompose $R_{T}^{\text{mart}}$ as 
 \[
 R_1 = x_0^\tp P(\ttt_0) x_0 - x_{T+1}^\tp P(\ttt_{T+1}) x_{T+1} + \sum_{t=1}^{T} x_t^\tp P(\ttt_{t}) x_t - \mathbb{E}\left[x_{t}^\tp P(\ttt_{t}) x_{t}\big| \mathcal{F}_{t-2}\right] 
 \]
 Since $P(\ttt_0)$ is positive semidefinite and $x_0 = 0$, the first two terms are bounded above by zero. Recall that $\zeta_t = B_* \nu_t +w_t$ for $t \leq T_w$, and $\zeta_t = w_t$ for $t > T_w$. The second term is decomposed as follows 
 \begin{align*}
      \sum_{t=1}^{T} x_t^\tp P(\ttt_{t}) x_t &- \mathbb{E}\left[x_{t}^\tp P(\ttt_{t}) x_{t}\big| \mathcal{F}_{t-2}\right]  \\
      &= \sum_{t=1}^{T} f_{t-1}^\tp P(\ttt_{t}) \zeta_{t-1} + \sum_{t=1}^{T} \left( \zeta_{t-1}^\tp P(\ttt_{t}) \zeta_{t-1} - \mathbb{E}\left[\zeta_{t-1}^\tp P(\ttt_{t}) \zeta_{t-1}\big| \mathcal{F}_{t-2}\right] \right)
 \end{align*}
 
 Let $R_{1,1} =  \sum_{t=1}^{T} f_{t-1}^\tp P(\ttt_{t}) \zeta_{t-1} $, $R_{1,2} = \sum_{t=1}^{T} \left( \zeta_{t-1}^\tp P(\ttt_{t}) \zeta_{t-1} - \mathbb{E}\left[\zeta_{t-1}^\tp P(\ttt_t) \zeta_{t-1}\big| \mathcal{F}_{t-2}\right] \right)$, and $v_{t-1}^\tp =  f_{t-1}^\tp P(\ttt_{t}) $. Then one can write $R_{1,1}$. Let  can be written as 
 \[
 R_{1,1} = \sum_{t=1}^{T} \sum_{i=1}^{n} v_{t-1, i} \zeta_{t-1, i} = \sum_{i=1}^{n}  \sum_{t=1}^{T} v_{t-1, i} \zeta_{t-1, i}.
 \]
 Let $M_{t, i}=\sum_{k=1}^{t} v_{k-1, i} \zeta_{k-1, i} .$ By Theorem \ref{selfnormalized} on some event $G_{\delta, i}$ that holds with probability at least $1-\delta /(2 n),$ for any $t \geq 0$,  
\begin{align*}
M_{t, i}^2 &\leq 2 (\sigma_w^2 + \|B_*\|^2 \sigma_\nu^2 ) \left(1 + \sum_{k=1}^{T_{r}} v_{k-1,i}^2\right) \log \left(\frac{2n}{\delta} \left(1 + \sum_{k=1}^{T_{r}} v_{k-1,i}^2\right)^{1/2} \right) \\
&\qquad + 
2 \sigma_w^2 \left(1 + \sum_{k=T_{r}+1}^{t} v_{k-1,i}^2\right) \log \left(\frac{2n}{\delta} \left(1 + \sum_{k=T_{r}+1}^{t} v_{k-1,i}^2\right)^{1/2} \right) \quad \text{for } t > T_{r}.
\end{align*}
Notice that \alg stops additional isotropic perturbation after $t=\Tw$, and the state starts decaying until $t=T_{r}$. For simplicity of presentation we treat the time between $\Tw$ and $T_{r}$ as TS with improved exploration while sacrificing the tightness of the result. On $E_T$, $\| v_k \| \leq DS (n+d)^{n+d} \sqrt{1+\kappa^2} $ for $k \leq T_{r}$ and $\| v_k \| \leq
\frac{(12\kappa^2+ 2\kappa\sqrt{2})DS\sigma_w\sqrt{1+\kappa^2}}{\gamma}  \sqrt{2n\log(n(t-\Tw)/\delta)}$ for $k > T_{r}$. Thus, $v_{k,i} \leq DS (n+d)^{n+d} \sqrt{1+\kappa^2}$ and $v_{k,i} \leq \frac{(12\kappa^2+ 2\kappa\sqrt{2})DS\sigma_w\sqrt{1+\kappa^2}}{\gamma}  \sqrt{2n\log(n(t-\Tw)/\delta)}$ respectively for $k\leq T_{r}$ and $k>T_{r}$ . Using union bound we get, for probability at least $1-\frac{\delta}{2}$,  for $t > T_{r}$,
\begin{align*}
    R_{1,1} &\leq n \sqrt{ 2 (\sigma_w^2 + \|B_*\|^2 \sigma_\nu^2 )  \left(1 + T_{r} D^2 S^2 (n+d)^{2(n+d)}  (1+\kappa^2) \right) } \times \\
    &\qquad \qquad \qquad \sqrt{\log \left( \frac{4n}{\delta} \left(1 + T_{r} D^2 S^2 (n+d)^{2(n+d)}  (1+\kappa^2) \right)^{1/2}  \right) }   \\
&+n \sqrt{ 2 \sigma_w^2  \left(1 + \frac{2(t-T_{r})(12\kappa^2+ 2\kappa\sqrt{2})^2D^2S^2n\sigma_w^2(1+\kappa^2)}{\gamma^2} \log(n(T-\Tw)/\delta) \right) } \times \\
&\qquad \sqrt{ \log \left( \frac{4n}{\delta} \left(1 + \frac{2(t-T_{r})(12\kappa^2+ 2\kappa\sqrt{2})^2D^2S^2n\sigma_w^2(1+\kappa^2)}{\gamma^2} \log(n(T-\Tw)/\delta) \right)  \right) }.
\end{align*}
Let $\mathcal{W}_{exp} \!=\! (\sigma_w+\|B_*\|\sigma_\nu) \sqrt{2n\log \frac{4nT}{\delta}}$ and $\mathcal{W}_{noexp} \!=\!  \sigma_w \sqrt{2n\log \frac{4nT}{\delta}}$. Define $\Psi_{t}\!=\!\zeta_{t-1}^{\tp} P(\ttt_{t}) \zeta_{t-1}-\mathbb{E}\left[\zeta_{t-1}^{\tp} P(\ttt_{t}) \zeta_{t-1} | \mathcal{F}_{t-2}\right]$ and its truncated version $\tilde{\Psi}_{t}=\Psi_{t} \mathbb{I}_{\left\{\Psi_{t} \leq 2 D W_{exp}^{2}\right\}}$ for $t \leq \Tw$ and $\tilde{\Psi}_{t}=\Psi_{t} \mathbb{I}_{\left\{\Psi_{t} \leq 2 D W_{noexp}^{2}\right\}}$ for $t > \Tw$ . Notice that $R_{1,2} =  \sum_{t=1}^{T} \Psi_{t} $. 
\begin{align*}
	&\Pr \left( \sum_{t=1}^{\Tw} \Psi_t > 2DW_{exp}^2 \sqrt{2\Tw \log \frac{4}{\delta}}\right) + \Pr \left( \sum_{t=\Tw+1}^{T} \Psi_t > 2DW_{noexp}^2 \sqrt{2(T-\Tw) \log \frac{4}{\delta}}\right) \\
	&\leq \Pr \left( \max_{1 \leq t \leq \Tw} \Psi_t > 2DW_{exp}^2 \right) + \Pr \left( \max_{\Tw + 1 \leq t \leq T} \Psi_t > 2DW_{noexp}^2 \right) \\
	&+\Pr \left( \sum_{t=1}^{\Tw} \tilde{\Psi}_t > 2DW_{exp}^2 \sqrt{2\Tw \log \frac{4}{\delta}}\right) +  \Pr \left( \sum_{t=\Tw+1}^{T} \tilde{\Psi}_t > 2DW_{noexp}^2 \sqrt{2(T-\Tw) \log \frac{4}{\delta}}\right)
\end{align*}
By Lemma \ref{subgauss lemma} with union bound and Theorem \ref{Azuma}, summation of terms on the right hand side is bounded by $\delta/2$. Thus, with probability at least $1-\delta/2$, for $t > \Tw$,
\[
 R_{1,2} \leq  4nD\sigma_w^2   \sqrt{2(t-\Tw) \log \frac{4}{\delta}}\log \frac{4nT}{\delta} + 4nD(\sigma_w+\|B_*\|\sigma_\nu)^2\sqrt{2\Tw \log \frac{4}{\delta}}\log \frac{4nT}{\delta}.
 \]
 Combining $R_{1,1}$ and $R_{1,2}$ gives the statement. 
\end{proof}

\subsection{Bounding \texorpdfstring{$R_{T}^{\text{TS}}$}{RTS}} \label{subsec:reg_ts}
\begin{lemma}[Bounding $R_{T}^{\text{TS}}$ for \alg]
Let $R_{T}^{\text{TS}}$ be as defined by \eqref{def:R_TS}. Under the event of $E_T$, we have that 
\begin{align*}
	 \left|R_{T}^{\text{TS}}\right| \leq \tilde{O}\pr{\sqrt{n}\Tw + \poly(n,d,\log(1/\delta))\sqrt{T-\Tw}}.
	\end{align*}
\end{lemma}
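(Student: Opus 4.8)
The plan is to split the defining sum of $R_T^{\text{TS}}$ at the stabilization time and bound the two regimes separately. For $t \le T_r$ I would use only that rejection sampling keeps $\tilde\Theta_t \in \mathcal{S}$, so Assumption~\ref{asm_stabil} gives $\|P(\tilde\Theta_t)\|,\|P(\Theta_*)\| \le D$ and each summand $(J(\tilde\Theta_t)-J(\Theta_*))\indic_{E_t} = \sigma_w^2(\tr P(\tilde\Theta_t)-\tr P(\Theta_*))\indic_{E_t}$ is bounded by a dimension-dependent constant; summed over the $O(T_w)$ exploration steps this yields the $\tilde O(\sqrt n\,T_w)$ term. Optimism plays no role here, since $p_t^{\text{opt}}$ is only guaranteed to be constant for $t>T_r$.

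For the main phase $t>T_r$ I would reproduce the closed-loop machinery of Section~\ref{sec:overview} at the level of costs rather than probabilities. Since $K(\tilde\Theta_t)$ is optimal for $\tilde\Theta_t$, I can pass to the surrogate $J(\tilde\Theta_t) = J(\tilde\Theta_t,K(\tilde\Theta_t)) \le J(\tilde\Theta_t,K(\Theta_*)) = L(A_{c,*}+G_t)$ with $G_t=(\Xi+\hat\Upsilon)\sqrt{F_t}$, so that $J(\tilde\Theta_t)-J(\Theta_*) \le L(A_{c,*}+G_t)-L(A_{c,*})$. On $\{\|G_t\|_F\le\epsilon_*\}$ Lemma~\ref{thm:taylor} gives the pointwise quadratic bound $L(A_{c,*}+G_t)-L(A_{c,*}) \le \nabla L_*\bullet G_t + \tfrac{r_*}{2}\|G_t\|_F^2$; on the complement, which has vanishing probability once $\lambda_{\max,t}$ is small by Lemma~\ref{thm:bounded_F}, I would revert to the uniform bound $\|P\|\le D$.

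The crux is to bound the conditional expectation $\E[J(\tilde\Theta_t)-J(\Theta_*)\mid\F_t^{\text{cnt}},\hat E_t]$ by the curvature term $\tfrac{r_*}{2}\E\|G_t\|_F^2$, which is of order $n^2\lambda_{\max,t}$, alone. The $\Xi$-part of $\nabla L_*\bullet G_t$ is mean-zero, but the bias $\nabla L_*\bullet\hat\Upsilon\sqrt{F_t}$ — which is the certainty-equivalent gap $J(\hat\Theta_t)-J(\Theta_*)$ — is genuinely first order and, in a stabilizable system where the least-excited directions of $V_t$ freeze after $T_w$, is not summable on its own. This is exactly where Theorem~\ref{thm:optimistic_prob} enters: the constant mass $p_t^{\text{opt}}\ge Q(1)/(1+o(1))$ that the sample places in the quadratic sublevel set $\mathcal{M}_*^{\text{qd}}\cap\mathcal{B}_*$ lets me offset the positive first-order fluctuations against the strictly negative excesses contributed by the optimistic samples, collapsing the per-step bound to order $\lambda_{\max,t}$. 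I would then pass from these conditional-expectation bounds to the realized sum by an Azuma-type inequality (as for $R_T^{\text{mart}}$), and sum the curvature term using $\lambda_{\max,t}\le C\log T/T_w$ from Lemma~\ref{thm:bounded_F}: with the prescribed $T_w=\omega(\sqrt T\log T)$ this gives $\sum_{t>T_r}\lambda_{\max,t}\le C(T-T_r)\log T/T_w = \tilde O(\sqrt{T-T_w})$, and the same two-sided control yields the stated bound on $|R_T^{\text{TS}}|$.

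I expect the main obstacle to be precisely this collapse from a first-order to a second-order rate. A naive quantile argument — the optimistic $p_t$-quantile of $J(\tilde\Theta_t)$ lies below $J(\Theta_*)$, hence its mean lies within one sampling spread of $J(\Theta_*)$ — only delivers the rate $\sqrt{\lambda_{\max,t}}$ per step, which sums to $\tilde O(T^{3/4})$ and is too weak to reach $\sqrt T$. Making the cancellation of the first-order term quantitative therefore requires carefully coupling the optimistic event of Section~\ref{sec:overview} to the sign of $\nabla L_*\bullet G_t$, controlling the truncation at $\|G_t\|_F=\epsilon_*$, and keeping track of the polynomial-in-$(n,d)$ factors coming from $\beta_t$, $\|\nabla L_*\|$ and the chi-squared fluctuations of $\|G_t\|_F^2$. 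The eigenvalue bounds of Lemma~\ref{thm:bounded_F} and the sub-Gaussian tail estimates already used for the other regret terms will be the technical inputs.
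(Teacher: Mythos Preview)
Your proposal has a genuine gap at exactly the step you flag as the ``main obstacle''. The hope that the constant optimistic mass $p_t^{\opt}$ lets you ``collapse'' $\E[J(\ttt_t)-J(\tts)\mid\F_t^{\text{cnt}}]$ from order $\sqrt{\lambda_{\max,t}}$ to order $\lambda_{\max,t}$ is not substantiated. Theorem~\ref{thm:optimistic_prob} only tells you that a constant fraction of samples satisfy $J(\ttt)\le J(\tts)$; it says nothing about \emph{how} negative those contributions are, so there is no mechanism forcing them to cancel the first-order bias $\nabla L_*\bullet(\hat A_c-A_{c,*})$. Worse, your final summation $\sum_{t>T_r}\lambda_{\max,t}\le C(T-T_r)\log T/T_w$ only yields $\tilde O(\sqrt{T})$ under the choice $T_w=\omega(\sqrt{T}\log T)$; in the non-singular case, where the lemma must also hold with $T_w=\omega(\log T)$, this sum is nearly linear in $T$ and the argument fails outright.

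The paper avoids this obstacle by a different use of optimism that does not attempt any first-to-second-order collapse. For each sampling step it writes, for any $\Theta\in\S^{\opt}$,
\[
\E\!\br{J(\ttt_{t_k})\mid\F_{t_k}^{\text{cnt}}}-J(\tts)\;\le\;\bigl|J(\Theta)-\E\!\br{J(\ttt_{t_k})\mid\F_{t_k}^{\text{cnt}}}\bigr|,
\]
then integrates $\Theta$ over an \emph{independent copy} $\ttt'_{t_k}$ of the TS sample conditioned on $\{\ttt'_{t_k}\in\S^{\opt}\}$, picking up a factor $1/p_{t_k}^{\opt}$. The right-hand side becomes the conditional spread $\Delta_k=\E\bigl[\|P(\ttt_{t_k})-\E[P(\ttt_{t_k})]\|_2\bigr]$, which is then bounded via a Lipschitz estimate $\|\nabla P_{ij}(\Theta)\|_V\le\Gamma\|H(\Theta)\|_V$ by $\upsilon_{t_k}\E[\|H(\ttt_{t_k})\|_{V_{t_k}^{-1}}]$ up to dimension factors. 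A further step (Proposition~9 of \citet{abeille2018improved}) relates $\|H(\ttt_{t_k})\|_{V_{t_k}^{-1}}$ to $\|z_{t_k}\|_{V_{t_k}^{-1}}$, and the sum $\sum_t\|z_t\|_{V_t^{-1}}$ is $\tilde O(\sqrt{T-T_w})$ by the elliptical potential lemma. The point is that the summable quantity is $\|z_t\|_{V_t^{-1}}$, not $\lambda_{\max,t}$: this is what makes the bound uniform in the choice of $T_w$ once $p_t^{\opt}$ is bounded below, and it is the piece your Taylor-expansion route does not reach.
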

with probability at least $1-2\delta$ if $\Tw = \omega(\sqrt{T} \log{T})$ for singular $A_{c,*}$ and $\Tw = \omega(\log{T})$ for non-singular $A_{c,*}$.
\begin{proof}
We decompose $R_{T}^{\text{TS}}$ into two pieces as
\begin{align}
     R_{T}^{\text{TS}} &= \underbrace{\sum_{t=0}^{T_w}\cl{ J_*(\Tilde{\Theta}_t, \sigma_w^2 I) -J_*(\Theta_*, \sigma_w^2 I)}\indic_{E_t}}_{R_{T_w}^{\text{TS,exp}}} + \underbrace{{\sum_{t=T_w + 1}^{T}\cl{ J_*(\Tilde{\Theta}_t, \sigma_w^2 I) -J_*(\Theta_*, \sigma_w^2 I)}\indic_{E_t}}}_{R_{T}^{\text{TS,noexp}}} \nn
\end{align}
Since every sampled system is in set $\clf{S}$, we have that $\norm[F]{P(\Tilde{\Theta}_t)} \leq D$ and therefore
\begin{align}
    R_{T_w}^{\text{TS,exp}} &\leq \sum_{t=0}^{T_w}\abs{ J_*(\Tilde{\Theta}_t, \sigma_w^2 I) -J_*(\Theta_*, \sigma_w^2 I)}\indic_{E_t} \nn \\
    &\leq \sum_{t=0}^{T_w}\pr{\abs{ J_*(\Tilde{\Theta}_t, \sigma_w^2 I)} +\abs{J_*(\Theta_*, \sigma_w^2 I)}} \label{TSstep1} \\
    &\leq \sqrt{n}\sigma_w^2 \sum_{t=0}^{T_w}\pr{ \norm[F]{P(\Tilde{\Theta}_t)} + \norm[F]{P(\Theta_*)} } \leq 2\sqrt{n} \sigma_w^2 D T_w \label{TSstep0}
\end{align}
where we used the relation $\tr(P) \leq \sqrt{n} \norm[F]{P}$ in \eqref{TSstep1}. Considering the number of times a new TS sample is drawn, the second term in $R_{T}^{\text{TS}}$ can be written as
\begin{align}
    R_{K}^{\text{TS,noexp}} &= {\sum_{k=0}^{K} \tau_0 \cl{ J_*(\Tilde{\Theta}_{t_k}, \sigma_w^2 I) -J_*(\Theta_*, \sigma_w^2 I)}\indic_{E_{t_k}}} \nn
\end{align}
where $t_k = T_w +1 + k\tau_0$ and $K= \left\lceil\frac{T-\Tw}{\tau_0}\right\rceil$. Denoting the information available to the controller up to time $t\geq 0$ via $\clf{F}_{t}^{\text{cnt}} \defeq \sigma\pr{\clf{F}_{t-1}, x_t}$, $R_{K}^{\text{TS,noexp}}$ can be further decomposed into two pieces as
\begin{align}
    R_{K}^{\text{TS,noexp}}  &= \underbrace{\sum_{k=0}^{K} \tau_0  \cl{ J_*(\Tilde{\Theta}_{t_k}, \sigma_w^2 I) -\E\br{J_*(\Tilde{\Theta}_{t_k}, \sigma_w^2 I) \, \big| \, \clf{F}_{t_k}^{\text{cnt}}, E_{t_k}}}\indic_{E_{t}}}_{R_{K}^{\text{TS,1}}} , \nn \\
    &\quad \quad +\underbrace{\sum_{k=0}^{K} \tau_0 \cl{ \E\br{J_*(\Tilde{\Theta}_{t_k}, \sigma_w^2 I) \, \big| \, \clf{F}_{t}^{\text{cnt}}, E_{t_k}} -J_*(\Theta_*, \sigma_w^2 I)}\indic_{E_{t_k}}}_{R_{K}^{\text{TS,2}}} \nn
\end{align}
We will investigate each term in order under the event of $E_T$. \\
\textbf{Bounding $\bm{R_{K}^{\text{TS,1}}}$. } Notice that $\{R_{K}^{\text{TS,1}}\}_{K\geq 0} $ is a martingale sequence with $\abs{R_{K}^{\text{TS,1}} - R_{K-1}^{\text{TS,1}}}\leq 2\tau_0\sigma_w^2\sqrt{n}D$. Therefore it can be bounded by Azuma's inequality (Lemma~\ref{Azuma}) w.p. at least $1-\delta$ as
\begin{align}
    R_{K}^{\text{TS,1}} \leq \sigma_w^2 D\sqrt{8n \tau_0^2 K\log(2/\delta)}\leq  \sigma_w^2 D\sqrt{8n \tau_0 (T-\Tw)\log(2/\delta)} \label{TSstep2}
\end{align}
\textbf{Bounding $\bm{R_{K}^{\text{TS,2}}}$. } Denoting by $\clf{S}^{\text{opt}} \defeq \cl{\Theta \in \R^{(n+d)\times n} \, \big| \, J_*(\Theta, \sigma_w^2 I) \leq J_*(\Theta_*, \sigma_w^2 I)}$ the set of optimistic parameters and defining $R_k^{TS,2} \defeq \cl{ \E\br{J_*(\Tilde{\Theta}_{t_k}, \sigma_w^2 I) \, \big| \, \clf{F}_{t_k}^{\text{cnt}}, E_{t_k}} -J_*(\Theta_*, \sigma_w^2 I)}\indic_{E_{t_k}}$.  Notice that, for any $\Theta \in \clf{S}^{\text{opt}}$, we can write
\begin{align}
    R_k^{TS,2} &\leq \cl{ \E\br{J_*(\Tilde{\Theta}_{t_k}, \sigma_w^2 I) \, \big| \, \clf{F}_{t_k}^{\text{cnt}}, E_{t_k}} -J_*(\Theta, \sigma_w^2 I)}\indic_{E_{t_k}} \nn \\
    &\leq \abs{J_*(\Theta, \sigma_w^2 I) - \E\br{J_*(\Tilde{\Theta}_{t_k}, \sigma_w^2 I) \, \big| \, \clf{F}_{t_k}^{\text{cnt}}, E_{t_k}} }\indic_{E_{t_k}} \nn
\end{align}
As the above bound holds for any $\Theta \in \clf{S}^{\text{opt}}$, we can replace the right hand side with an expectation over the optimistic set $\clf{S}^{\text{opt}}$. Specifically, we choose an i.i.d. copy of $\Tilde{\Theta}_{t_k}$, that is, we choose a random variable $\Tilde{\Theta}_{t_k}^\prime$ which has the same distribution as $\Tilde{\Theta}_{t_k}$ and independent from it.  Then, we have that
\begin{align}
    R_k^{TS,2} &\leq \E \br{\abs{J_*(\Tilde{\Theta}_{t_k}^\prime , \sigma_w^2 I) - \E\br{J_*(\Tilde{\Theta}_{t_k}, \sigma_w^2 I) \, \big| \, \clf{F}_{t_k}^{\text{cnt}}, E_{t_k}} }\indic_{E_{t_k}} \, \big| \, \clf{F}_{t_k}^{\text{cnt}}, E_{t_k}, \Tilde{\Theta}_{t_k}^\prime \in \clf{S}^{\text{opt}}} \nn \\
    &= \frac{\E \br{\abs{J_*(\Tilde{\Theta}_{t_k}^\prime, \sigma_w^2 I) - \E\br{J_*(\Tilde{\Theta}_{t_k}, \sigma_w^2 I) \, \big| \, \clf{F}_{t_k}^{\text{cnt}}, E_{t_k}} }\indic_{E_{t_k}} \indic_{\Tilde{\Theta}_{t_k}^\prime \in \clf{S}^{\text{opt}}} \, \big| \, \clf{F}_{t_k}^{\text{cnt}}, E_{t_k}}}{ \Pr\pr{\Tilde{\Theta}_{t_k}^\prime \in \clf{S}^{\text{opt}} \, \big| \,  \clf{F}_{t_k}^{\text{cnt}}, \hat E_{t_k}} } \nn 
\end{align}
Denoting by $p_t^{\opt} = \Pr\pr{\Tilde{\Theta}_{t}^\prime \in \clf{S}^{\text{opt}} \, \big| \,  \clf{F}_{t}^{\text{cnt}}, \hat E_{t}}$ the probability of drawing cost optimistic TS samples, we can write further bounds on $R_k^{TS,2}$ as
\begin{align}
    R_k^{TS,2} &\leq  \frac{1}{ p_{t_k}^{\opt} } \E \br{\abs{J_*(\Tilde{\Theta}_{t_k}^\prime, \sigma_w^2 I) - \E\br{J_*(\Tilde{\Theta}_{t_k}, \sigma_w^2 I) \, \big| \, \clf{F}_{t_k}^{\text{cnt}}, E_{t_k}} } \, \big| \, \clf{F}_{t_k}^{\text{cnt}}, E_{t_k}} \nn \\
    &= \frac{\sigma_w^2}{p_{t_k}^{\opt}} \E \br{\abs{\Tr \pr{P(\Tilde{\Theta}_{t_k}^\prime) - \E\br{P(\Tilde{\Theta}_{t_k}) \, \big| \, \clf{F}_{t_k}^{\text{cnt}}, E_{t_k}} }} \, \big| \, \clf{F}_{t_k}^{\text{cnt}}, E_{t_k}} \nn \\
    &\leq \frac{n\sigma_w^2}{p_{t_k}^{\opt}} \E \br{\Norm[2]{P(\Tilde{\Theta}_{t_k}^\prime) - \E\br{P(\Tilde{\Theta}_{t_k}) \, \big| \, \clf{F}_{t_k}^{\text{cnt}}, E_{t_k}} } \, \big| \, \clf{F}_{t_k}^{\text{cnt}}, E_{t_k}} \label{TSstep3}
\end{align}
where we used the relation $\abs{\tr(A)} \leq n \norm[2]{A}$. Denoting $P_k \defeq \E\br{P(\Tilde{\Theta}_{t_k}) \, \big| \, \clf{F}_{t_k}^{\text{cnt}}, E_{t_k}}$, the following definition will be used in the rest of the section to understand the behavior of $R_k^{TS,2}$
\begin{align}
    \Delta_k \defeq \E \br{\norm[2]{P(\Tilde{\Theta}_{t_k}) - P_k } \, \big| \, \clf{F}_{t_k}^{\text{cnt}}, E_{t_k}} \label{def:delta_k}
\end{align}
The following lemma will be used to bound $\Delta_k$ from above.
\begin{lemma}\label{thm:gradPandH}
    For any $\Theta \in \clf{S}$, any positive definite matrix $V \in \R^{(n+d) \times (n+d)}$, and for any $i,j \in [n]$, 
    \begin{align}
        \norm[V]{\grad P_{ij} (\Theta)} \leq \Gamma \norm[V]{H(\Theta)} \nn,
    \end{align}
\end{lemma}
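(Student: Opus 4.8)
The plan is to interpret $H(\Theta) = [I,\, K(\Theta)^\tp]^\tp \in \R^{(n+d)\times n}$ (so that $\Theta^\tp H(\Theta) = A + B K(\Theta) =: A_c(\Theta)$ is the optimal closed-loop matrix, matching the definition of $H_*$), and first to obtain an explicit formula for the directional derivative of the matrix map $\Theta \mapsto P(\Theta)$. Writing the DARE in the equivalent closed-loop Lyapunov form $P = Q + K^\tp R K + A_c^\tp P A_c$ with $K = K(\Theta)$, I would differentiate along a perturbation $\delta\Theta = (\delta A, \delta B)$. The key simplification is that all terms carrying $\delta K$ collect into $\delta K^\tp(RK + B^\tp P A_c) + (\cdot)^\tp$, and the first-order optimality condition $(R + B^\tp P B)K + B^\tp P A = 0$ forces $RK + B^\tp P A_c = 0$, so the gain-perturbation contribution vanishes. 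This yields the Lyapunov identity $\delta P - A_c^\tp \delta P A_c = (\delta\Theta^\tp H)^\tp P A_c + A_c^\tp P (\delta\Theta^\tp H)$, whose unique stable solution is
\[
\mathrm{D}P[\delta\Theta] = \sum_{t=0}^{\infty}(A_c^\tp)^t\big[(\delta\Theta^\tp H)^\tp P A_c + A_c^\tp P (\delta\Theta^\tp H)\big]A_c^t,
\]
using that $A_c(\Theta)$ is stable for $\Theta\in\S$.

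Next I would pass to the $V$-weighted norm by duality: since $\norm[V]{W} = \sup_{\norm[V^{-1}]{U}\le 1}\, W\bullet U$ and $\grad P_{ij}(\Theta)\bullet U = (\mathrm{D}P[U])_{ij}$, it suffices to bound $(\mathrm{D}P[U])_{ij}$ uniformly over $\norm[V^{-1}]{U}\le 1$. Writing $M := U^\tp H$, each summand is $e_i^\tp(A_c^\tp)^t(M^\tp P A_c + A_c^\tp P M)A_c^t e_j$, so $\abs{(\mathrm{D}P[U])_{ij}} \le 2\,\|P\|\,\|A_c\|\,\|M\|\sum_{t\ge0}\|A_c^t\|^2$. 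Two ingredients close the bound. First, the geometric decay $\sum_{t\ge 0}\|A_c^t\|^2 \le D^2/\underline{\alpha}^2$ follows from the Lyapunov relation $A_c^\tp P A_c = P - (Q + K^\tp R K)\psdleq (1-\underline{\alpha}/D)P$ (valid because $Q\psdgeq\underline{\alpha} I$ and $P\psdleq D I$ on $\S$), together with $P\psdgeq\underline{\alpha} I$. Second, the crucial weighted factorization $U^\tp H = (V^{-1/2}U)^\tp(V^{1/2}H)$ gives $\|M\| \le \|M\|_F \le \norm[V^{-1}]{U}\,\norm[V]{H}$. Combining these, $\abs{(\mathrm{D}P[U])_{ij}} \le \Gamma\,\norm[V]{H}\,\norm[V^{-1}]{U}$ with $\Gamma := 2 D^3 S\sqrt{1+\kappa^2}\,\underline{\alpha}^{-2}$ (using $\|P\|\le D$ and $\|A_c\|\le\|\Theta\|\,\|H\|\le S\sqrt{1+\kappa^2}$), and taking the supremum over $U$ yields $\norm[V]{\grad P_{ij}(\Theta)}\le\Gamma\,\norm[V]{H(\Theta)}$.

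The main obstacle I anticipate is rigorously justifying the differentiation step: establishing smoothness of $\Theta\mapsto P(\Theta)$ on $\S$ (via the implicit function theorem applied to the DARE, whose linearization is invertible precisely because $A_c(\Theta)$ is stable) and carrying out the envelope-type cancellation cleanly, since it is what reduces the derivative of $P$ to depend on $\delta\Theta$ only through $\delta\Theta^\tp H(\Theta)$. Everything after that is a uniform-over-$\S$ bounding exercise using the constants $D$, $S$, $\kappa$, $\gamma$, $\underline{\alpha}$ already controlled in the preliminaries; the only nonstandard manipulation is the identity $U^\tp H=(V^{-1/2}U)^\tp(V^{1/2}H)$, which is exactly what converts the estimate into the stated $V$-weighted form.
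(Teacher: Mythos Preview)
Your proposal is correct and follows essentially the same approach as the paper: both derive the Lyapunov identity $\delta P - A_c^\tp\,\delta P\,A_c = H^\tp\delta\Theta\,P A_c + A_c^\tp P\,\delta\Theta^\tp H$, invoke stability of $A_c(\Theta)$ to bound the solution, use the factorization $\|\delta\Theta^\tp H\|_F \le \|\delta\Theta\|_{V^{-1}}\|H\|_V$, and pass to the weighted norm via the duality $\|\nabla P_{ij}\|_V = \sup_{\|\delta\Theta\|_{V^{-1}}\le 1}|\delta P_{ij}|$. The only cosmetic difference is that you sum the series explicitly and extract the decay from $A_c^\tp P A_c \preceq (1-\underline{\alpha}/D)P$, obtaining an explicit constant $\Gamma$, whereas the paper works with the contraction $\|P^{1/2}A_c P^{-1/2}\|<1$ and appeals to compactness of $\S$ to get a uniform $\sigma_* < 1$; your envelope-theorem justification for why the $\delta K$ terms drop is also made explicit where the paper leaves it implicit.
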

where $\Gamma \geq 0$ is a problem dependent constant.
\begin{proof}
Let $\delta P (\tt, \delta \tt)$ be the differential of $P(\tt)$ in the direction $\delta \tt$. Then, we have that
\begin{align} \label{eq:deltaP}
    \delta P (\tt, \delta \tt) &= A_{c}(\tt) ^\tp \delta P (\tt, \delta \tt) A_{c}(\tt)   \\
    &\quad\quad + A_{c}(\tt) ^\tp  P (\tt) \delta \tt^\tp H(\tt) + H(\tt) ^\tp \delta\tt  P (\tt) A_{c}(\tt) \nonumber
\end{align}
where $A_{c}(\tt) = \tt^\tp H(\tt)$ is the closed-loop matrix. We know that $P(\tt)$ satisfies the Riccati equation as
\begin{align}
    P-  A_{c} ^\tp P  A_{c}= Q+K^\tp R K \psdg 0 \implies \pr{P^{\frac{1}{2}} A_{c} P^{-\frac{1}{2}}}^\tp P^{\frac{1}{2}} A_{c} P^{-\frac{1}{2}} \psdl I \nn
\end{align}
where we dropped $\tt$ dependence for simplicity. Therefore, similarity transformation of the closed-loop matrix  $\bar{A}_{c} \defeq P^{\frac{1}{2}} A_{c} P^{-\frac{1}{2}}$ is a contraction, \ie, $\norm[2]{P^{\frac{1}{2}} A_{c} P^{-\frac{1}{2}}} \eqdef \sigma_{\tt} < 1$. Multiplying both sides of \eqref{eq:deltaP} by $P^{-\frac{1}{2}}$ we obtain 
\begin{align} 
    \delta \bar P (\delta \tt) &= \bar A_{c} ^\tp \delta \bar P (\delta \tt) \bar A_{c} + \bar A_{c} ^\tp  P^{\frac{1}{2}} \delta \tt^\tp H  P^{-\frac{1}{2}} + P^{-\frac{1}{2}} H ^\tp \delta\tt  P^{\frac{1}{2}}  \bar A_{c} \nn
\end{align}
where $\delta \bar P(\delta \tt) = P^{-\frac{1}{2}} \delta P(\delta \tt) P^{-\frac{1}{2}}$. Taking the spectral norm of both sides and using sub-multiplicativity of spectral norm as well as equivalence of matrix norms, we have that 
\begin{align}
    \norm[2]{\delta \bar P (\delta \tt)} &\leq   \norm[2]{A_{c}}^2 \norm[2]{\delta \bar P (\delta \tt)}  + 2\norm[2]{\bar A_{c}} \norm[2]{ P^{\frac{1}{2}} \delta \tt^\tp H  P^{-\frac{1}{2}}} \nn \\
    &\leq   \norm[2]{A_{c}}^2 \norm[2]{\delta \bar P (\delta \tt)}  + 2\norm[2]{\bar A_{c}} \norm[F]{ P^{\frac{1}{2}} \delta \tt^\tp H  P^{-\frac{1}{2}}} \nn \\
    &=  \sigma_{\tt}^2 \norm[2]{\delta \bar P (\delta \tt)}  + 2\sigma_{\tt} \norm[F]{\delta \tt^\tp H } \nn
\end{align}
By rearranging the inequality and using the property $ \norm[F]{\delta \tt^\tp H } \leq  \norm[V^{-1}]{\delta \tt } \norm[V]{H }$, we obtain
\begin{align}
    \norm[2]{\delta \bar P (\delta \tt)} \leq \frac{2\sigma_\tt}{1-\sigma_\tt^2} \norm[V^{-1}]{\delta \tt } \norm[V]{H} \nn
\end{align}
Observing that $\norm[2]{\delta  P (\delta \tt)} = \norm[2]{ P^{\frac{1}{2}}\delta \bar P (\delta \tt) P^{\frac{1}{2}}} \leq\norm[2]{P}\norm[2]{ \delta \bar P (\delta \tt) }\leq D\norm[2]{ \delta \bar P (\delta \tt) }$ and noting that $\norm[V]{\grad P_{ij} (\Theta)} = \sup_{\norm[V^{-1}]{\delta \tt } = 1} \abs{\delta  P_{ij} (\delta \tt)} \leq \sup_{\norm[V^{-1}]{\delta \tt } = 1} \norm[2]{\delta  P (\delta \tt)}$, one can get
\begin{align}
    \norm[V]{\grad P_{ij} (\Theta)} \leq \frac{2D\sigma_\tt}{1-\sigma_\tt^2} \norm[V]{H(\tt)} \nn
\end{align}
Observing that the function $\sigma_{\tt} : \S \to \R_+$ is continuous on $\S$ and $\sigma_* \defeq \max_{\tt \in \S} \sigma_\tt < 1$ as $\S$ is compact, we can further bound the scalar from above by $\tt$ independent constant $\Gamma = \frac{2D\sigma_*}{1-\sigma_*^2} >0$. 
\end{proof}

The following lemma gives a useful upper bound on $\Delta_k$.
\begin{lemma}\label{thm:DeltaBound}
Let $\Delta_k$ be defined as in \eqref{def:delta_k}. Then, for all $k\geq 0$, we have that
\begin{align}
     \Delta_k \leq 2n^2 \upsilon_{t_k} \Gamma \E \br{\norm[V_{t_k}^{-1}]{H(\ttt_{t_k})} \, \big| \, \clf{F}_{t_k}^{\text{cnt}}, E_{t_k}}. \nn
\end{align}
\end{lemma}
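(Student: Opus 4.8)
The plan is to control the spread of $P(\ttt_{t_k})$ around its conditional mean $P_k$ in three moves: center the deviation at the RLS estimate $\tth_{t_k}$, reduce the resulting spectral deviation to an entrywise sum handled by the gradient bound of Lemma~\ref{thm:gradPandH}, and exploit that under $E_{t_k}$ all TS samples lie in a small confidence ellipsoid. Since $\tth_{t_k}$ is $\clf{F}_{t_k}^{\text{cnt}}$-measurable and $P_k = \E[P(\ttt_{t_k})\mid\clf{F}_{t_k}^{\text{cnt}},E_{t_k}]$, I would write $P(\ttt_{t_k})-P_k = \pr{P(\ttt_{t_k})-P(\tth_{t_k})} - \E[P(\ttt_{t_k})-P(\tth_{t_k})\mid\clf{F}_{t_k}^{\text{cnt}},E_{t_k}]$, so that the triangle and Jensen inequalities give $\Delta_k \le 2\,\E[\norm[2]{P(\ttt_{t_k})-P(\tth_{t_k})}\mid\clf{F}_{t_k}^{\text{cnt}},E_{t_k}]$. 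This already produces the factor $2$ and reduces everything to a single reference point.

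Next, because $P(\cdot)$ is $n\times n$, I bound $\norm[2]{P(\ttt_{t_k})-P(\tth_{t_k})} \le \sum_{i,j=1}^n \abs{P_{ij}(\ttt_{t_k})-P_{ij}(\tth_{t_k})}$, which accounts for the $n^2$ factor. For each entry I would apply the fundamental theorem of calculus along the segment $\Theta(s)=\tth_{t_k}+s(\ttt_{t_k}-\tth_{t_k})$, $s\in[0,1]$, together with the generalized Cauchy--Schwarz inequality in the $V_{t_k}$-geometry,
\[
\abs{P_{ij}(\ttt_{t_k})-P_{ij}(\tth_{t_k})} \le \Big(\int_0^1 \norm[V_{t_k}^{-1}]{\grad P_{ij}(\Theta(s))}\,\diff s\Big)\,\norm[V_{t_k}]{\ttt_{t_k}-\tth_{t_k}}.
\]
Applying Lemma~\ref{thm:gradPandH} with the positive definite weight $V_{t_k}^{-1}$ converts each $\norm[V_{t_k}^{-1}]{\grad P_{ij}(\Theta(s))}$ into $\Gamma\,\norm[V_{t_k}^{-1}]{H(\Theta(s))}$, and under $\Tilde{E}_{t_k}$ the ellipsoid bound $\norm[V_{t_k}]{\ttt_{t_k}-\tth_{t_k}} \le \upsilon_{t_k}$ holds.

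The crux, and the step I expect to be the main obstacle, is passing from the path integral $\int_0^1 \norm[V_{t_k}^{-1}]{H(\Theta(s))}\diff s$ over intermediate points to the single quantity $\norm[V_{t_k}^{-1}]{H(\ttt_{t_k})}$ appearing in the statement. Two facts must be verified. First, $P(\cdot)$ must be smooth along the entire segment: for $t_k>\Tw$ both $\tth_{t_k}$ and $\ttt_{t_k}$ lie inside the convex ball of stabilizable systems around $\tts$ that underlies Lemma~\ref{lem:2norm_bound}, so by convexity every $\Theta(s)$ is stabilizable and $P(\cdot)$ is well-defined and smooth there. Second, $\norm[V_{t_k}^{-1}]{H(\Theta(s))}$ must be comparable to $\norm[V_{t_k}^{-1}]{H(\ttt_{t_k})}$; this follows because $H(\Theta)^\tp=[I,\,K(\Theta)^\tp]$ with $\norm{K(\Theta)}\le\kappa$ uniformly on $\mathcal{S}$, so the identity block imposes a common leading behavior while the $K$-dependent part stays controlled by $\kappa$, and the resulting constant can be absorbed into $\Gamma$.

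Collecting the $n^2$ entrywise estimates, the leading factor $2$, the diameter bound $\upsilon_{t_k}$, and taking the conditional expectation over the sample then yields
\[
\Delta_k \le 2n^2\upsilon_{t_k}\Gamma\,\E\br{\norm[V_{t_k}^{-1}]{H(\ttt_{t_k})}\cond\clf{F}_{t_k}^{\text{cnt}},E_{t_k}},
\]
which is the asserted bound. The only genuinely delicate points are the smoothness/convexity argument for the segment and the uniform comparison of $\norm[V_{t_k}^{-1}]{H(\cdot)}$ across $\mathcal{S}$; the rest is the routine gradient-times-diameter estimate supplied by Lemma~\ref{thm:gradPandH}.
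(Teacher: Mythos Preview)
Your scaffold is exactly what the paper invokes: its entire proof is the one-line citation ``apply Lemma~\ref{thm:gradPandH} to Equation~11 in \cite{abeille2018improved},'' and that equation is precisely the intermediate bound you reconstruct---factor $2$ from centering at the $\clf{F}_{t_k}^{\text{cnt}}$-measurable $\tth_{t_k}$ plus Jensen, factor $n^2$ from $\norm[2]{\cdot}\le\sum_{i,j}\abs{\cdot_{ij}}$, factor $\upsilon_{t_k}$ from the TS ellipsoid under $\Tilde E_{t_k}$---after which Lemma~\ref{thm:gradPandH} with weight $V_{t_k}^{-1}$ converts $\norm[V_{t_k}^{-1}]{\grad P_{ij}}$ into $\Gamma\,\norm[V_{t_k}^{-1}]{H}$.

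The gap is precisely at the step you label the crux, and your proposed resolution does not work. The uniform bound $\norm{K(\Theta)}\le\kappa$ controls $\norm[F]{H(\Theta)}$ but \emph{not} the weighted norm $\norm[V_{t_k}^{-1}]{H(\Theta)}$: from the sandwich
\[
\frac{\norm[F]{H(\Theta)}}{\sqrt{\lambda_{\max}(V_{t_k})}}\ \le\ \norm[V_{t_k}^{-1}]{H(\Theta)}\ \le\ \frac{\norm[F]{H(\Theta)}}{\sqrt{\lambda_{\min}(V_{t_k})}}
\]
the ratio of two such norms can be as large as a constant times $\sqrt{\mathrm{cond}(V_{t_k})}$, and by the eigenvalue estimates underlying Lemma~\ref{thm:bounded_F} this condition number scales like $T\log T/\Tw$, which in the non-singular regime $\Tw=\omega(\log T)$ blows up polynomially in $T$. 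The ``identity block imposes a common leading behavior'' heuristic fails because nothing prevents $V_{t_k}^{-1}$ from placing most of its mass on the $K$-block, in which case $\norm[V_{t_k}^{-1}]{H(\Theta)}$ is driven by $K(\Theta)$ and varies genuinely along the segment. The paper does not supply this step either---it imports it wholesale from \cite{abeille2018improved}---so for a self-contained argument you would need to revisit how that reference passes from the line integral to the endpoint expectation; it is carried out at the level of the TS sampling distribution (via an i.i.d.\ copy and Fubini) rather than by a pointwise norm comparison that could be absorbed into~$\Gamma$.
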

\begin{proof}
The proof follows directly from applying the bound in Lemma~\ref{thm:gradPandH} to Equation 11 in \cite{abeille2018improved}.
\end{proof}
Finally, we are ready to give a bound on the summation of $\Delta_k$ terms 
\begin{lemma}\label{thm:SumDelta}
Let $\Delta_k$ be defined as in \eqref{def:delta_k} for any $k\geq 0$. Then, the following bound holds with probability at least $1-\delta$
\begin{align}
    \sum_{k=0}^{K}  \Delta_k &\leq \frac{16 n^2 \alpha \upsilon_{T} \Gamma }{1+\frac{1}{\beta_{T}}}  \pr{\sum_{t=\Tw+1}^{T} \norm[V_{t}^{-1}]{z_{t}} + 2 \alpha \sqrt{2\frac{T-\Tw}{\tau_0}\frac{1+\kappa^2 }{\mu} \log\pr{\frac{2}{\delta}}}}  \nn\\
    &\leq \tilde{O}(\poly(n,d,\log(1/\delta)) \sqrt{T-\Tw}) \nn
\end{align}
where {$\alpha = (1+1/\beta_0^2)(\sqrt{2n\log(3n)}+\upsilon_T +(1+\kappa)S X_s)$}.
\end{lemma}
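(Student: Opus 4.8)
The plan is to start from Lemma~\ref{thm:DeltaBound}, which already reduces each $\Delta_k$ to a conditional expectation of a weighted norm of $H(\ttt_{t_k})$, and then to transfer this quantity — which lives under the Thompson sampling distribution — onto the realized trajectory features $\norm[V_t^{-1}]{z_t}$ that can be summed by an elliptical-potential argument. Concretely, first I would invoke Lemma~\ref{thm:DeltaBound} and use the monotonicity $\upsilon_{t_k}\le\upsilon_T$ of the sampling radius to write
\begin{align}
    \sum_{k=0}^{K}\Delta_k \le 2 n^2 \Gamma \upsilon_T \sum_{k=0}^{K} \E\br{ \norm[V_{t_k}^{-1}]{H(\ttt_{t_k})} \cond \clf{F}_{t_k}^{\text{cnt}}, E_{t_k} }, \nn
\end{align}
so that the whole problem reduces to controlling the summed conditional expectations of $\norm[V_{t_k}^{-1}]{H(\ttt_{t_k})}$, with the problem-dependent factor $\Gamma$ from Lemma~\ref{thm:gradPandH} already pulled out.

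Second — and this is the crux — I would convert $\norm[V_{t_k}^{-1}]{H(\ttt_{t_k})}$ into realized feature norms. The key observation is that during the stabilizing TS phase the control is $u_t=K(\ttt_{t_k})x_t$, so $z_t=H(\ttt_{t_k})x_t$ for every $t$ inside epoch $k$. Conditioning on the sampled system $\ttt_{t_k}$ and on $\clf{F}_{t_k}^{\text{cnt}}$, the process noise accumulates so that $\E[x_t x_t^\tp \mid \clf{F}_{t_k}^{\text{cnt}},\ttt_{t_k}] \psdgeq \sigma_w^2 I$ for $t>t_k$ — exactly the Lyapunov lower bound $\Sigma\psdgeq\sigma_w^2 I$ from the proof of Lemma~\ref{thm:subset}, valid here because the sampled closed loop is stable under $E_T$. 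Hence
\begin{align}
    \sigma_w^2 \norm[V_{t_k}^{-1}]{H(\ttt_{t_k})}^2 \le \tr\!\pr{ H(\ttt_{t_k})^\tp V_{t_k}^{-1} H(\ttt_{t_k})\, \E\br{x_t x_t^\tp \cond \cdot} } = \E\br{ \norm[V_{t_k}^{-1}]{z_t}^2 \cond \cdot }, \nn
\end{align}
which bounds the squared weighted norm of $H$ by an expected realized feature norm. The fixed-$\tau_0$ update rule then lets me replace $V_{t_k}$ by the current $V_t$: since the policy is frozen for $\tau_0$ steps, the within-epoch determinant-ratio device (as in the proof of Lemma~\ref{lemma_RLS_H0}) gives $V_{t_k}^{-1}\psdleq (1+(1+\kappa^2)X_s^2/\mu)^{\tau_0} V_t^{-1}$ under $E_T$. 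Taking square roots (Jensen), collecting the magnitude bounds $\norm{z_t}\le\sqrt{1+\kappa^2}\,X_s$ and $\norm{\ttt^\tp H}\le (1+\kappa)S$ into the constant $\alpha$, and summing over the $K$ epochs produces the trajectory sum $\sum_{t=\Tw+1}^{T}\norm[V_t^{-1}]{z_t}$ with the prefactor $16 n^2 \alpha \upsilon_T \Gamma/(1+1/\beta_T)$.

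I expect this transfer to be the main obstacle, for two reasons: the covariance lower bound holds strictly only for $t>t_k$ (the first step of each epoch has rank-one covariance), so the transient at each epoch boundary must be absorbed into $\alpha$; and replacing the conditional expectation $\E[\norm[V_t^{-1}]{z_t}\mid\cdot]$ by its realization introduces a martingale. I would handle the latter by splitting $\sum_k \E[\cdots] = \sum_k \norm[V_t^{-1}]{z_t} + \sum_k\pr{\E[\cdots]-\norm[V_t^{-1}]{z_t}}$ and applying Azuma--Hoeffding (Theorem~\ref{Azuma}) to the bounded-difference martingale, whose increments are $O(\alpha\sqrt{(1+\kappa^2)/\mu})$ and whose length is $K=(T-\Tw)/\tau_0$; this yields exactly the concentration term $2\alpha\sqrt{2\tfrac{T-\Tw}{\tau_0}\tfrac{1+\kappa^2}{\mu}\log(2/\delta)}$ inside the parenthesis, with failure probability $\delta$. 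Finally, the advertised $\tilde O$ rate follows from standard estimates: $\Gamma$, $\upsilon_T$, and $\alpha$ are all $\poly(n,d,\log(1/\delta))$ up to $\log T$ factors (using Lemma~\ref{thm:beta_bound} for $\beta_T$, hence $\upsilon_T$), the factor $1/(1+1/\beta_T)$ is bounded by a constant, and by Cauchy--Schwarz together with the elliptical-potential (log-determinant) bound $\sum_t \norm[V_t^{-1}]{z_t}^2 \le 2\log(\det(V_T)/\det(V_{\Tw}))=\tilde O(\poly)$ of \citet{abbasi2011lqr} we get $\sum_{t=\Tw+1}^{T}\norm[V_t^{-1}]{z_t} \le \sqrt{T-\Tw}\,\tilde O(\poly)$. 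Multiplying these polynomial factors against the $\sqrt{T-\Tw}$ scaling of both terms in the parenthesis gives $\sum_{k=0}^{K}\Delta_k=\tilde O(\poly(n,d,\log(1/\delta))\sqrt{T-\Tw})$.
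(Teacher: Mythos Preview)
Your overall architecture is right: start from Lemma~\ref{thm:DeltaBound}, transfer the weighted norm of $H(\ttt_{t_k})$ onto realized $\norm[V_t^{-1}]{z_t}$, then split into a martingale plus a trajectory sum handled by the elliptical-potential lemma. That is exactly what the paper does. The key divergence is in the transfer step, and there your argument has a genuine gap.

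The paper does \emph{not} use the process-noise covariance lower bound $\E[x_t x_t^\tp\mid\cdot]\psdgeq\sigma_w^2 I$ to pass from $\|H\|$ to $\|z\|$. Instead it invokes Proposition~9 of \citet{abeille2018improved}, which for the unrejected sample $\bar\Theta_{t_k}$ yields directly
\[
\norm[V_{t_k}^{-1}]{H(\bar\Theta_{t_k})} \;\le\; \frac{8}{1+1/\beta_{t_k}}\,\alpha\,
\E\!\br{\norm[V_{t_k}^{-1}]{H(\bar\Theta_{t_k})x_{t_k}}\indic_{\|x_{t_k}\|\le\alpha}\;\big|\;\clf{F}_{t_k-1},E_{t_k-1},\bar\Theta_{t_k}},
\]
i.e.\ a \emph{linear} bound in the (conditional expectation of the) feature norm $\|z_{t_k}\|_{V_{t_k}^{-1}}$, with the specific factor $8/(1+1/\beta_{t_k})$ and the particular truncation level $\alpha$ that appear in the statement. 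After undoing the rejection sampling (the $\Pr\{\bar\Theta\in\S\}$ denominator) and applying Lemma~\ref{thm:DeltaBound}, this produces exactly $\Delta_k\le \frac{16 n^2\alpha\upsilon_{t_k}\Gamma}{1+1/\beta_{t_k}}\,Y_k$ with $Y_k$ a nested conditional expectation of $\|z_{t_k}\|_{V_{t_k}^{-1}}$. The martingale $Y_k-\|z_{t_k}\|_{V_{t_k}^{-1}}\indic_{\|x_{t_k}\|\le\alpha}$ is then a bona fide bounded-difference sequence, and the $\alpha$-truncation is precisely what makes Azuma applicable with the stated increment bound.

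Your covariance argument instead yields a \emph{squared} bound: $\sigma_w^2\|H\|_{V^{-1}}^2\le \E[\|z_t\|^2_{V^{-1}}\mid\ttt,\cdot]$. After taking $\E_{\ttt}$ and Jensen you obtain $\E[\|H\|_{V^{-1}}]\le \sigma_w^{-1}\sqrt{\E[\|z_t\|^2_{V^{-1}}\mid\cdot]}$, which is \emph{not} of the form $c\,\E[\|z_t\|_{V^{-1}}\mid\cdot]$. Consequently the ``split $\sum_k\E[\cdots]=\sum_k\|z_t\|_{V_t^{-1}}+\text{(martingale)}$'' you describe does not apply: $\sqrt{\E[\|z\|^2]}$ minus $\|z\|$ is not a martingale difference, and Jensen goes the wrong way to convert one into the other. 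You can repair this by staying in squares throughout (Cauchy--Schwarz on the $k$-sum, then Azuma on $\E[\|z\|^2]-\|z\|^2$, then the elliptical-potential bound on $\sum_t\|z_t\|^2_{V_t^{-1}}$), which will recover the $\tilde O(\poly(n,d,\log(1/\delta))\sqrt{T-\Tw})$ rate; but it will not produce the displayed first line with the linear sum $\sum_t\|z_t\|_{V_t^{-1}}$ and the prefactor $16n^2\alpha\upsilon_T\Gamma/(1+1/\beta_T)$, both of which are artifacts of Proposition~9. Your claim to ``collect the magnitude bounds into the constant $\alpha$'' and recover that exact prefactor is not substantiated by your argument.
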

\begin{proof}
Define $\bar{\Theta}_{t_k} = \tth_{t_k} + \beta_{t_k} V_{t_k}^{-\frac{1}{2}} \eta_{t_k}$. Using Proposition 9 in \cite{abeille2018improved}, we have that
\begin{align}
    \norm[V_{t_k}^{-1}]{H(\bar{\Theta}_{t_k})} &\leq \frac{8}{1+\frac{1}{\beta_{t_k}}} \Norm[V_{t_k}^{-1}]{H(\bar{\Theta}_{t_k}) \E\br{x_{t_k} x_{t_k}^\tp  \indic_{\norm{x_{t_k}}\leq \alpha}\,\big|\, \clf{F}_{t_k-1}, E_{t_k-1}, \bar{\Theta}_{t_k} }}\nn \\
    &\leq \frac{8}{1+\frac{1}{\beta_{t_k}}} \Norm[V_{t_k}^{-1}]{\E\br{H(\bar{\Theta}_{t_k}) x_{t_k} x_{t_k}^\tp  \indic_{\norm{x_{t_k}}\leq \alpha}\,\big|\, \clf{F}_{t_k-1}, E_{t_k-1}, \bar{\Theta}_{t_k} }} \nn\\
    &\leq \frac{8\alpha}{1+\frac{1}{\beta_{t_k}}} \E\br{\Norm[V_{t_k}^{-1}]{H(\bar{\Theta}_{t_k})x_{t_k} } \indic_{\norm{x_{t_k}}\leq \alpha}\,\big|\, \clf{F}_{t_k-1}, E_{t_k-1}, \bar{\Theta}_{t_k} }\nn
\end{align}
By Lemma~\ref{thm:DeltaBound} and the preceding bound, we can write 
\begin{align}
     \Delta_k &\leq 2n^2 \upsilon_{t_k} \Gamma \E \br{\norm[V_{t_k}^{-1}]{H(\ttt_{t_k})} \, \big| \, \clf{F}_{t_k}^{\text{cnt}}, E_{t_k}} = 2n^2 \upsilon_{t_k} \Gamma \frac{\E \br{\norm[V_{t_k}^{-1}]{H(\bar{\Theta}_{t_k})} \indic_{\bar{\Theta}_{t_k} \in \clf{S}} \, \big| \, \clf{F}_{t_k}^{\text{cnt}}, E_{t_k}}}{\Pr \cl{\bar{\Theta}_{t_k} \in \clf{S} \,\big|\, \clf{F}_{t_k}^{\text{cnt}}, E_{t_k}}} \nonumber \\
     &\leq \frac{16 n^2 \alpha \upsilon_{t_k} \Gamma }{1+\frac{1}{\beta_{t_k}}} \frac{\E \br{\E\br{\norm[V_{t_k}^{-1}]{H(\bar{\Theta}_{t_k})x_{t_k}} \indic_{\norm{x_{t_k}}\leq \alpha}\,\big|\, \clf{F}_{t_k-1}, E_{t_k-1}, \bar{\Theta}_{t_k} } \indic_{\bar{\Theta}_{t_k} \in \clf{S}} \, \big| \, \clf{F}_{t_k}^{\text{cnt}}, E_{t_k}}}{\Pr \cl{\bar{\Theta}_{t_k} \in \clf{S} \,\big|\, \clf{F}_{t_k}^{\text{cnt}}, E_{t_k}}} \nonumber \\
     &= \frac{16 n^2 \alpha \upsilon_{t_k} \Gamma }{1+\frac{1}{\beta_{t_k}}} \underbrace{\E [\E[\norm[V_{t_k}^{-1}]{\underbrace{H(\ttt_{t_k})x_{t_k}}_{z_{t_k}} } \indic_{\norm{x_{t_k}}\leq \alpha}\,\big|\, \clf{F}_{t_k-1}, E_{t_k-1}, \ttt_{t_k} ]  \, \big| \, \clf{F}_{t_k}^{\text{cnt}}, E_{t_k}]}_{\eqdef Y_k} \nn
\end{align}
Notice that $\E\br{Y_k \big| \clf{F}_{t_{k-1}}} = \E\br{\norm[V_{t_k}^{-1}]{z_{t_k}} \indic_{\norm{x_{t_k}}\leq \alpha}\,\big|\, \clf{F}_{t_{k-1}}}$ by law of iterated expectations and {$ \norm[V_{t_k}^{-1}]{z_{t_k}} \indic_{\norm{x_{t_k}}\leq \alpha} \leq \frac{1}{\sqrt{\mu}}\norm{H(\ttt_{t_k})x_{t_k}}\indic_{\norm{x_{t_k}}\leq \alpha} \leq \sqrt{\frac{1+\kappa^2}{\mu}}\alpha$}.
\\ \\
Therefore, the sequence $\cl{Y_k - \norm[V_{t_k}^{-1}]{z_{t_k}} \indic_{\norm{x_{t_k}}\leq \alpha} }_{k\geq 0}$ is a bounded martingale difference sequence. By Azuma's inequality, we have that with probability at least $1-\delta$,
\begin{align}
    \sum_{k=0}^{K} \pr{Y_k - \norm[V_{t_k}^{-1}]{z_{t_k}} \indic_{\norm{x_{t_k}}\leq \alpha} } \leq 2 \alpha \sqrt{2\frac{T-T_w}{\tau_0}\frac{1+\kappa^2 }{\mu} \log\pr{\frac{2}{\delta}}} \nn
\end{align}
We can bound the sum of $\norm[V_{t_k}^{-1}]{z_{t_k}}$ terms using Lemma 10 of \cite{abbasi2011improved} and Hölder's inequality as
\begin{align}
     \sum_{k=0}^{K} \norm[V_{t_k}^{-1}]{z_{t_k}} &\leq \sum_{k=0}^{K} \norm[V_{t_k}^{-1}]{z_{t_k}} +  \sum_{k=0}^{K} \sum_{t = t_k + 1}^{t_{k+1}-1} \norm[V_{t}^{-1}]{z_{t}} \nn \\
     &= \sum_{t=\Tw+1}^{T} \norm[V_{t}^{-1}]{z_{t}} \leq \sqrt{T-T_w}\log{\frac{\det(V_T)}{\det(V_{\Tw})}}  \nn
\end{align}
Combining these results, we obtain the desired bound 
\begin{align}
    \sum_{k=0}^{K} \Delta_k &\leq \frac{16 n^2 \alpha \upsilon_{T} \Gamma }{1+\frac{1}{\beta_{T}}}  \pr{\sum_{t=\Tw+1}^{T} \norm[V_{t}^{-1}]{z_{t}} + 2 \alpha \sqrt{2\frac{T-\Tw}{\tau_0}\frac{1+\kappa^2 }{\mu} \log\pr{\frac{2}{\delta}}}}  \nn
\end{align}
\end{proof}
Now, we are ready to bound $R_K^{TS,2}$. Under the event $E_T$ Theorem~\ref{thm:optimistic_prob} suggests that $1/p_{t^{\opt}} \leq O(1)$ if $\Tw = \omega(\sqrt{T} \log{T})$ for singular $A_{c,*}$ and $\Tw = \omega(\log{T})$ for non-singular $A_{c,*}$. Using this result together with Lemma~\ref{thm:SumDelta}, we have that 
\begin{equation}
    R_K^{TS,2} = \sum_{k=0}^{K} \tau_0 R_k^{TS,2} \leq {n\sigma_w^2 \tau_0} \sum_{k=0}^{K} \frac{\Delta_k}{p_{t_
    k^{\opt}}} \leq \tilde{O}(\poly(n,d)\sqrt{(T-\Tw)\log(1/\delta)})  \label{TSstep}
\end{equation}
with probability at least $1-\delta$.
Combining the above with \eqref{TSstep0} and \eqref{TSstep2}, we obtain the desired bound.
\end{proof} 


\subsection{Bounding \texorpdfstring{$R_{T}^{\text{gap}}$}{Rgap}} \label{subsec:reg_gap}
\begin{lemma}[Bounding $R_{T}^{\text{gap}}$ for \alg]
Let $R_{T}^{\text{gap}}$ be as defined by \eqref{def:R_gap}. Under the event of $E_T$, we have that 
	\begin{align*}
	 \left|R_{T}^{\text{gap}}\right| = \tilde{O}\pr{\poly(n,d)\sqrt{T\log(1/\delta)}}.
	\end{align*}
\end{lemma}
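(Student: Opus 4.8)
The plan is to exploit the fixed-period update rule so that the telescoping difference $P(\ttt_{t+1})-P(\ttt_t)$ vanishes within each block of length $\tau_0$ and survives only at the $K=\lceil (T-\Tw)/\tau_0\rceil$ post-exploration policy-update times $t_k$ (together with the $O(\Tw/\tau_0)$ early-phase updates). First I would rewrite $R_T^{\text{gap}}=\sum_k \E[x_{t_k}^\tp(P(\ttt_{t_k})-P(\ttt_{t_{k-1}}))x_{t_k}\indic_{E_{t_k}}\,|\,\clf{F}_{t_k-1}]$ and bound each summand by $\E[\norm{x_{t_k}}^2\,\norm[F]{P(\ttt_{t_k})-P(\ttt_{t_{k-1}})}\,\indic_{E_{t_k}}\,|\,\clf{F}_{t_k-1}]$. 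Under $E_{t_k}$, Lemma~\ref{lem:bounded_state} supplies $\norm{x_{t_k}}\le c'(n+d)^{n+d}$ for $t_k\le T_r$ and $\norm{x_{t_k}}\le X_s=\poly(n)\sqrt{\log(1/\delta)}$ for $t_k>T_r$, so the state factor is deterministically controlled and the two phases can be treated separately.

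Next I would control the Lyapunov-operator difference by a triangle inequality through the true system, $\norm[F]{P(\ttt_{t_k})-P(\ttt_{t_{k-1}})}\le \norm[F]{P(\ttt_{t_k})-P(\tts)}+\norm[F]{P(\tts)-P(\ttt_{t_{k-1}})}$, and bound each piece by integrating $\grad P$ along the segment joining $\tts$ to the sampled parameter, which remains in $\clf{S}$ on the good event by the stabilizing-neighborhood argument behind Lemma~\ref{lem:2norm_bound}. Applying Lemma~\ref{thm:gradPandH} with the positive-definite weight $V=V_{t_k}^{-1}$ gives $\norm[V_{t_k}^{-1}]{\grad P_{ij}(\Theta)}\le\Gamma\norm[V_{t_k}^{-1}]{H(\Theta)}$ for every entry, and pairing the directional derivative $\grad P_{ij}\bullet(\ttt_{t_k}-\tts)$ against the increment in the dual weighted-Frobenius norms $V_{t_k}^{-1}$ and $V_{t_k}$ yields $\norm[F]{P(\ttt_{t_k})-P(\tts)}\le n\Gamma(\beta_{t_k}+\upsilon_{t_k})\,\norm[V_{t_k}^{-1}]{H(\ttt_{t_k})}$, using $\norm[V_{t_k}]{\ttt_{t_k}-\tts}\le\beta_{t_k}+\upsilon_{t_k}$ on $E_{t_k}$; a short continuity estimate absorbs the variation of $\norm[V_{t_k}^{-1}]{H(\cdot)}$ along the segment into constants.

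It then remains to sum $\sum_k \E[\norm[V_{t_k}^{-1}]{H(\ttt_{t_k})}]$, for which I would reuse verbatim the machinery already developed for $R_T^{\text{TS}}$: Proposition 9 of \citet{abeille2018improved} converts $\E[\norm[V_{t_k}^{-1}]{H(\ttt_{t_k})}]$ into $\alpha\,\E[\norm[V_{t_k}^{-1}]{z_{t_k}}]$ (the state bound $\alpha\sim X_s$ entering through the truncation $\indic_{\norm{x_{t_k}}\le\alpha}$), after which Azuma's inequality and the elliptical-potential bound $\sum_{t=\Tw+1}^{T}\norm[V_t^{-1}]{z_t}\le\sqrt{(T-\Tw)\log(\det V_T/\det V_{\Tw})}$ — exactly as in Lemma~\ref{thm:SumDelta} — give $\sum_k\E[\norm[V_{t_k}^{-1}]{H(\ttt_{t_k})}]=\tilde{O}(\poly(n,d,\log(1/\delta))\sqrt{T-\Tw})$ with probability at least $1-2\delta$. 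Collecting the $X_s^2$ state factor, the $n\Gamma(\beta_T+\upsilon_T)$ prefactor, and this sum over the late phase $t_k>T_r$ produces the advertised $\tilde{O}(\poly(n,d)\sqrt{T\log(1/\delta)})$; the early phase $t_k\le T_r$ is handled by the same chain with the crude state bound $c'(n+d)^{n+d}$, contributing $(n+d)^{O(n+d)}\sqrt{T_r}$, which is lower order in $T$ since $T_r=O((\sqrt{T})^{1+o(1)})$.

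The main obstacle is structural rather than computational: once the improved-exploration phase ends, the isotropic perturbations are switched off and $\lambda_{\min}(V_t)$ is only guaranteed to stay of order $\Tw$ (Lemma~\ref{thm:bounded_V}) rather than to grow linearly in $t$, so the naive per-update estimate $\norm[V_{t_k}^{-1}]{H}\lesssim 1/\sqrt{\lambda_{\min}(V_{t_k})}$ is essentially constant in $k$ and would sum to order $(T-T_r)/\sqrt{\Tw}=\omega(\sqrt{T})$. Avoiding this blow-up is precisely why every bound must be routed through the self-normalized quantities $\norm[V_{t_k}^{-1}]{z_{t_k}}$, whose sum telescopes into a $\log\det$ term and therefore scales as $\sqrt{T}$ irrespective of the growth of $\lambda_{\min}(V_t)$; the secondary technical point is ensuring the gradient integration stays inside the stabilizable set $\clf{S}$ so that $P(\cdot)$ remains smooth along the segment.
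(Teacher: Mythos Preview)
Your proposal is correct and follows essentially the same route as the paper: reduce to the $K\!\approx\!(T-\Tw)/\tau_0$ policy-update times via the fixed-period rule, split at $T_r$, bound the Riccati difference through the gradient estimate of Lemma~\ref{thm:gradPandH}, and control the resulting sum $\sum_k \E[\norm[V_{t_k}^{-1}]{H(\ttt_{t_k})}]$ with the Proposition~9/Azuma/elliptical-potential chain that constitutes Lemma~\ref{thm:SumDelta}. The only cosmetic difference is that you triangulate through $P(\tts)$ whereas the paper (following Section~5.2 of \citet{abeille2018improved}) compares successive samples directly via the conditional-mean quantity $\Delta_k$; both routes feed into the identical $\norm[V_{t_k}^{-1}]{H(\ttt_{t_k})}$ sum, so the distinction is immaterial.
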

with probability at least $1-2\delta$ for large enough $T$.
\begin{proof}
 \begin{align}
    R_{T}^{\text{gap}} &=  \sum_{t=0}^{T} \E\br{x_{t+1}^\tp \pr{ P(\Tilde{\Theta}_{t+1}) - P(\Tilde{\Theta}_{t})}x_{t+1} \indic_{E_{t+1}} \,\big|\,\clf{F}_t } \\
     &=  \sum_{t=0}^{K} \E\br{x_{t_k+1}^\tp \pr{ P(\Tilde{\Theta}_{t_k+1}) - P(\Tilde{\Theta}_{t_k})}x_{t_k+1} \indic_{E_{t_k+1}} \,\big|\,\clf{F}_{t_k} } 
\end{align}
Separating the duration of \alg into two parts at $t = T_r$, we obtain two same term achieved in \citep{abeille2018improved}. Note that in \citet{abeille2018improved}, the authors follow frequent update rule and \alg updates every $\tau_0$ time-steps. The proof of these terms similarly follow Section 5.2 in \cite{abeille2018improved} and using Lemma \ref{thm:SumDelta} we obtain $\OO((n\!+\!d)^{n+d} \sqrt{T_r} \!+\! \text{poly}(n,d) \sqrt{T\!-\!T_r})$. Note that there is an additional $\tau_0$ factor in these bounds, due to ``relatively slower'' update of \alg. For large enough $T$ such that the second term dominates the overall upper bound, we obtain the advertised guarantee. 
\end{proof}

\subsection{Proof of Theorem \ref{reg:exp_s}} \label{subsec:reg_all}
Collecting the regret terms derived in subsections of Appendix \ref{apx:regret_anlz}, for large enough $T$, under the event $E_T$, we have that
\begin{align}
    R_{T_w}^{\text{exp}} &= \tilde{O}\pr{ (n+d)^{n+d}\Tw}, \quad \text{w.p. } 1-\delta \nn \\
    R_{T}^{\text{RLS}} &= \tilde{O}\pr{ (n+d)^{n+d}\sqrt{T_r} + \poly(n,d,\log(\nicefrac{1}{\delta}))\sqrt{T-T_r}}, \nn \\
    R_{T}^{\text{mart}} &= \tilde{O}\pr{ (n+d)^{n+d}\sqrt{T_r} + \poly(n,d,\log(\nicefrac{1}{\delta}))\sqrt{T-\Tw}}, \quad \text{w.p. } 1-\delta \nn \\
    R_{T}^{\text{gap}} &= \tilde{O}\pr{ (n+d)^{n+d}\sqrt{T_r} + \poly(n,d,\log(\nicefrac{1}{\delta}))\sqrt{T-T_r}}, \quad \text{w.p. } 1-2\delta  \nn
\end{align}
and choosing $\Tw = \omega(\sqrt{T} \log{T})$ for singular $A_{c,*}$ and $\Tw = \omega(\log{T})$ for non-singular $A_{c,*}$ gives
\begin{align}
    R_{T}^{\text{TS}} &= \tilde{O}\pr{ \poly(n,d)\Tw + \poly(n,d,\log(\nicefrac{1}{\delta}))\sqrt{T-T_r}}, \quad \text{w.p. } 1-2\delta.  \nn
\end{align}
Recall that the event $E_T$ is true with probability at least $1-4\delta$. Combining all these bounds, we have the overall regret bound as
\begin{align}
    R_T &= \tilde{O}\pr{ (n+d)^{n+d}\Tw + \poly(n,d,\log(\nicefrac{1}{\delta}))\sqrt{T-T_w}}, \quad \text{w.p. } 1-10\delta. \label{final_regret}
\end{align}
Notice that $R_T$ is linear in the initial exploration time $\Tw$ with an exponential dimension dependency. Also note that $\Tw \geq T_0 \coloneqq \poly(\log(1/\delta), \sigma_w^{-1}, n, d, \bar{\alpha}, \gamma^{-1}, \kappa)$ guarantees a stabilizing controller by Lemma~\ref{lem:2norm_bound}. In order to control the growth of $R_T$ by $\tilde{O}(\sqrt{T})$, the initial exploration time can maximally be in the order of $(\sqrt{T})^{1+o(1)}$ where $T^{o(1)}$ hides all multiplicative sub-polynomial growths, \ie, $\Tw = O\pr{(\sqrt{T})^{1+o(1)}}=\tilde{O}(\sqrt{T})$. 
\\ \\ 
On the other hand, Theorem~\ref{thm:optimistic_prob} puts strict lower bounds on the growth of $\Tw$ in order to maintain asymptotically constant optimistic probability. In particular, for singular $A_{c,*}$, this condition is stated as $\Tw = \omega(\sqrt{T} \log{T})$. Combined with the required upper bound $O\pr{(\sqrt{T})^{1+o(1)}}$, it must be that $\Tw = \max\pr{T_0,\, c (\sqrt{T} \log{T})^{1+o(1)}}$ for a constant $c>0$ for large enough $T$. Inserting this result in \eqref{final_regret} gives us
\begin{align}
    R_T &= \tilde{O}\pr{ (n+d)^{n+d}\sqrt{T}}, \quad \text{w.p. } 1-10\delta \nn
\end{align}
for large enough $T$. Observe that exponential dimension dependence is unavoidable in this case as the system is excited with isotropic noise in every direction long enough to dominate with exponential dimension. 
\\ \\
For non-singular $A_{c,*}$, the lower bound is stated as $\Tw = \omega(\log{T})$. For large enough $T$, choosing $\Tw= \max\pr{T_0,\, c (\log{T})^{1+o(1)}}$ for a constant $c >0$ is sufficient to satisfy both the upper and lower bounds on $\Tw$. Inserting this result in \eqref{final_regret} gives us
\begin{align}
    R_T &= \tilde{O}\pr{ \poly(n,d,\log(\nicefrac{1}{\delta}))\sqrt{T}}, \quad \text{w.p. } 1-10\delta \nn
\end{align}
for large enough $T$. Observe that the exponential dimension dependence is not dominant anymore since logarithmically large $\Tw$ is sufficient to guarantee asymptotically constant optimistic probability.


\section{Technical Theorems}\label{apx:technical}
\begin{theorem}[Theorem 1 of \citet{abbasi2011improved}]
\label{selfnormalized}
Let $\left(\mathcal{F}_{t} ; k \geq\right.$
$0)$ be a filtration and $\left(m_{k} ; k \geq 0\right)$ be an $\mathbb{R}^{d}$-valued stochastic process adapted to $\left(\mathcal{F}_{k}\right),\left(\eta_{k} ; k \geq 1\right)$
be a real-valued martingale difference process adapted to $\left(\mathcal{F}_{k}\right) .$ Assume that $\eta_{k}$ is conditionally sub-Gaussian with constant $R$. Consider the martingale
\begin{equation*}
S_{t}=\sum\nolimits_{k=1}^{t} \eta_{k} m_{k-1}
\end{equation*}
and the matrix-valued processes
\begin{equation*}
V_{t}=\sum\nolimits_{k=1}^{t} m_{k-1} m_{k-1}^{\tp}, \quad \overline{V}_{t}=V+V_{t}, \quad t \geq 0
\end{equation*}
Then for any $0<\delta<1$, with probability $1-\delta$
\begin{equation*}
\forall t \geq 0, \quad\left\|S_{t}\right\|^2_{\overline{V}_{t}^{-1}} \leq 2 R^{2} \log \left(\frac{\operatorname{det}\left(\overline{V}_{t}\right)^{1 / 2} \operatorname{det}(V)^{-1 / 2}}{\delta}\right)
\end{equation*}

\end{theorem}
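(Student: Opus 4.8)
The plan is to use the method of mixtures (pseudo-maximization against a Gaussian prior), which is the standard route to self-normalized tail inequalities. The central object is, for each fixed $\lambda \in \R^d$, the exponential process
\begin{equation*}
    M_t^\lambda \defeq \exp\pr{\frac{1}{R}\lambda^\tp S_t - \frac{1}{2}\lambda^\tp V_t \lambda} = \exp\pr{\sum\nolimits_{k=1}^t \br{\frac{\eta_k}{R}\lambda^\tp m_{k-1} - \frac{1}{2}(\lambda^\tp m_{k-1})^2}}.
\end{equation*}
First I would verify that $(M_t^\lambda)_{t\geq 0}$ is a nonnegative supermartingale with $\E[M_0^\lambda] = 1$ (note $S_0=0$, $V_0=0$, so $M_0^\lambda = 1$ deterministically). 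This is where the conditional sub-Gaussian hypothesis enters: since $\E[\exp(s\eta_k)\mid \F_{k-1}] \leq \exp(s^2 R^2/2)$ for every $s \in \R$, choosing $s = \lambda^\tp m_{k-1}/R$ and using that $m_{k-1}$ is $\F_{k-1}$-measurable yields $\E[M_t^\lambda \mid \F_{t-1}] \leq M_{t-1}^\lambda$, hence $\E[M_t^\lambda]\leq 1$.

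Next I would mix over $\lambda$ against the Gaussian density $h = \normal(0, V^{-1})$, setting $M_t \defeq \int_{\R^d} M_t^\lambda \idiff h(\lambda) = \E_\Lambda[M_t^\Lambda]$ for an independent $\Lambda \sim h$. By Tonelli (all integrands nonnegative), $M_t$ inherits the supermartingale property with $\E[M_0]\leq 1$. The key algebraic step is to evaluate this Gaussian integral in closed form: completing the square in $\lambda$ against $\overline{V}_t = V + V_t$ gives
\begin{equation*}
    M_t = \pr{\frac{\det V}{\det \overline{V}_t}}^{1/2}\exp\pr{\frac{1}{2R^2}\norm[\overline{V}_t^{-1}]{S_t}^2}.
\end{equation*}
This is precisely where the self-normalizing factor $\det(\overline{V}_t)^{-1/2}$ materializes, and it shows that the event in the theorem is the complement of $\{M_t > 1/\delta\}$: rearranging $M_t \leq 1/\delta$ returns exactly $\norm[\overline{V}_t^{-1}]{S_t}^2 \leq 2R^2 \log(\det(\overline{V}_t)^{1/2}\det(V)^{-1/2}/\delta)$.

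Finally I would apply a maximal inequality for nonnegative supermartingales to upgrade the pointwise estimate to the uniform-in-$t$ statement. Concretely, $(M_t)$ being a nonnegative supermartingale converges almost surely to some $M_\infty$; defining $\tau = \inf\{t: M_t > 1/\delta\}$ and applying optional stopping to the stopped process $M_{t\wedge\tau}$ gives $\E[M_{t\wedge\tau}]\leq \E[M_0]\leq 1$, whence $\Pr(\tau \leq t)/\delta \leq 1$ and, letting $t\to\infty$, $\Pr(\exists t: M_t > 1/\delta) \leq \delta$. Substituting the closed form for $M_t$ concludes the argument. The main obstacle I anticipate is not the integral (routine Gaussian algebra) but the uniform-in-time control: one must rule out the supremum being attained "at infinity," which is exactly what the supermartingale-convergence-plus-optional-stopping step resolves. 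Care is also needed to ensure the exponential bound is genuinely two-sided in $s$ (so $\lambda$ may range over all of $\R^d$) and to justify the Tonelli interchange, which leans on nonnegativity of $M_t^\lambda$.
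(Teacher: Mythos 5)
Your proposal is correct, but note that the paper itself offers no proof of this statement: it is imported verbatim in Appendix~\ref{apx:technical} as Theorem~1 of \citet{abbasi2011improved}, so the relevant comparison is with that original reference, whose method-of-mixtures argument you reproduce faithfully — the same exponential supermartingale $M_t^{\lambda}$, the same Gaussian mixing prior $\normal(0,V^{-1})$ yielding the closed form $M_t = (\det V/\det \overline{V}_t)^{1/2}\exp\bigl(\tfrac{1}{2R^2}\|S_t\|^2_{\overline{V}_t^{-1}}\bigr)$, and the same stopping-time/supermartingale-convergence step upgrading the bound to hold uniformly over $t$. The only point worth stating explicitly is the standing hypothesis $V \succ 0$ (needed both for the prior $\normal(0,V^{-1})$ and for $\det(V)^{-1/2}$), which is implicit in the theorem as quoted.
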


\begin{theorem}[Azuma's inequality]  \label{Azuma}
Assume that $X_{s}$ is a supermartingale and $|X_{s}-X_{s-1}| \leq c_{s}$ almost surely for $s\geq 0$. Then for all $t>0$ and all $\epsilon>0$,
\begin{equation*}
P\left(\left|X_{t}-X_{0}\right| \geq \epsilon\right) \leq 2 \exp \left(\frac{-\epsilon^{2}}{2 \sum_{s=1}^{t} c_{s}^{2}}\right)
\end{equation*}
\end{theorem}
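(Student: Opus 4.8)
The plan is to prove this by the standard exponential-moment (Chernoff) method, combined with a conditional form of Hoeffding's lemma and iterated conditioning. Write $D_s \defeq X_s - X_{s-1}$ for the increments, so that $X_t - X_0 = \sum_{s=1}^{t} D_s$; the supermartingale hypothesis gives $\E[D_s \mid \F_{s-1}] \leq 0$, while the boundedness hypothesis gives $|D_s| \leq c_s$ almost surely. First I would fix $\lambda > 0$ and apply Markov's inequality to the exponential, $\Pr(X_t - X_0 \geq \epsilon) \leq e^{-\lambda \epsilon}\, \E[e^{\lambda(X_t - X_0)}]$, reducing the problem to controlling the moment generating function of the accumulated increments.

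The core estimate is a conditional Hoeffding bound. For each $s$, since $D_s$ lies in an interval of length $2c_s$, the centered increment $D_s - \E[D_s\mid\F_{s-1}]$ satisfies $\E[e^{\lambda(D_s - \E[D_s\mid\F_{s-1}])} \mid \F_{s-1}] \leq e^{\lambda^2 c_s^2/2}$ by Hoeffding's lemma (convexity of $x \mapsto e^{\lambda x}$ on the interval). Because $\E[D_s\mid\F_{s-1}] \leq 0$ and $\lambda > 0$, the drift factor $e^{\lambda \E[D_s\mid\F_{s-1}]} \leq 1$, yielding the clean conditional bound $\E[e^{\lambda D_s}\mid\F_{s-1}] \leq e^{\lambda^2 c_s^2/2}$. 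I would then peel off the increments one at a time using the tower property: conditioning on $\F_{t-1}$, $\E[e^{\lambda \sum_{s\leq t} D_s}] = \E[e^{\lambda\sum_{s\leq t-1}D_s}\,\E[e^{\lambda D_t}\mid\F_{t-1}]] \leq e^{\lambda^2 c_t^2/2}\,\E[e^{\lambda\sum_{s\leq t-1}D_s}]$, and iterating gives $\E[e^{\lambda(X_t-X_0)}] \leq \exp(\tfrac{\lambda^2}{2}\sum_{s=1}^{t} c_s^2)$. Combining with the Chernoff step produces $\Pr(X_t - X_0 \geq \epsilon) \leq \exp(-\lambda\epsilon + \tfrac{\lambda^2}{2}\sum_s c_s^2)$ for every $\lambda>0$, and optimizing (\ie taking $\lambda = \epsilon/\sum_s c_s^2$) gives the one-sided bound $\exp(-\epsilon^2/(2\sum_s c_s^2))$. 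The two-sided statement then follows by a union bound over the two tails, producing the factor $2$.

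The main subtlety to handle carefully is the direction of the drift in the two-sided bound. The argument above controls the upper tail $\Pr(X_t - X_0 \geq \epsilon)$ using only $\E[D_s\mid\F_{s-1}]\leq 0$; to bound the lower tail I would apply the identical argument to the process $-X_s$, whose increments $-D_s$ have conditional mean $\geq 0$. The symmetric conditional estimate $\E[e^{-\lambda D_s}\mid\F_{s-1}]\leq e^{\lambda^2 c_s^2/2}$ is only guaranteed when $\E[D_s\mid\F_{s-1}]=0$, so the two-sided form as stated is precisely the martingale case; for a strict supermartingale only the upper tail is controlled. This matches exactly how the result is invoked in the regret analysis, where the sequences $M_{t,i}$ and $\Psi_t$ are genuine martingale differences, so the stated two-sided bound applies. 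Apart from correctly tracking this symmetry, every step is routine and relies on nothing beyond Hoeffding's lemma and the tower property.
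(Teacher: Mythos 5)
Your proposal is correct, and there is no paper proof to compare it against: Theorem \ref{Azuma} is quoted in Appendix \ref{apx:technical} as a standard technical result with no argument given, and your Chernoff--Hoeffding derivation (conditional Hoeffding lemma with the drift factor $e^{\lambda \E[D_s \mid \F_{s-1}]} \leq 1$, tower-property peeling, optimization at $\lambda = \epsilon/\sum_s c_s^2$, union bound over the two tails) is the canonical one. Your closing caveat is a genuine and correct observation about the \emph{statement} rather than a gap in your proof: for a strict supermartingale only the upper tail $\Pr(X_t - X_0 \geq \epsilon)$ is controlled, and the two-sided form as literally stated fails, e.g.\ for the deterministic drift $X_s = X_{s-1} - c_s$ with $c_s = 1$ and $\epsilon = t$, where the left side is $1$ but the right side is $2e^{-t/2} < 1$ for $t \geq 3$. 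This is harmless where the paper invokes the theorem: the sequences $M_{t,i}$ and $R_{K}^{\text{TS},1}$ are built from genuine martingale differences, so the two-sided bound applies, while for the truncated $\tilde{\Psi}_t$ (a supermartingale difference, since truncating from above can only lower the conditional mean) the paper only needs the upper tail, which your one-sided argument covers exactly.
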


\begin{lemma}[Lemma 10 of~\citet{abbasi2011lqr}]\label{upperboundlemma}
The following holds for any $t \geq 1$ :
\begin{equation*}
\sum_{k=0}^{t-1}\left(\left\|z_{k}\right\|_{V_{k}^{-1}}^{2} \wedge 1\right) \leq 2 \log \frac{\operatorname{det}\left(V_{t}\right)}{\operatorname{det}(\lambda I)}
\end{equation*}
Further, when the covariates satisfy $\left\|z_{t}\right\| \leq c_{m}, t \geq 0$ with some $c_{m}>0$ w.p. 1 then
\begin{equation*}
\log \frac{\operatorname{det}\left(V_{t}\right)}{\operatorname{det}(\lambda I)} \leq(n+d) \log \left(\frac{\lambda(n+d)+t c_{m}^{2}}{\lambda(n+d)}\right)
\end{equation*}
\end{lemma}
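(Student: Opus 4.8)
The plan is to treat both inequalities as consequences of the standard \emph{elliptical potential} argument, exploiting the rank-one structure of the updates $V_{k+1} = V_k + z_k z_k^\tp$ with $V_0 = \lambda I$ (so that $V_t = \lambda I + \sum_{k=0}^{t-1} z_k z_k^\tp$). The entire proof rests on one algebraic identity: by the matrix determinant lemma (Sylvester's identity $\det(I+AB)=\det(I+BA)$),
\begin{equation*}
\det(V_{k+1}) = \det(V_k)\det\!\pr{I + V_k^{-1} z_k z_k^\tp} = \det(V_k)\pr{1 + \norm[V_k^{-1}]{z_k}^2}.
\end{equation*}
First I would iterate this from $k=0$ to $t-1$, so that the product telescopes into $\det(V_t) = \det(\lambda I)\prod_{k=0}^{t-1}\pr{1 + \norm[V_k^{-1}]{z_k}^2}$; taking logarithms then yields the exact identity $\log\frac{\det(V_t)}{\det(\lambda I)} = \sum_{k=0}^{t-1}\log\pr{1 + \norm[V_k^{-1}]{z_k}^2}$.

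For the first inequality, I would compare the summand $\norm[V_k^{-1}]{z_k}^2 \wedge 1$ against $\log\pr{1 + \norm[V_k^{-1}]{z_k}^2}$ term by term via the elementary scalar bound $u \wedge 1 \leq 2\log(1+u)$, valid for all $u \geq 0$. This is verified by splitting on $u \leq 1$, where $\log(1+u) \geq u/2$ since $\log(1+u)-u/2$ vanishes at $0$ and is increasing on $[0,1]$, and on $u > 1$, where the left side equals $1 \leq 2\log 2 \leq 2\log(1+u)$. Summing the term-wise bound and substituting the telescoped identity above immediately delivers $\sum_{k=0}^{t-1}\pr{\norm[V_k^{-1}]{z_k}^2 \wedge 1} \leq 2\log\frac{\det(V_t)}{\det(\lambda I)}$.

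For the second inequality, I would bound the determinant by the trace through AM--GM on the eigenvalues of the positive definite matrix $V_t$, namely $\det(V_t) = \prod_i \lambda_i(V_t) \leq \pr{\tr(V_t)/(n+d)}^{n+d}$. Since $\tr(V_t) = \lambda(n+d) + \sum_{k=0}^{t-1}\norm{z_k}^2 \leq \lambda(n+d) + t c_m^2$ under the covariate bound $\norm{z_t}\leq c_m$, and $\det(\lambda I)=\lambda^{n+d}$, dividing and taking logarithms gives exactly the claimed bound $(n+d)\log\!\pr{\frac{\lambda(n+d)+t c_m^2}{\lambda(n+d)}}$.

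There is no deep obstacle here, since the result is the classical determinant-potential lemma from the linear bandit literature; the only points demanding care are bookkeeping ones. The main one is keeping the index convention consistent, so that the recursion $V_{k+1}=V_k+z_k z_k^\tp$ matches $V_t = \lambda I + \sum_{k=0}^{t-1} z_k z_k^\tp$ and the telescoping terminates correctly at $V_t$; the secondary one is verifying the scalar inequality $u \wedge 1 \leq 2\log(1+u)$ cleanly at the boundary $u=1$. Everything else is a direct telescoping and AM--GM computation.
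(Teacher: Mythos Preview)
Your proof is correct and follows exactly the standard elliptical-potential argument from \citet{abbasi2011lqr}; the paper itself does not supply a proof here but simply cites the result as a technical lemma. Both the telescoping via the matrix determinant lemma and the scalar bound $u\wedge 1\le 2\log(1+u)$ for the first inequality, and the AM--GM/trace bound for the second, are precisely the steps in the original reference, so there is nothing to compare.
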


\begin{lemma}[Norm of Subgaussian vector]\label{subgauss lemma}
Let $v\in \mathbb{R}^d$ be a entry-wise $R$-subgaussian random variable. Then with probability $1-\delta$, $\|v\| \leq R\sqrt{2d\log(d/\delta)}$.
\end{lemma}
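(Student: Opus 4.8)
The plan is to reduce this vector statement to coordinate-wise scalar concentration together with a union bound. Since each coordinate $v_i$ is $R$-subgaussian, the standard Chernoff argument gives a tail bound of the form $\Pr(|v_i| \geq t) \leq \exp(-t^2/(2R^2))$ for every $t \geq 0$. First I would apply this tail bound to each of the $d$ entries and take a union bound over $i \in \{1,\dots,d\}$, yielding $\Pr(\max_i |v_i| \geq t) \leq d\exp(-t^2/(2R^2))$. This controls the largest coordinate of $v$ uniformly.

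Next I would convert the bound on the largest coordinate into a bound on the Euclidean norm via the elementary inequality $\|v\| \leq \sqrt{d}\,\max_i |v_i|$, which holds for any vector in $\mathbb{R}^d$ because $\|v\|^2 = \sum_i v_i^2 \leq d\,(\max_i |v_i|)^2$. Thus, on the event that every coordinate obeys $|v_i| \leq t$, we immediately obtain $\|v\| \leq \sqrt{d}\,t$.

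Finally I would calibrate the threshold $t$ so that the failure probability equals the prescribed $\delta$. Setting $d\exp(-t^2/(2R^2)) = \delta$ and solving gives $t = R\sqrt{2\log(d/\delta)}$; substituting into $\|v\| \leq \sqrt{d}\,t$ produces the claimed bound $\|v\| \leq R\sqrt{2d\log(d/\delta)}$ on an event of probability at least $1-\delta$.

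I do not expect any genuine obstacle here, as this is a routine concentration estimate; the only point requiring minor care is the exact numerical constant inside the logarithm, which is sensitive to the precise normalization adopted for the term ``$R$-subgaussian.'' A symmetric two-sided definition introduces an extra factor of $2$ inside the logarithm (giving $\log(2d/\delta)$), but this is immaterial for all downstream uses in the paper, where the lemma is only invoked within $\tilde O(\cdot)$ bounds.
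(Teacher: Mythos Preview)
Your argument is correct and is the standard route to this bound. The paper itself states this lemma in the technical appendix without proof, so there is nothing to compare against; your coordinate-wise tail bound plus union bound plus $\|v\|\leq \sqrt{d}\max_i|v_i|$ is exactly the expected justification, and your remark about the possible extra factor of $2$ inside the logarithm (depending on the subgaussian normalization) is accurate and inconsequential for the paper's purposes.
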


\begin{lemma}[Theorem 20 of \citet{cohen2019learning}] \label{lem:persistence}
 Let $z_{t} \in \mathbb{R}^{n+d}$ for $t=0, 1, \ldots$ be a sequence random variables that is adapted to a filtration $\left\{\mathcal{F}_{t}\right\}_{t=0}^{\infty} .$ Suppose that $z_{t}$ are conditionally Gaussian on $\mathcal{F}_{t-1}$ and that $\mathbb{E}\left[z_{t} z_{t}^{T} \mid \mathcal{F}_{t-1}\right] \succeq$ $\sigma_{z}^{2} I$ for some fixed $\sigma_{z}^{2}>0$. Then for $t \geq 200 (n+d) \log \frac{12}{\delta}$ we have that with probability at least $1-\delta$
$$
\sum_{s=1}^{t} z_{s} z_{s}^{T} \succeq \frac{t \sigma_{z}^{2}}{40} I .
$$
\end{lemma}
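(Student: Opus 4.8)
The plan is to lower bound the smallest eigenvalue of $M_t \defeq \sum_{s=1}^t z_s z_s^\tp$ by reducing the matrix statement to a one-dimensional concentration problem along each direction of the unit sphere, and then making the resulting bound uniform via a covering argument. The variational identity $\lambda_{\min}(M_t) = \min_{\norm{v}=1} \sum_{s=1}^t (z_s^\tp v)^2$ shows it suffices to lower bound the scalar sum $S_t^{(v)} \defeq \sum_{s=1}^t (z_s^\tp v)^2$, simultaneously over all unit vectors $v$.

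First I would fix a direction $v$ and establish a per-direction lower-tail bound through a conditional exponential-moment (Chernoff) argument. Since $z_s$ is conditionally Gaussian, $g_s \defeq z_s^\tp v$ is conditionally Gaussian with conditional second moment $\E[g_s^2 \mid \mathcal{F}_{s-1}] = v^\tp \E[z_s z_s^\tp \mid \mathcal{F}_{s-1}] v \geq \sigma_z^2$. Writing the conditional law as $\mathcal{N}(\mu_s, \varsigma_s^2)$ with $\mu_s^2 + \varsigma_s^2 \geq \sigma_z^2$, a direct computation gives $\E[e^{-\lambda g_s^2}\mid \mathcal{F}_{s-1}] = (1+2\lambda\varsigma_s^2)^{-1/2}\exp(-\lambda\mu_s^2/(1+2\lambda\varsigma_s^2))$ for $\lambda \geq 0$. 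The key subclaim is that, over all $(\mu_s,\varsigma_s)$ with $\mu_s^2+\varsigma_s^2 \geq \sigma_z^2$, this quantity is maximized in the mean-free case $\mu_s = 0$, $\varsigma_s^2 = \sigma_z^2$ (a short monotonicity check of the constrained objective along the boundary $\mu_s^2+\varsigma_s^2 = \sigma_z^2$), so a nonzero mean only helps and $\E[e^{-\lambda g_s^2}\mid\mathcal{F}_{s-1}] \leq (1+2\lambda\sigma_z^2)^{-1/2}$. Iterating this bound through the filtration (tower rule) yields $\E[e^{-\lambda S_t^{(v)}}] \leq (1+2\lambda\sigma_z^2)^{-t/2}$, and a Markov step with the choice $1+2\lambda\sigma_z^2 = 40$ gives $\Pr(S_t^{(v)} \leq t\sigma_z^2/40) \leq \exp(-c t)$ for an absolute constant $c > 0$; this is precisely where the constant $40$ originates.

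Next I would make this uniform over the sphere. Fix an $\epsilon$-net $\mathcal{N}$ of the unit sphere with $|\mathcal{N}| \leq (3/\epsilon)^{n+d}$ and union-bound the per-direction estimate over $\mathcal{N}$: the failure probability is at most $(3/\epsilon)^{n+d}e^{-ct}$, which is below $\delta$ once $ct \geq (n+d)\log(3/\epsilon) + \log(1/\delta)$; choosing $\epsilon$ a fixed constant reproduces the threshold $t \geq 200(n+d)\log(12/\delta)$ (with $3/\epsilon = 12$, i.e. $\epsilon = 1/4$). To pass from the net to the full sphere I would write the minimizing unit vector $v^*$ as $v_0 + \Delta$ with $v_0\in\mathcal{N}$ and $\norm{\Delta}\leq\epsilon$, and expand $(v^*)^\tp M_t v^* = v_0^\tp M_t v_0 + 2 v_0^\tp M_t\Delta + \Delta^\tp M_t\Delta \geq (\text{net bound}) - 2\epsilon\lambda_{\max}(M_t)$, using $M_t\psdgeq 0$ to drop the last term. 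The factor-of-two slack built into the threshold $t\sigma_z^2/40$ is designed to absorb this approximation once $\lambda_{\max}(M_t)$ is controlled.

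The main obstacle is exactly this last step: the hypotheses furnish only the lower bound $\E[z_s z_s^\tp\mid\mathcal{F}_{s-1}]\psdgeq\sigma_z^2 I$ and no a-priori upper bound on the conditional covariance, so $\lambda_{\max}(M_t)$ is not automatically controlled and the naive net argument leaks. I would resolve this either by (i) establishing a companion high-probability upper bound $\lambda_{\max}(M_t) = O(t)$, which holds in our application since the regressors $z_s$ are bounded under the event $E_t$, making $\epsilon\lambda_{\max}(M_t) = O(\epsilon\, t\sigma_z^2)$ which the constant swallows; or (ii) bypassing the net entirely with a matrix Chernoff/Lieb argument, bounding $\E\,\tr\,e^{-\theta M_t}$ through the conditional matrix moment-generating functions $\E[e^{-\theta z_s z_s^\tp}\mid\mathcal{F}_{s-1}]$ and using $\Pr(\lambda_{\min}(M_t)\leq\alpha)\leq e^{\theta\alpha}\,\E\,\tr\,e^{-\theta M_t}$. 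Route (ii) is cleaner because it never invokes $\lambda_{\max}$, at the cost of computing the matrix MGF of the rank-one Gaussian outer product $z_s z_s^\tp$; in either case the per-direction computation above is the engine, and the only real work is upgrading it from a scalar concentration statement to a minimum-eigenvalue bound carrying the correct $(n+d)$ dimensional dependence.
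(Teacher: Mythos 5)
You are proving a statement the paper itself never proves: Lemma \ref{lem:persistence} is imported verbatim as Theorem 20 of \citet{cohen2019learning}, and the paper's ``proof'' is the citation alone, so your attempt must stand on its own merits. Its engine is sound: the conditional-MGF computation is correct, and your monotonicity subclaim --- that under $\mu_s^2+\varsigma_s^2\geq\sigma_z^2$ the quantity $(1+2\lambda\varsigma_s^2)^{-1/2}\exp\bigl(-\lambda\mu_s^2/(1+2\lambda\varsigma_s^2)\bigr)$ is maximized at $\mu_s=0$, $\varsigma_s^2=\sigma_z^2$ --- does check out (along the boundary $\mu_s^2+\varsigma_s^2=\sigma_z^2$ the derivative in $\varsigma_s^2$ has the sign of $\sigma_z^2-\varsigma_s^2$), and the tower-plus-Markov step then yields a per-direction lower-tail bound $e^{-ct}$ with an absolute constant, as you say.

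The genuine gap is the one you flag yourself, and neither of your repairs closes it for the lemma \emph{as stated}, i.e.\ with absolute constants and no upper bound on the conditional covariance. For fix (i): with $\epsilon=1/4$ the leakage term $2\epsilon\lambda_{\max}(M_t)$ equals $\lambda_{\max}(M_t)/2$, and since $\lambda_{\max}(M_t)\geq \tr(M_t)/(n+d)=\sum_{s}\norm{z_s}^2/(n+d)$, whose per-step conditional mean is at least $\sigma_z^2$, this leakage is of order $t\sigma_z^2/2$ --- an order of magnitude larger than the target $t\sigma_z^2/40$, so the ``factor-of-two slack'' cannot absorb it. You are forced to take $\epsilon\lesssim\sigma_z^2/Z^2$ where $\norm{z_s}\leq Z$, which both imports a boundedness hypothesis the lemma does not make and turns the threshold into $t\gtrsim (n+d)\log\bigl(Z^2/(\sigma_z^2\delta)\bigr)$ rather than $200(n+d)\log(12/\delta)$; the bookkeeping ``$3/\epsilon=12$'' does not survive. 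Fix (ii) hits a harder obstruction: since $z_sz_s^\tp$ is rank one, $e^{-\theta z_sz_s^\tp}=I-(1-e^{-\theta\norm{z_s}^2})\norm{z_s}^{-2}z_sz_s^\tp$ exactly, so $v^\tp\E\bigl[e^{-\theta z_sz_s^\tp}\mid\F_{s-1}\bigr]v = 1-\E\bigl[(1-e^{-\theta\norm{z_s}^2})(v^\tp z_s)^2/\norm{z_s}^2\mid\F_{s-1}\bigr]$, which tends to $1$ whenever the conditional covariance has one huge eigenvalue, even though $\E[(v^\tp z_s)^2\mid\F_{s-1}]\geq\sigma_z^2$ in every direction. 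The conditional matrix MGF therefore carries essentially no per-step decay in the remaining directions, so a bound on $\E\,\tr\,e^{-\theta M_t}$ assembled from per-step conditional MGFs cannot produce $e^{-ct}$ without an upper bound on the conditional covariance (and the adapted-sequence master bound additionally requires deterministic dominating matrices for those conditional MGFs, which is exactly what is unavailable). So your scalar per-direction bound is correct, but the upgrade to a uniform minimum-eigenvalue statement --- the only real content of the lemma beyond the one-dimensional Chernoff computation --- is not supplied by either route; this is presumably why the paper cites the result rather than reproving it, and you should either do the same or restrict to the bounded-regressor version your fix (i) actually delivers.
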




\section{Implementation Details of Numerical  Experiments}\label{apx:simulation}

The LQR problem for the longitudinal flight control of Boeing 747 with linearized dynamics~\citep{747model} is given as
\begin{equation} \label{eq:boeing}
    A_* = \begin{bmatrix}
    0.99 & 0.03 & -0.02 & -0.32 \\
    0.01 & 0.47 & 4.7 & 0 \\
    0.02 & -0.06 & 0.4 & 0 \\
    0.01 & -0.04 & 0.72 & 0.99 
\end{bmatrix}, ~ 
B_* = \begin{bmatrix}
  0.01 & 0.99 \\
  -3.44 & 1.66 \\
  -0.83 & 0.44 \\
  -0.47 & 0.25
\end{bmatrix}, ~ Q = I, ~ R = I, ~ w \sim \mathcal{N}(0,I).
\end{equation}

This system has been studied in \citep{sabag2021regret,lale2022reinforcement}. It corresponds to the dynamics for level flight of Boeing 747 at the altitude of 40000ft with the speed of 774ft/sec, for a discretization of 1 second. The first element of the state corresponds to the velocity of aircraft along body axis, the second is the velocity of aircraft perpendicular to body axis, the third is the angle between body axis and horizontal and the fourth is the angular velocity of aircraft. The system takes two dimensional inputs, where the first is the elevator angle and the second one is thrust.

For this task we deploy 4 different adaptive control algorithms that do not require initial stabilizing controller: (i) \alg, (ii) StabL of \citet{lale2022reinforcement}, (iii) TS-LQR of \citet{abeille2018improved}, and (iv) OFULQ of \citet{abbasi2011lqr}. Each algorithm has certain hyperparamters and we tune each parameter in terms of its effect on refret and present the performance of the best performing hyperparameter choices. We use the actual estimation errors in the algorithm design. Note that this has been observed to have negligible effect on the performance~\citep{dean2018regret}. 

To have fair comparison in the regret performance in a stabilizable system like \eqref{eq:boeing}, we follow fixed update rule in TS-LQR in parallel with \alg, and add an additional minimum policy duration constraint to the standard design matrix determinant doubling of OFULQ. Moreover, in the implementation of optimistic parameter search we deploy projected gradient descent (PGD). Even though this approach works efficiently for the small dimensional problems such as \eqref{eq:boeing}, it becomes computationally challenging as the dimensionality of the system grows. Nevertheless, our results show that PGD is effective to find optimistic parameters and as observed in \citet{lale2022reinforcement} yields the superior performance of StabL with a small margin between \alg. This difference is in parallel with the predictions of theory. As we show in our analysis, TS samples an optimistic model with a fixed probability. However, an effective way of solving the optimistic control design problem yields optimistic controllers at every time-step and gives more effective control over exploration vs. exploitation trade-off.


\end{document}